\documentclass[opre]{informs4}
\makeatletter
\def\theARTICLETOP{}%
\def\setoddRH{\hbox to \textwidth{\fs.7.8.\tabcolsep0pt
  \begin{tabular*}{\textwidth}[b]{l@{\extracolsep\fill}r}
  {\theRRHFirstLine}& \raisebox{0pt}[0pt][0pt]{\fs.10.10.\thepage}\\[-4pt]
  \rlap{\VRHDW{0.5pt}{0pt}{\textwidth}}&\\
  \end{tabular*}}}
\def\setevenRH{\hbox to \textwidth{\fs.7.8.\tabcolsep0pt
  \begin{tabular*}{\textwidth}[b]{l@{\extracolsep\fill}r}
  \raisebox{0pt}[0pt][0pt]{\fs.10.10.\thepage}& {\theLRHFirstLine}\\[-4pt]
  \rlap{\VRHDW{0.5pt}{0pt}{\textwidth}}&\\
  \end{tabular*}}}

\def\theARTICLETITLE{%
  \HOOKtop
  \begin{center}
  \vspace*{-57pt}%   <-- was 0pt; negative value moves the title up
  \HD{24}{0}%
  \resizebox{\textwidth}{!}{\sffamily\bfseries\theTITLE}%
  \HD{0}{15}%
  \end{center}}
\def\AUTHOR#1{\begin{center}\AUTHORfont\HD{15}{0}#1\HD{0}{6}\end{center}}
\def\AFF#1{\begin{center}\AFFfont{#1}\vskip1.6pt\end{center}}
\def\theARTICLEABSTRACT{%
  \HOOKb
  \vspace*{18pt}
  \begin{minipage}[t]{\textwidth}\parindent1em
    \ABSfont
    \noindent\theABSTRACT\endgraf
    \vskip5pt
    \theFUNDING\theKEYWORDS\theSUBJECTCLASS
    \theAREAOFREVIEW\theMSCCLASS\theORMSCLASS
    \if@BLINDREV\else\theHISTORY\fi
  \noindent  \hrulefill
  \end{minipage}%
  \vspace*{0pt}}
\makeatother

\newenvironment{lemma*}[1][]{%
  \par\addvspace{6pt}\noindent
  \textbf{Lemma\if\relax\detokenize{#1}\relax\else\ (#1)\fi.}\itshape\ \ignorespaces
}{%
  \par\addvspace{6pt}\upshape
}

\newenvironment{proposition*}[1][]{%
  \par\addvspace{6pt}\noindent
  \textbf{Proposition\if\relax\detokenize{#1}\relax\else\ (#1)\fi.}\itshape\ \ignorespaces
}{%
  \par\addvspace{6pt}\upshape
}

\RequirePackage{tgtermes}
\RequirePackage{newtxtext}
\RequirePackage{newtxmath}
\RequirePackage{bm}
\RequirePackage{endnotes}

\OneAndAHalfSpacedXI
\usepackage{algpseudocode}
\usepackage{tikz}
\usepackage{multirow}
\usepackage[table]{xcolor}

\usepackage{natbib}
 \bibpunct[, ]{(}{)}{,}{a}{}{,}%
 \def\bibfont{\small}%

\usepackage{pgfplots}
\usepackage{subfigure}
\usepackage{bbm}
\usepackage{natbib}
\usepackage{amssymb}
\usepackage{amsmath}
\usepackage{amsfonts}
\bibpunct[, ]{(}{)}{,}{a}{}{,}%
 \def\bibfont{\small}%
\usepackage[ruled]{algorithm2e} % For algorithms

\usepackage{mathtools}

\usepackage{hyperref}
\hypersetup{
  colorlinks = true,
  allcolors  = blue         
}
\usepackage{booktabs,subcaption,makecell}
\usepackage{graphicx,float,bm}
\usetikzlibrary{positioning,calc}
\usepackage{tikz}
\usepackage{pgfplots}
\pgfplotsset{compat=1.17}
\newcommand{\R}{\mathbb{R}}
\newcommand{\E}{\mathbb{E}}
\newcommand{\Prob}{\mathbb{P}}
\newcommand{\Ft}{\mathcal{F}}
\newcommand{\cB}{\mathcal{B}}

\newcommand{\one}{\mathbb{I}}

\SetKwBlock{Init}{Initialization}{end}

\newcommand{\DOW}{\textsc{DOW}\ }
\newcommand{\proofend}{\hfill$\blacksquare$}
\newcommand{\indep}{\perp\!\!\!\!\perp}

\EquationsNumberedThrough    % Default: (1), (2), ...
\TheoremsNumberedThrough     % Preferred (Theorem 1, Lemma 1, Theorem 2)
\ECRepeatTheorems  %  

\MANUSCRIPTNO{ }

\begin{document}
%%%%%%%%%%%%%%%%

% Outcomment only when entries are known. Otherwise leave as is and
%   default values will be used.
%\setcounter{page}{1}
%\VOLUME{00}%
%\NO{0}%
%\MONTH{Xxxxx}% (month or a similar seasonal id)
%\YEAR{0000}% e.g., 2005
%\FIRSTPAGE{000}%
%\LASTPAGE{000}%
%\SHORTYEAR{00}% shortened year (two-digit)
%\ISSUE{0000} %
%\LONGFIRSTPAGE{0001} %
%\DOI{10.1287/xxxx.0000.0000}%

% Author's names for the running heads
% Enter authors following the given pattern:
\RUNAUTHOR{Belloni, Chen and Wei}

% Title or shortened title suitable for running heads. Sample:
% \RUNTITLE{Predictive Maintenance in Manufacturing}
% Enter the (shortened) title:
\RUNTITLE{Online Pandora's Box for Contextual LLM Cascading}

% Enter the full title:
\TITLE{Online Pandora's Box for Contextual LLM Cascading}

\ARTICLEAUTHORS{%
\AUTHOR{Alexandre Belloni}
\AFF{The Booth School of Business, University of Chicago, \EMAIL{Alex.Belloni@chicagobooth.edu}}

\AUTHOR{Yan Chen}
\AFF{The Fuqua School of Business, Duke University, \EMAIL{yc555@duke.edu}}

\AUTHOR{Yehua Wei}
\AFF{The Fuqua School of Business, Duke University, \EMAIL{yehua.wei@duke.edu}}
}

\ABSTRACT{Motivated by Large Language Model (LLM) cascading, we propose an online contextual Pandora's Box model for adaptively querying and selecting LLM APIs. In each period, a decision-maker observes a request context and faces a two-phase decision problem. In the query phase, the decision-maker sequentially queries APIs, where each query reveals a generated output and the decision-maker incurs an (output-dependent) cost. In the selection phase, the decision-maker selects one of the generated outputs to deploy and observes only the downstream reward of the deployed output. This output-mediated feedback structure differs from classical online contextual Pandora's Box models, in which opening a box directly reveals its reward.

Rather than estimating the full conditional output and cost distributions of each API, we directly model the reservation index and develop a learning approach for the query phase. Specifically, we impose a parametric structure on the contextual reservation index functions induced by the classical Weitzman's policy. 
Our policy combines generalized method of moments (GMM) type estimation of these reservation indices with UCB-style confidence bounds for both these indices and the shared output-level reward evaluator. 
Under regularity conditions, we prove that the resulting policy achieves dimension-dependent \(\widetilde O(\sqrt T)\) cumulative regret over a horizon of \(T\) periods.
\KEYWORDS{LLM Cascading, Contextual Pandora’s Box, Generalized Method of Moments.}
}

\maketitle
%%%%%%%%%%%%%%%%%%%%%%%%%%%%%%%%%%%%%%%%%%%%%%%%%%%%%%%%%%%%%%%%%%%%%%
% Text of your paper here
\section{Introduction}\label{sec:intro}
The proliferation of Large Language Models (LLMs) has transformed the economics of content generation and decision-making. Firms increasingly use generative AI systems to generate advertisements, produce code, serve customer requests, and conduct complex analytical tasks. In these applications, the operational challenge is often not whether content can be generated, but how to generate high-quality content reliably and cost-effectively at scale. A firm that produces a high volume of advertisements per day, for example, may have access to a portfolio of LLM APIs that differ in cost and output quality across task types. More capable proprietary models may, in general, deliver higher-quality outputs but incur
substantially higher costs, while smaller or specialized models may be cheaper but less reliable. Because model performance varies across request contexts and it is difficult to predict the quality of the outputs a priori, committing to a single API is often suboptimal. Similarly, querying all of the available APIs is typically prohibitively costly.

We study this problem from the perspective of an organization that uses external or internal LLM APIs to serve a stream of business requests. In such applications, requests arrive as distinct decision units without a strict latency requirement, allowing the decision-maker to query more than one API before selecting a final output. It is worth noting that this request-level decision problem fundamentally differs from the model hosting problem faced by large LLM platforms, where a key operational challenge is to reduce latency through batching, scheduling, routing, and load management \citep[e.g.][]{yu2022orca,kwon2023efficient,agrawal2024sarathi,ao2025optimizing, jaillet2025online, guo2026congestion}. Our focus is instead on the API consumer's decision problem of dynamically querying a portfolio of APIs to secure high-quality outputs in a cost-effective manner.

A prominent approach for managing this cost-quality trade-off
is \emph{LLM cascading} \citep[see e.g.,][]{chenfrugalgpt,yue2024large,gupta2024language}. 
Rather than sending every request directly to a single expensive model, a cascade queries LLM APIs sequentially, typically beginning with lower-cost models and escalating only when earlier outputs appear insufficiently reliable. 
The pioneering work of \citet{chenfrugalgpt}, for example, illustrates this design by routing a query through a sequence of LLM APIs and stopping once a generated response receives a sufficiently high reliability score from an evaluator. Cascading allows the easy requests to be handled by cheaper APIs, with only the difficult requests escalated to the expensive ones. Empirical evidence from~\citet{chenfrugalgpt} suggests that such cascades can substantially reduce inference costs while preserving, and in some cases improving, task performance.

The LLM cascading system raises a fundamental sequential decision problem. For each incoming request, the decision-maker must manage both a query phase and a selection phase. In the query phase, the decision-maker must decide which API to query next and whether the generated outputs justify continuing the search. In the selection phase, the decision-maker must decide which of the generated outputs to deploy. These decisions are tightly coupled. Querying an additional API incurs a cost but may yield a better output, whereas stopping early saves cost at the risk of missing a better outcome. Moreover, selecting among the queried outputs requires evaluating their downstream value. Thus, LLM cascading is not simply a model-routing problem, but a sequential search problem with costly information acquisition.

A natural starting point for principled analysis of this search problem is the classical Pandora’s Box problem. Consider an idealized benchmark in which, for each request, the decision-maker knows the context-dependent distribution of the output-cost pair generated by each API and can evaluate the downstream  value of any generated output once it is observed. In this benchmark, querying an API corresponds to opening a box, the inference cost acts as the inspection cost, and the downstream value of the generated output serves as the realized reward. For this formulation, the seminal work of \citet{weitzman1979optimal} characterizes the structure of the optimal policy. Specifically, given the context of a request, the policy assigns each API a reservation index, then queries APIs in decreasing order of these indices, and terminates the search as soon as the reward of an output exceeds the highest remaining index.

Real applications of LLM cascading, however, depart from this oracle benchmark in ways that make learning central. The value of querying an API is inherently contextual, and the primitives required by the oracle Weitzman's policy are rarely known in advance. In some settings, firms may have substantial historical data on prior human- or AI-generated outputs and their downstream rewards, allowing them to construct an accurate offline model of the reward function. In many others, however, such output-reward pairs are limited or unavailable. Moreover, even when rewards can be modeled from offline data, a firm deploying LLM APIs typically does not know the conditional distribution of outputs and costs generated by each API under a given request context. Since requests arrive sequentially over time, the decision-maker must learn context-dependent reservation indices while continuing to make cost-sensitive query and selection decisions. This naturally leads to an online contextual Pandora’s Box formulation of LLM cascading.

Recent work on LLM cascading has developed increasingly sophisticated methods for balancing cost and performance. These include budget-constrained cascade policies, uncertainty or threshold-based deferral rules, and hybrid methods that combine cascading with model routing \citep{chenfrugalgpt,yue2024large,gupta2024language,zhang2024efficient}. While these approaches provide important insights into efficient multi-LLM deployment, they rely primarily on heuristic designs. To the best of our knowledge, existing work has not yet formulated LLM cascading as an online contextual Pandora’s Box problem with joint query and selection, nor developed formal regret guarantees.
Conversely, the existing Pandora’s Box literature is not directly tailored to this LLM setting. 
A prominent line of work takes distribution-learning approaches \citep{liu2025improved}, in which opening a box reveals a scalar reward and the learner estimates how the reward distribution varies with context, under parametric structure on the full distribution. 
Closer to our work, \citet{atsidakou2024contextual} parameterizes contextual reservation indices directly, but their model remains a classical Pandora's box setting in which opened boxes reveal rewards directly. In LLM cascading, by contrast, an API call reveals a stochastic output-cost pair. In our model, the downstream value depends on the request and the generated output, while API-specific heterogeneity enters through the distribution of outputs and costs. 

\subsection{Our Contributions}
We make three main contributions. First, we introduce an online contextual Pandora's Box model motivated by LLM cascading. Unlike existing online Pandora's Box models, our model features a natural two-phase decision structure: querying an API reveals a stochastic output and incurs a cost, while the downstream reward is observed only after one generated output is selected and deployed. This formulation captures a key practical feature of LLM cascading: API-specific heterogeneity enters through the distributions of generated outputs and costs, while the downstream value of a generated 
output is evaluated through a shared reward model on the request-output pair. That is, an API can influence the downstream value only through the generated output.

Second, we propose a direct reservation index modeling and learning approach for the query phase.
In the full-information benchmark, Weitzman's policy is governed by contextual reservation indices, which are induced by each API's conditional output and cost distributions together with the reward evaluator. Rather than estimating these distributions directly, we impose a parametric structure on the reservation index functions themselves as a generalized linear function of observable context
features. This modeling assumption leads to a practical optimism-based learning approach in the LLM cascading setting, as the decision-maker can focus on constructing optimistic box-specific reservation indices from queried output-cost samples through the moment equations instead of the full conditional output distributions. 
The policy also learns the shared output-level reward evaluator from deployed-output rewards, and the two learned objects jointly determine the cascade's querying and selection decisions.

Third, we provide regret guarantees for the resulting online policy. The policy combines \emph{generalized method of moments} (GMM) estimation for reservation indices with \emph{upper confidence bound} (UCB)-style approaches for both the indices and rewards. Our policy is motivated by a simple but crucial regret decomposition under optimism: cumulative regret separates into errors from estimating reservation indices for only the \emph{queried APIs} and errors from evaluating generated outputs for only the \emph{selected APIs}. 
This decomposition allows us to apply the UCB-style analysis to our moment estimation problem, yielding a dimension-dependent $\tilde O(\sqrt{T})$ cumulative regret bound. The analysis covers the practically important known-evaluator regime, where offline data provide an accurate output-level reward model, as well as the full regime where the evaluator and reservation indices must be learned jointly online.

\subsection{Related Literature}
Our work is related to the literature on cost-aware LLM systems, sequential test-time inference, and the theoretical frameworks of sequential search and online learning.

First, our paper is closely related to the emerging literature on LLM cascading and cost-aware multi-LLM deployment. This literature studies how to allocate inference requests across multiple LLM APIs when models differ in cost, latency, and output quality. Representative approaches include FrugalGPT \citep{chenfrugalgpt}, budget-constrained cascade policies, uncertainty- or reliability-based deferral rules, and hybrid methods that combine cascading with model routing \citep{nie2024online,chen2024cascade,zhang2024efficient,yue2024large,gupta2024language}. Our work contributes to this literature by formulating LLM cascading as an online contextual Pandora's Box problem, where the decision-maker sequentially decides which APIs to query and when to stop under unknown, context-dependent output and cost distributions. Our work is also related to LLM routing and ensembling methods although they address different decision problems than our LLM cascading motivation. Routing methods usually choose a single LLM for each request before
observing any generated output, either through non-predictive rules or through
predicted quality, reward, or utility scores
\citep{ hu2024routerbench, shnitzerlarge,hari2023tryage,lu2024routing,vsakota2024fly, mei2025omnirouter}. Such methods capture one-shot model choice but are not adapted to within-request search. Ensemble methods aggregate or synthesize outputs from multiple LLMs, such as by ranking and fusing candidate responses \citep{ jiang2023llm,fang2024llm,hu2025efficient}; they focus on output aggregation rather than sequential stopping under query costs. 

Second, our paper is related to recent work on sequential testing and sequential stopping for LLM inference. \citet{huang2026optimal} study adaptive self-consistency for a single LLM, where the system sequentially samples reasoning paths and stops once the most likely answer can be identified with high posterior confidence. \citet{li2026asymptotically} study Bayesian sequential testing with heterogeneous LLMs, where the decision-maker adaptively chooses which LLM to query and stops once the posterior confidence for a hypothesis crosses a target threshold. These papers formulate LLM inference as posterior-driven sequential testing, whereas our LLM cascading model focuses on contextual sequential search over realized output-cost pairs, with stopping governed by reservation indices and learning focused on both the index functions and the output-level reward model.

Third, the query phase component of our model extends the contextual Pandora’s Box problem. The classic Pandora’s Box problem without contexts originates from the seminal work of \citet{weitzman1979optimal} and has since been studied under a variety of extensions \citep[e.g.][]{doval2018whether,chawla2020pandora,fu2020learning,boodaghians2020pandora,agarwal2024semi,ezra2026contract}, including online variants \citep{gergatsouli2022online,atsidakou2024contextual}. In particular, \citet{gergatsouli2022online} studies an adversarial online Pandora’s Box setting in which box rewards are chosen by an adversary. \citet{gatmiry2024bandit} subsequently shows that, in this setting, no algorithm can achieve sublinear regret against the optimal policy, even with full-information feedback. For the online contextual Pandora's Box problem, \citet{liu2025improved} obtains a regret bound of $\widetilde{O}(\sqrt{T})$ under the assumption that
the reward distribution of each box is a location shift of a context-invariant noise distribution, but their results do not apply to heterogeneous output distributions across contexts within each box, a key feature for LLM cascading applications. \citet{atsidakou2024contextual} studies a contextual Pandora's Box model under adversarially arriving contexts, where the optimal reservation index is parametrized as a linear function of the observed context. In this general setting, they obtain a $\widetilde{O}(T^{5/6})$ regret bound by reducing the learning problem to online linear regression.
Our approach builds on a similar model of \citet{atsidakou2024contextual} by imposing a parametric reservation index structure, but adopts a different learning approach through a combination of GMM and UCB analysis. This allows us to propose an optimism-based learning algorithm with a $\widetilde{O}(\sqrt{T})$ regret bound under suitable regularity assumptions.

Further, the selection phase of our model draws on the literature on generalized linear contextual bandits. This line of work originates from the seminal paper of \citet{filippi2010parametric}, which introduces an optimism-based algorithm for generalized linear bandits, and has since been extended in various directions \citep[e.g.][]{jun2017scalable,kveton2020randomized,ding2021efficient,kim2023double}. 

Finally, our learning algorithm combines GMM with a UCB–type algorithm. GMM originates from the seminal work of \citet{hansen1982large} and has been widely developed and applied in econometrics \citep[e.g.][]{newey1994large,arellano1991some,chamberlain1987asymptotic,lin2010gmm,cheng2024gmm,andrews2022optimal,hansen2021inference}. UCB-type algorithms trace back to the seminal paper of \citet{lai1985asymptotically} and have since been extensively studied in the bandit literature \citep[e.g.][]{auer2002finite,rusmevichientong2010linearly,filippi2010parametric,fan2025fragility,garivier2011kl,audibert2009exploration}, as well as in broader online learning settings and applications in operations management \citep[e.g.][]{rakhlin2013optimization,gao2022joint,cheung2022inventory,mao2025model}.

In addition, our paper belongs to the growing literature on LLM-assisted decision making across a range of fields, including operations management \citep[e.g.][]{chen2024large,chen2025manager,simchi2026large}, advertising \citep[e.g.][]{yang2023against,reisenbichler2025applying}, healthcare \citep[e.g.][]{thirunavukarasu2023large,hager2024evaluation,hao2025large}.

\subsection{Notations}
Given any integer $k\geq1$ and vector $v\in\R^k$, we use $\|v\|_2$ to denote the Euclidean norm of $v$, and $\|v\|_{\infty}$ to denote $\max_{i=1,\ldots,k}|v_i|$. For any positive semidefinite matrix $\Phi\in\mathbb{R}^{k\times k}$, we use the notation $\|v\|_{\Phi}=\sqrt{v^\top \Phi v}$ to denote the $\Phi$-weighted Euclidean norm of $v$. 
For matrices $A,B\in\mathbb{R}^{d\times d}$, $A\succeq B$ (resp., $A\preceq B$) means that $A-B$ (resp., $B-A$) is positive semidefinite. For $x,y\in\mathbb{R}$, we write $x\vee y=\max\{x,y\}$. For any positive semidefinite matrix $M\in\mathbb{R}^{d\times d}$, $\lambda_{\min}(M)$ denotes its minimum eigenvalue. For any integer $k\ge1$, let $[k]=\{1,2,\ldots,k\}$. Given any set $S$, we use $S^c$ to denote the complement of $S$. We write $\omega\sim p(\cdot)$ to indicate that the random variable $\omega$ follows distribution $p(\cdot)$. We use $N(\theta,\sigma^2)$ to denote the Gaussian distribution with mean $\theta$ and variance $\sigma^2$. Given any $\mu\in\mathbb{R}^d$ and positive definite matrix $\Sigma\in\mathbb{R}^{d\times d}$, we use $\mathcal{N}(\mu,\Sigma)$ to denote the $d$-dimensional Gaussian distribution with mean $\mu$ and covariance $\Sigma$. For any two random variables $X,Y$, $X\indep Y$ means $X$ and $Y$ are independent. For any random event $\mathcal{E}$, we use $\mathbb{I}\{\mathcal{E}\}$ to denote the indicator of $\mathcal{E}$. The notation ``a.s.” stands for ``almost surely.” For any vector $x$ or matrix $A$, $x^\top$ and $A^\top$ denote their transposes. Finally, for any twice-differentiable function $g$, $g'$ and $g''$ denote its first- and second-order derivatives. We use $\widetilde{O}(\cdot)$ to hide logarithmic factors. Specifically, for functions $f,g:\mathbb{N}\to\mathbb{R}_+$, we write $f(T)=\widetilde{O}(g(T))$ if there exists a constant $C>0$ and a poly-logarithmic function $\mathrm{polylog}(T)$ such that $f(T)\leq Cg(T)\textrm{polylog}(T)$. For random variables $Y_1$ and $Y_2$, $Y_1\lesssim Y_2$ denotes that $Y_1\le cY_2$ with high probability for some constant $c>0$. For sequences $f(T), g(T)>0$, we write $f(T)\asymp g(T)$ if both $f(T)\lesssim g(T)$ and $g(T)\lesssim f(T)$ hold up to universal constant factors.

\section{Model}\label{sec:model}
In this section, we formally introduce our online contextual Pandora’s box model motivated by LLM cascading. In our model, there are $A$ different boxes, each of which can be thought of as an API in LLM cascading. There are $T$ decision periods. 
Let $\mathcal F_{t-1}$ denote the history available before period $t$, including all past observations and actions. At the beginning of period \(t\), the decision-maker (DM) observes a request context vector $x_t \in \mathcal{X} \subset \mathbb{R}^{d_x}$, where the contexts $\{x_t\}_{t\in[T]}$ are independent across $t\in[T]$. 
For each box $a \in [A]$, $\omega_{at}$, defined as the potential output vector of $a$, is random and its distribution is assumed to be independent across boxes and independent of the past history. Formally, 
for each $a\in [A]$, $\omega_{at}\in\Omega\subset\mathbb{R}^{d_w}$ is drawn from a context-dependent distribution $p_a(\cdot|x_t)$, and 
for any measurable sets \(B_1,\ldots,B_A\subseteq\Omega\), 
we have
\begin{equation}\label{eq:law:outputs}
\mathbb P\!\left(
\omega_{1t}\in B_1,\ldots,\omega_{At}\in B_A
\mid \mathcal F_{t-1},x_t
\right) =
\prod_{a=1}^A p_a(B_a\mid x_t)
\qquad \text{a.s.}
\end{equation}
The output $\omega_{at}$ is revealed only if box $a$ is queried. In addition, if $a$ is queried, 
a cost $c_{at}:=c_a(x_t,\omega_{at})$ is incurred and observed,
where \(c_a:\mathcal X\times\Omega\to(0,1)\) is the cost function associated with box $a$. 
Finally, at the end of each period, 
the DM selects one queried box $a_{t}$ whose generated output $\omega_{a_{t}t}$ will be deployed. The DM then receives and observes the realized reward associated with the deployed output, $r_{t}=\mu^*(x_t,\omega_{a_{t}t})+\zeta_{t}$, 
where $\mu^*(\cdot,\cdot):\mathcal{X}\times\Omega\rightarrow[0,1]$ is the (potentially unknown) reward function
and $\zeta_{t}$ is the post-deployment noise satisfying $\mathbb{E}[\zeta_t \mid \mathcal{F}_{t-1}, x_t,\omega_{a_t,t}]=0$. 

A key feature of our model is that the reward function is shared across boxes. 
Conditional on the same request-output pair $(x,\omega)$, the expected downstream reward is the same regardless of which box generated the output. 
Thus, box-specific heterogeneity only enters through the distributions of outputs and costs, while realized rewards depend solely on the common reward function $\mu^*(\cdot,\cdot)$ and noise $\zeta_t$. Consequently, outputs $\omega_{at}$ act as intermediaries linking boxes to rewards, which permits cross-learning of $\mu^*$ using reward observations from all selected boxes. 
This reflects our motivating LLM example, where the reward received by the DM depends on the generated output and the request it serves, not directly on the identity of the API that generated it. 

We preview the parametric structures used for learning, with the formal assumptions
and estimators introduced in Sections \ref{sec:known:theta-*} and \ref{sec:estimations}.
Suppose, as an oracle benchmark, the DM knows the conditional output distributions 
\(\{p_a(\cdot\mid x)\}_{a\in[A],x\in\mathcal X}\), and the reward function \(\mu^*\). 
Then, after the context $x_t$ is observed, the decision problem in period $t$ reduces to a contextual Pandora's box problem of the type studied by \citet{weitzman1979optimal}. 
As we formally illustrate in Section~\ref{sec:policy}, the optimal full-information policy is determined by two objects: the reward function \(\mu^*\), and the reservation indices, denoted as \(\{\sigma_a^*(x_t)\}_{a\in[A]}\). These indices are, in turn, determined by the oracle objects \(p_a(\cdot|x_t)\), \(\mu^*\), and \(c_a\) through the reservation index equation \eqref{eq:oracle-weitzman-index} in Proposition \ref{prop:oracle-weitzman}.
In the online learning problem, these oracle objects are not known. Rather than estimating the full conditional output distributions \(p_a(\cdot\mid x)\) and the cost functions $c_a(\cdot,\cdot)$, we impose a generalized linear parametric structure directly on the reward function \(\mu^*\) and on the contextual reservation index functions \(\{\sigma_a^*(\cdot)\}_{a\in[A]}\), namely  
$\mu^*(x,\omega)=G(\theta_*^\top\phi(x,\omega))$ and $\sigma_a^*(x) = \Lambda(\rho_a^\top\psi(x))$ 
where the coefficients $\theta_*$ and $\rho_a$ are unknown; $G$, $\Lambda$ are known (monotonic) link functions and 
$\phi$, $\psi$ are known feature maps, respectively.
This allows the DM to learn the shared reward and box-specific contextual reservation indices through parametric estimation. 

\begin{remark}
Our contextual Pandora's box model maps to LLM cascading as follows. A period corresponds to a service request, and a box corresponds to an LLM API or an API-prompt configuration. Querying box $a$ once produces one candidate output \(\omega_{at}\), the DM then either stops or queries another box in the cascade. This process continues until the DM stops or all boxes have been queried. We restrict each box to be queried at most once within a period. This is consistent with the LLM cascading literature \citep{chenfrugalgpt}, where a request is routed through a sequence of APIs and each API is invoked only if previous outputs are deemed to be not sufficiently reliable by the evaluator.

The output-dependent cost \(c_a(x_t,\omega_{at})\) also arises naturally in LLM applications. For example, under token-based pricing, $c_{at}=c_a(x_t,\omega_{at})=\alpha_a I_{at}+\beta_a O_{at}$, where $I_{at}$ is the input token count of the query and $O_{at}$ is the (random) number of output tokens. Here $I_{at}=\kappa_{a,i}(x_{t})$ and $O_{at}=\kappa_{a,o}(\omega_{at})$, where for each $a\in[A]$, $\kappa_{a,i}:\mathbb{R}^{d_x}\rightarrow\mathbb{R}_{+}$, $\kappa_{a,o}:\mathbb{R}^{d_w}\rightarrow\mathbb{R}_{+}$ are fixed functions. 
\end{remark}

\subsection{DM's Problem and Regret}
Next, we describe the decision-making pipeline in each period $t\in[T]$ in more detail. Each period $t\in[T]$ consists of two phases, which we call \emph{query phase} and \emph{selection phase}.

During the \emph{query phase}, the DM observes $x_t$ and sequentially queries boxes. A policy is non-anticipating: before each query, the DM may use $\mathcal F_{t-1}$, the current context $x_t$, and the output-cost pairs of the boxes already observed in period $t$, but not the output or cost of any unqueried box. Let \(\mathcal A_t\subseteq[A]\) denote the random set of boxes queried in period \(t\). We assume each box can be queried at most once within a period. After each query, the DM observes the generated output and cost, and then decides whether to continue querying or stop.
Combined with (\ref{eq:law:outputs}), this non-anticipativity condition implies a useful sampling property. For any \(a\in[A]\), conditional on \((\mathcal F_{t-1},x_t)\), the event \(\{a\in\mathcal A_t\}\) is determined before observing box \(a\)'s own potential output-cost pair. Hence, whenever \(\mathbb P(a\in\mathcal A_t\mid \mathcal F_{t-1},x_t)>0\), it follows that 
$$(\omega_{at},c_{at})
    \,|\, \mathcal F_{t-1},x_t,\{a\in\mathcal A_t\}
    \;\overset{d}{=}\;
    (\omega_{at},c_{at})
    \,|\, \mathcal F_{t-1},x_t.$$
This property is formally stated and proved as Lemma~\ref{lemma:indep:output:query} in Appendix~\ref{appendix:proof:prop:1:2} and will be used later to justify learning reservation indices from adaptively queried samples.
After the query phase concludes, the DM 
enters the \emph{selection phase}, selecting one queried box $a_{t} \in \mathcal{A}_{t}$ whose generated output $\omega_{a_{t}t}$ will be deployed. 

Let \(\Pi_t\) denote the class of admissible non-anticipating policies in period \(t\). For a policy \(\pi_t\in\Pi_t\), let \(\mathcal A_t(\pi_t)\) be the set of queried boxes and \(a_t(\pi_t)\in\mathcal A_t(\pi_t)\) be the selected box. The conditional expected utility of policy \(\pi_t\) in period \(t\) is 
\[
    U_t(\pi_t;\mathcal F_{t-1},x_t)
    :=
    \mathbb E\bigg[
        \mu^*(x_t,\omega_{a_t(\pi_t)t})
        -
        \sum_{a\in\mathcal A_t(\pi_t)} c_{at}
        \,\bigg|\,
        \mathcal F_{t-1},x_t
    \bigg].
\]
We stress that the post-deployment noise $\zeta_t$ does not enter the utility because we select a queried box prior to obtaining  information about $\zeta_t$ and conditional on the context and the output $\zeta_t$ is mean-zero. However, this noise  matters to the feedback used to learn the reward function $\mu^*(\cdot, \cdot)$. Let $\pi_t^*$ be the (optimal) oracle policy given the knowledge of the distribution $p_a(\cdot|x_t)$, and functions $\mu^*(x_t,\cdot), c_a(x_t,\cdot)$ for every box $a$ and context $x_t$.
The conditional expected utility of $\pi_t^*$ is thus the full-information benchmark 
$$U_t(\pi_t^*;\mathcal F_{t-1},x_t)
= 
\mathbb E\bigg[
    \max_{a\in\mathcal A_t^*}
    \mu^*(x_t,\omega_{at})
    -
    \sum_{a\in\mathcal A_t^*} c_{at}
    \,\bigg|\,
    \mathcal F_{t-1},x_t
\bigg],$$
where $\mathcal{A}_t^*=\mathcal{A}_t(\pi_t^*)$. Then, for any admissible policy \(\boldsymbol \pi=\{\pi_t\}_{t\in[T]}\), we define the cumulative regret as $$R_T(\boldsymbol \pi)
:=
\sum_{t=1}^T
\mathbb E\!\left[
    U_t(\pi_t^*;\mathcal F_{t-1},x_t)
    -
    U_t(\pi_t;\mathcal F_{t-1},x_t)
\right].$$
Readers familiar with the bandit literature might ask whether the problem can be modeled as a contextual bandit by treating each API as an arm. Such a formulation 
would lead to a one-shot API-as-arm policy, in which the decision-maker selects a single API for each request before observing any generated output. It does not, however, capture the adaptive information-acquisition structure of LLM cascading. The following example illustrates that such a restriction can create a constant per-period gap relative to the sequential-search benchmark, even when all primitives are known.

Consider an example with two boxes in which the context $x_t$ is fixed throughout. Querying box 1 costs \(c_1=0.5\) and always generates output \(\omega_{1t}=1\). Querying box 2 costs \(c_2=0.01\) and generates output \(\omega_{2t}=1\) with probability \(0.1\), and \(\omega_{2t}=0\) otherwise. The downstream value is \(\mu^*(x_t,\omega)=\omega\). A one-shot API-as-arm policy obtains expected utility $0.5$ from box 1 and \(0.09\) from box 2, so the best one-shot policy selects box 1.
Now consider a cascade that first queries box 2. If \(\omega_{2t}=1\), it stops and selects box 2; if \(\omega_{2t}=0\), it queries box 1 and selects box 1. Its expected utility is
$-0.01 + 0.1 + 0.9(0.5) = 0.54 > 0.5$.
Thus, relative to the cascade benchmark, any one-shot API-as-arm formulation suffers a constant per-period optimality gap in this instance, and hence a loss that grows linearly with \(T\). 

\section{Policy and Algorithm}\label{sec:policy}
This section develops the online policy and the algorithm. We begin with the full-information benchmark. Fix a period $t$ and condition on
the arriving context $x_t$. Suppose the DM knows the reward function
$\mu^*(\cdot,\cdot)$, the cost function $c_a(\cdot,\cdot)$ and the output distributions $\{p_a(\cdot|x)\}_{a\in[A],x\in\mathcal{X}}$. In this full-information setting, the optimal policy is independent of the history $\mathcal{F}_{t-1}$ and depends only on the current context $x_t$. The problem therefore decouples across periods.
The following proposition characterizes
the optimal oracle policy during period $t$.
\begin{proposition}[Oracle reservation index]
\label{prop:oracle-weitzman}
For each box $a\in[A]$ and any fixed $t \in [T]$, let $\sigma^*_a(\cdot)$
be the index function such that 
\begin{equation}\label{eq:oracle-weitzman-index}
    \mathbb E\left[
        \left\{\mu^*(x_t,\omega_{at})-\sigma^*_a(x_t)\right\}^+
        \,\middle|\, x_t
    \right]
    =
    \mathbb E[c_{at}\mid x_t], \forall x_t.
\end{equation}
Then there exists an optimal policy 
$\pi_t^*$ with the following structure: 
At any period $t$ with context $x_t$, it orders the boxes $\{(1),(2),\ldots,(A)\}$ such that
$\sigma^*_{(1)}(x_t) \ge \cdots \ge \sigma^*_{(A)}(x_t)$, sets
$\sigma^*_{(A+1)}(x_t):=-\infty$, 
queries boxes in this order, stops at the $k$-th query if $\max_{1\le j\le k}
\mu^*(x_t,\omega_{(j)t}) \ge \sigma^*_{(k+1)}(x_t)$,
and selects $a_t^*\in\argmax_{a\in \mathcal A_t^*}\mu^*(x_t,\omega_{at})$, where $\mathcal A_t^*$ is the set of boxes queried by the oracle. 
Moreover, the oracle value satisfies
\begin{equation}\label{eq:max:utility}
    U_t(\pi_t^*;\mathcal{F}_{t-1},x_t)
    =
    \mathbb E\left[
        \max_{a\in[A]}
        \min\left\{
            \mu^*(x_t,\omega_{at}),
            \sigma_a^*(x_t)
        \right\}
        \,\middle|\, x_t,\mathcal{F}_{t-1}
    \right].
\end{equation}
\end{proposition}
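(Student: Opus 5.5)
We follow Weitzman's argument via the ``capped value'' (amortization) approach of Kleinberg, Waggoner and Weyl. Fix $t$ and condition on $(\mathcal F_{t-1},x_t)$. By \eqref{eq:law:outputs} the outputs $\omega_{1t},\ldots,\omega_{At}$ are conditionally independent with laws $p_a(\cdot\mid x_t)$, independent of $\mathcal F_{t-1}$. So the problem is a classical Pandora's box instance with random reward $V_a:=\mu^*(x_t,\omega_{at})\in[0,1]$ and random (output-dependent) cost $c_{at}$. First we check that $\sigma_a^*(x_t)$ is well defined. The map $s\mapsto \mathbb E[(V_a-s)^+\mid x_t]$ is continuous and nonincreasing. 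It equals $\mathbb E[V_a\mid x_t]-s\to\infty$ as $s\to-\infty$, and it equals $0$ for $s\ge 1$. It is strictly decreasing wherever it is positive. Since $\mathbb E[c_{at}\mid x_t]\in(0,1)$, a unique solution exists.

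The key step is a pointwise upper bound valid for every admissible non-anticipating policy $\pi_t$. Set $\kappa_a:=\min\{V_a,\sigma_a^*(x_t)\}$ and write $\sigma_a^*=\sigma^*_a(x_t)$. We use the identity $V_a=\kappa_a+(V_a-\sigma_a^*)^+$. By Lemma~\ref{lemma:indep:output:query} (the event $\{a\in\mathcal A_t\}$ is determined before $\omega_{at}$ is revealed) and the defining equation \eqref{eq:oracle-weitzman-index},
\[
\mathbb E\big[\one\{a\in\mathcal A_t\}\,c_{at}\mid \mathcal F_{t-1},x_t\big]=\mathbb E\big[\one\{a\in\mathcal A_t\}(V_a-\sigma_a^*)^+\mid \mathcal F_{t-1},x_t\big].
\]
To make this rigorous we condition on the information just before the decision to open $a$ and apply the tower property. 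Hence
\[
U_t(\pi_t)=\mathbb E\Big[V_{a_t}-\sum_{a\in\mathcal A_t}(V_a-\sigma_a^*)^+\Big]\le \mathbb E\big[\kappa_{a_t}\big]\le \mathbb E\big[\max_{a\in[A]}\kappa_a\big].
\]
The first inequality uses $a_t\in\mathcal A_t$ together with $(V_a-\sigma_a^*)^+\ge0$ for the unselected boxes.

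The second step shows that the described index policy attains both inequalities with equality. Equality in the first requires that whenever a box $a\in\mathcal A_t^*$ has $V_a>\sigma_a^*$, it is the selected box. Under the index rule, after opening box $(k)$ with $V_{(k)}>\sigma^*_{(k)}\ge\sigma^*_{(k+1)}$, the policy stops immediately. Moreover $V_{(k)}$ exceeds every earlier value: had an earlier value been $\ge\sigma^*_{(k)}$, we would have stopped earlier. So $(k)$ is the argmax and is selected. Equality in the second requires that the selected $\kappa$ is the maximum over all boxes. We split into two cases. If the policy stops at step $k$ with some $V_{(j)}\ge\sigma^*_{(k+1)}$, then the selected capped value is at least $\sigma^*_{(k+1)}\ge\kappa_{(i)}$ for all unopened $i$. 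The opened boxes are handled because the policy selects the max $V$, and the max $V$ dominates in $\kappa$ order given the stopping structure; tie handling is needed here. If the policy opens all boxes, the claim is trivial except for the comparison between $\max V$ and $\max\kappa$, which follows by the same monotonicity. This yields \eqref{eq:max:utility} and optimality.

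The main obstacle is the careful justification of the cost identity for adaptive policies. We will write $\sum_a\one\{a\in\mathcal A_t\}c_{at}$ and, for each $a$, condition on the filtration generated up to the query decision for $a$. Then we use that $\omega_{at}$ is independent of that filtration given $(\mathcal F_{t-1},x_t)$, which is exactly the content of Lemma~\ref{lemma:indep:output:query}. A secondary nuisance is the tie-breaking argument showing that the selected box maximizes $\kappa$.
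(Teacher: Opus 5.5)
Your proposal is correct and follows essentially the same route as the paper. Both proofs use the amortization identity $V_a=\min\{V_a,\sigma_a^*\}+(V_a-\sigma_a^*)^+$, replace costs via Lemma~\ref{lemma:indep:output:query}, bound the value by $\mathbb E[\kappa_{a_t}]\le\mathbb E[\max_a\kappa_a]$, and then check that the index policy makes both inequalities tight.

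The only step you leave vague is that the selected capped value dominates those of the other opened boxes. The paper settles it by splitting on whether the selected position $J$ comes strictly before the stopping position $K$ or equals it. If $J<K$, then $V_{(J)}<\sigma^*_{(J)}$, so $\kappa_{(J)}=\max_{i\le K}V_{(i)}$. If $J=K$, then every earlier $V_{(i)}<\sigma^*_{(K)}$ and $V_{(i)}\le V_{(K)}$. Both cases hold for any choice of argmax, so no special tie-breaking is actually needed.
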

Define $\sigma_{at}^*:=\sigma_a^*(x_t)$. When the costs ${c_{at}}$ are known constants (i.e., $c_{at} \equiv c_a$ 
for each $a \in [A]$), the indices $\{\sigma_{at}^*\}_{a\in[A]}$ are optimal by the seminal work of \citet{weitzman1979optimal}. When costs are stochastic (as in our motivating setting), the result above follows from a minor adaptation of the proof in \citet{kleinberg2016descending} (see Appendix~\ref{appendix:proof:prop:1:2}).

Proposition \ref{prop:oracle-weitzman} formally identifies the reward evaluator $\mu^*(x,\omega)$ 
and the reservation index function $\sigma_a^*(x)$ as the two objects required to make the optimal sequential decisions. We next describe the online policy as Algorithm \ref{alg:DOW}. At a high level, the policy follows the oracle structure in
Proposition~\ref{prop:oracle-weitzman}, but replaces the unknown reward evaluator
and reservation indices with optimistic estimates.
We call the resulting policy
Double Optimistic Weitzman's rule (DOW). 
\begin{remark}
Our policy estimates the reservation index function $\sigma_{a}^*(x_t)$ through the conditional moment equation \eqref{eq:oracle-weitzman-index} in Proposition~\ref{prop:oracle-weitzman}. Note that equation \eqref{eq:oracle-weitzman-index} characterizes the index at the population level. In our setting, however, the output-cost pair $(\omega_{at},c_{at})$ is observed only when box $a$ is queried, that is, when $a\in\mathcal{A}_t$. 
Hence, a direct empirical analogue of  \eqref{eq:oracle-weitzman-index} cannot use unqueried boxes, whose outputs and costs are not observed. Nevertheless, conditioning on  box $a$ being queried, the same moment equation continues to identify the reservation index. That is, we have
\begin{equation}\label{eq:known-theta-queried-moment}
\mathbb{E}\left[\{ \mu^*(x_t,\omega_{at})-\sigma_{a}^*(x_t)\}^{+}-c_{at}|x_t,\mathcal{F}_{t-1}, a\in\mathcal{A}_t\right]=0,\quad\forall a 
\mbox{ such that } P(a \in \mathcal{A}_t |x_t,\mathcal{F}_{t-1})>0.
\end{equation}
This is because conditional on $x_t,\mathcal{F}_{t-1}$, the query event $a\in\mathcal{A}_t$ is determined before the realized output-cost pair $(\omega_{at},c_{at})$ is observed, and hence is independent of this pair. 
Thus although outputs and costs are only observed for queried boxes, these observations continue to identify $\sigma_a^*(x_t)$.
\end{remark}

\subsection{Optimism and Regret Decomposition}
\label{subsec:optimism-regret}
Our \DOW policy constructs optimistic estimates $\widetilde \mu_t(x,\omega)$ and $\widetilde \sigma_{at},\ \forall a\in[A], t\in[T]$. The estimate $\widetilde \mu_t(x,\omega)$ is an optimistic estimate of $\mu^*(x,\omega)$ for a realized output. The estimate
$\widetilde \sigma_{at}$ is an optimistic estimate of the oracle reservation
index $\sigma_a^*(x_t)$ for box $a$ at the current context. The formal
construction of these estimates is postponed to Sections~\ref{sec:known:theta-*}
and~\ref{sec:estimations}, and here we focus only on their roles in the policy. To understand why optimism is crucial in our DOW policy, we next present a theorem showing that, under optimism, the one-period regret decomposes into the reward estimation error for the selected output and the index estimation errors for the queried boxes.
\begin{theorem}[Regret decomposition under optimism]
\label{thm:optimistic-regret-decomposition}
Fix any period $t\in[T]$ and define  $\Delta_t(\widetilde\pi)
    :=
    U_t(\pi_t^*;\mathcal{F}_{t-1}, x_t)-U_t(\widetilde\pi_t;\mathcal{F}_{t-1}, x_t)$
as the period-$t$ conditional regret of the policy
$\widetilde\pi$ induced by Algorithm~\ref{alg:DOW}.
Suppose that, for all $a\in[A]$, $\widetilde\mu_t(x_t,\omega_{at})
    \ge
    \mu^*(x_t,\omega_{at})$, $\widetilde\sigma_{at}
    \ge
    \sigma^*_{at}$. Let $a_t
    \in
    \operatorname*{arg\,max}_{a\in\mathcal A_t}
    \widetilde\mu_t(x_t,\omega_{at})$ be the box selected by Algorithm~\ref{alg:DOW}. Then,
\[
    \Delta_t(\widetilde\pi)
    \le
    \mathbb E\left[
        \widetilde\mu_t(x_t,\omega_{a_t t})
        -
        \mu^*(x_t,\omega_{a_t t})\mid \mathcal F_{t-1},x_t
    \right]
    +
    \mathbb E\left[
        \sum_{a\in\mathcal A_t}
        \left(
            \widetilde\sigma_{at}-\sigma^*_{at}
        \right)\mid \mathcal F_{t-1},x_t
    \right].
\]
\end{theorem}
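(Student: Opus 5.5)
The plan is to compare both the oracle and the DOW policy against a common intermediate quantity, the \emph{optimistic Weitzman value}
\[
\widetilde V_t:=\mathbb E\Big[\max_{a\in[A]}\min\{\widetilde\mu_t(x_t,\omega_{at}),\widetilde\sigma_{at}\}\,\Big|\,\mathcal F_{t-1},x_t\Big].
\]
The argument then has two halves. First, I show $U_t(\pi_t^*)\le \widetilde V_t$. This is immediate from \eqref{eq:max:utility} in Proposition~\ref{prop:oracle-weitzman} together with optimism. Indeed, $\min\{\mu^*,\sigma^*_{at}\}\le\min\{\widetilde\mu_t,\widetilde\sigma_{at}\}$ pointwise for every $a$, and the maximum over $a$ and the expectation preserve this inequality. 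Here I am implicitly using that $\widetilde\mu_t,\widetilde\sigma_{at}$ are $\mathcal F_{t-1}$-measurable, with $\widetilde\mu_t$ evaluated at the realized outputs. Second, I bound $\widetilde V_t-U_t(\widetilde\pi_t)$ by the two error terms. This is a Kleinberg-type ``descending price'' accounting (as in \citet{kleinberg2016descending}, used in the proof of Proposition~\ref{prop:oracle-weitzman}) applied to the DOW run.

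For the second half, the key identity is the following. For each box $a$, define $\kappa_{at}:=\min\{\mu^*(x_t,\omega_{at}),\sigma^*_{at}\}$. The moment equation \eqref{eq:oracle-weitzman-index} gives
\[
c_{at}\text{-in-expectation}=\mathbb E[(\mu^*-\sigma^*_{at})^+].
\]
By the independence property (Lemma~\ref{lemma:indep:output:query}, i.e.\ \eqref{eq:known-theta-queried-moment}), this also holds conditional on $a\in\mathcal A_t$. Hence
\[
\mathbb E\Big[\sum_{a\in\mathcal A_t}c_{at}\Big]=\mathbb E\Big[\sum_{a\in\mathcal A_t}(\mu^*(x_t,\omega_{at})-\sigma^*_{at})^+\Big].
\]
The next step is to write
\[
U_t(\widetilde\pi_t)=\mathbb E\Big[\mu^*(x_t,\omega_{a_tt})-\sum_{a\in\mathcal A_t}(\mu^*_{a}-\sigma^*_{at})^+\Big].
\]
I would then compare this pathwise with the quantity $\max_{a\in\mathcal A_t}\min\{\widetilde\mu_{a},\widetilde\sigma_{at}\}$, where I abbreviate $\mu^*_a=\mu^*(x_t,\omega_{at})$ and $\widetilde\mu_a=\widetilde\mu_t(x_t,\omega_{at})$.

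Two structural facts about the DOW rule are needed, mirroring the oracle proof. (i) The DOW policy runs Weitzman's rule on $(\widetilde\mu,\widetilde\sigma)$. It queries in decreasing $\widetilde\sigma$ order and stops once $\max_{\text{queried}}\widetilde\mu\ge$ the next $\widetilde\sigma$. Consequently, $\max_{a\in[A]}\min\{\widetilde\mu_a,\widetilde\sigma_{at}\}$ is attained within $\mathcal A_t$ and equals $\max_{a\in\mathcal A_t}\min\{\widetilde\mu_a,\widetilde\sigma_{at}\}\le\widetilde\mu_{a_t}$, since $a_t$ maximizes $\widetilde\mu$ over $\mathcal A_t$. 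This gives pathwise
\[
\max_a\min\{\widetilde\mu_a,\widetilde\sigma_{at}\}\le \widetilde\mu_{a_t}=\mu^*_{a_t}+(\widetilde\mu_{a_t}-\mu^*_{a_t}).
\]
(ii) It remains to absorb the costs. We need pathwise
\[
\sum_{a\in\mathcal A_t}(\mu^*_a-\sigma^*_{at})^+\le\sum_{a\in\mathcal A_t}(\widetilde\sigma_{at}-\sigma^*_{at})+\big[\text{something that cancels}\big].
\]
Here I would instead use the $\widetilde\sigma$ version of the accounting. Write $(\mu^*_a-\sigma^*_{at})^+\le(\mu^*_a-\widetilde\sigma_{at})^++(\widetilde\sigma_{at}-\sigma^*_{at})$, which holds by the Lipschitz property of $(\cdot)^+$ together with $\widetilde\sigma\ge\sigma^*$. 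Then I need the pathwise inequality
\[
\mu^*_{a_t}-\sum_{a\in\mathcal A_t}(\mu^*_a-\widetilde\sigma_{at})^+\ \ge\ \max_{a\in\mathcal A_t}\min\{\mu^*_a,\widetilde\sigma_{at}\}-(\widetilde\mu_{a_t}-\mu^*_{a_t}).
\]
At most one queried box can have its realized value exceed its own optimistic index before stopping, namely the one that triggers stopping (the last queried). The standard Kleinberg argument then shows $\max_{a}\min\{\mu^*_a,\widetilde\sigma_{at}\}+(\mu^*_{a_\ell}-\widetilde\sigma_{a_\ell t})^+$ is at most the true value of the best queried box. The selection slack, where $a_t$ is chosen by $\widetilde\mu$ rather than $\mu^*$, is charged to $\widetilde\mu_{a_t}-\mu^*_{a_t}$ using $\widetilde\mu\ge\mu^*$.

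The main obstacle is this last pathwise step. The DOW stopping rule is triggered by $\widetilde\mu$ rather than $\mu^*$, so the ``exercised'' box in Kleinberg's sense may differ from the box the argument needs, and boxes with $\mu^*_a>\widetilde\sigma_{at}$ can occur without triggering a stop. I would handle this by using $\widetilde\mu$ throughout as the comparison value. Concretely, I would bound $(\mu^*_a-\widetilde\sigma_{at})^+\le(\widetilde\mu_a-\widetilde\sigma_{at})^+$, which is nonzero only for the final queried box by the stopping rule. I would then verify that $\widetilde\mu_{a_t}-(\widetilde\mu_{a_\ell}-\widetilde\sigma_{a_\ell t})^+\ge\max_{a\in\mathcal A_t}\min\{\widetilde\mu_a,\widetilde\sigma_{at}\}$ by a short case check on whether $a_t=a_\ell$. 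Taking conditional expectations and combining with the first half then yields the stated bound.
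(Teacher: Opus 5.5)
Your final argument is correct and is essentially the paper's proof. The paper introduces the pseudo-cost $\tilde c_{at}=(\widetilde\mu_a-\widetilde\sigma_{at})^+$ and applies Proposition~\ref{prop:oracle-weitzman} to the optimistic problem to get $\mathbb E[\widetilde\mu_{a_t}-\sum_{a\in\mathcal A_t}\tilde c_{at}\mid\mathcal F_{t-1},x_t]=\widetilde V_t$, which is what your case check establishes pathwise. It then charges the true costs via $(\mu^*_a-\sigma^*_{at})^+\le(\widetilde\mu_a-\widetilde\sigma_{at})^++(\widetilde\sigma_{at}-\sigma^*_{at})$ together with Lemma~\ref{lemma:indep:output:query}, as you do; you only apply the independence step before the pathwise comparison instead of after it.

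Two intermediate steps are off, though neither affects your final argument. Your display with $\max_{a\in\mathcal A_t}\min\{\mu^*_a,\widetilde\sigma_{at}\}$ on the right would not chain with the first half. The worry that boxes with $\mu^*_a>\widetilde\sigma_{at}$ can occur without stopping is unfounded, since every non-final queried box has $\mu^*_a\le\widetilde\mu_a<\widetilde\sigma_{at}$. The $\widetilde\mu$-based version you settle on is the right one.
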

We provide intuition on why Theorem \ref{thm:optimistic-regret-decomposition} holds.
Consider an auxiliary ``optimistic'' problem in which both its reward estimate $\widetilde\mu_t$ and its index $\widetilde\sigma_{at}$ upper-bound the truth. 
Then \DOW is the optimal policy for the ``optimistic'' problem, and its corresponding optimistic expected value is thus at least as large as the optimal expected value of the true problem. Hence, the loss can be bounded by the discrepancy between the optimistic quantities used by \DOW and the true quantities realized along the same path. This yields exactly two errors: a reward estimation error for the selected output, and an index-estimation error for every queried box.

The decomposition in Theorem \ref{thm:optimistic-regret-decomposition} allows us to leverage the principle of optimism to control our regret. More specifically, 
if a box is queried frequently, its query dataset grows and its index uncertainty decreases. If a box is queried rarely, then it contributes only
rarely to the second term in Theorem~\ref{thm:optimistic-regret-decomposition}. This mirrors the classic self-correcting principle behind UCB analysis, adapted here to govern reservation indices and reward rather than the direct feedback in bandits.

Consequently, Theorem~\ref{thm:optimistic-regret-decomposition} motivates us to construct high-probability confidence bounds for both reward and reservation indices and use them to define the optimistic estimators $\widetilde{\mu}_t(x_t,\omega_{at})$ and $\widetilde{\sigma}_{at}$. The goal is to ensure that, with high probability, these estimators upper bound their population counterparts while remaining sufficiently accurate. Together, optimism and error control provide the key ingredients for deriving the final regret guarantee of the \DOW policy, whose high-level structure is summarized as Algorithm~\ref{alg:DOW}. 
The full implementation, including the computation of optimistic reservation indices and reward estimates, is deferred to Algorithm~\ref{alg:DOW:detailed} in the Appendix.

\begin{algorithm}[!t]
\caption{\DOW Policy Framework}
\label{alg:DOW}
\textbf{Initialization.}
Observe an initial context $x_0$. Query each box $a\in[A]$ once and record the observed output-cost pairs $\{(\omega_{a0},c_{a0})\}_{a\in[A]}$. 

\For{$t=1,2,\ldots,T$}{
    Observe context $x_t$\;

    Construct optimistic reservation index estimates
    $\{\widetilde\sigma_{at}\}_{a\in[A]}$\;
    Sort
    boxes so that $\widetilde\sigma_{(1)t}\ge
        \widetilde\sigma_{(2)t}\ge
        \cdots\ge
        \widetilde\sigma_{(A)t}$. Set $\widetilde\sigma_{(A+1)t}:=-\infty$\;

    \For{$k=1,2,\ldots,A$}{
        Query box $(k)$ and observe its output and cost
        $(\omega_{(k)t},c_{(k)t})$\;

        Compute the optimistic reward estimate
        $\widetilde\mu_t(x_t,\omega_{(k)t})$\;
        
        \textbf{if} $\max_{l\in[k]}
            \widetilde\mu_t(x_t,\omega_{(l)t})\ge \widetilde\sigma_{(k+1)t}$, stop querying and \textbf{break}\;
    }

    Set $\mathcal{A}_t$ as queried boxes. Deploy
    $
        a_t\in
        \argmax_{a\in\mathcal A_t}
        \widetilde\mu_t(x_t,\omega_{at}),
    $
    and observe reward $r_t$\;
}
\end{algorithm}

\section{Results under Known Reward Function}\label{sec:known:theta-*}
To build intuition, we first study the case in which the reward function $\mu^*(x,\omega)$ is known. This setting is practically relevant when the reward model can be estimated offline using a substantially larger historical dataset.  If the offline data are sufficiently rich, the resulting estimate of the reward function $\mu^*(\cdot,\cdot)$ may be accurate enough relative to the $\sqrt{T}$ scale of the online regret analysis, to be treated as fixed in the subsequent online decision problem. Importantly, even with a known reward function, the conditional distribution of the output $\omega_{at}$ given $x_t$ typically remains unknown.

This case therefore captures regimes in which the main online learning challenge is not reward estimation, but learning the reservation indices and managing exploration across boxes. Indeed, once box $a$ is
opened in period $t$, the decision-maker observes its output $\omega_{at}$ and cost $c_{at}$, and can therefore compute the conditional expected reward
$\mu^*(x_t,\omega_{at})$. 
Thus, in Algorithm~\ref{alg:DOW}, we would set $\widetilde{\mu}_t(x_t,\omega_{at})=\mu^*(x_t,\omega_{at}).$ 
The selection rule then chooses the opened box with the largest true conditional expected reward, and the regret component arising from reward estimation disappears. The remaining learning problem is then focused solely on estimating $\sigma_a^*(x_t)$ for each $a$.

This is analogous to the bandit-feedback setting in \citet{atsidakou2024contextual}, in that feedback is obtained only from opened boxes.\endnote{The authors also study a full information setting where the DM observes the rewards of all boxes at the end of each round.}
\citet{atsidakou2024contextual} develop a black-box reduction from contextual Pandora's Box to online regression, using a specially designed linear-quadratic loss to learn all the reservation indices. In contrast, our approach follows the principle of optimism from the UCB literature. As we illustrate, under suitable regularity conditions, it is not necessary to learn all boxes' index functions uniformly well. Instead, it is enough to maintain optimistic index estimates
and refine the estimates of boxes that remain relevant to the query process. We next formalize this intuition. Recall from Section~\ref{sec:model} that we impose the parametric structure on the indices
\begin{equation}\label{eq:index:a}
    \sigma_a^*(x)=\Lambda(\rho_a^\top\psi(x)),\qquad \forall x\in\mathcal{X},
\end{equation}
where $\Lambda$ is a known monotonic link function, and $\psi(\cdot)$ is a known feature map. 
We impose the following regularity condition on the parametric reservation index function \eqref{eq:index:a}:
\begin{assumption}[Regularity conditions for reservation indices]
\label{ass:reg_res_index}
The link function $\Lambda:\mathbb R\to[-1,1]$ is continuously differentiable,
strictly increasing, and $L$-Lipschitz, $\displaystyle\lim_{s\rightarrow-\infty}\Lambda(s)=-1$, $\displaystyle\lim_{s\rightarrow+\infty}\Lambda(s)=1$. Moreover, $\displaystyle\inf_{\rho\in\mathcal B,\;x\in\mathcal X}\Lambda'(\rho^\top\psi(x))\ge \mu_1>0, \|\psi(x)\|_2\le \bar C_\psi\ \text{for all } x\in\mathcal X$. The parameter space $\mathcal B\subset\mathbb R^m$ is convex and compact where $\rho_a\in\mathcal{B}$, with
diameter $d_{\mathcal B}$: $\sup_{\rho,\rho'\in\mathcal B}\|\rho-\rho'\|_2
    \le d_{\mathcal B}$.
\end{assumption}
The range restriction $\Lambda:\mathbb{R}\to[-1,1]$ is consistent with the fact that $c_{at}\in(0,1)$ and $\mu^*\in[0,1]$, which imply $\sigma_{at}^*\in[-1,1]$. In addition, Assumption~\ref{ass:reg_res_index} imposes only mild regularity conditions. It accommodates the linear reservation index specification considered by \citet{atsidakou2024contextual}, which corresponds to taking $\Lambda$ as the identity map on the relevant, strictly bounded domain of $\rho^\top\psi(x)$, while allowing it to smoothly flatten to $\pm1$ as $s\to\pm\infty$. Assumption~\ref{ass:reg_res_index} yields the point identification of $\rho_a^\top\psi(x_t)$:
\begin{proposition}\label{prop:uniqueness:rho:a:star}
Under Assumption \ref{ass:reg_res_index}, $\rho_a^\top\psi(x_t)$ is point identified through \eqref{eq:known-theta-queried-moment} for each $a\in[A]$. 
\end{proposition}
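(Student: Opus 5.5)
The plan is to reduce identification of $\rho_a^\top\psi(x_t)$ to uniqueness of the root of a scalar, strictly monotone function, and then use strict monotonicity of $\Lambda$ to pass from the index value $s$ to the linear predictor $\rho_a^\top\psi(x_t)$. Fix $a$ and condition on $(x_t,\mathcal F_{t-1})$ with $P(a\in\mathcal A_t\mid x_t,\mathcal F_{t-1})>0$. By Lemma~\ref{lemma:indep:output:query}, the conditional law of $(\omega_{at},c_{at})$ given $\{a\in\mathcal A_t\}$ equals its law given only $(x_t,\mathcal F_{t-1})$. By \eqref{eq:law:outputs}, that law is $p_a(\cdot\mid x_t)$. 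So the moment condition \eqref{eq:known-theta-queried-moment}, viewed as an equation in a candidate scalar $s$, reads
\[
h_a(s):=\mathbb E\left[\{\mu^*(x_t,\omega_{at})-\Lambda(s)\}^+ - c_{at}\mid x_t\right]=0 .
\]
It therefore suffices to show that $h_a$ has exactly one root on $\mathbb R$, and that this root is $\rho_a^\top\psi(x_t)$.

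Existence is immediate, since $s=\rho_a^\top\psi(x_t)$ solves the equation by \eqref{eq:oracle-weitzman-index} and \eqref{eq:index:a}. For uniqueness, define $g(\sigma):=\mathbb E[\{\mu^*(x_t,\omega_{at})-\sigma\}^+\mid x_t]$. This function is continuous and nonincreasing in $\sigma$. It is strictly decreasing on $\{\sigma: P(\mu^*(x_t,\omega_{at})>\sigma\mid x_t)>0\}$, and it vanishes for $\sigma\ge \operatorname{ess\,sup}\mu^*$. Because $c_{at}\in(0,1)$, we have $\bar c:=\mathbb E[c_{at}\mid x_t]>0$.

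Now take any root $\sigma$ of $g(\sigma)=\bar c$. Since $g(\sigma)=\bar c>0$, we get $P(\mu^*>\sigma)>0$. Hence $g$ is strictly decreasing in a neighborhood to the left of $\sigma$ and up to $\sigma$. Suppose there were two roots $\sigma_1<\sigma_2$. Then $g(\sigma_1)=g(\sigma_2)=\bar c>0$, while $g(\sigma_1)-g(\sigma_2)\ge \mathbb E[(\sigma_2-\sigma_1)\wedge(\mu^*-\sigma_1)^+\cdot \mathbb I\{\mu^*>\sigma_1\}]$. This last quantity is strictly positive because $P(\mu^*>\sigma_2)>0$, which gives a contradiction. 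So the root in $\sigma$ is unique. Finally, $\Lambda$ is strictly increasing by Assumption~\ref{ass:reg_res_index}, hence injective, so $h_a(s)=0$ forces $\Lambda(s)=\sigma_a^*(x_t)$ and thus $s=\rho_a^\top\psi(x_t)$.

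The main obstacle is justifying strict monotonicity of $g$ at the root. This is where $c_{at}>0$ is essential: it rules out a flat region of $g$ at level zero, which would otherwise allow many solutions. One must also check that the root lies in the range $(-1,1)$ of $\Lambda$, so that $\Lambda(s)=\sigma^*_a(x_t)$ is solvable. That holds because $\bar c<1$ and $\mu^*\in[0,1]$ imply $\sigma_a^*(x_t)\in(-1,1)$; in any case, existence is guaranteed by the model assumption \eqref{eq:index:a}.
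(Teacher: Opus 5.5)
Your proposal is correct and follows essentially the paper's route: you drop the query conditioning via Lemma~\ref{lemma:indep:output:query}, use $\mathbb{E}[c_{at}\mid x_t]>0$ to force positive mass above the larger of two putative roots, and derive a contradiction from strict monotonicity. The differences are minor. You take existence directly from the model specification \eqref{eq:index:a}, whereas the paper uses an intermediate-value argument with $\Lambda(\pm\infty)=\pm1$. You also argue uniqueness in $\sigma$-space and then invoke injectivity of $\Lambda$, whereas the paper works in $s$ directly.
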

At the beginning of period $t$,
Algorithm~\ref{alg:DOW} estimates $\rho_a$ by $\hat{\rho}_{at}$ and constructs an
upper confidence bound for the linear index $\rho_a^\top\psi(x_t)$. Let
$\mathcal{R}_{at}(x_t)$ denote the corresponding confidence radius, and define $\widetilde{\sigma}_{at}
=
\Lambda\!\left(
    \hat{\rho}_{at}^{\top}\psi(x_t)
    +
    \mathcal{R}_{at}(x_t)
\right)$. Let $\mathcal{E}_{\rho}$ denote the high-probability event on which $\left|
    \hat{\rho}_{at}^{\top}\psi(x_t)
    -
    \rho_a^\top\psi(x_t)
\right|
\leq
\mathcal{R}_{at}(x_t),\
\forall a\in[A],\; t\in[T]$. Then on $\mathcal{E}_{\rho}$, by the monotonicity of $\Lambda$, $\widetilde{\sigma}_{at}
\geq
\Lambda(\rho_a^\top\psi(x_t))
=
\sigma_{at}^*, \ \forall a\in[A],\; t\in[T]$,
allowing us to invoke the regret decomposition of Theorem~\ref{thm:optimistic-regret-decomposition}. 
Recall that $\widetilde{\mu}_t=\mu^*$ in the known-$\mu^*$ case, implying the reward estimation term in
Theorem~\ref{thm:optimistic-regret-decomposition} is zero. Therefore, under the event $\mathcal{E}_{\rho}$,
Theorem~\ref{thm:optimistic-regret-decomposition} implies that the period-$t$ regret satisfies
\begin{equation}\label{eq:knowntheta_regret_structure}
\Delta_t(\tilde{\pi})
\leq
\mathbb{E}\left[
    \sum_{a\in\mathcal{A}_t}
    \left(\widetilde{\sigma}_{at}-\sigma_{at}^*\right)
    \,\bigg|\, x_t,\mathcal{F}_{t-1}
\right],
\end{equation}
where $\mathcal{A}_t$ is the random set of boxes opened by Algorithm~\ref{alg:DOW} in
period $t$. Inequality~\eqref{eq:knowntheta_regret_structure} illustrates that the regret depends only on the index estimation errors of boxes that are actually opened. Thus,
frequently opened boxes yield larger sample sizes that shrink their confidence radii, while rarely opened boxes contribute negligibly to the overall regret. Once we obtain a valid
high-probability confidence radius for the linear index $\rho_a^\top\psi(x_t)$, the cumulative regret can be controlled by the standard elliptical-potential argument used for contextual bandits \citep[e.g.,][]{abbasi2011improved}. However, unlike in standard contextual bandits, an opened box does not produce a noisy
observation of its reservation index $\sigma_{at}^*$ but instead produces a tuple $(x_t,\omega_{at},c_{at})$. Therefore, the index parameter must be
learned indirectly from the defining equation \eqref{eq:known-theta-queried-moment} of the reservation index. 

\subsection{Index Estimation and Loss Function}
We next describe how to estimate the box-specific reservation index function $\sigma_a^*(\cdot)$. Given \eqref{eq:known-theta-queried-moment} and \eqref{eq:index:a}, our estimation problem is moment-based and is thus inspired by the generalized method of moments (GMM). For this purpose, we define the known-$\mu^*$ moment function, for any $x\in\mathcal{X}$, $\omega\in\Omega$, $c\in[0,1]$, and
$\rho\in\mathcal{B}$, as
\begin{equation}\label{eq:known-theta-moment-function}
m^*(x,\omega,c;\rho)
:=
c-\left(\mu^*(x,\omega)-\Lambda(\rho^\top\psi(x))\right)^+ .
\end{equation}
We have
$\mathbb{E}[
    m^*(x_t,\omega_{at},c_{at};\rho_a)
    \,|\,
    \mathcal{F}_{t-1},x_t,\ a\in\mathcal{A}_t]
=0$ by \eqref{eq:known-theta-queried-moment}. Let $\mathcal{S}_{at}:=\{s<t:\ a\in\mathcal{A}_s\},\ n_{at}:=|\mathcal{S}_{at}|$ denote the set and number of past periods in which box $a$ was queried before period $t$. Order the elements in $\mathcal{S}_{at}$ as
$\tau_{a,1}<\tau_{a,2}<\cdots<\tau_{a,n_{at}}$. As we discuss below, our estimation problem is a special case of the GMM framework with conditional moment conditions. 
By setting the moment conditions based on the features $\psi(x_s)$, 
we can formulate it as an $M$-estimation problem that is more suitable for our online learning setting.
A standard GMM approach would instead form the empirical vector moment $\bar g_{at}(\rho)
    :=
    \frac{1}{n_{at}}
    \sum_{s\in\mathcal{S}_{at}}
    \psi(x_s)m^*(x_s,\omega_{as},c_{as};\rho)$ and minimize a quadratic criterion such as
$\bar g_{at}(\rho)^\top W_{at}\bar g_{at}(\rho)$ for some positive definite weighting matrix $W_{at}$ at each period $t$. However, this approach focuses on estimating the coefficients $\rho_a$, 
which requires a full-rank condition on the local GMM curvature.
Such a requirement can be overly stringent in our setting, as it requires that the observations in which box $a$ is queried contain sufficiently many ``active'' samples, namely those satisfying $\mu^*(x_t,\omega_{at})>\Lambda(\rho_a^\top\psi(x_t))$, 
and that the corresponding feature vectors $\psi(x_t)$ span the parameter space in a well-conditioned way conditional on $a\in\mathcal{A}_t$.

Our objective, however, is not to recover the entire vector $\rho_a$. For the UCB analysis, it is enough to construct a confidence interval for the scalar quantity $\rho_a^\top\psi(x_t)$ at the current context. 
We therefore adopt an $M$-estimation formulation, whose first-order condition recovers the sample moment. For a queried sample of box $a$ in period $s$, define
\begin{equation}\label{eq:known-theta--loss}
    \ell_{as}(\rho)
    :=
    \int_{0}^{\rho^\top\psi(x_s)}
    \left[
        c_{as}
        -
        \left(\mu^*(x_s,\omega_{as})-\Lambda(u)\right)^+
    \right]du.
\end{equation}
Note that $\ell_{as}(\rho)$ is convex in $\rho$, as $\Lambda(\cdot)$ is strictly increasing. Also, the derivative of
$\ell_{as}$ satisfies
\begin{equation}\label{eq:known-theta-loss-gradient}
\nabla_{\rho}\ell_{as}(\rho)=\psi(x_s)\left[
c_{as}
-
\left(\mu^*(x_s,\omega_{as})-\Lambda(\rho^\top\psi(x_s))\right)^+
\right]
= \psi(x_s)m^*(x_s,\omega_{as},c_{as};\rho).
\end{equation}
Therefore, the first-order condition of the empirical loss minimization is equivalent
to the sample analogue of the vector moment equation. The known-$\mu^*$
estimator of $\rho_a$ for period $t$ is
\begin{equation}\label{eq:known-theta-rho-estimator}
    \hat{\rho}_{at}
    \in
    \argmin_{\rho\in\mathcal{B}}
    \sum_{s\in\mathcal{S}_{at}}
    \ell_{as}(\rho).
\end{equation}
We next state the regularity condition that yields curvature of the population loss.
The condition requires that $\rho^\top\psi(x)$ remains in a bounded
region, and for every point in this region, there is some probability mass that the reward of the output is above its corresponding reservation index.

\begin{assumption}[Local mass around reservation thresholds]
\label{ass:losscurvature}
There exist constants $\bar\iota>0$ and $\kappa>0$ such that, for all
$x\in\mathcal{X}$ and $\rho\in\mathcal{B}$, $|\psi(x)^\top\rho|\leq \bar\iota$,
and for all $a\in[A]$, $x\in\mathcal{X}$, $|u|\leq \bar\iota$, we have $\mathbb{P}_{\omega\sim p_a(\cdot\mid x)}
    \left(
        \mu^*(x,\omega)>\Lambda(u)
    \right)
    \geq \kappa $.
\end{assumption}
Assumption~\ref{ass:losscurvature} rules out the cases in which the reservation index lies in a region of the reward with no probability mass above it. Together with the lower derivative
bound $\Lambda'(\cdot)\geq \mu_1$ from Assumption~\ref{ass:reg_res_index}, it ensures that the
reservation index moment crosses zero with slope bounded away from zero. This is the
one-dimensional source of curvature for the loss function.
\begin{lemma}[Population curvature of the loss function]\label{lem:known-theta-loss-curvature}
Under Assumptions~\ref{ass:reg_res_index}, \ref{ass:losscurvature}, for any $t\in[T]$, $a\in[A]$ and $\rho\in\cB$,
\begin{equation}\label{eq:glm-population-curvature}
\mathbb{E}[\ell_{at}(\rho)-\ell_{at}(\rho_a)\mid x_t,\mathcal{F}_{t-1}]
\ge \frac{\kappa\mu_1}{2}\bigl(\psi(x_t)^\top(\rho-\rho_a)\bigr)^2.
\end{equation}
\end{lemma}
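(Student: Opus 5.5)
The plan is to reduce everything to a one-dimensional problem in the scalar $u=\rho^\top\psi(x_t)$. Write $u_a:=\rho_a^\top\psi(x_t)$ and $u:=\rho^\top\psi(x_t)$. By Assumption~\ref{ass:losscurvature}, both $u$ and $u_a$ lie in $[-\bar\iota,\bar\iota]$. By definition \eqref{eq:known-theta--loss},
\[
\ell_{at}(\rho)-\ell_{at}(\rho_a)=\int_{u_a}^{u}\bigl[c_{at}-(\mu^*(x_t,\omega_{at})-\Lambda(v))^+\bigr]\,dv .
\]
Define $h(v):=\mathbb{E}\bigl[c_{at}-(\mu^*(x_t,\omega_{at})-\Lambda(v))^+\mid x_t,\mathcal F_{t-1}\bigr]$. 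The integrand is bounded by $2$ in absolute value, so Fubini lets us exchange expectation and integral. The conditional expected loss difference therefore equals $\int_{u_a}^{u}h(v)\,dv$.

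By the sampling property (Lemma~\ref{lemma:indep:output:query}) and \eqref{eq:law:outputs}, the conditional law of $(\omega_{at},c_{at})$ given $(x_t,\mathcal F_{t-1})$ is $p_a(\cdot\mid x_t)$. Hence \eqref{eq:oracle-weitzman-index} gives $h(u_a)=0$. If the statement is read with the extra conditioning on $a\in\mathcal A_t$, then \eqref{eq:known-theta-queried-moment} gives the same conclusion.

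The key step is a lower bound on the slope of $h$. The map $v\mapsto -(\mu^*-\Lambda(v))^+$ is nondecreasing and Lipschitz, with a.e.\ derivative $\Lambda'(v)\,\mathbb{I}\{\mu^*(x_t,\omega_{at})>\Lambda(v)\}$. So for $v_1<v_2$ in $[-\bar\iota,\bar\iota]$ we have
\[
h(v_2)-h(v_1)=\int_{v_1}^{v_2}\mathbb{E}\bigl[\Lambda'(v)\,\mathbb{I}\{\mu^*(x_t,\omega_{at})>\Lambda(v)\}\mid x_t\bigr]\,dv\ \ge\ \mu_1\kappa\,(v_2-v_1).
\]
This uses dominated convergence, justified because $\Lambda$ is $L$-Lipschitz. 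It also uses two inputs from the assumptions. The first is $\Lambda'\ge\mu_1$ on the relevant range: Assumption~\ref{ass:reg_res_index} gives this at points $\rho'^\top\psi(x)$ with $\rho'\in\mathcal B$, and convexity of $\mathcal B$ places every $v$ between $u_a$ and $u$ in that form, namely $v=((1-\lambda)\rho_a+\lambda\rho)^\top\psi(x_t)$. The second is the mass condition of Assumption~\ref{ass:losscurvature} at every $|v|\le\bar\iota$. Together these give $h(v)\ge\mu_1\kappa(v-u_a)$ for $v\ge u_a$, and $h(v)\le\mu_1\kappa(v-u_a)$ for $v\le u_a$.

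Integrating finishes the argument. When $u\ge u_a$, we get $\int_{u_a}^u h\ge\mu_1\kappa(u-u_a)^2/2$. When $u<u_a$, we have $\int_{u_a}^u h=\int_u^{u_a}(-h)\ge\mu_1\kappa(u_a-u)^2/2$. In both cases the bound equals $\tfrac{\kappa\mu_1}{2}(\psi(x_t)^\top(\rho-\rho_a))^2$, which is the claim.

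The main obstacle is the careful justification of the derivative/monotonicity step, since $(\cdot)^+$ has a kink, and of the fact that the whole segment stays in the region where both assumptions apply. I would handle the kink by working with the monotone difference directly, avoiding derivatives. For $v_1<v_2$,
\[
(\mu^*-\Lambda(v_1))^+-(\mu^*-\Lambda(v_2))^+\ \ge\ (\Lambda(v_2)-\Lambda(v_1))\,\mathbb{I}\{\mu^*>\Lambda(v_2)\}.
\]
Taking expectations and applying the mean value theorem with $\Lambda'\ge\mu_1$ then yields $h(v_2)-h(v_1)\ge\mu_1\kappa(v_2-v_1)$. The mass condition is applied here at $v_2$, which satisfies $|v_2|\le\bar\iota$.
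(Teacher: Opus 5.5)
Your proof is correct and follows essentially the paper's route. The paper proves the same two ingredients: the conditional moment $m_{at}(u)=-h(u)$ vanishes at $u_a=\rho_a^\top\psi(x_t)$ and is strongly monotone with slope $\kappa\mu_1$ (its Lemma~\ref{lem:glm-monotonicity}). It then integrates the gradient of the expected loss along the segment from $\rho_a$ to $\rho$, which is exactly your scalar integral $\int_{u_a}^{u}h(v)\,dv$ after the substitution $v=u_a+s(u-u_a)$. Your use of convexity of $\mathcal B$ to get $\Lambda'\ge\mu_1$ only on that segment, and your derivative-free difference inequality, are slightly more careful than the paper's presentation but do not change the argument.
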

The curvature bound in Lemma~\ref{lem:known-theta-loss-curvature} illustrates the advantage of using the $M$-estimation formulation. 
It shows that the population loss increases quadratically in the prediction
error $\psi(x_t)^\top(\rho-\rho_a)$. 
As a preview of the subsequent analysis, we will compare this quadratic curvature against the empirical fluctuation of the additive loss
$\sum_{s\in\mathcal{S}_{at}}\ell_{as}(\rho)$. 
By controlling this fluctuation uniformly over $\rho\in\mathcal{B}$ via martingale concentration, we can absorb it into the curvature term. This procedure yields a high-probability bound on $|(\hat{\rho}_{at}-\rho_a)^\top\psi(x_t)|$, providing the confidence radius needed to construct the optimistic index $\widetilde{\sigma}_{at}$ in \eqref{eq:knowntheta_regret_structure}.

\subsection{Confidence Bound for a Fixed Box}\label{subsection:indices:known:theta}
Next, we construct confidence radii for the reservation indices. Because $\Lambda(\cdot)$ is monotone and $\sigma_{at}^*=\Lambda(\rho_a^\top\psi(x_t))$, for each fixed box $a \in [A]$ we focus on constructing a confidence interval for the scalar index $\rho_a^\top\psi(x_t)$ at the realized context $x_t$. We define $H_{at}(\rho)$, the empirical fluctuation of the loss at $\rho$, as 
\begin{equation}\label{eq:H:at:rho:known:theta}
H_{at}(\rho):=\sum_{s \in \mathcal{S}_{at}}\left\{\ell_{as}(\rho)-\ell_{as}(\rho_a)-\mathbb{E}\!\left[\ell_{as}(\rho)-\ell_{as}(\rho_a)\mid \mathcal{F}_{s-1},x_s\right]\right\}.
\end{equation}
By optimality of $\hat{\rho}_{at}$, the empirical excess loss
$\sum_{s\in\mathcal S_{at}} \left\{\ell_{as}(\hat{\rho}_{at}) - \ell_{as}(\rho_a) \right\}$ is non-positive. 
This sum can be decomposed into its conditional expectation and empirical fluctuation as
$$
\sum_{s\in\mathcal S_{at}}
\left\{\ell_{as}(\hat{\rho}_{at})-\ell_{as}(\rho_a)\right\}
=
    \sum_{s\in\mathcal S_{at}}
    \mathbb E\left[\ell_{as}(\hat{\rho}_{at})-\ell_{as}(\rho_a)
        \,\middle|\,
        \mathcal F_{s-1},x_s
    \right]+ H_{at}(\hat{\rho}_{at}).
$$
Lemma \ref{lem:known-theta-loss-curvature} implies that the conditional expectation is bounded below by the quadratic form
$\kappa\mu_1/2
\sum_{s\in\mathcal S_{at}}
\left\{\psi(x_s)^\top(\hat{\rho}_{at}-\rho_a)\right\}^2$.
Consequently, to establish a confidence interval for $\rho_a^\top \psi(x_t)$, it suffices to bound the empirical fluctuation $H_{at}(\hat{\rho}_{at})$ using a comparable quadratic term and a logarithmic complexity term, as established in the following lemma.

\begin{lemma}\label{lemma:H:empirical:process}
Suppose Assumptions~\ref{ass:reg_res_index}, \ref{ass:losscurvature} hold. 
Fix $a\in[A]$, and given any constant $c_0>0$, with probability at least $1-\delta/3$, uniformly over all $t\in[T]$, for any $\rho_t$ adapted to $\mathcal{F}_{t-1}$
we have 
$$-H_{at}(\rho_t) \leq\frac{c_0}{8}\sum_{s \in \mathcal{S}_{at}}\{(\rho_t-\rho_a)^\top\psi(x_{s})\}^2+\left(\frac{144}{c_0}+C_0\right)(m\log T+\log(T/\delta)),$$
where $C_0>0$ is a constant depending on $d_{\mathcal{B}},\bar{C}_{\psi},\overline{\iota}$. 
\end{lemma}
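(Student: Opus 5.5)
The plan is a standard self-normalized martingale bound combined with a covering argument over $\mathcal{B}$. Fix $a$ and write $\Delta = \rho-\rho_a$. For each queried period $s$, define
\[
D_s(\rho) := \ell_{as}(\rho)-\ell_{as}(\rho_a)-\mathbb{E}[\ell_{as}(\rho)-\ell_{as}(\rho_a)\mid\mathcal{F}_{s-1},x_s],
\]
so that $H_{at}(\rho)=\sum_{s\in\mathcal{S}_{at}}D_s(\rho)$.

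\textbf{Step 1 (fixed $\rho$).} Since $\ell_{as}(\rho)-\ell_{as}(\rho_a)=\int_{\rho_a^\top\psi(x_s)}^{\rho^\top\psi(x_s)} m_s(u)\,du$ with $m_s(u)=c_{as}-(\mu^*(x_s,\omega_{as})-\Lambda(u))^+\in[-1,1]$, we get $|\ell_{as}(\rho)-\ell_{as}(\rho_a)|\le |\Delta^\top\psi(x_s)|\le 2\bar\iota$. Hence $D_s(\rho)$ is a martingale difference, indexed along the query times $\tau_{a,1}<\tau_{a,2}<\cdots$ (the query event at $s$ is $\mathcal{F}_{s-1},x_s$-measurable in the sense of Lemma~\ref{lemma:indep:output:query}). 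Its conditional range is at most $2|\Delta^\top\psi(x_s)|$, so its conditional variance is at most $4(\Delta^\top\psi(x_s))^2$.

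Freedman's inequality, or equivalently the exponential supermartingale $\exp(\lambda\sum D_s-\lambda^2 e\sum(\Delta^\top\psi(x_s))^2\cdot c)$ with $\lambda$ chosen so that the quadratic coefficient equals $c_0/16$, then gives, with probability $1-\delta'$,
\[
-\sum D_s(\rho)\le \tfrac{c_0}{16}\sum(\Delta^\top\psi(x_s))^2 + \tfrac{C}{c_0}\log(1/\delta').
\]
The constant $144/c_0$ is what emerges from $\lambda\asymp c_0$ combined with range bounds of order $1$; choosing a fixed $\lambda$ avoids any peeling.

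\textbf{Step 2 (uniformity).} The bound must hold for data-dependent $\rho_t$ and for all $t$. I would take an $\epsilon$-net $\mathcal{N}$ of $\mathcal{B}$ with $\epsilon=1/T$, of cardinality at most $(3d_{\mathcal B}T)^m$, and union-bound over $\mathcal{N}$ and over $t\in[T]$ (or use the time-uniform supermartingale via Ville's inequality), with $\delta'=\delta/(3|\mathcal{N}|T)$. This produces the $m\log T+\log(T/\delta)$ term.

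For a general $\rho_t$ (which is in $\mathcal{B}$, so adaptedness is not even needed once the bound is uniform), pick $\rho'\in\mathcal N$ with $\|\rho_t-\rho'\|\le 1/T$. Because $\ell_{as}$ is $\bar C_\psi$-Lipschitz in $\rho$ (the gradient is $\psi m$ with $|m|\le1$), we have $|H_{at}(\rho_t)-H_{at}(\rho')|\le 2n_{at}\bar C_\psi/T\le 2\bar C_\psi$. Similarly,
\[
\textstyle\sum(\Delta'^\top\psi)^2\le 2\sum(\Delta_t^\top\psi)^2 + 2n_{at}\bar C_\psi^2/T^2,
\]
so the $c_0/16$ becomes $c_0/8$. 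The discretization errors are absorbed into $C_0$, which depends on $\bar C_\psi$, $d_{\mathcal B}$ and $\bar\iota$ (the last through the range bound).

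\textbf{Main obstacle.} The delicate step is Step 1: the right-hand side must scale with the realized quadratic $\sum(\Delta^\top\psi(x_s))^2$ rather than with $n_{at}$. The variance proxy has to be predictable, and it is, since it depends only on $x_s$. It also matters that the increments are bounded uniformly by $2\bar\iota$, so the exponential moment bound $e^{y}\le 1+y+y^2$ for $|y|\le1$ applies with $\lambda\le 1/(4\bar\iota)$. If $c_0$ is large, so that the optimal $\lambda$ exceeds this cap, I would cap $\lambda$ and put the extra term into $C_0$. This explains why the final constant has the form $144/c_0+C_0$.
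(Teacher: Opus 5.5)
Your proof is correct up to constants. It differs from the paper's in the concentration step.

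\textbf{Comparison with the paper's route.} The paper applies Freedman's inequality with a deterministic variance ceiling on dyadic shells $\{2^{q-1}<\sum_{s\in\mathcal{S}_{at}}((\rho-\rho_a)^\top\psi(x_s))^2\le 2^q\}$, i.e.\ a peeling argument. This yields the self-normalized bound $-H_{at}(\rho)\lesssim\sqrt{\Gamma_{at}(\delta)\,(1+\sum_s((\rho-\rho_a)^\top\psi(x_s))^2)}+\Gamma_{at}(\delta)$, which carries an extra $\log(Q_{at}+1)$ term. The paper then passes to an $\epsilon$-net with $\epsilon=1/(T\bar C_\psi)$ and a union over $t$. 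Only at the very end does it split the square root via $2ab\le\epsilon a^2+\epsilon^{-1}b^2$ to get $\frac{c_0}{8}\sum(\cdot)^2+\frac{144}{c_0}\Gamma_{at}(\delta)$.

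You instead exploit the fact that the target is already linear in the quadratic form. A single exponential supermartingale with $\lambda\propto c_0$ delivers it directly. This removes the peeling and the $\log\log$ term, and Ville's inequality gives time-uniformity without a union over $t$.

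You do not even need the cap on $\lambda$. Conditionally on $\mathcal{G}_{s-1}=\sigma(\mathcal{F}_{s-1},x_s)$, each increment lies in an interval of predictable length $3|(\rho-\rho_a)^\top\psi(x_s)|$. Hoeffding's lemma then gives a sub-Gaussian supermartingale for every $\lambda>0$, so the cap and the $\bar\iota$-dependent term it generates are unnecessary.

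What the paper's route buys is a single high-probability event that does not depend on $c_0$, since the $c_0$-split is a deterministic afterthought. Your event is tied to the chosen $\lambda$. Because $c_0$ is fixed in the statement (and equals $\kappa\mu_1$ where it is used), this costs you nothing.

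\textbf{Slips to fix.} None of these breaks the argument.
\begin{itemize}
\item $m_s(u)=c_{as}-(\mu^*(x_s,\omega_{as})-\Lambda(u))^+$ lies in $[-2,1]$, not $[-1,1]$, because $\Lambda(u)$ can be close to $-1$. Consequently $\ell_{as}$ is $2\bar C_\psi$-Lipschitz, the increments have conditional range $3|\Delta^\top\psi(x_s)|$, and the discretization error is $4\bar C_\psi$ rather than $2\bar C_\psi$. Only constants change.
\item The event $\{a\in\mathcal{A}_s\}$ is not $\sigma(\mathcal{F}_{s-1},x_s)$-measurable, since it depends on the outputs of boxes queried earlier in period $s$. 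What you actually need is the conditional independence in Lemma~\ref{lemma:indep:output:query}. Let $p_s=\mathbb{P}(a\in\mathcal{A}_s\mid\mathcal{G}_{s-1})$ and let $Y_s$ be the fixed-$\rho$ exponent increment. Then $\mathbb{E}[\exp(\mathbb{I}\{a\in\mathcal{A}_s\}Y_s)\mid\mathcal{G}_{s-1}]=1-p_s+p_s\,\mathbb{E}[e^{Y_s}\mid\mathcal{G}_{s-1}]\le 1$. This makes your process a supermartingale with respect to $\{\mathcal{F}_t\}$, with the variance proxy summed over the realized set $\mathcal{S}_{at}$, exactly as you use it.
\item Your leftover discretization term $\frac{c_0}{8}n_{at}\bar C_\psi^2/T^2$ is not literally $c_0$-free. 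It is, however, smaller than the additive $c_0/8$ that the paper's own proof carries, so it is harmless for fixed $c_0$.
\end{itemize}
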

Lemma \ref{lemma:H:empirical:process} is derived through standard concentration arguments. 
For a fixed $\rho$, $H_{at}(\rho)$ form a martingale difference sequence whose conditional variance is controlled by the
quadratic term that appears in the population curvature lemma. A peeling
argument over the size of this quadratic term and an $\epsilon$-net argument over $\mathcal B$ yield the uniform bound, which therefore also holds for adapted $\rho_t$. 

We now build the confidence interval for the scalar index $\rho_a^\top\psi(x_t)$ using Lemmas \ref{lem:known-theta-loss-curvature} and \ref{lemma:H:empirical:process}. Define the regularized
empirical design matrix
\begin{equation}
\label{eq:known-theta-V}
    V_{at}(\eta_1)
    :=
    \eta_1I_m
    +
    \sum_{s\in\mathcal{S}_{at}}
    \psi(x_s)\psi(x_s)^\top ,
\end{equation}
where $\eta_1>0$ is a regularization parameter and $I_m$ is the \(m\times m\) identity matrix. Intuitively, $V_{at}(\eta_1)$ summarizes the contexts in which box $a$ has been queried before period $t$. The next proposition establishes a high-probability bound on $|(\hat{\rho}_{at}-\rho_a)^\top \psi(x_t)|$.
\begin{proposition}\label{prop:rho:a:confidence:theta:known}
Suppose Assumptions~\ref{ass:reg_res_index}, \ref{ass:losscurvature} hold. When $\mu^*$ is known, given any $\delta>0$, with probability at least $1-\delta$, uniformly over all $a\in[A]$ and $t\in[T]$, we have 
\begin{equation}\label{eq:index:error:theta:known}
    |(\hat{\rho}_{at}-\rho_a)^\top\psi(x_t)|\leq B_{at}^*\|\psi(x_t)\|_{V_{at}(\eta_1)^{-1}},
\end{equation}
where $B_{at}^*=\sqrt{C_1[m\log(T)+\log(TA/\delta)]+\eta_1d_{\mathcal{B}}^2}$, and $C_1$ is a constant depending on $\kappa,\mu_1,\bar{\iota},\bar{C}_{\psi},d_{\mathcal{B}}$ defined in Assumptions~\ref{ass:reg_res_index}, \ref{ass:losscurvature}. 
\end{proposition}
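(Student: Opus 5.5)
We combine the optimality of $\hat\rho_{at}$, the population curvature of Lemma~\ref{lem:known-theta-loss-curvature}, and the uniform fluctuation bound of Lemma~\ref{lemma:H:empirical:process}. This gives a bound on the in-sample prediction error $\sum_{s\in\mathcal S_{at}}\{(\hat\rho_{at}-\rho_a)^\top\psi(x_s)\}^2$, which then converts into a self-normalized bound via $V_{at}(\eta_1)$ and Cauchy--Schwarz.

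\emph{Step 1 (basic inequality).} Fix a box $a$. Note that $\hat\rho_{at}$ is $\mathcal F_{t-1}$-measurable, since it uses only samples with $s<t$. Since $\rho_a\in\mathcal B$ is feasible, optimality gives $\sum_{s\in\mathcal S_{at}}\{\ell_{as}(\hat\rho_{at})-\ell_{as}(\rho_a)\}\le 0$. Use the decomposition displayed before Lemma~\ref{lemma:H:empirical:process}, and apply Lemma~\ref{lem:known-theta-loss-curvature} at each $s\in\mathcal S_{at}$ with the fixed (relative to $\mathcal F_{s-1},x_s$) parameter $\hat\rho_{at}$. Together these give
\[
\frac{\kappa\mu_1}{2}\,D_{at}\le -H_{at}(\hat\rho_{at}),\qquad D_{at}:=\sum_{s\in\mathcal S_{at}}\{(\hat\rho_{at}-\rho_a)^\top\psi(x_s)\}^2 .
\]
A subtlety arises here: Lemma~\ref{lem:known-theta-loss-curvature} is stated for a deterministic $\rho$, whereas $\hat\rho_{at}$ depends on data from periods $s'>s$. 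I would handle this as the paper does in Lemma~\ref{lemma:H:empirical:process}. Define $H_{at}(\rho)$ pointwise in $\rho$, with the conditional expectation evaluated at a fixed $\rho$, and plug in $\rho=\hat\rho_{at}$ afterwards. The curvature bound is then applied pointwise at deterministic $\rho$ and evaluated at $\hat\rho_{at}$. The uniform-in-$\rho$ statement of Lemma~\ref{lemma:H:empirical:process} is exactly what licenses this substitution.

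\emph{Step 2 (absorb the fluctuation).} Apply Lemma~\ref{lemma:H:empirical:process} with $c_0=2\kappa\mu_1$ and $\rho_t=\hat\rho_{at}$, which is adapted. On an event of probability at least $1-\delta/(3A)$ for box $a$ (replace $\delta$ by $\delta/A$ in the lemma), the lemma gives
\[
\frac{\kappa\mu_1}{2}D_{at}\le \frac{\kappa\mu_1}{4}D_{at}+\Bigl(\frac{72}{\kappa\mu_1}+C_0\Bigr)\bigl(m\log T+\log(TA/\delta)\bigr).
\]
Hence $D_{at}\le C_1[m\log T+\log(TA/\delta)]$ with $C_1=\frac{4}{\kappa\mu_1}(\frac{72}{\kappa\mu_1}+C_0)$. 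This constant depends only on $\kappa,\mu_1,\bar\iota,\bar C_\psi,d_{\mathcal B}$. A union bound over $a\in[A]$ makes the statement hold simultaneously for all $a$ and $t$ with probability at least $1-\delta$.

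\emph{Step 3 (self-normalization).} Observe that $\|\hat\rho_{at}-\rho_a\|_{V_{at}(\eta_1)}^2=\eta_1\|\hat\rho_{at}-\rho_a\|_2^2+D_{at}\le \eta_1 d_{\mathcal B}^2+D_{at}$, because both parameters lie in $\mathcal B$. Cauchy--Schwarz then gives
\[
|(\hat\rho_{at}-\rho_a)^\top\psi(x_t)|\le \|\hat\rho_{at}-\rho_a\|_{V_{at}(\eta_1)}\|\psi(x_t)\|_{V_{at}(\eta_1)^{-1}}\le B^*_{at}\|\psi(x_t)\|_{V_{at}(\eta_1)^{-1}},
\]
which is \eqref{eq:index:error:theta:known}.

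The main obstacle is Step 1, namely the legitimacy of applying the conditional curvature to the data-dependent $\hat\rho_{at}$. This is resolved by the uniformity over $\rho$ built into Lemma~\ref{lemma:H:empirical:process}, together with the definition of $H_{at}$ as a function of $\rho$. Once that is settled, the remaining steps are bookkeeping.
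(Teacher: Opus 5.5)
Your proof follows the paper's route. You use the optimality of $\hat\rho_{at}$, the split into conditional expectation plus fluctuation $H_{at}$, the curvature bound, Lemma~\ref{lemma:H:empirical:process} with a union bound over boxes, and then $\eta_1 d_{\mathcal B}^2$ plus Cauchy--Schwarz. Taking $c_0=2\kappa\mu_1$ instead of the paper's $c_0=\kappa\mu_1$ only changes constants.

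The one step whose justification is wrong as written is the curvature step in Step 1. The uniformity of Lemma~\ref{lemma:H:empirical:process} controls only the fluctuation $H_{at}(\hat\rho_{at})$. It says nothing about the term $\sum_{s\in\mathcal S_{at}}\mathbb E[\ell_{as}(\rho)-\ell_{as}(\rho_a)\mid\mathcal F_{s-1},x_s]$ evaluated at $\rho=\hat\rho_{at}$. Lemma~\ref{lem:known-theta-loss-curvature} gives its quadratic lower bound almost surely for each fixed $\rho$, so the exceptional null set depends on $\rho$. Substituting the data-dependent $\hat\rho_{at}$ therefore needs the bound to hold for all $\rho\in\mathcal B$ simultaneously on one full-measure event.

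The paper gets this from a separate $\epsilon$-net argument, Lemma~\ref{lemma:curvature:hat:rho:at}. That lemma costs an additive $1$, which is why the paper's basic inequality reads $\frac{\kappa\mu_1}{2}D_{at}\le -H_{at}(\hat\rho_{at})+1$. Alternatively, you can use the version $\mathbb E[\ell_{as}(\rho)-\ell_{as}(\rho_a)\mid\mathcal F_{s-1},x_s]=\Psi_a(x_s,\rho)$, where $\Psi_a(x,\rho)$ is the integral of the loss difference against the kernel $p_a(\cdot\mid x)$. With that version the proof of Lemma~\ref{lem:known-theta-loss-curvature} holds deterministically for every $(x,\rho)$, and your slack-free inequality is valid, provided $H_{at}$ is defined with the same version. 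That is compatible with the proof of Lemma~\ref{lemma:H:empirical:process}. Either way the extra constant is absorbed into $C_1$, so your conclusion stands once this step is justified correctly.
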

Proposition~\ref{prop:rho:a:confidence:theta:known} 
has the usual online-learning interpretation \citep[see, e.g.,][]{abbasi2011improved,lattimore2020bandit} as the
confidence bounds used in bandit UCB algorithms. 
The confidence bound is large when the current context points in a direction that has not been well explored for box $a$, and small when the historical opened-box contexts provide enough information in that direction. 

Motivated by Proposition~\ref{prop:rho:a:confidence:theta:known}, we define the optimistic
index in the known-$\mu^*$ case as
\begin{equation}\label{eq:tilde:sigma:theta:known}
\widetilde{\sigma}_{at}
=
\Lambda\left(
    \hat{\rho}_{at}^\top\psi(x_t)
    +B_{at}^*
    \|\psi(x_t)\|_{V_{at}(\eta_1)^{-1}}
\right),
\end{equation}
Since $\Lambda$ is increasing, Proposition~\ref{prop:rho:a:confidence:theta:known} implies $\widetilde{\sigma}_{at}\geq \sigma_{at}^*$ uniformly over $a\in[A], t\in[T]$ with probability at least $1-\delta$. This, in turn, is used to establish the uniform optimism event required for the regret guarantee. In implementation, one may calibrate this radius using a bootstrap approximation to the distribution of $(\hat{\rho}_{at}-\rho_a)^\top\psi(x_t)$, which may yield less conservative confidence bounds. Such a bootstrap procedure is not covered by the present theory and proving its validity would require a uniform bootstrap approximation under adaptive sampling. 

\subsection{Regret under Known Reward Function}
Now we bound the cumulative regret when $\mu^*$ is known so that $\widetilde{\mu}_t=\mu^*$. When the
indices are optimistic, i.e., $\widetilde{\sigma}_{at}\geq\sigma_{at}^*, \forall a\in[A],\ t\in[T]$, Theorem~\ref{thm:optimistic-regret-decomposition} gives $\Delta_t(\tilde{\pi})
\leq
\mathbb{E}\left[
    \sum_{a\in\mathcal{A}_t}
    \left(\widetilde{\sigma}_{at}-\sigma_{at}^*\right)
    \,\middle|\, x_t,\mathcal{F}_{t-1}
\right]$. Using the Lipschitz continuity of $\Lambda$ and the confidence bound in
Proposition~\ref{prop:rho:a:confidence:theta:known}, we obtain $\widetilde{\sigma}_{at}-\sigma_{at}^*\leq2LB_{at}^*\|\psi(x_t)\|_{V_{at}(\eta_1)^{-1}}$ on the high-probability event, where $B_{at}^*\leq\mathrm{O}(\sqrt{m\log(T)+\log(TA)})$. Thus the cumulative regret is controlled by
$\sum_{t=1}^T\sum_{a\in\mathcal{A}_t}\|\psi(x_t)\|_{V_{at}(\eta_1)^{-1}}$.
Every time box $a$ is queried, the corresponding
feature vector $\psi(x_t)$ is added to its design matrix. Therefore, the uncertainty in the same
direction decreases over time. The standard elliptical-potential argument implies that for
each box $a\in[A]$,
\begin{equation}\label{eq:bound:index:known:theta}
    \sum_{t=1}^TB_{at}^*\|\psi(x_t)\|_{V_{at}(\eta_1)^{-1}}
    \mathbb{I}\{a\in\mathcal{A}_t\}
    \leq
    \widetilde{O}(m\sqrt{T}).
\end{equation}
Summing this bound over the $A$ boxes yields the following regret guarantee:
\begin{proposition}\label{prop:cumulative:regret:theta:known}
    Suppose Assumptions \ref{ass:reg_res_index}, \ref{ass:losscurvature} hold. When the reward function $\mu^*$ is known, we have $\displaystyle\mathbb{E}\bigg[\sum_{t=1}^T\Delta_t(\tilde{\pi})\bigg]\leq\widetilde{O}\left(Am\sqrt{T}\right)$.
\end{proposition}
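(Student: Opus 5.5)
The plan is to split the expected regret on the good event $\mathcal{E}_\rho$ of Proposition~\ref{prop:rho:a:confidence:theta:known}, taken with $\delta = 1/T$, and its complement. On $\mathcal{E}_\rho^c$, each period's conditional regret is at most a constant. Utilities lie in $[-A,1]$ since $\mu^*\in[0,1]$ and $c_{at}\in(0,1)$, so $\Delta_t(\tilde\pi)\le A+1$. This part therefore contributes at most $(A+1)T\cdot\mathbb{P}(\mathcal{E}_\rho^c)\le A+1$.

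One subtlety is that $\Delta_t$ is a conditional expectation given $(\mathcal F_{t-1},x_t)$, while $\mathcal{E}_\rho$ involves future periods. I would therefore work with the per-period event $\mathcal{E}_{\rho,t}$, on which the confidence bounds hold at time $t$ for all $a$. This event is $(\mathcal F_{t-1},x_t)$-measurable, because $\hat\rho_{at}$, $V_{at}$ and $x_t$ are all determined at the start of period $t$. I then write $\mathbb{E}[\Delta_t]\le \mathbb{E}[\Delta_t\mathbb{I}\{\mathcal{E}_{\rho,t}\}]+(A+1)\mathbb{P}(\mathcal{E}_{\rho,t}^c)$. Since $\mathcal{E}_\rho\subseteq\mathcal{E}_{\rho,t}$, the second term sums over $t$ to $O(A)$.

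On $\mathcal{E}_{\rho,t}$, monotonicity of $\Lambda$ gives $\widetilde\sigma_{at}\ge\sigma^*_{at}$ for all $a$, and $\widetilde\mu_t=\mu^*$. Theorem~\ref{thm:optimistic-regret-decomposition} then yields \eqref{eq:knowntheta_regret_structure}. Using the $L$-Lipschitz property of $\Lambda$ and the confidence bound,
\[
\widetilde\sigma_{at}-\sigma^*_{at}\le L\bigl(\hat\rho_{at}^\top\psi(x_t)+B^*_{at}\|\psi(x_t)\|_{V_{at}(\eta_1)^{-1}}-\rho_a^\top\psi(x_t)\bigr)\le 2LB^*_{at}\|\psi(x_t)\|_{V_{at}(\eta_1)^{-1}}.
\]
The difference is also trivially at most $2$. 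Taking total expectation with the tower property, the regret on the good events is at most $\mathbb{E}\bigl[\sum_t\sum_{a}\mathbb{I}\{a\in\mathcal{A}_t\}\min\{2,\,2LB^*_{at}\|\psi(x_t)\|_{V_{at}^{-1}}\}\bigr]$.

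The remaining step is the elliptical-potential bound \eqref{eq:bound:index:known:theta}. Fix $a$. The matrix $V_{at}$ is updated by $\psi(x_t)\psi(x_t)^\top$ exactly when $a\in\mathcal{A}_t$. The initialization query contributes one further rank-one term, which does no harm. Since $B^*_{at}\le \bar B:=\sqrt{C_1[m\log T+\log(T^2A)]+\eta_1 d_{\mathcal B}^2}=O(\sqrt{m\log T+\log A})$, I bound each term by $\max(2, 2L\bar B)\min\{1,\|\psi(x_t)\|_{V_{at}^{-1}}\}$. Cauchy–Schwarz over the $n_{a,T+1}\le T$ query times then gives
\[
\sum_t \mathbb{I}\{a\in\mathcal A_t\}\min\{1,\|\psi(x_t)\|_{V_{at}^{-1}}\}\le\sqrt{T\cdot 2\log\det V_{a,T+1}/\det(\eta_1 I_m)}\le\sqrt{2Tm\log(1+T\bar C_\psi^2/(m\eta_1))},
\]
by the standard determinant lemma of \citet{abbasi2011improved}. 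Multiplying by $\bar B$ gives $\widetilde O(m\sqrt T)$ per box, treating the constants $L, C_1, \bar C_\psi, d_{\mathcal B},\eta_1$ as fixed. Summing over the $A$ boxes and adding the $O(A)$ failure term gives $\widetilde O(Am\sqrt T)$.

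The main obstacle is the bookkeeping around the high-probability event. I must check that the bound of Proposition~\ref{prop:rho:a:confidence:theta:known} is stated at the $(\mathcal F_{t-1},x_t)$-measurable level, so that it can be combined with a conditional-expectation regret bound that integrates over $\mathcal A_t$. I must also check that the sum over $a\in\mathcal{A}_t$ is correctly converted into per-box potentials whose design matrices are exactly the matrices $V_{at}$ being updated. Both points are handled by the per-period event and the tower property, so I expect no genuine difficulty beyond that.
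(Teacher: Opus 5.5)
Your proof is correct and follows the paper's argument. The shared steps are: optimism on the high-probability confidence event with $\delta=1/T$; Theorem~\ref{thm:optimistic-regret-decomposition} plus the Lipschitz property of $\Lambda$, giving a per-queried-box error of $2LB^*_{at}\|\psi(x_t)\|_{V_{at}(\eta_1)^{-1}}$; a per-box elliptical-potential bound summed over the $A$ boxes; and an $O(A)$ failure term. Your $(\mathcal F_{t-1},x_t)$-measurable per-period event $\mathcal{E}_{\rho,t}$, used in place of the paper's conditioning on the global event, and the $\min\{1,\cdot\}$ clipping are minor refinements. The per-period event actually makes the conditioning step more rigorous than in the paper's version.
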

Compared with the $\widetilde O(T^{5/6})$ bound of \citet{atsidakou2024contextual} for a more general contextual Pandora’s Box setting, our $\widetilde O(\sqrt T)$ rate relies on the additional local-curvature structure in Assumption~\ref{ass:losscurvature}, which enables UCB-style confidence control for the reservation indices. 
\begin{remark}\label{rmk:x:predictive}
The bound in \eqref{eq:bound:index:known:theta} is pathwise and therefore Proposition~\ref{prop:cumulative:regret:theta:known} does not require $x_t$ to be independent. 
It holds for any sequence of contexts, including $x_t$ that are predictable with respect to $\mathcal{F}_{t-1}$. 
The independence assumption is, however, needed for the analysis when the reward function is unknown.
\proofend
\end{remark}

\section{Results under Unknown Reward Function}\label{sec:estimations}
We now drop the assumption that $\mu^*$ is known and study the problem where $\mu^*$ must be learned online. The preceding known-$\mu^*$ result guides the analysis when $\mu^*$ is unknown. Relative to the previous section, two additional challenges arise: (i) The reward estimation term in Theorem~\ref{thm:optimistic-regret-decomposition} is no longer zero; (ii) the reservation index estimation can no longer use the true conditional expected reward $\mu^*(x_s,\omega_{as})$ when defining the loss. Recall from Section~\ref{sec:model} that we impose the generalized linear parametric structure on the expected reward function:
\begin{equation}\label{eq:reward:function}
    \mu^*(x,\omega)=G(\theta_*^\top \phi(x,\omega)),\qquad\forall (x,\omega)\in\mathcal{X}\times\Omega,
\end{equation}
where $G:\mathbb{R}\rightarrow[0,1]$ is strictly increasing. 
In Section~\ref{sec:estimating:rewards} we address challenge (i) using a standard penalized maximum likelihood estimator (MLE) construction based on \eqref{eq:reward:function}, in Section~\ref{sec:estimating:indices} we address challenge (ii) by showing that the main intuitions from Proposition~\ref{prop:rho:a:confidence:theta:known} continue to work. In particular, Proposition~\ref{prop:confidence:rho} extends Proposition~\ref{prop:rho:a:confidence:theta:known} to this setting by accounting for the plug-in error for the loss. Similarly, the index estimator is controlled by population curvature, empirical concentration, and now an additional perturbation term from reward estimation. Once the reward parameter is learned at the standard online rate, this perturbation is small enough to preserve the overall $\widetilde O(\sqrt T)$ regret rate. 

However, challenge (ii) with unknown reward is more delicate because reward learning uses selected outputs while index estimation is based on queried outputs. Hence controlling the plug-in loss requires an additional assumption ensuring that the reward-design matrix is sufficiently informative for the queried features that enter the index estimator.

We begin by imposing the following regularity condition on the expected reward function:
\begin{assumption}[Regularity conditions for expected reward function]\label{ass:G:phi:regularity}
(i) The function $G:\mathbb{R}\rightarrow [0,1]$ is continuously differentiable and strictly increasing, with Lipschitz constant $L$. Furthermore, $\underline{\mu}:=\inf_{\theta\in\Theta,x\in\mathcal{X},\omega\in\Omega}G'(\theta^\top \phi(x,\omega))>0$, and for some absolute constant $L_1>0$, $|G''(\theta^\top \phi(x,\omega))|\leq L_1$, $\forall x\in\mathcal{X}, \omega\in\Omega, \theta\in\Theta$, where $\theta_*\in\Theta$, and $\Theta\subset\mathbb{R}^d$ is compact and convex. (ii) For an absolute constant $\bar{C}_{\phi},\bar{\alpha}>0$, $\|\phi(x,\omega)\|_2\leq\bar{C}_{\phi}$, $\forall\omega\in\Omega, x\in\mathcal{X}$ and $\|\theta_*\|_2\leq \bar{\alpha}$. 
\end{assumption}
Assumption~\ref{ass:G:phi:regularity} is standard in the generalized linear contextual bandit literature \citep[e.g.,][]{filippi2010parametric,li2017provably,zhou2019learning,lee2024unified}. The Lipschitz constant $L$ controls how errors in $\hat\theta_{t-1}$ propagate into the reward prediction, the lower bound $\underline\mu$ on $G'$ ensures uniform strong convexity of the objective, and the boundedness of $\phi$ and $\theta_*$ keeps the analysis in the standard regime. Common examples of $G$ include any known cumulative distribution function (CDF) such as the logistic and probit CDFs. 

We next impose an assumption on the random reward observed by the DM: 
\begin{assumption}\label{ass:reward:perturbed}
Given any $x\in\mathcal{X}$ and $\omega\in\Omega$, conditioning on the pair of input context and output of the selected API $(x_t,\omega_{a_tt})=(x,\omega)$ during period $t$, the DM observes the random reward $r_t=G(\theta_*^\top\phi(x,\omega))+\zeta_t$, where $\mathbb{E}[\zeta_{t}|\omega_{a_tt},x_t,\mathcal{F}_{t-1}]=\mathbb{E}[\zeta_{t}|\omega_{a_tt},\{\omega_{a't}\}_{a'\neq a_t},x_t,\mathcal{F}_{t-1}]=0$ for any $t\in[T]$ almost surely, and $r_t\in[0,\gamma_0]$ a.s. for some constant $\gamma_{0}>1$.
\end{assumption}
Assumption~\ref{ass:reward:perturbed} requires that the shock $\zeta_t$ depends only on the context $x_t$ and the deployed output; conditional on these, the outputs of the other boxes (queried or not) carry no additional information about $\zeta_t$.

\subsection{Estimating the Reward Parameter}\label{sec:estimating:rewards}
The reward estimation essentially follows a generalized linear contextual bandit framework. At each period $t$ the deployment $(x_t,\omega_{a_tt})$ yields a reward observation $r_t$, and the dataset $\{(x_k,\omega_{a_kk},r_k)\}_{k=1}^{t-1}$ accumulates over time. We estimate $\theta_*$ by a penalized (projected) maximum likelihood estimator
\begin{equation}\label{eq:hat:theta:def}
\hat{\theta}_{t-1}:=\argmin_{\theta\in\Theta}\left\|\sum_{k=1}^{t-1}\!\big\{r_k-G\big(\theta^\top\phi(x_k,\omega_{a_kk})\big)\big\}\phi(x_k,\omega_{a_kk})-\eta_0\theta\right\|_{\Phi_{t-1}^{-1}},\qquad\mbox{where}
\end{equation}
\begin{equation}\label{eq:design:matrix:t}
    \Phi_{t-1}=\eta_0I_d+\sum_{k=1}^{t-1}\phi(x_k,\omega_{a_kk})\phi(x_k,\omega_{a_kk})^\top 
\end{equation}
is the regularized design matrix of reward features collected through period $t-1$, $\eta_0>0$ is a regularization parameter and $I_d$ is the \(d\times d\) identity matrix. This construction follows \citet{filippi2010parametric} and enables us to invoke the self-normalized concentration bound for vector-valued martingales \citep{abbasi2011improved}, thereby controlling the reward-estimation error induced by $\hat{\theta}_{t-1}$. The following standard result provides a high-probability bound on the prediction error induced by using $\hat{\theta}_{t-1}$ to evaluate the reward of a candidate context-output pair $(x_t,\omega_{at})$. 
\begin{lemma}\label{lemma:G:diff:hat:theta:main}
Suppose Assumptions~\ref{ass:G:phi:regularity} and ~\ref{ass:reward:perturbed} hold and fix any $\delta\in(0,\min\{1,2d/e\})$. With probability at least $1-\delta/2$, uniformly over all $a\in[A]$ and $t\geq 2$, $\big|(\hat\theta_{t-1}-\theta_*)^\top\phi(x_t,\omega_{at})\big|\;\leq\;\beta_t\;\big\|\phi(x_t,\omega_{at})\big\|_{\Phi_{t-1}^{-1}}$, where $\beta_t=\mathrm{O}\left(\sqrt{d\log(t)+\log(1/\delta)}\right)$ defined as in~\eqref{eq:beta:t-1}. 
\end{lemma}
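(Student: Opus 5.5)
The plan follows the standard argument of \citet{filippi2010parametric}, combined with the self-normalized martingale bound of \citet{abbasi2011improved}. Write $\phi_k:=\phi(x_k,\omega_{a_kk})$ and define the map
\[
g_{t-1}(\theta):=\eta_0\theta+\sum_{k=1}^{t-1}G(\theta^\top\phi_k)\phi_k .
\]
The estimator \eqref{eq:hat:theta:def} then minimizes $\|g_{t-1}(\theta)-\sum_k r_k\phi_k\|_{\Phi_{t-1}^{-1}}$ over $\Theta$. Using $r_k=G(\theta_*^\top\phi_k)+\zeta_k$, we have $\sum_k r_k\phi_k=g_{t-1}(\theta_*)-\eta_0\theta_*+S_{t-1}$, where $S_{t-1}:=\sum_{k=1}^{t-1}\zeta_k\phi_k$.

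\textbf{Step 1: linearization.} By the mean value theorem, $g_{t-1}(\theta_1)-g_{t-1}(\theta_2)=M_{t-1}(\theta_1-\theta_2)$ with
\[
M_{t-1}=\eta_0 I_d+\sum_k G'(\xi_k)\phi_k\phi_k^\top .
\]
Each $\xi_k$ lies on a segment inside the convex set $\Theta$, so Assumption~\ref{ass:G:phi:regularity} gives $G'(\xi_k)\ge\underline\mu$. Hence $M_{t-1}\succeq\min\{1,\underline\mu\}\Phi_{t-1}$, and therefore
\[
\|\theta_1-\theta_2\|_{\Phi_{t-1}}\le \min\{1,\underline\mu\}^{-1}\,\|g_{t-1}(\theta_1)-g_{t-1}(\theta_2)\|_{\Phi_{t-1}^{-1}} .
\]

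\textbf{Step 2: triangle inequality.} Add and subtract $\sum_k r_k\phi_k$, and use optimality of $\hat\theta_{t-1}$ against the feasible point $\theta_*$:
\[
\|g_{t-1}(\hat\theta_{t-1})-g_{t-1}(\theta_*)\|_{\Phi_{t-1}^{-1}}\le 2\|S_{t-1}-\eta_0\theta_*\|_{\Phi_{t-1}^{-1}}\le 2\|S_{t-1}\|_{\Phi_{t-1}^{-1}}+2\sqrt{\eta_0}\,\bar\alpha .
\]
The last step uses $\Phi_{t-1}\succeq\eta_0 I_d$ and $\|\theta_*\|_2\le\bar\alpha$.

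\textbf{Step 3: self-normalized bound.} The main obstacle is verifying that $\zeta_k$ is conditionally sub-Gaussian given the filtration generated by $\phi_k$, even though $\phi_k$ depends on the adaptive selection $a_k$. The filtration to use is $\mathcal G_k=\sigma(\mathcal F_{k-1},x_k,\{\omega_{ak}\}_{a},\text{internal randomization})$. The selection $a_k$ and the set $\mathcal A_k$ are measurable with respect to $\mathcal G_k$. Assumption~\ref{ass:reward:perturbed} gives $\mathbb E[\zeta_k\mid\omega_{a_kk},\{\omega_{a'k}\}_{a'\ne a_k},x_k,\mathcal F_{k-1}]=0$. I would check this carefully because of the dependence on $a_k$; on each event $\{a_k=a\}$ the condition reduces to the stated one. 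Also $|\zeta_k|\le\gamma_0$, since $r_k\in[0,\gamma_0]$ and $G\in[0,1]$, so Hoeffding's lemma makes $\zeta_k$ conditionally $\gamma_0$-sub-Gaussian. Theorem 1 of \citet{abbasi2011improved} then gives, with probability at least $1-\delta/2$ and simultaneously for all $t$,
\[
\|S_{t-1}\|_{\Phi_{t-1}^{-1}}^2\le 2\gamma_0^2\log\!\left(\frac{\det(\Phi_{t-1})^{1/2}\det(\eta_0 I)^{-1/2}}{\delta/2}\right).
\]
The determinant-trace inequality bounds the right-hand side by $\gamma_0^2\bigl(d\log(1+t\bar C_\phi^2/(d\eta_0))+2\log(2/\delta)\bigr)$. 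The restriction $\delta<2d/e$ is only used to simplify constants so that this is $O(d\log t+\log(1/\delta))$.

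\textbf{Step 4: conclusion.} Define
\[
\beta_t:=\frac{2}{\min\{1,\underline\mu\}}\Bigl(\gamma_0\sqrt{d\log(1+t\bar C_\phi^2/(d\eta_0))+2\log(2/\delta)}+\sqrt{\eta_0}\,\bar\alpha\Bigr).
\]
For any $a$ and $t$, Cauchy--Schwarz in the $\Phi_{t-1}$ geometry gives
\[
|(\hat\theta_{t-1}-\theta_*)^\top\phi(x_t,\omega_{at})|\le\|\hat\theta_{t-1}-\theta_*\|_{\Phi_{t-1}}\|\phi(x_t,\omega_{at})\|_{\Phi_{t-1}^{-1}}\le\beta_t\|\phi(x_t,\omega_{at})\|_{\Phi_{t-1}^{-1}} .
\]
The high-probability event from Step 3 does not depend on $a$ or on the evaluation point, so the bound holds uniformly over $a\in[A]$ and $t\ge2$.
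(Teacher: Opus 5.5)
Your proof is correct and follows the paper's argument: the \citet{filippi2010parametric} linearization of $g_{t-1}$, optimality of $\hat\theta_{t-1}$ against the feasible point $\theta_*$, and Theorem~1 of \citet{abbasi2011improved} with Hoeffding's lemma. Linearizing directly between $\hat\theta_{t-1}$ and $\theta_*$ is in fact cleaner than the paper's route through the unconstrained root $\tilde\theta_{t-1}$, because the curvature bound $G'(\theta^\top\phi)\ge\underline\mu$ is only assumed for $\theta\in\Theta$. One small repair is needed in Step~3: your $\mathcal G_k$ contains the period-$k$ outputs of unqueried boxes, which are not in $\mathcal F_k$, so these $\sigma$-fields are not nested. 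Apply the self-normalized bound with the filtration $\mathcal H_{k-1}:=\sigma(\mathcal F_{k-1},x_k,\phi_k)\subseteq\mathcal G_k$, under which $\phi_k$ is predictable and $\zeta_k$ is adapted, and transfer $\mathbb E[\zeta_k\mid\mathcal G_k]=0$ and the Hoeffding bound down to $\mathcal H_{k-1}$ by the tower property.
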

This is the ellipsoidal confidence bound for generalized linear bandits, which has a similar structure as Proposition~\ref{prop:rho:a:confidence:theta:known} for the index parameter estimation error bound. The matrix $\Phi_{t-1}$ plays the role that $V_{at}(\eta_1)$ played for the indices, and $\beta_t$ grows logarithmically in $t$. The proof of Lemma~\ref{lemma:G:diff:hat:theta:main} adapts \citet{filippi2010parametric} via a self-normalized martingale inequality. With this bound, we construct the optimistic reward as
\begin{equation}\label{eq:mu:tilde}
\widetilde{\mu}_t(x_t,\omega_{at})\;:=\;G\!\left(\hat\theta_{t-1}^\top\phi(x_t,\omega_{at})+\beta_t\,\big\|\phi(x_t,\omega_{at})\big\|_{\Phi_{t-1}^{-1}}\right).
\end{equation}
Because $G$ is monotonically increasing by Assumption~\ref{ass:G:phi:regularity}, Lemma~\ref{lemma:G:diff:hat:theta:main} implies that with probability at least $1-\delta/2$, uniformly over all $a\in[A]$ and $t\geq 2$, $\widetilde{\mu}_t(x_t,\omega_{at})\;\geq\;\mu^*(x_t,\omega_{at})$.

\subsection{Estimating Indices}\label{sec:estimating:indices}
Next, we estimate the reservation indices and construct their associated confidence intervals for the regime where the reward function is unknown. As an unknown-reward analogue to Section~\ref{subsection:indices:known:theta}, we replace the true reward $\mu^*(x_s,\omega_{as})$ by its plug-in estimate $G(\hat\theta_{t-1}^\top\phi(x_s,\omega_{as}))$ for each $s \leq t$.
The main goal is to show that this substitution preserves the same confidence-bound structure as Proposition~\ref{prop:rho:a:confidence:theta:known}, up to an additional perturbation term controlled by the reward estimation error bound. Fix a period $t$ and a box $a\in[A]$. For any $\rho\in\mathcal B$ and any past queried sample $s\in\mathcal S_{at}$, define the plug-in loss
\begin{equation}\label{eq:plugin:loss}
\hat{\ell}_{as,t}(\rho)
:=
\int_{0}^{\rho^\top\psi(x_s)}
\left[
c_{as}
-
\left\{
G(\hat\theta_{t-1}^\top\phi(x_s,\omega_{as}))
-
\Lambda(u)
\right\}^{+}
\right]du .
\end{equation}
This is the same loss as \eqref{eq:known-theta--loss}, with $\mu^*$ replaced by the current reward estimate. The plug-in index estimator is
\begin{equation}\label{eq:hat:rho:at}
\hat\rho_{at}
\in
\argmin_{\rho\in\mathcal B}
\sum_{s\in\mathcal S_{at}}
\hat{\ell}_{as,t}(\rho).
\end{equation}
The first-order condition of \eqref{eq:hat:rho:at} is the empirical analogue of the Weitzman moment equation with the reward function evaluated at $\hat\theta_{t-1}$. The new challenge compared to Section~\ref{subsection:indices:known:theta} is that the empirical loss in \eqref{eq:hat:rho:at} is not the oracle loss. To isolate this difference, define the plug-in perturbation
\begin{equation}\label{eq:J:at:main}
J_{at}(\rho)
:=
\sum_{s\in\mathcal S_{at}}
\left\{
\left[\hat{\ell}_{as,t}(\rho)-\ell_{as}(\rho)\right]
-
\left[\hat{\ell}_{as,t}(\rho_a)-\ell_{as}(\rho_a)\right]
\right\},
\end{equation}
where $\ell_{as}(\rho)$ is the known-$\mu^*$ loss in \eqref{eq:known-theta--loss}. The term $J_{at}(\rho)$ measures how the estimation error of $\hat\theta_{t-1}$ propagates into the index-estimation loss.

We impose the following additional assumption for the unknown-reward case. It ensures that the reward-design matrix contains enough information in all directions needed to control the plug-in perturbation. We use $\mathrm{polylog}(T)$ to denote a quantity bounded by a polynomial in $\log T$ (i.e. $\mathrm{polylog}(T) = \mathrm{O}((\log T)^c)$ for some constant $c>0$). 
\begin{assumption}[Anti-concentration of reward features]\label{ass:projection:anti:concentration}
There exists a sequence $M_T\geq 1$ with
$M_T=\sqrt{\mathrm{polylog}(T)}/A$ such that for any
$t\in[T]$, $a\in[A]$, $v\in\mathbb S^{d-1}$ and $0<\epsilon\le (2AM_T)^{-1}$, $\mathbb P\!\left(
|v^\top \phi(x_t,\omega_{at})|\le \epsilon\right)\le
M_T\epsilon
\ \text{a.s.}$, where $\mathbb S^{d-1}$ is the unit sphere in $\mathbb{R}^d$.
\end{assumption}
Recall that the contexts $x_t$ are independent across time, implying that the distribution of the reward feature $\phi(x_t,\omega_{at})$ is independent of $\mathcal{F}_{t-1}$. 
Assumption~\ref{ass:projection:anti:concentration} leverages this independence to impose an anti-concentration condition that rules out degenerate reward features.
Informally, every one-dimensional projection of $\phi(x_t,\omega_{at})$ must have nontrivial variation near zero. This condition is imposed because $\theta_*$ is learned only from selected outputs, whereas the index loss for box $a$ uses queried outputs. Thus, the reward features must be sufficiently informative for evaluating the queried features that enter the index estimator. For example, it is satisfied when the embedding distribution is a truncated Gaussian, a truncated multivariate Student's $t$, or a uniform distribution on a hyperrectangle; see Lemmas~\ref{lemma:example:gaussian:assumption}--\ref{lemma:example:uniform} in Appendix~\ref{appendix:minimum:eigenvalue} in the electronic companion.
\begin{remark}\label{rmk:x:t:unknown:reward}
Our regret analysis can be extended to the case where $x_t$ is predictable with respect to $\mathcal{F}_{t-1}$ with a more restrictive version of Assumption \ref{ass:projection:anti:concentration}.
Specifically, it would require that under the same conditions as stated in Assumption \ref{ass:projection:anti:concentration}, we have
\begin{equation}\label{eq:ext_anti_concent}
\mathbb P\!\left(
|v^\top \phi(x_t,\omega_{at})|\le \epsilon \, \mid \, \mathcal F_{t-1}, x_t \right)\le
M_T\epsilon
\qquad \text{a.s.}
\end{equation}

However, such an assumption implies structural restrictions on the reward features. For example, suppose $G$ is the identity map, $\phi(x_t,\omega_{at})=(x_t,\omega_{at})$ and for simplicity, suppose both $x_t$ and $\omega_{at}$ are one-dimensional. If $x_t$ is predictable from $\mathcal F_{t-1}$, then conditional on $\mathcal F_{t-1}$ the context coordinate is fixed. In particular, if $x_1$ follows a Bernoulli distribution with $\mathbb{P}(x_1=1)=0.5$ and $x_t=x_{t-1}$ for $t\ge2$, then the projection along the context direction $v=(1,0)$ is identically zero when $x_1=0$, so \eqref{eq:ext_anti_concent} fails.
Thus, for general predictive $x_t$, even a simple linear embedding can fail to satisfy the anti-concentration property. \proofend
\end{remark}
The following lemma bounds the additional perturbation caused by using $\hat\theta_{t-1}$ in the index loss.
\begin{lemma}[Plug-in loss perturbation]\label{lemma:J:at:bound}
Suppose Assumptions~\ref{ass:reg_res_index} and~\ref{ass:G:phi:regularity} hold. Then, for any $c_0>0$,
$$-J_{at}(\hat\rho_{at})\le
\frac{c_0}{8}
\sum_{s\in\mathcal S_{at}}
\left\{
\psi(x_s)^\top(\hat\rho_{at}-\rho_a)
\right\}^2 +
\frac{2L^2}{c_0}
\sum_{s\in\mathcal S_{at}}
\left\{
(\hat\theta_{t-1}-\theta_*)^\top
\phi(x_s,\omega_{as})
\right\}^2.$$
\end{lemma}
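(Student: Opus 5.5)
The plan is a termwise comparison of the plug-in and oracle integrands, followed by Cauchy--Schwarz and Young's inequality; no probabilistic argument is needed. Fix $s\in\mathcal S_{at}$ and write $\hat\mu_s:=G(\hat\theta_{t-1}^\top\phi(x_s,\omega_{as}))$ and $\mu_s:=\mu^*(x_s,\omega_{as})=G(\theta_*^\top\phi(x_s,\omega_{as}))$. By the definitions \eqref{eq:plugin:loss} and \eqref{eq:known-theta--loss}, the cost $c_{as}$ cancels in the difference, so
\[
\hat\ell_{as,t}(\rho)-\ell_{as}(\rho)=\int_0^{\rho^\top\psi(x_s)}\Bigl[(\mu_s-\Lambda(u))^+-(\hat\mu_s-\Lambda(u))^+\Bigr]du .
\]
Subtracting the same expression at $\rho_a$ gives the $s$-th summand of $J_{at}(\rho)$ as a single integral over $u$ between $\rho_a^\top\psi(x_s)$ and $\rho^\top\psi(x_s)$, namely
\[
\int_{\rho_a^\top\psi(x_s)}^{\rho^\top\psi(x_s)} D_s(u)\,du,\qquad D_s(u):=(\mu_s-\Lambda(u))^+-(\hat\mu_s-\Lambda(u))^+ .
\]

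The key pointwise estimate comes from the fact that $y\mapsto y^+$ is $1$-Lipschitz. This gives $|D_s(u)|\le|\mu_s-\hat\mu_s|$ for every $u$, uniformly in $u$. Since $G$ is $L$-Lipschitz (Assumption~\ref{ass:G:phi:regularity}), we also have $|\mu_s-\hat\mu_s|\le L\,|(\hat\theta_{t-1}-\theta_*)^\top\phi(x_s,\omega_{as})|$. Bounding the integral by the sup of the integrand times the interval length then yields
\[
\bigl|J_{at}(\rho)\bigr|\le\sum_{s\in\mathcal S_{at}} L\,\bigl|(\hat\theta_{t-1}-\theta_*)^\top\phi(x_s,\omega_{as})\bigr|\,\bigl|\psi(x_s)^\top(\rho-\rho_a)\bigr| .
\]
This bound holds deterministically for every $\rho\in\mathcal B$, so it applies in particular at the data-dependent point $\rho=\hat\rho_{at}$. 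Assumption~\ref{ass:reg_res_index} is used only to ensure that $\Lambda$ is well defined and that the loss integrals make sense.

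Finally, each product is split with Young's inequality $xy\le \frac{c_0}{8}x^2+\frac{2}{c_0}y^2$, taking $x=|\psi(x_s)^\top(\hat\rho_{at}-\rho_a)|$ and $y=L|(\hat\theta_{t-1}-\theta_*)^\top\phi(x_s,\omega_{as})|$. Since $\frac{c_0}{8}\cdot\frac{2}{c_0}=\frac14$, the constants match the claimed $\frac{c_0}{8}$ and $\frac{2L^2}{c_0}$ exactly. Summing over $s\in\mathcal S_{at}$ and using $-J_{at}\le|J_{at}|$ gives the lemma.

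I do not see a real obstacle in this argument. The only point needing care is the orientation of the integral when $\rho^\top\psi(x_s)<\rho_a^\top\psi(x_s)$. Working with absolute values handles this automatically, because the bound uses only the length of the interval.
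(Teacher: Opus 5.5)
Your proof is correct and follows the paper's argument: you use the same single-integral representation of each summand of $J_{at}$, the $1$-Lipschitz property of $z\mapsto z^+$, the Lipschitz bound on $G$, and Young's inequality with the same constants. The only difference is cosmetic: you apply Young's inequality termwise, whereas the paper first applies Cauchy--Schwarz to the sum over $s\in\mathcal S_{at}$ and then Young's inequality, and both routes give exactly the stated bound.
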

Lemma~\ref{lemma:J:at:bound} shows why Assumption~\ref{ass:projection:anti:concentration} is needed. The first term on the right-hand side has the same quadratic form as the population curvature in Lemma~\ref{lem:known-theta-loss-curvature} and can therefore be absorbed into the curvature argument. The second term is the reward estimation error evaluated on queried outputs. By Lemma~\ref{lemma:G:diff:hat:theta:main}, this term is controlled by the inverse-design norms
$\|\phi(x_s,\omega_{as})\|_{\Phi_{t-1}^{-1}}$, where $\Phi_{t-1}$ is constructed from previously selected outputs, while $\phi(x_s,\omega_{as})$ corresponds to queried outputs used for estimating the index of box $a$. Thus, there is a potential mismatch between the selected outputs that inform reward learning and the queried outputs that enter index estimation. Assumption~\ref{ass:projection:anti:concentration} rules out severe mismatches of this form by ensuring that $\Phi_{t-1}$ is sufficiently well conditioned, so these inverse-design norms do not remain large. 

We now state the confidence bound for the index estimator: 
\begin{proposition}[Index confidence bound with unknown reward function]
\label{prop:confidence:rho:main:context}
Suppose Assumptions~\ref{ass:reg_res_index}--\ref{ass:projection:anti:concentration} 
hold. Fix any $\delta>0$. Then, with probability at least $1-2\delta/3$, uniformly over all
$a\in[A]$ and all
$t\in[T]$,
\begin{equation}\label{eq:index:error:unknown:theta}
\left|
(\hat\rho_{at}-\rho_a)^\top\psi(x_t)
\right|
\le B_{at}
\left\|\psi(x_t)\right\|_{V_{at}(\eta_1)^{-1}},
\end{equation}
where $B_{at}
=
C_2
\sqrt{
m\log(T)
+
\log(AT/\delta)
+
[d\log(t)+\log(1/\delta)]\sum_{s\in\mathcal{S}_{at}}\|\phi(x_{s},\omega_{as})\|_{\Phi_{t-1}^{-1}}^2
}$, $V_{at}(\eta_1)$ is defined as \eqref{eq:known-theta-V}, and $C_2$ is a constant depending on
$\eta_0,\eta_1,\kappa,\mu_1,\bar\iota,d_{\mathcal B},
\bar C_\psi,L,\bar C_\phi,\gamma_0,\underline\mu$, and $\bar\alpha$.
\end{proposition}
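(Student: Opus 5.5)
We follow the same template as Proposition~\ref{prop:rho:a:confidence:theta:known}, now carrying the plug-in perturbation $J_{at}$ as an extra term.

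\emph{Step 1: basic inequality.} Fix $a$ and $t$. Optimality of $\hat\rho_{at}$ in \eqref{eq:hat:rho:at} gives $\sum_{s\in\mathcal S_{at}}\{\hat\ell_{as,t}(\hat\rho_{at})-\hat\ell_{as,t}(\rho_a)\}\le 0$. We rewrite the left side as the oracle excess loss plus the perturbation:
\[
\sum_{s\in\mathcal S_{at}}\{\ell_{as}(\hat\rho_{at})-\ell_{as}(\rho_a)\}+J_{at}(\hat\rho_{at})\le 0 .
\]
As in Section~\ref{subsection:indices:known:theta}, the oracle excess loss splits into its conditional expectation plus $H_{at}(\hat\rho_{at})$. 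Note that $\hat\rho_{at}$ depends on $\hat\theta_{t-1}$, which in turn uses $r_{t-1}$ and so on; all of these are $\mathcal F_{t-1}$-measurable, so $\hat\rho_{at}$ is adapted.

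\emph{Step 2: control the three terms.} We lower bound the conditional expectation by $\frac{\kappa\mu_1}{2}\sum_s\{\psi(x_s)^\top(\hat\rho_{at}-\rho_a)\}^2$ using Lemma~\ref{lem:known-theta-loss-curvature}. We take $c_0=\kappa\mu_1$ in both Lemma~\ref{lemma:H:empirical:process} and Lemma~\ref{lemma:J:at:bound}. This absorbs $-H_{at}$ and $-J_{at}$ into a quarter of the curvature term each, leaving
\[
\frac{\kappa\mu_1}{4}\sum_{s\in\mathcal S_{at}}\{\psi(x_s)^\top(\hat\rho_{at}-\rho_a)\}^2\lesssim m\log T+\log(AT/\delta)+\frac{2L^2}{\kappa\mu_1}\sum_{s\in\mathcal S_{at}}\{(\hat\theta_{t-1}-\theta_*)^\top\phi(x_s,\omega_{as})\}^2 .
\]
Lemma~\ref{lemma:H:empirical:process} is applied with $\delta$ replaced by $\delta/A$ and a union bound over $a$, which costs probability $\delta/3$.

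\emph{Step 3: reward error on past queried samples.} We need $|(\hat\theta_{t-1}-\theta_*)^\top\phi(x_s,\omega_{as})|\le\beta_t\|\phi(x_s,\omega_{as})\|_{\Phi_{t-1}^{-1}}$ for $s<t$. Lemma~\ref{lemma:G:diff:hat:theta:main} as stated only covers the current context. Its proof, however, gives the ellipsoid bound $\|\hat\theta_{t-1}-\theta_*\|_{\Phi_{t-1}}\le\beta_t$ (up to constants involving $\underline\mu$, $\eta_0$, $\bar\alpha$). Cauchy--Schwarz then yields the bound for any fixed vector, including past queried features, on the same event of probability $1-\delta/2$. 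We rescale $\delta$ (e.g. $\delta/3$) so that the total failure probability is $2\delta/3$; the $\log(1/\delta)$ factor absorbs this into $C_2$. Plugging in turns the last sum into $\beta_t^2\sum_s\|\phi(x_s,\omega_{as})\|^2_{\Phi_{t-1}^{-1}}$ with $\beta_t^2\asymp d\log t+\log(1/\delta)$, which is exactly the extra term in $B_{at}$.

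\emph{Step 4: pass to the current context.} We add $\eta_1\|\hat\rho_{at}-\rho_a\|_2^2\le\eta_1 d_{\mathcal B}^2$ to both sides to obtain $\|\hat\rho_{at}-\rho_a\|^2_{V_{at}(\eta_1)}\le B_{at}^2$, absorbing constants into $C_2$. Cauchy--Schwarz, $|(\hat\rho_{at}-\rho_a)^\top\psi(x_t)|\le\|\hat\rho_{at}-\rho_a\|_{V_{at}}\|\psi(x_t)\|_{V_{at}^{-1}}$, then gives \eqref{eq:index:error:unknown:theta}.

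The main obstacle is Step 3 together with the measurability subtlety in Step 2. Lemma~\ref{lemma:H:empirical:process} is uniform over adapted $\rho_t$, so it applies to $\hat\rho_{at}$ even though $\hat\rho_{at}$ depends on $\hat\theta_{t-1}$. The ellipsoidal bound must be checked to hold uniformly in $t$ and to apply to arbitrary directions $\phi(x_s,\omega_{as})$, including queried-but-unselected outputs, rather than only the current one. Assumption~\ref{ass:projection:anti:concentration} is not needed to obtain this bound itself; it only matters later, to show that $\sum_s\|\phi(x_s,\omega_{as})\|^2_{\Phi_{t-1}^{-1}}$ is polylogarithmic.
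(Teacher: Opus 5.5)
Your proposal is correct and follows the paper's proof essentially step for step. The paper likewise uses the optimality of $\hat\rho_{at}$, splits the excess loss into the conditional expectation plus $H_{at}+J_{at}$, and applies Lemmas~\ref{lemma:H:empirical:process} and~\ref{lemma:J:at:bound} with $c_0=\kappa\mu_1$. It then applies the reward ellipsoid bound to the past queried features; the paper's proof of Lemma~\ref{lemma:G:diff:hat:theta} does hold uniformly in $(x,\omega)$, as you observe. It finishes with the $\eta_1 d_{\mathcal B}^2$ regularization and Cauchy--Schwarz.

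The only detail you skip is evaluating the curvature bound at the random point $\hat\rho_{at}$, since Lemma~\ref{lem:known-theta-loss-curvature} is stated for a fixed $\rho$. The paper handles this with the $\epsilon$-net version, Lemma~\ref{lemma:curvature:hat:rho:at}, at the cost of an additive $1$ that is absorbed into $C_2$. You are also right that Assumption~\ref{ass:projection:anti:concentration} is not actually used in this proof.
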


The proof follows the same localized empirical-process argument as Proposition~\ref{prop:rho:a:confidence:theta:known}. By the optimality of $\hat\rho_{at}$ in \eqref{eq:hat:rho:at}, $\sum_{s\in\mathcal S_{at}}
\left\{
\hat\ell_{as,t}(\hat\rho_{at})
-
\hat\ell_{as,t}(\rho_a)
\right\}\leq0$. Decomposing this into the oracle population curvature, the empirical fluctuation, and the plug-in perturbation gives
$$\sum_{s\in\mathcal S_{at}}
\mathbb E\!\left[
\ell_{as}(\hat\rho_{at})-\ell_{as}(\rho_a)
\,\middle|\,
\mathcal F_{s-1},x_s
\right] + H_{at}(\hat\rho_{at}) + J_{at}(\hat\rho_{at})\le0.$$
The first term is lower bounded by the curvature result in Lemma~\ref{lem:known-theta-loss-curvature}, the second is controlled by the empirical-process bound in Lemma~\ref{lemma:H:empirical:process}, and the third is controlled by Lemma~\ref{lemma:J:at:bound} together with the reward confidence bound in Lemma~\ref{lemma:G:diff:hat:theta:main}. Combining these bounds yields \eqref{eq:index:error:unknown:theta}. The detailed proof is provided in Appendix~\ref{appendix:index:unknown:theta} in the electronic companion. Motivated by Proposition~\ref{prop:confidence:rho:main:context}, define 
\begin{equation}\label{eq:indices:tilde}
\widetilde{\sigma}_{at}
:=
\Lambda\!\left(
\hat\rho_{at}^\top\psi(x_t)
+B_{at}
\left\|\psi(x_t)\right\|_{V_{at}(\eta_1)^{-1}}
\right).
\end{equation}
Since $\Lambda$ is increasing, Proposition~\ref{prop:confidence:rho:main:context} implies that, with probability at least $1-\delta$,
$\widetilde{\sigma}_{at}
\ge
\Lambda(\rho_a^\top\psi(x_t))
=
\sigma_{at}^*$ uniformly over all $a\in[A]$ and
$t\in[T]$.

\subsection{Regret Analysis under Unknown Reward Function}\label{sec:regret}
We now combine the reward and index confidence bounds to control the cumulative regret of \DOW when the reward function is unknown. The preceding subsections established two high-probability optimism events: Lemma~\ref{lemma:G:diff:hat:theta:main} gives optimism for the reward estimate $\widetilde{\mu}_t$, and Proposition~\ref{prop:confidence:rho:main:context} gives optimism for the reservation index estimate $\widetilde{\sigma}_{at}$. 
Therefore, when both events hold, the regret decomposition in Theorem~\ref{thm:optimistic-regret-decomposition} applies and separates the period-$t$ regret into a reward estimation term and an index-estimation term for all $a\in[A]$ and $t\in[T]$. This structure parallels the known-reward analysis in Section~\ref{sec:known:theta-*}. When the reward function is unknown, an additional generalized linear bandit term appears from estimating the reward parameter. 

As shown in Appendix~\ref{appendix:minimum:eigenvalue} in the electronic companion, Assumption~\ref{ass:projection:anti:concentration} implies that, with high probability, $\lambda_{\min}(\Phi_{t-1})
\ge
\frac{t-1}{16\,\mathrm{polylog}(T)}$
uniformly for all $t\ge \tau:=C\mathrm{polylog}(T)\log(Ad)$ for some absolute constant $C$.
This eigenvalue growth allows the reward estimation error in $J_{at}(\rho)$ to be controlled uniformly over $t\geq\tau$, so that the cumulative regret is controlled after $t\geq\tau$. For these initial periods before $\lceil\tau\rceil$, the per-period regret is bounded by $(2A+1)$. Since $\tau=\mathrm{O}(\log(Ad)\mathrm{polylog}(T))$, the cumulative regret up to period $\lceil\tau\rceil$ is at most $\mathrm{O}(A\log(Ad)\mathrm{polylog}(T))$, and is therefore dominated by the regret accumulated over periods $t\ge \tau$.
\begin{theorem}\label{thm:expected:total:regret}
Suppose Assumptions~\ref{ass:reg_res_index}--\ref{ass:projection:anti:concentration} hold. Then $\displaystyle\mathbb{E}\bigg[\sum_{t=1}^T \Delta_t(\tilde{\pi})\bigg]
    \leq
    \widetilde{O}\left([d+A(m+\sqrt{md})]\sqrt{T}\right)$. In particular, the regret bound is asymptotically minimized by choosing $\eta_1
\asymp
\max\left\{m^{1/2},(dm)^{1/4},d^{1/3}\right\},\ \eta_0
\asymp
\min\left\{
\frac{d}{A\sqrt m},
\left(\frac{d}{A\sqrt m}\right)^{2/3}
\right\}$.
\end{theorem}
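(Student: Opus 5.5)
The plan is to work on the intersection $\mathcal E$ of three high-probability events and then sum the per-period bound of Theorem~\ref{thm:optimistic-regret-decomposition}. The events are: the reward confidence event of Lemma~\ref{lemma:G:diff:hat:theta:main}; the index confidence event of Proposition~\ref{prop:confidence:rho:main:context}; and the eigenvalue event $\lambda_{\min}(\Phi_{t-1})\ge (t-1)/(16\,\mathrm{polylog}(T))$ for all $t\ge\tau$, which follows from Assumption~\ref{ass:projection:anti:concentration} as noted in Section~\ref{sec:regret}.

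I would take $\delta=1/T$. On $\mathcal E^c$ the per-period regret is at most $2A+1$, so $\mathcal E^c$ contributes $O(A)$. The periods $t<\lceil\tau\rceil$ contribute $O(A\log(Ad)\,\mathrm{polylog}(T))$. On $\mathcal E$ both $\widetilde\mu_t\ge\mu^*$ and $\widetilde\sigma_{at}\ge\sigma^*_{at}$ hold, so Theorem~\ref{thm:optimistic-regret-decomposition} applies. Because the decomposition is stated in conditional expectation, I will sum the realized quantities and use the tower property. The pathwise bounds only need to hold on $\mathcal E$, so I will write $\mathbb E[\cdot\,\mathbb I_{\mathcal E}]$ plus the $\mathcal E^c$ remainder.

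\textbf{Reward term.} Since $G$ is $L$-Lipschitz, Lemma~\ref{lemma:G:diff:hat:theta:main} gives $\widetilde\mu_t-\mu^*\le 2L\beta_t\|\phi(x_t,\omega_{a_tt})\|_{\Phi_{t-1}^{-1}}$. The vector $\phi(x_t,\omega_{a_tt})$ is exactly the one added to $\Phi_t$. Cauchy--Schwarz and the elliptical potential lemma then give $\sum_t\min\{1,\cdot\}\le\sqrt{T\cdot 2d\log(1+T\bar C_\phi^2/(d\eta_0))}$. Multiplying by $\beta_T=O(\sqrt{d\log T})$ yields $\widetilde O(d\sqrt T)$; clipping at $1$ costs only $O(1)$ per period with the regularizer. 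This produces the $d$ term.

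\textbf{Index term.} By Lipschitzness of $\Lambda$ and Proposition~\ref{prop:confidence:rho:main:context}, $\widetilde\sigma_{at}-\sigma^*_{at}\le 2LB_{at}\|\psi(x_t)\|_{V_{at}^{-1}}$ for $a\in\mathcal A_t$. The key step is to bound $B_{at}$ uniformly for $t\ge\tau$. Each summand satisfies $\|\phi(x_s,\omega_{as})\|^2_{\Phi_{t-1}^{-1}}\le \bar C_\phi^2/\lambda_{\min}(\Phi_{t-1})\le 16\bar C_\phi^2\,\mathrm{polylog}(T)/(t-1)$, and $|\mathcal S_{at}|\le t-1$. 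Hence the extra term inside $B_{at}$ is $O((d\log T)\,\mathrm{polylog}(T))$, which gives
\[
B_{at}\lesssim \sqrt{m\log T+\log(AT^2)}+\sqrt{d\,\mathrm{polylog}(T)} .
\]
For each box $a$, the same elliptical potential argument over the times $a\in\mathcal A_t$, now in dimension $m$ with matrix $V_{at}(\eta_1)$, gives $\sum_t\|\psi(x_t)\|_{V_{at}^{-1}}\mathbb I\{a\in\mathcal A_t\}\le\sqrt{2mT\log(1+T\bar C_\psi^2/(m\eta_1))}$. Multiplying gives $\widetilde O((m+\sqrt{md})\sqrt T)$ per box, and summing over the $A$ boxes gives $\widetilde O(A(m+\sqrt{md})\sqrt T)$. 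Adding this to the reward term yields the stated rate.

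\textbf{Regularization choice.} For the tuning of $\eta_0,\eta_1$, I would track the constants explicitly. These are the $\eta_1 d_{\mathcal B}^2$ term inside $B_{at}$, the $\eta_0$ terms inside $\beta_t$ and $C_2$, and the $\log(1/\eta)$ factors from the potential lemmas. I would then balance the leading terms, which should produce the stated scalings.

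\textbf{Main obstacle.} I expect the hardest step to be controlling $\sum_{s\in\mathcal S_{at}}\|\phi\|^2_{\Phi_{t-1}^{-1}}$. This requires the eigenvalue bound to hold simultaneously for all $t\ge\tau$ on the same event as the other two confidence bounds, and it relies on the independence of $x_t$. I would establish it via a matrix Freedman/covering argument over $\mathbb S^{d-1}$, as in the appendix. I would also check that $\hat\theta_{t-1}$ in $B_{at}$ is evaluated at the past queried features $x_s$, $s<t$, which are included in $\Phi_{t-1}$'s time range, so the $1/(t-1)$ scaling is legitimate.
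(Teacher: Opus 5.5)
Your derivation of the rate $\widetilde O([d+A(m+\sqrt{md})]\sqrt T)$ is sound and is essentially the paper's proof. It has the same ingredients:
\begin{itemize}
\item optimism on a high-probability event, fed into Theorem~\ref{thm:optimistic-regret-decomposition};
\item the elliptical potential for $\Phi_{t-1}$;
\item the eigenvalue bound of Proposition~\ref{prop:minimum:eigenvalue}, which gives $\sum_{s\in\mathcal S_{at}}\|\phi(x_s,\omega_{as})\|^2_{\Phi_{t-1}^{-1}}=O(\mathrm{polylog}(T))$ and hence $B_{at}=\widetilde O(\sqrt m+\sqrt d)$ for $t\ge\tilde\tau_0$;
\item a per-box potential argument in $V_{at}(\eta_1)$;
\item an $O(A\tilde\tau_0)$ burn-in and a failure-event remainder.
\end{itemize}
Your $\delta=1/T$ instead of the paper's $1/\sqrt T$ is immaterial.

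The gap is the second sentence of the theorem. ``Balance the leading terms, which should produce the stated scalings'' is not an argument, and with the bounds you wrote it cannot succeed. Because you use the clipped potential lemma, $\eta_0$ and $\eta_1$ enter your final bound only in two ways. On the decreasing side there are only $\log(1/\eta)$ factors. On the increasing side there is $\sqrt{\eta_0}$ (the regularization bias in $\beta_t$ and in the plug-in part of $B_{at}$) and the $\eta_1 d_{\mathcal B}^2$ term of $B_{at}$. Up to logarithms your bound is therefore flat for all sufficiently small $\eta_0,\eta_1$; for instance $\eta_0=\eta_1=1$ already attains the rate. There is no polynomial trade-off from which $\eta_1\asymp\max\{m^{1/2},(dm)^{1/4},d^{1/3}\}$ or the stated $\eta_0$ could emerge. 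Separately, your unclipped display $\sum_t\|\psi(x_t)\|_{V_{at}(\eta_1)^{-1}}\mathbb I\{a\in\mathcal A_t\}\le\sqrt{2mT\log(\cdot)}$ tacitly needs $\eta_1\ge\bar C_\psi^2$, or explicit clipping via $\widetilde\sigma_{at}-\sigma^*_{at}\le 2$.

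The stated tuning is a property of the particular upper bound the paper derives, not of the regret itself, so you must reproduce that form before optimizing. The paper uses the unclipped estimate $\sum\|v\|^2_{\bar V^{-1}}\le(1+\bar C^2/\eta)\,k\log(\cdot)$. This places multiplicative factors $\sqrt{1+1/\eta_0}$ and $g(\eta_1):=\sqrt{1+1/\eta_1}$ in front of the reward and index terms. With $C_\sigma=\widetilde\Theta(\sqrt{m+\eta_1}+\sqrt d+\sqrt{\eta_0})$, the log-free objective is $r(\eta_0,\eta_1)=d\sqrt{1+1/\eta_0}+[1+A\sqrt m\,g(\eta_1)]\sqrt{\eta_0}+A\sqrt m\,g(\eta_1)(\sqrt{m+\eta_1}+\sqrt d)$. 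The optimization then runs as follows.
\begin{itemize}
\item The first-order condition in $\eta_0$ gives $\eta_0\sqrt{1+\eta_0}=d/K(\eta_1)$ with $K:=1+A\sqrt m\,g$, so $\eta_0^\star\asymp\min\{d/K,(d/K)^{2/3}\}$.
\item The envelope condition in $\eta_1$ gives $\eta_1^2-m=(\sqrt d+\sqrt{\eta_0^\star})\sqrt{m+\eta_1}$.
\item Since $K\ge1$, one has $\sqrt{\eta_0^\star}\lesssim\sqrt d$, so the condition reduces to $\eta_1^2\asymp m+\sqrt d\,\sqrt{m+\eta_1}$. This is solved separately in the regimes $d\lesssim m$, $m\lesssim d\lesssim m^3$ and $d\gtrsim m^3$.
\item Finally, $g(\eta_1^\star)=\Theta(1)$ gives $K\asymp A\sqrt m$, which yields the stated $\eta_0$.
\end{itemize}
Without committing to this form of the bound and carrying out this computation, the ``in particular'' part of the statement is not established.
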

The terms in Theorem~\ref{thm:expected:total:regret} correspond directly to the two components in Theorem~\ref{thm:optimistic-regret-decomposition}. The term $\widetilde O(d\sqrt T)$ is the reward estimation regret, matching the usual dimension dependence for generalized linear contextual bandits with reward feature dimension $d$. The term $\widetilde O(A(m+\sqrt{dm})\sqrt T)$ is the index-estimation regret, where $A$ is the number of boxes and $m$ is the dimension of the index feature $\psi$. Relative to the known-reward case in Proposition~\ref{prop:cumulative:regret:theta:known}, the additional term $\widetilde{O}(A\sqrt{dmT})$ reflects the effect of reward estimation error on index estimation.
On the high-probability optimism event, Theorem~\ref{thm:optimistic-regret-decomposition} gives
$\displaystyle\mathbb{E}\bigg[\sum_{t=1}^T \Delta_t(\tilde\pi)\bigg]
\leq
\mathbb{E}\bigg[\sum_{t=1}^T
\left\{
\widetilde{\mu}_t(x_t,\omega_{a_t t})-\mu^*(x_t,\omega_{a_t t})
\right\}\bigg]
+
\mathbb{E}\bigg[\sum_{t=1}^T
\sum_{a\in\mathcal A_t}
\left(
\widetilde{\sigma}_{at}-\sigma_{at}^*
\right)\bigg]$.
The first sum is controlled by the reward confidence radius from Lemma~\ref{lemma:G:diff:hat:theta:main} and an elliptical-potential argument \citep{abbasi2011improved} for the reward-design matrix $\Phi_{t-1}$. The second sum is controlled by the index confidence radius from Proposition~\ref{prop:confidence:rho:main:context} and a separate elliptical-potential argument for each box-specific design matrix $V_{at}(\eta_1)$. Thus the unknown-reward analysis adds a standard reward-learning term while preserving the same index-learning rate as in the known-reward benchmark. The detailed proof is provided in Appendix~\ref{appendix:regret:unknown:theta} in the electronic companion. 

\section{Numerical Results}\label{sec:numerical}
In this section, we illustrate the empirical performance of the proposed algorithms. We conduct simulations in three environments (different output distributions), each with two regimes (specific parameters for reward and costs). We compare variants of our proposed policies, FrugalGPT \citep{chenfrugalgpt} and contextual-bandit baselines. In each environment, there are $A=3$ boxes (APIs) and $T=30{,}000$ periods, with contexts $x_t\overset{\mathrm{i.i.d.}}{\sim}\mathrm{Uniform}([-1,1]^{100})$. 
API \(a\) generates $\omega_{at}=s_a(x_t)\xi_{at}, s_a(x)=0.5+\mathcal{L}(\alpha_a^\top x)$,
where $\alpha_a\in\mathbb{R}^{100}$, \(\mathcal{L}(z)=1/(1+e^{-z})\), and 
$\xi_{at}\in\mathbb{R}^{100}$ are drawn independently from $P_\xi$ across boxes and periods. 

The three environments differ in the choice of \(P_\xi\). In environment~(i), \(P_\xi\) is \(\mathrm{Unif}([-1,1]^{100})\); 
in environment~(ii), \(P_\xi\) is \(N(0,I_{100})\), then truncated to the Euclidean
ball of radius $\sqrt{q_{0.99}(\chi_{100}^2)}\approx 11.65$; in environment~(iii),
let \(t_5\) denote the Student-\(t\) distribution with \(5\) degrees of freedom, and \(\sqrt{0.6}(T_1,\ldots,T_{100})^\top\) follows $P_\xi$, where
\(T_j\overset{\mathrm{i.i.d.}}{\sim}t_5\), then truncated to the Euclidean ball
of radius $\sqrt{q_{0.99}\left(0.6\sum_{j=1}^{100}T_j^2\right)}
\approx 13.56$.
Here, \(q_{0.99}(Z)\) denotes the \(99\)th percentile of the random variable \(Z\). The factor \(\sqrt{0.6}\) in environment~(iii) normalizes each coordinate to have unit variance prior to truncation. For the reward function $\mu^*$, we use the linear
feature map $\phi(x,\omega)=Qx+D\omega$,
where \(Q,D\in\mathbb{R}^{10\times 100}\), and define $\mu^*(x,\omega)
=G\!\left(\theta_*^\top\phi(x,\omega)\right)$,
with \(\theta_*\in\mathbb{R}^{10}\), $G(\cdot)$ set to the standard logistic function $\mathcal{L}(\cdot)$ defined earlier. The realized reward from a context-output pair $(x,\omega)$ is set to be drawn from
$\mathrm{Bernoulli}\!\left(\mu^*(x, \omega)\right)$. 
Moreover, the cost function of box $a$ given context $x$ is $c_a(x,\omega)
=
c_{0,a}G(\beta_{0,a}^\top x)
+
c_{1,a}G(\beta_{1,a}^\top|\omega|)$, where \(\beta_{0,a},\beta_{1,a}\in\mathbb{R}^{100}\) and \(|\omega|\)
denotes the component-wise absolute value. 

Within each environment, we consider two regimes. In \emph{Regime A}, we set
\(\alpha_a=0\), \(Q=0\), and \(\beta_{0,a}=\beta_{1,a}=0\). Thus, the output distributions, rewards, 
and costs are context-invariant, 
implying that \(\sigma_a^*(x)\equiv \sigma_a^*\). Consequently, the oracle uses a fixed cascading order of the APIs.
In \emph{Regime~B}, we allow \(\alpha_a\), \(Q\), \(\beta_{0,a}\) and \(\beta_{1,a}\) to be nonzero, so that the reservation indices, and hence the oracle cascading order, can vary with the context. 
The specific choices of \(\alpha_a\), \(Q\), \(\beta_{0,a}\), and \(\beta_{1,a}\) in Regime B, 
as well as those of \(\theta_*\) and \(D\) in both regimes, are described in Section~\ref{sec:sim_results}. We consider these two regimes because one of our baselines, FrugalGPT, uses a fixed cascading order and may therefore perform significantly worse when the oracle order varies with the context.

\paragraph{Computation of the oracle reservation indices.} Although the oracle reservation index depends on the high-dimensional context and
output, its computation can be reduced to three scalar inputs: the
context-dependent component of the reward predictor, the scale of the
projected output shock, and the expected query cost.
To formalize this reduction, for each replication define $\widetilde{\ell}(x):=\theta_*^\top Qx$ and $\bar c_a(x):=
\E_{\xi\sim P_\xi}
\left[c_a\bigl(x,s_a(x)\xi\bigr)\right]$. The oracle reservation index \(\sigma_a^*(x_t)\) is the unique solution
in \(\sigma\) to
\begin{equation}\label{eq:sigma}
\E_{\xi\sim P_\xi}
\left[
\left\{
G\!\left(
\widetilde{\ell}(x_t)
+s_a(x_t)\theta_*^\top D\xi
\right)
-\sigma
\right\}^{+}
\right]
=
\bar c_a(x_t).
\end{equation}
Because  \(D\), \(\theta_*\), and \(P_\xi\) are fixed within each replication,
\(\sigma_a^*(x_t)\) depends on \(a\) and \(x_t\) only through
$\left(
\widetilde{\ell}(x_t),
\widetilde{s}_a(x_t),
\bar c_a(x_t)
\right)$, 
with \(\widetilde{s}_a(x):=s_a(x)\|D^\top\theta_*\|_2.\)
We exploit this three-parameter
representation as follows. For each
replication, we draw a Monte Carlo sample
\(\xi_1,\ldots,\xi_M\overset{\mathrm{i.i.d.}}{\sim}P_\xi\), where
\(M=4\times10^5\). In Regime~A, the three inputs are context-invariant, so
we approximate the expectation in \eqref{eq:sigma} using the Monte Carlo
sample and solve the resulting equation once for each box. In Regime~B, we
discretize the joint range of
\(
(\widetilde{\ell},\widetilde{s}_a,\bar c_a)
\).
At each grid point, we approximate the expectation in \eqref{eq:sigma} and
solve the resulting one-dimensional equation in \(\sigma\). During the
simulation, we approximate \(\sigma_a^*(x_t)\) by trilinearly interpolating
the precomputed solutions at the eight vertices of the grid cell containing
$\left(
\widetilde{\ell}(x_t),
\widetilde{s}_a(x_t),
\bar c_a(x_t)
\right)$.

\subsection{Policies Compared}
We compare the proposed DOW policy with contextual-bandit and FrugalGPT
baselines, and Weitzman-based variants that differ in their use
of optimism. The oracle implements Weitzman's policy by computing the oracle reservation indices specified above.
Its per-period utility serves as the
benchmark in each table, and all regrets are measured relative to it. Before the online phase, each non-oracle policy receives \(T_0\) offline
calibration periods. For every policy other than FrugalGPT, exactly one box is queried in each calibration period, and only that box’s cost, output and realized reward are observed. The boxes are queried in round-robin order, so each box is sampled approximately $T_0/A$  
times.\endnote{Because FrugalGPT needs to tune the cascading order and thresholds for all boxes, we perform a full-information calibration, where all three boxes are queried in every calibration period, and their costs, outputs and rewards are observed.} We consider
\(T_0\in\{0,50,100,1000\}\) where $T_0=0$ is a cold-start setting.

All Weitzman-based policies adopt the same parametric model of indices using the index-feature vector $\psi$. 
For the experiment, we set 
$\psi(x)=(1,x)$ and set the link function as $\Lambda(z)=\tanh(z)$.
In the \emph{unknown-reward} setting, the policy also estimates $\theta_*$ by ridge-penalized logistic maximum likelihood as defined in
\eqref{eq:hat:theta:def}. 
We remark that our theoretical regret guarantees assume that the reservation-index functions are correctly specified by the generalized-linear model. The indices in our numerical studies do not necessarily satisfy this assumption, so the formal guarantees do not apply directly to these experiments. We intentionally retain this simpler specification to examine the empirical robustness of DOW under model misspecification. The results therefore complement, rather than constitute a numerical verification of, the theoretical guarantees.

We compare the following Weitzman-based policies:

\noindent \emph{(i) Static Plug-in Weitzman:} estimates $\hat{\rho}_a$, $\hat{\theta}$ once using the $T_0$ calibration periods and
keeps these estimates fixed throughout the online phase. This policy is omitted when $T_0=0$ since it requires
calibration data.

\noindent \emph{(ii) Online Plug-in Weitzman with known/unknown reward:} updates $\hat{\rho}_a$ and, in the unknown-reward setting,
$\hat{\theta}$ as new observations become available. It applies no
optimism to either $\rho_a$ or $\theta_*$.

\noindent \emph{(iii) DOW with known/unknown reward.}
DOW follows Algorithm~\ref{alg:DOW}. Its ordering and stopping decisions use
the optimistic reservation index $\tanh\!\left(
\hat{\rho}_a^\top\psi(x_t)+c_\rho\|\psi(x_t)\|_{V_{a,t}^{-1}}
\right)$.
In the unknown-reward setting, DOW evaluates a queried box $a$ using the
optimistic reward $G\!\left(
\hat{\theta}_{t-1}^\top\phi(x_t,\omega_{at})+c_\theta
\|\phi(x_t,\omega_{at})\|_{\Phi_{t-1}^{-1}}
\right)$, where \(V_{a,t}\) and \(\Phi_{t-1}\) are the regularized design matrices for
the index and reward regressions, respectively. The hyperparameters
\(c_\rho\) and \(c_\theta\) determine the widths of the corresponding
confidence bounds. In the known-reward setting, \(c_\theta\) is not used.

In addition, we implement a contextual linear bandit and FrugalGPT as follows:
\paragraph{Contextual linear bandit (LinUCB).}
This policy treats each box as an arm and queries exactly one box per period,
yielding the net payoff \(r_{at}-c_{at}\). For each arm \(a\), it fits a
ridge regression of the net payoff on \(\psi(x)\). Let $A_{a,t}
=
I_m+
\sum_{\substack{s<t:\\ a_s=a}}
\psi(x_s)\psi(x_s)^\top$ denote the design matrix and $\hat{\eta}_{a,t}$ the corresponding
coefficient estimate. The \emph{online} version selects $\argmax_{a\in[A]}
\left\{
\hat{\eta}_{a,t}^\top\psi(x_t)
+
c_{\mathrm{b}}\|\psi(x_t)\|_{A_{a,t}^{-1}}
\right\}$
and updates the model after each period, where \(c_{\mathrm{b}}\) controls
the confidence bonus. The \emph{static} version estimates $\{\hat{\eta}_a\}_{a\in[A]}$ using only the \(T_0\) calibration
periods and subsequently selects $\argmax_{a\in[A]}
\hat{\eta}_a^\top\psi(x_t)$ without optimism or further updates.

\paragraph{FrugalGPT.}
We implement an analogous policy of FrugalGPT \citep{chenfrugalgpt} in our setting. 
FrugalGPT uses a fixed cascade consisting of an ordered sequence of boxes $\pi=(a_1,\ldots,a_L)$, $1\leq L\leq A$, so the cascade need not include all available boxes, and acceptance thresholds $\tau_1,\ldots,\tau_{L-1}\in[0,1]$. In period $t$, the boxes are queried sequentially according to $\pi$, with each query incurring its associated cost. At each nonterminal position $j<L$, after observing the response from box $a_j$, the policy stops querying if $\mu^*(x_t, \omega_{a_jt})\geq\tau_j$; otherwise, it proceeds to the next box. If the final box $a_L$ is reached, querying terminates automatically. Upon termination, the policy selects the queried response with the highest expected reward. 
The cascade and its thresholds are selected by exhaustive search on the calibration data. The candidate set contains all ordered subsets of the
three boxes: \(3\) of length one, \(6\) of length two, and \(6\) of length
three, for a total of \(15\) prefixes. For a prefix of length \(L\), each
threshold is selected from the grid $\{0,0.1,\ldots,1\}$, and all \(11^{L-1}\) threshold combinations are evaluated. This yields $3+6(11)+6(11^2)=795$ candidate configurations. We select the configuration with the highest
empirical mean utility over the \(T_0\) calibration periods and deploy it
unchanged throughout the \(T\) online periods.

Our implementation differs from \citet{chenfrugalgpt} in two respects. First, because of access to $\mu^*(\cdot, \cdot)$, the policy operates under the \emph{known-reward} setting in our experiments. Second, the policy proposed in \citet{chenfrugalgpt} returns the response from the last box queried, whereas our implementation selects the response from the queried boxes with the highest reward. Under the known-reward setting, this modification only improves the policy's performance.

\subsection{Simulation Setup and Results}\label{sec:sim_results}
Each instance configuration \((\text{environment},\text{regime},T_0)\)  is evaluated over
\(R=50\) independent replications. At the beginning of each replication, we
draw all fixed parameters using the replication-specific random seed. In both Regimes A and B, we draw
\(\theta_*\in\mathbb{R}^{10}\) uniformly from the sphere $\left\{
\theta\in\mathbb{R}^{10}:\|\theta\|_2=r
\right\}$, where the radius $r$ is calibrated so that $\mathbb{P}\left(
-1\leq\theta_*^\top\phi(x_t,\omega_{at})\leq1
\right)
\approx 0.5$. This calibration ensures that a substantial fraction of $\theta_*^\top\phi(x_t,\omega_{at})$ lies in a region where the derivative of the link function is bounded away from zero, which is qualitatively consistent with Assumption~\ref{ass:G:phi:regularity}.
We generate the matrix $D$ by drawing each of its entries independently from $\mathrm{Uniform}[0,1]$, after which each row is normalized to have unit Euclidean norm. 
In addition, we draw $c_{0,a},c_{1,a}\sim\mathrm{Uniform}[0,u]$ independently for all $a\in[A]$. The upper bound $u$ is calibrated so that, on average, approximately $1.5$ boxes are opened per period.
In regime B, the entries of \(Q\) are drawn independently from \(\mathrm{Uniform}[0,1]\), after which each row is normalized to have unit
Euclidean norm. For each $a\in[A]$, the entries of
\(\alpha_a,\beta_{0,a},\beta_{1,a}\in\mathbb{R}^{100}\) are drawn independently from \(\mathrm{Uniform}[0,1]\), and each vector is similarly normalized. 
All parameters remain unchanged throughout a replication, while the contexts, box outputs, and realized rewards vary across periods.

Within each replication, all policies face the same realizations of contexts, box outputs, and rewards. For the Weitzman-based policies, the
hyperparameters \(c_\rho\) and \(c_\theta\) are selected from $\{0.001,0.1,1.0\}$, as is \(c_{\mathrm{b}}\) for online LinUCB. 
Let \(U_t^\pi\) denote the period-\(t\) utility of policy \(\pi\), defined as
the reward from the accepted box minus the total query cost incurred in that
period, and let \(U_t^*\) denote the corresponding oracle utility. In Tables~\ref{tab:regimeA} and~\ref{tab:regimeB}, corresponding to Regimes~A and~B respectively, we report
the time-averaged regret $\overline{\mathrm{Reg}}_T =
\frac{1}{T}\sum_{t=1}^T
\left(U_t^*-U_t^\pi\right)$, scaled by \(10^3\) for readability, together with $1.96$ standard errors. 
In each table, the panel header provides the oracle's per-period utility (also scaled by
\(10^3\)), its average queries per period, and the parameter $u$ (upper-bound for $c_{0,a}$ and $c_{1,a}$) which is calibrated separately across environments to yield approximately $1.5$
oracle queries per period. For each tuned policy, environment, and value of $T_0$, we report the mean regret achieved using the best-performing hyperparameter in $\{0.001,0.1,1.0\}$. Bold entries
indicate the lowest regret within an information setting (reward known/unknown, value of $T_0$), and ``---'' indicates that a policy is not applicable for that column. 

Several findings stand out.
First, Tables~\ref{tab:regimeA}--\ref{tab:regimeB} show that the Weitzman-based policies generally outperform the baselines across both regimes and all three environments. 
Second, both the known- and unknown-reward variants of DOW achieve low regret even under the cold-start setting with \(T_0=0\). 
Third, in Regime~A, FrugalGPT is comparable to DOW when \(T_0=1000\), but performs substantially worse than the online Weitzman policies at smaller values of $T_0$. In Regime~B, it remains substantially worse even at \(T_0=1000\). This pattern is consistent with the structural limitation of committing to a single fixed cascade when the oracle ordering varies with the context, as documented in Table~\ref{tab:oracle-order-stats}.  
Fourth, the one-shot contextual bandits are not competitive in this
setting. The oracle queries \(1.30\)--\(1.78\) boxes per period on average,
so a one-box policy forfeits the option value of obtaining additional
outputs before making its final selection.  
Fifth, the regret gap between DOW under known- and unknown-reward variants is small, qualitatively consistent with our theoretical analysis under Assumption~\ref{ass:projection:anti:concentration}. Across the online Weitzman's policies, the largest increase in regret is approximately $0.17\%$ of the oracle's utility.
Finally, in Regime~A, DOW has lower mean regret than Online Plug-in Weitzman in all three environments, with the advantage most pronounced when \(T_0\leq100\). In contrast, the differences are generally smaller in Regime~B. One possible explanation is that, in the context-invariant Regime~A, a plug-in policy is more susceptible to early lock-in, whereas DOW’s optimism induces continued exploration. In contrast, contextual variation in Regime~B may generate incidental exploration.

\section{Conclusion}\label{sec:conclusion}
Motivated by LLM cascading, we introduce an online contextual Pandora's Box model in which a decision-maker sequentially queries APIs to generate outputs at a cost and deploys a single output, observing only its downstream reward. This output-mediated feedback departs from the classical setting, where opening a box reveals its reward directly, and captures a key feature of LLM cascading systems: API-specific heterogeneity enters only through the distributions of generated outputs and costs, while downstream value is governed by a shared reward evaluator on the context-output pair. 
Rather than estimating the full conditional output and cost distributions, we impose a generalized linear structure directly on the reservation indices and the shared reward function. In simulations, we further demonstrate the advantages of our \DOW policy over both a contextual-bandit benchmark and the existing FrugalGPT heuristic. Our \DOW policy is built by combining GMM-type estimation of the reservation indices with UCB-style confidence bounds for both the indices and the reward evaluator. Through a regret decomposition under optimism, which separates cumulative regret into reward-estimation errors for deployed outputs and index-estimation errors for queried APIs, we derive a $\widetilde{\mathrm{O}}\big([d+A(m+\sqrt{dm})]\sqrt{T}\big)$ regret bound. Our work opens several avenues for future research. First, because our theoretical results rely on generalized linear specifications, extending the moment-based UCB framework to richer nonparametric or neural representations is a natural next step.
Achieving this while preserving $\widetilde{O}(\sqrt{T})$ regret would further broaden the model's applicability. Second, our current formulation assumes a standard LLM cascading structure where APIs are queried sequentially and a single output is selected and then deployed. Expanding this framework to allow for batched querying under latency budgets, or to support output ensembling and synthesis, would significantly change the query-selection dynamics. Addressing these richer action spaces raises new questions about the structure of the learning policy.

\begin{APPENDICES}
\renewcommand*{\theHsection}{app.\arabic{section}}
\renewcommand*{\theHsubsection}
  {app.\arabic{section}.\arabic{subsection}}
\renewcommand*{\theHsubsubsection}
  {app.\arabic{section}.\arabic{subsection}.\arabic{subsubsection}}

\section{Detailed \DOW Algorithm}\label{appendix:algorithm}
\begin{algorithm}[!htp]
\caption{\DOW Policy Algorithm}
\label{alg:DOW:detailed}
\textbf{Initialization.}
Observe an initial context $x_0$. Query each box $a\in[A]$ once and record the observed output-cost pairs $\{(\omega_{a0},c_{a0})\}_{a\in[A]}$. 

\For{$t=1,2,\ldots,T$}{
    Observe context $x_t$. Initialize $\mathcal A_t=\emptyset$ and $M_t=-\infty$\;
    For any $a\in[A]$, compute $\hat{\rho}_{at}$ by \eqref{eq:hat:rho:at} and set $\widetilde{\sigma}_{at}=\Lambda(\hat{\rho}_{at}^\top\psi(x_t)+B_{at}\|\psi(x_t)\|_{V_{at}(\eta_1)^{-1}})$, where $B_{at}$ is defined as in Proposition~\ref{prop:confidence:rho:main:context}, $V_{at}(\eta_1)$ is defined as \eqref{eq:known-theta-V}\;
    Order the
    boxes so that $\widetilde\sigma_{(1)t}\ge
        \widetilde\sigma_{(2)t}\ge
        \cdots\ge
        \widetilde\sigma_{(A)t}$.
    Set $\widetilde\sigma_{(A+1)t}:=-\infty$\;

    \For{$k=1,2,\ldots,A$}{
        Query box $(k)$ and observe its output and cost
        $(\omega_{(k)t},c_{(k)t})$\;

        Add $(k)$ to the queried set:
        $\mathcal A_t\leftarrow \mathcal A_t\cup\{(k)\}$\;

        Compute the optimistic reward estimate
        $\widetilde\mu_t(x_t,\omega_{(k)t})=G\!\left(\hat\theta_{t-1}^\top\phi(x_t,\omega_{(k)t})+\beta_t\,\big\|\phi(x_t,\omega_{(k)t})\big\|_{\Phi_{t-1}^{-1}}\right)$,
        where $\Phi_{t-1}$ is defined as in \eqref{eq:design:matrix:t} and $\beta_t$ is as given in Lemma~\ref{lemma:G:diff:hat:theta:main}\;
        Update
            $M_t \leftarrow
            \max_{a\in\mathcal A_t}
            \widetilde\mu_t(x_t,\omega_{at})$\;
        \textbf{if} $M_t\ge \widetilde\sigma_{(k+1)t}$, stop querying and \textbf{break}\;
    }

    Select and deploy
    $
        a_t\in
        \argmax_{a\in\mathcal A_t}
        \widetilde\mu_t(x_t,\omega_{at}),
    $
    and observe reward $r_t$\;
}
\end{algorithm}
\end{APPENDICES}

%\clearpage
\renewcommand{\notesname}{Endnotes} 

\begingroup
\parindent 0pt
\parskip 0pt
\def\enotesize{\normalsize}
\theendnotes
\endgroup

\section*{AI Disclosure}
The authors used generative artificial intelligence tools, including GPT, Gemini, and Claude, to edit and polish earlier versions of the drafts.
All AI-generated text and suggestions were checked, revised, and approved by the authors. The authors take full responsibility for the accuracy, integrity, and originality of the submitted work.

\section*{Acknowledgment}
We thank Ali Makhdoumi for helpful discussions on the general theory and intuition of Pandora’s Box, and 
Shreyas Sekar for an early conversation on AI-assisted coding and model selection for reducing LLM API costs that eventually led us to the FrugalGPT paper by \cite{chenfrugalgpt}.
We also thank the participants from 2026 Marketplace Innovation Workshop for comments and discussion.

\bibliographystyle{informs2014}
\bibliography{reference} 

@inproceedings{kleinberg2016descending,
  title={Descending price coordinates approximately efficient search},
  author={Kleinberg, Robert and Waggoner, Bo and Weyl, E Glen},
  booktitle={Extended abstract in the Proceedings of the 17th ACM Conference on Electronic Commerce (EC’16)},
  year={2016}
}

@article{freedman1975tail,
  title={On tail probabilities for martingales},
  author={Freedman, David A},
  journal={the Annals of Probability},
  pages={100--118},
  year={1975},
  publisher={JSTOR}
}

@inproceedings{li2017provably,
  title={Provably optimal algorithms for generalized linear contextual bandits},
  author={Li, Lihong and Lu, Yu and Zhou, Dengyong},
  booktitle={International Conference on Machine Learning},
  pages={2071--2080},
  year={2017},
  organization={PMLR}
}

@article{zhou2019learning,
  title={Learning in generalized linear contextual bandits with stochastic delays},
  author={Zhou, Zhengyuan and Xu, Renyuan and Blanchet, Jose},
  journal={Advances in Neural Information Processing Systems},
  volume={32},
  year={2019}
}

@techreport{tropp2011user,
  title={User-friendly tail bounds for matrix martingales},
  author={Tropp, Joel A},
  year={2011}
}

@article{newey1994large,
  title={Large sample estimation and hypothesis testing},
  author={Newey, Whitney K and McFadden, Daniel},
  journal={Handbook of econometrics},
  volume={4},
  pages={2111--2245},
  year={1994},
  publisher={Elsevier}
}

@article{filippi2010parametric,
  title={Parametric bandits: The generalized linear case},
  author={Filippi, Sarah and Cappe, Olivier and Garivier, Aur{\'e}lien and Szepesv{\'a}ri, Csaba},
  journal={Advances in neural information processing systems},
  volume={23},
  year={2010}
}

@article{lee2024unified,
  title={A Unified Confidence Sequence for Generalized Linear Models, with Applications to Bandits},
  author={Lee, Junghyun and Yun, Se-Young and Jun, Kwang-Sung},
  journal={Advances in Neural Information Processing Systems},
  volume={37},
  pages={124640--124685},
  year={2024}
}

@article{rusmevichientong2010linearly,
  title={Linearly parameterized bandits},
  author={Rusmevichientong, Paat and Tsitsiklis, John N},
  journal={Mathematics of Operations Research},
  volume={35},
  number={2},
  pages={395--411},
  year={2010},
  publisher={INFORMS}
}

@article{jun2017scalable,
  title={Scalable generalized linear bandits: Online computation and hashing},
  author={Jun, Kwang-Sung and Bhargava, Aniruddha and Nowak, Robert and Willett, Rebecca},
  journal={Advances in Neural Information Processing Systems},
  volume={30},
  year={2017}
}

@inproceedings{atsidakou2024contextual,
  title={Contextual pandora’s box},
  author={Atsidakou, Alexia and Caramanis, Constantine and Gergatsouli, Evangelia and Papadigenopoulos, Orestis and Tzamos, Christos},
  booktitle={Proceedings of the AAAI Conference on Artificial Intelligence},
  volume={38},
  pages={10944--10952},
  year={2024}
}

@book{lattimore2020bandit,
  title={Bandit algorithms},
  author={Lattimore, Tor and Szepesv{\'a}ri, Csaba},
  year={2020},
  publisher={Cambridge University Press}
}

@article{liu2025improved,
  title={Improved Regret and Contextual Linear Extension for Pandora's Box and Prophet Inequality},
  author={Liu, Junyan and Chen, Ziyun and Wang, Kun and Luo, Haipeng and Ratliff, Lillian J},
  journal={arXiv preprint arXiv:2505.18828},
  year={2025}
}

@inproceedings{agarwal2024semi,
  title={Semi-bandit learning for monotone stochastic optimization},
  author={Agarwal, Arpit and Ghuge, Rohan and Nagarajan, Viswanath},
  booktitle={2024 IEEE 65th Annual Symposium on Foundations of Computer Science (FOCS)},
  pages={1260--1274},
  year={2024},
  organization={IEEE}
}

@article{abbasi2011improved,
  title={Improved algorithms for linear stochastic bandits},
  author={Abbasi-Yadkori, Yasin and P{\'a}l, D{\'a}vid and Szepesv{\'a}ri, Csaba},
  journal={Advances in neural information processing systems},
  volume={24},
  year={2011}
}

@article{weitzman1979optimal,
  title={OPTIMAL SEARCH FOR THE BEST ALTERNATIVE.},
  author={Weitzman, Martin L},
  journal={Econometrica},
  volume={47},
  number={3},
  year={1979}
}

@inproceedings{gatmiry2024bandit,
  title={Bandit algorithms for prophet inequality and pandora's box},
  author={Gatmiry, Khashayar and Kesselheim, Thomas and Singla, Sahil and Wang, Yifan},
  booktitle={Proceedings of the 2024 Annual ACM-SIAM Symposium on Discrete Algorithms (SODA)},
  pages={462--500},
  year={2024},
  organization={SIAM}
}

@article{hager2024evaluation,
  title={Evaluation and mitigation of the limitations of large language models in clinical decision-making},
  author={Hager, Paul and Jungmann, Friederike and Holland, Robbie and Bhagat, Kunal and Hubrecht, Inga and Knauer, Manuel and Vielhauer, Jakob and Makowski, Marcus and Braren, Rickmer and Kaissis, Georgios and others},
  journal={Nature medicine},
  volume={30},
  number={9},
  pages={2613--2622},
  year={2024},
  publisher={Nature Publishing Group US New York}
}

@article{hao2025large,
  title={Large language model integrations in cancer decision-making: a systematic review and meta-analysis},
  author={Hao, Yuexing and Qiu, Zhiwen and Holmes, Jason and L{\"o}ckenhoff, Corinna E and Liu, Wei and Ghassemi, Marzyeh and Kalantari, Saleh},
  journal={NPJ Digital Medicine},
  volume={8},
  number={1},
  pages={450},
  year={2025},
  publisher={Nature Publishing Group UK London}
}

@article{thirunavukarasu2023large,
  title={Large language models in medicine},
  author={Thirunavukarasu, Arun James and Ting, Darren Shu Jeng and Elangovan, Kabilan and Gutierrez, Laura and Tan, Ting Fang and Ting, Daniel Shu Wei},
  journal={Nature medicine},
  volume={29},
  number={8},
  pages={1930--1940},
  year={2023},
  publisher={Nature Publishing Group US New York}
}

@article{chen2025manager,
  title={A manager and an AI walk into a bar: does ChatGPT make biased decisions like we do?},
  author={Chen, Yang and Kirshner, Samuel N and Ovchinnikov, Anton and Andiappan, Meena and Jenkin, Tracy},
  journal={Manufacturing \& Service Operations Management},
  volume={27},
  number={2},
  pages={354--368},
  year={2025},
  publisher={INFORMS}
}

@article{chen2024large,
  title={Large language model in creative work: The role of collaboration modality and user expertise},
  author={Chen, Zenan and Chan, Jason},
  journal={Management Science},
  volume={70},
  number={12},
  pages={9101--9117},
  year={2024},
  publisher={INFORMS}
}

@incollection{simchi2026large,
  title={Large language models for supply chain decisions},
  author={Simchi-Levi, David and Mellou, Konstantina and Menache, Ishai and Pathuri, Jeevan},
  booktitle={AI in Supply Chains: Perspectives from Global Thought Leaders},
  pages={93--104},
  year={2026},
  publisher={Springer}
}

@inproceedings{yang2023against,
  title={Against opacity: Explainable ai and large language models for effective digital advertising},
  author={Yang, Qi and Ongpin, Marlo and Nikolenko, Sergey and Huang, Alfred and Farseev, Aleksandr},
  booktitle={Proceedings of the 31st ACM International Conference on Multimedia},
  pages={9299--9305},
  year={2023}
}

@article{reisenbichler2025applying,
  title={Applying large language models to sponsored search advertising},
  author={Reisenbichler, Martin and Reutterer, Thomas and Schweidel, David A},
  journal={Marketing Science},
  year={2025},
  publisher={INFORMS}
}

@article{chenfrugalgpt,
  title={FrugalGPT: How to Use Large Language Models While Reducing Cost and Improving Performance},
  author={Chen, Lingjiao and Zaharia, Matei and Zou, James},
  journal={Transactions on Machine Learning Research}, 
  year={2025},
}

@article{hu2024routerbench,
  title={Routerbench: A benchmark for multi-llm routing system},
  author={Hu, Qitian Jason and Bieker, Jacob and Li, Xiuyu and Jiang, Nan and Keigwin, Benjamin and Ranganath, Gaurav and Keutzer, Kurt and Upadhyay, Shriyash Kaustubh},
  journal={arXiv preprint arXiv:2403.12031},
  year={2024}
}

@article{jiang2023llm,
  title={Llm-blender: Ensembling large language models with pairwise ranking and generative fusion},
  author={Jiang, Dongfu and Ren, Xiang and Lin, Bill Yuchen},
  journal={arXiv preprint arXiv:2306.02561},
  year={2023}
}

@article{nie2024online,
  title={Online cascade learning for efficient inference over streams},
  author={Nie, Lunyiu and Ding, Zhimin and Hu, Erdong and Jermaine, Christopher and Chaudhuri, Swarat},
  journal={arXiv preprint arXiv:2402.04513},
  year={2024}
}

@article{mei2025omnirouter,
  title={Omnirouter: Budget and performance controllable multi-llm routing},
  author={Mei, Kai and Xu, Wujiang and Guo, Minghao and Lin, Shuhang and Zhang, Yongfeng},
  journal={ACM SIGKDD Explorations Newsletter},
  volume={27},
  number={2},
  pages={107--116},
  year={2025},
  publisher={ACM New York, NY, USA}
}

@article{chen2024cascade,
  title={Cascade speculative drafting for even faster llm inference},
  author={Chen, Ziyi and Yang, Xiaocong and Lin, Jiacheng and Sun, Chenkai and Chang, Kevin C and Huang, Jie},
  journal={Advances in Neural Information Processing Systems},
  volume={37},
  pages={86226--86242},
  year={2024}
}

@article{zhang2024efficient,
  title={Efficient contextual llm cascades through budget-constrained policy learning},
  author={Zhang, Xuechen and Huang, Zijian and Taga, Ege Onur and Joe-Wong, Carlee and Oymak, Samet and Chen, Jiasi},
  journal={Advances in Neural Information Processing Systems},
  volume={37},
  pages={91691--91722},
  year={2024}
}

@inproceedings{fang2024llm,
  title={Llm-ensemble: Optimal large language model ensemble method for e-commerce product attribute value extraction},
  author={Fang, Chenhao and Li, Xiaohan and Fan, Zezhong and Xu, Jianpeng and Nag, Kaushiki and Korpeoglu, Evren and Kumar, Sushant and Achan, Kannan},
  booktitle={Proceedings of the 47th International ACM SIGIR Conference on Research and Development in Information Retrieval},
  pages={2910--2914},
  year={2024}
}

@inproceedings{hu2025efficient,
  title={Efficient dynamic ensembling for multiple LLM experts},
  author={Hu, Jinwu and Wang, Yufeng and Zhang, Shuhai and Zhou, Kai and Chen, Guohao and Hu, Yu and Xiao, Bin and Tan, Mingkui},
  booktitle={Proceedings of the Thirty-Fourth International Joint Conference on Artificial Intelligence, IJCAI},
  pages={16--22},
  year={2025}
}

@inproceedings{yue2024large,
  title={Large Language Model Cascades with Mixture of Thought Representations for Cost-Efficient Reasoning},
  author={Yue, Murong and Zhao, Jie and Zhang, Min and Du, Liang and Yao, Ziyu},
  booktitle={The Twelfth International Conference on Learning Representations},
  year={2024}
}

@inproceedings{shnitzerlarge,
  title={Large Language Model Routing with Benchmark Datasets},
  author={Shnitzer, Tal and Ou, Anthony and Silva, M{\'\i}rian and Soule, Kate and Sun, Yuekai and Solomon, Justin and Thompson, Neil and Yurochkin, Mikhail},
  booktitle={First Conference on Language Modeling},
  year={2023}
}

@article{hari2023tryage,
  title={Tryage: Real-time, intelligent routing of user prompts to large language models},
  author={Hari, Surya Narayanan and Thomson, Matt},
  journal={arXiv preprint arXiv:2308.11601},
  year={2023}
}

@inproceedings{lu2024routing,
  title={Routing to the expert: Efficient reward-guided ensemble of large language models},
  author={Lu, Keming and Yuan, Hongyi and Lin, Runji and Lin, Junyang and Yuan, Zheng and Zhou, Chang and Zhou, Jingren},
  booktitle={Proceedings of the 2024 Conference of the North American Chapter of the Association for Computational Linguistics: Human Language Technologies (Volume 1: Long Papers)},
  pages={1964--1974},
  year={2024}
}

@inproceedings{vsakota2024fly,
  title={Fly-swat or cannon? cost-effective language model choice via meta-modeling},
  author={{\v{S}}akota, Marija and Peyrard, Maxime and West, Robert},
  booktitle={Proceedings of the 17th ACM International Conference on Web Search and Data Mining},
  pages={606--615},
  year={2024}
}

@article{doval2018whether,
  title={Whether or not to open Pandora's box},
  author={Doval, Laura},
  journal={Journal of Economic Theory},
  volume={175},
  pages={127--158},
  year={2018},
  publisher={Elsevier}
}

@inproceedings{chawla2020pandora,
  title={Pandora's box with correlations: Learning and approximation},
  author={Chawla, Shuchi and Gergatsouli, Evangelia and Teng, Yifeng and Tzamos, Christos and Zhang, Ruimin},
  booktitle={2020 IEEE 61st Annual Symposium on Foundations of Computer Science (FOCS)},
  pages={1214--1225},
  year={2020},
  organization={IEEE}
}

@inproceedings{boodaghians2020pandora,
  title={Pandora's box problem with order constraints},
  author={Boodaghians, Shant and Fusco, Federico and Lazos, Philip and Leonardi, Stefano},
  booktitle={Proceedings of the 21st ACM Conference on Economics and Computation},
  pages={439--458},
  year={2020}
}

@inproceedings{ezra2026contract,
  title={Contract design for sequential actions},
  author={Ezra, Tomer and Feldman, Michal and Schlesinger, Maya},
  booktitle={Proceedings of the 2026 Annual ACM-SIAM Symposium on Discrete Algorithms (SODA)},
  pages={6537--6570},
  year={2026},
  organization={SIAM}
}

@inproceedings{gergatsouli2022online,
  title={Online learning for min sum set cover and pandora’s box},
  author={Gergatsouli, Evangelia and Tzamos, Christos},
  booktitle={International Conference on Machine Learning},
  pages={7382--7403},
  year={2022},
  organization={PMLR}
}

@article{fu2020learning,
  title={Learning utilities and equilibria in non-truthful auctions},
  author={Fu, Hu and Lin, Tao},
  journal={Advances in Neural Information Processing Systems},
  volume={33},
  pages={14231--14242},
  year={2020}
}

@inproceedings{kveton2020randomized,
  title={Randomized exploration in generalized linear bandits},
  author={Kveton, Branislav and Zaheer, Manzil and Szepesvari, Csaba and Li, Lihong and Ghavamzadeh, Mohammad and Boutilier, Craig},
  booktitle={International Conference on Artificial Intelligence and Statistics},
  pages={2066--2076},
  year={2020},
  organization={PMLR}
}

@inproceedings{ding2021efficient,
  title={An efficient algorithm for generalized linear bandit: Online stochastic gradient descent and thompson sampling},
  author={Ding, Qin and Hsieh, Cho-Jui and Sharpnack, James},
  booktitle={International Conference on Artificial Intelligence and Statistics},
  pages={1585--1593},
  year={2021},
  organization={PMLR}
}

@inproceedings{kim2023double,
  title={Double doubly robust thompson sampling for generalized linear contextual bandits},
  author={Kim, Wonyoung and Lee, Kyungbok and Paik, Myunghee Cho},
  booktitle={Proceedings of the AAAI Conference on Artificial Intelligence},
  volume={37},
  pages={8300--8307},
  year={2023}
}

@article{hansen1982large,
  title={Large sample properties of generalized method of moments estimators},
  author={Hansen, Lars Peter},
  journal={Econometrica: Journal of the econometric society},
  pages={1029--1054},
  year={1982},
  publisher={JSTOR}
}

@article{arellano1991some,
  title={Some tests of specification for panel data: Monte Carlo evidence and an application to employment equations},
  author={Arellano, Manuel and Bond, Stephen},
  journal={The review of economic studies},
  volume={58},
  number={2},
  pages={277--297},
  year={1991},
  publisher={Wiley-Blackwell}
}

@article{chamberlain1987asymptotic,
  title={Asymptotic efficiency in estimation with conditional moment restrictions},
  author={Chamberlain, Gary},
  journal={Journal of econometrics},
  volume={34},
  number={3},
  pages={305--334},
  year={1987},
  publisher={Elsevier}
}

@article{cheng2024gmm,
  title={GMM estimation for high-dimensional panel data models},
  author={Cheng, Tingting and Dong, Chaohua and Gao, Jiti and Linton, Oliver},
  journal={Journal of Econometrics},
  volume={244},
  number={1},
  pages={105853},
  year={2024},
  publisher={Elsevier}
}

@article{lin2010gmm,
  title={GMM estimation of spatial autoregressive models with unknown heteroskedasticity},
  author={Lin, Xu and Lee, Lung-fei},
  journal={Journal of Econometrics},
  volume={157},
  number={1},
  pages={34--52},
  year={2010},
  publisher={Elsevier}
}

@article{andrews2022optimal,
  title={Optimal decision rules for weak GMM},
  author={Andrews, Isaiah and Mikusheva, Anna},
  journal={Econometrica},
  volume={90},
  number={2},
  pages={715--748},
  year={2022},
  publisher={Wiley Online Library}
}

@article{hansen2021inference,
  title={Inference for iterated GMM under misspecification},
  author={Hansen, Bruce E and Lee, Seojeong},
  journal={Econometrica},
  volume={89},
  number={3},
  pages={1419--1447},
  year={2021},
  publisher={Wiley Online Library}
}

@article{lai1985asymptotically,
  title={Asymptotically efficient adaptive allocation rules},
  author={Lai, Tze Leung and Robbins, Herbert},
  journal={Advances in applied mathematics},
  volume={6},
  number={1},
  pages={4--22},
  year={1985},
  publisher={Academic Press}
}

@article{auer2002finite,
  title={Finite-time analysis of the multiarmed bandit problem},
  author={Auer, Peter and Cesa-Bianchi, Nicolo and Fischer, Paul},
  journal={Machine learning},
  volume={47},
  number={2},
  pages={235--256},
  year={2002},
  publisher={Springer}
}

@article{fan2025fragility,
  title={The fragility of optimized bandit algorithms},
  author={Fan, Lin and Glynn, Peter W},
  journal={Operations Research},
  volume={73},
  number={6},
  pages={3173--3198},
  year={2025},
  publisher={INFORMS}
}

@inproceedings{garivier2011kl,
  title={The KL-UCB algorithm for bounded stochastic bandits and beyond},
  author={Garivier, Aur{\'e}lien and Capp{\'e}, Olivier},
  booktitle={Proceedings of the 24th annual conference on learning theory},
  pages={359--376},
  year={2011},
  organization={JMLR Workshop and Conference Proceedings}
}

@article{audibert2009exploration,
  title={Exploration--exploitation tradeoff using variance estimates in multi-armed bandits},
  author={Audibert, Jean-Yves and Munos, R{\'e}mi and Szepesv{\'a}ri, Csaba},
  journal={Theoretical Computer Science},
  volume={410},
  number={19},
  pages={1876--1902},
  year={2009},
  publisher={Elsevier}
}

@article{rakhlin2013optimization,
  title={Optimization, learning, and games with predictable sequences},
  author={Rakhlin, Sasha and Sridharan, Karthik},
  journal={Advances in Neural Information Processing Systems},
  volume={26},
  year={2013}
}

@article{mao2025model,
  title={Model-Free Nonstationary Reinforcement Learning: Near-Optimal Regret and Applications in Multiagent Reinforcement Learning and Inventory Control},
  author={Mao, Weichao and Zhang, Kaiqing and Zhu, Ruihao and Simchi-Levi, David and Ba{\c{s}}ar, Tamer},
  journal={Management Science},
  volume={71},
  number={2},
  pages={1564--1580},
  year={2025},
  publisher={INFORMS}
}

@article{gao2022joint,
  title={Joint learning and optimization for multi-product pricing (and ranking) under a general cascade click model},
  author={Gao, Xiangyu and Jasin, Stefanus and Najafi, Sajjad and Zhang, Huanan},
  journal={Management Science},
  volume={68},
  number={10},
  pages={7362--7382},
  year={2022},
  publisher={INFORMS}
}

@article{cheung2022inventory,
  title={Inventory balancing with online learning},
  author={Cheung, Wang Chi and Ma, Will and Simchi-Levi, David and Wang, Xinshang},
  journal={Management Science},
  volume={68},
  number={3},
  pages={1776--1807},
  year={2022},
  publisher={INFORMS}
}

@article{gupta2024language,
  title={Language model cascades: Token-level uncertainty and beyond},
  author={Gupta, Neha and Narasimhan, Harikrishna and Jitkrittum, Wittawat and Rawat, Ankit Singh and Menon, Aditya Krishna and Kumar, Sanjiv},
  journal={arXiv preprint arXiv:2404.10136},
  year={2024}
}

@article{jaillet2025online,
  title={Online scheduling for llm inference with kv cache constraints},
  author={Jaillet, Patrick and Jiang, Jiashuo and Mellou, Konstantina and Molinaro, Marco and Podimata, Chara and Zhou, Zijie},
  journal={arXiv preprint arXiv:2502.07115},
  year={2025}
}

@article{ao2025optimizing,
  title={Optimizing llm inference: Fluid-guided online scheduling with memory constraints},
  author={Ao, Ruicheng and Luo, Gan and Simchi-Levi, David and Wang, Xinshang},
  journal={arXiv preprint arXiv:2504.11320},
  year={2025}
}

@inproceedings{yu2022orca,
  title     = {Orca: A Distributed Serving System for Transformer-Based Generative Models},
  author    = {Yu, Gyeong-In and Jeong, Joo Seong and Kim, Geon-Woo and Kim, Soojeong and Chun, Byung-Gon},
  booktitle = {Proceedings of the 16th USENIX Symposium on Operating Systems Design and Implementation},
  pages     = {521--538},
  year      = {2022}
}

@inproceedings{kwon2023efficient,
  title     = {Efficient Memory Management for Large Language Model Serving with PagedAttention},
  author    = {Kwon, Woosuk and Li, Zhuohan and Zhuang, Siyuan and Sheng, Ying and Zheng, Lianmin and Yu, Cody Hao and Gonzalez, Joseph E. and Zhang, Hao and Stoica, Ion},
  booktitle = {Proceedings of the ACM SIGOPS 29th Symposium on Operating Systems Principles},
  year      = {2023}
}

@inproceedings{agrawal2024sarathi,
  title     = {Taming Throughput-Latency Tradeoff in LLM Inference with Sarathi-Serve},
  author    = {Agrawal, Amey and Kedia, Nitin and Panwar, Ashish and Mohan, Jayashree and Kwatra, Nipun and Gulavani, Bhargav S. and Tumanov, Alexey and Ramjee, Ramachandran},
  booktitle = {Proceedings of the 18th USENIX Symposium on Operating Systems Design and Implementation},
  year      = {2024}
}

@article{huang2026optimal,
  title   = {Optimal Bayesian Stopping for Efficient Inference of Consistent LLM Answers},
  author  = {Huang, Jingkai and Ma, Will and Zhou, Zhengyuan},
  journal = {arXiv preprint arXiv:2602.05395},
  year    = {2026}
}

@article{li2026asymptotically,
  title   = {Asymptotically Optimal Sequential Testing with Heterogeneous LLMs},
  author  = {Li, Guokai and Liang, Jiaxin and Liu, Mo and Lei, Yanzhe and Jasin, Stefanus and Yang, Fenghua and Baxi, Preet},
  journal = {arXiv preprint arXiv:2604.01086},
  year    = {2026}
}

@unpublished{guo2026congestion,
  title  = {Congestion-Aware Static LLM Cascades: Analysis of a Steady-State Framework},
  author = {Guo, Yuan and Jasin, Stefanus and Lin, Chen-An},
  note   = {Working paper},
  year   = {2026}
}

\clearpage

\begin{table}[htp!]
\centering
\scriptsize
\caption{Regime A: time-averaged regret scaled by $10^3$, against the oracle under the three environments with different \(P_\xi\).}
\label{tab:regimeA}
\begin{tabular}{llcccc}
\toprule
 & & $T_0{=}0$ & $T_0{=}50$ & $T_0{=}100$ & $T_0{=}1000$\\
\midrule
\multicolumn{6}{l}{(i) Uniform\quad $u=0.23$;\; oracle utility $\times10^3$: 522.53 $\pm$ 12.63;\; 1.69 boxes queried}\\
\cmidrule(lr){1-6}
\multirow{2}{*}{\emph{Reward known}} & Online Plug-in Weitzman & 9.31$\pm$0.35 & 9.03$\pm$0.37 & 8.61$\pm$0.41 & 6.00$\pm$0.41\\
 & DOW & \textbf{7.59$\pm$0.39} & \textbf{7.66$\pm$0.41} & \textbf{7.45$\pm$0.37} & 5.58$\pm$0.42\\
\arrayrulecolor{black!25}\cmidrule(lr){2-6}\arrayrulecolor{black}
\multirow{2}{*}{\emph{Reward unknown}} & Online Plug-in Weitzman & 10.17$\pm$0.35 & 9.43$\pm$0.33 & 9.17$\pm$0.41 & 6.13$\pm$0.40\\
 & DOW & \textbf{7.97$\pm$0.37} & \textbf{8.03$\pm$0.38} & \textbf{7.83$\pm$0.37} & \textbf{5.74$\pm$0.41}\\
\arrayrulecolor{black!25}\cmidrule(lr){2-6}\arrayrulecolor{black}
\multirow{4}{*}{\emph{Baselines}} & Static Plug-in Weitzman & --- & 105.92$\pm$6.39 & 88.14$\pm$5.04 & 37.86$\pm$1.69\\
 & LinUCB (online) & 109.03$\pm$5.77  & 110.02$\pm$5.58 & 109.96$\pm$5.57 & 110.96$\pm$5.50\\
 & LinUCB (static) & --- & 133.78$\pm$6.23 & 134.63$\pm$6.34 & 131.44$\pm$5.78\\
 & FrugalGPT (reward known) & --- & 24.32$\pm$5.53 & 16.88$\pm$4.25 & \textbf{5.55$\pm$1.08} \\
\midrule
\multicolumn{6}{l}{(ii) Truncated Gaussian\quad $u=0.2$;\; oracle utility $\times10^3$: 550.79 $\pm$ 13.08;\; 1.78 boxes queried}\\
\cmidrule(lr){1-6}
\multirow{2}{*}{\emph{Reward known}} & Online Plug-in Weitzman & 8.59$\pm$0.31 & 8.11$\pm$0.37 & 7.94$\pm$0.35 & 5.14$\pm$0.38\\
 & DOW & \textbf{6.65$\pm$0.39} & \textbf{6.55$\pm$0.35} & \textbf{6.46$\pm$0.37} & \textbf{4.71$\pm$0.38}\\
\arrayrulecolor{black!25}\cmidrule(lr){2-6}\arrayrulecolor{black}
\multirow{2}{*}{\emph{Reward unknown}} & Online Plug-in Weitzman & 9.02$\pm$0.34 & 8.46$\pm$0.35 & 8.15$\pm$0.33 & 5.40$\pm$0.38\\
 & DOW & \textbf{7.17$\pm$0.31} & \textbf{6.95$\pm$0.35} & \textbf{6.82$\pm$0.35} & \textbf{5.03$\pm$0.40}\\
\arrayrulecolor{black!25}\cmidrule(lr){2-6}\arrayrulecolor{black}
\multirow{4}{*}{\emph{Baselines}} & Static Plug-in Weitzman & --- & 104.54$\pm$5.56 & 86.90$\pm$4.73 & 35.26$\pm$1.65\\
 & LinUCB (online) & 124.28$\pm$6.20  & 124.95$\pm$5.99 & 125.18$\pm$6.04 & 125.88$\pm$5.96\\
 & LinUCB (static) & --- & 143.68$\pm$6.64 & 143.81$\pm$6.52 & 142.00$\pm$6.28\\
 & FrugalGPT (reward known) & --- & 25.79$\pm$5.83 & 19.86$\pm$3.75 & 6.20$\pm$1.30 \\
\midrule
\multicolumn{6}{l}{(iii) Truncated Student's $t_5$\quad $u=0.27$;\; oracle utility $\times10^3$: 489.72 $\pm$ 15.49;\; 1.61 boxes queried}\\
\cmidrule(lr){1-6}
\multirow{2}{*}{\emph{Reward known}} & Online Plug-in Weitzman & 9.75$\pm$0.34 & 9.37$\pm$0.38 & 9.27$\pm$0.41 & 6.57$\pm$0.44\\
 & DOW & \textbf{8.11$\pm$0.44} & \textbf{8.08$\pm$0.42} & \textbf{8.01$\pm$0.44} & 6.14$\pm$0.47\\
\arrayrulecolor{black!25}\cmidrule(lr){2-6}\arrayrulecolor{black}
\multirow{2}{*}{\emph{Reward unknown}} & Online Plug-in Weitzman & 10.02$\pm$0.38 & 9.67$\pm$0.39 & 9.40$\pm$0.34 & 6.80$\pm$0.45\\
 & DOW & \textbf{8.53$\pm$0.42} & \textbf{8.46$\pm$0.43} & \textbf{8.29$\pm$0.39} & \textbf{6.39$\pm$0.46}\\
\arrayrulecolor{black!25}\cmidrule(lr){2-6}\arrayrulecolor{black}
\multirow{4}{*}{\emph{Baselines}} & Static Plug-in Weitzman & --- & 92.76$\pm$6.94 & 78.47$\pm$5.99 & 37.43$\pm$2.26\\
 & LinUCB (online) & 95.44$\pm$5.37  & 95.42$\pm$5.49 & 95.53$\pm$5.18 & 96.45$\pm$5.35\\
 & LinUCB (static) & --- & 123.29$\pm$7.60 & 121.85$\pm$7.24 & 119.67$\pm$6.77\\
 & FrugalGPT (reward known) & --- & 30.20$\pm$5.57 & 20.54$\pm$4.54 & \textbf{5.13$\pm$1.62} \\
\bottomrule
\end{tabular}
\end{table}

\begin{table}[htp!]\centering
\scriptsize
\caption{Regime B: time-averaged regret scaled by $10^3$, against the oracle under the three environments with different \(P_\xi\).}
\label{tab:regimeB}
\begin{tabular}{llcccc}
\toprule
 & & $T_0{=}0$ & $T_0{=}50$ & $T_0{=}100$ & $T_0{=}1000$\\
\midrule
\multicolumn{6}{l}{(i) Uniform\quad $u=0.23$;\; oracle utility $\times10^3$: 430.78 $\pm$ 18.25;\; 1.30 boxes queried}\\
\cmidrule(lr){1-6}
\multirow{2}{*}{\emph{Reward known}} & Online Plug-in Weitzman & 6.97$\pm$0.30 & \textbf{6.77$\pm$0.32} & \textbf{6.71$\pm$0.34} & \textbf{4.96$\pm$0.44}\\
 & DOW & \textbf{6.91$\pm$0.35} & 6.85$\pm$0.30 & 6.75$\pm$0.33 & 4.99$\pm$0.43\\
\arrayrulecolor{black!25}\cmidrule(lr){2-6}\arrayrulecolor{black}
\multirow{2}{*}{\emph{Reward unknown}} & Online Plug-in Weitzman & \textbf{7.04$\pm$0.33} & 6.96$\pm$0.27 & 6.84$\pm$0.36 & \textbf{5.06$\pm$0.43}\\
 & DOW & 7.06$\pm$0.30 & \textbf{6.85$\pm$0.27} & \textbf{6.83$\pm$0.33} & 5.11$\pm$0.43\\
\arrayrulecolor{black!25}\cmidrule(lr){2-6}\arrayrulecolor{black}
\multirow{4}{*}{\emph{Baselines}} & Static Plug-in Weitzman & --- & 77.49$\pm$5.86 & 71.08$\pm$4.89 & 31.40$\pm$1.83\\
 & LinUCB (online) & 42.65$\pm$4.64  & 43.25$\pm$4.65 & 44.12$\pm$4.85 & 43.38$\pm$4.71\\
 & LinUCB (static) & --- & 89.78$\pm$7.92 & 87.79$\pm$7.59 & 81.20$\pm$6.15\\
 & FrugalGPT (reward known) & --- & 43.34$\pm$8.31 & 32.97$\pm$5.21 & 17.38$\pm$2.28 \\
\midrule
\multicolumn{6}{l}{(ii) Truncated Gaussian\quad $u=0.2$;\; oracle utility $\times10^3$: 455.57 $\pm$ 18.08;\; 1.48 boxes queried}\\
\cmidrule(lr){1-6}
\multirow{2}{*}{\emph{Reward known}} & Online Plug-in Weitzman & 8.52$\pm$0.34 & 8.50$\pm$0.35 & 8.23$\pm$0.39 & 6.16$\pm$0.49\\
 & DOW & \textbf{7.80$\pm$0.49} & \textbf{7.70$\pm$0.42} & \textbf{7.69$\pm$0.47} & \textbf{5.94$\pm$0.49}\\
\arrayrulecolor{black!25}\cmidrule(lr){2-6}\arrayrulecolor{black}
\multirow{2}{*}{\emph{Reward unknown}} & Online Plug-in Weitzman & 8.87$\pm$0.36 & 8.50$\pm$0.37 & 8.56$\pm$0.40 & 6.28$\pm$0.49\\
 & DOW & \textbf{8.13$\pm$0.48} & \textbf{8.17$\pm$0.51} & \textbf{7.83$\pm$0.44} & \textbf{6.20$\pm$0.51}\\
\arrayrulecolor{black!25}\cmidrule(lr){2-6}\arrayrulecolor{black}
\multirow{4}{*}{\emph{Baselines}} & Static Plug-in Weitzman & --- & 78.70$\pm$5.72 & 70.20$\pm$4.97 & 34.35$\pm$1.82\\
 & LinUCB (online) & 68.29$\pm$6.32  & 68.31$\pm$6.15 & 67.76$\pm$6.15 & 68.99$\pm$6.18\\
 & LinUCB (static) & --- & 99.51$\pm$8.33 & 99.49$\pm$7.98 & 95.20$\pm$7.14\\
 & FrugalGPT (reward known) & --- & 31.87$\pm$5.40 & 26.99$\pm$3.84 & 14.39$\pm$1.42 \\
\midrule
\multicolumn{6}{l}{(iii) Truncated Student's $t_5$\quad $u=0.27$;\; oracle utility $\times10^3$: 413.64 $\pm$ 21.50;\; 1.32 boxes queried}\\
\cmidrule(lr){1-6}
\multirow{2}{*}{\emph{Reward known}} & Online Plug-in Weitzman & 8.77$\pm$0.37 & 8.63$\pm$0.42 & 8.24$\pm$0.38 & \textbf{6.02$\pm$0.47}\\
 & DOW & \textbf{8.35$\pm$0.38} & \textbf{8.37$\pm$0.48} & \textbf{8.02$\pm$0.44} & 6.04$\pm$0.46\\
\arrayrulecolor{black!25}\cmidrule(lr){2-6}\arrayrulecolor{black}
\multirow{2}{*}{\emph{Reward unknown}} & Online Plug-in Weitzman & 8.83$\pm$0.37 & \textbf{8.62$\pm$0.38} & 8.64$\pm$0.39 & \textbf{6.21$\pm$0.45}\\
 & DOW & \textbf{8.78$\pm$0.36} & 8.72$\pm$0.41 & \textbf{8.48$\pm$0.48} & 6.24$\pm$0.45\\
\arrayrulecolor{black!25}\cmidrule(lr){2-6}\arrayrulecolor{black}
\multirow{4}{*}{\emph{Baselines}} & Static Plug-in Weitzman & --- & 79.79$\pm$6.34 & 74.62$\pm$5.61 & 38.28$\pm$2.44\\
 & LinUCB (online) & 48.33$\pm$6.78  & 47.71$\pm$6.76 & 47.92$\pm$6.70 & 47.97$\pm$6.90\\
 & LinUCB (static) & --- & 98.30$\pm$9.02 & 98.96$\pm$9.39 & 90.22$\pm$7.32\\
 & FrugalGPT (reward known) & --- & 38.00$\pm$8.35 & 21.94$\pm$4.34 & 13.59$\pm$1.38 \\
\bottomrule
\end{tabular}
\end{table}

\begin{table}[htp!]
\centering
\scriptsize
\caption{
This table compares the oracle-order variation across 50 replications across Regimes~A and B. 
In each replication, we draw a full set of parameters and sample $5{,}000$ i.i.d.\ contexts. We then record the number of distinct permutations observed and the largest and smallest empirical probability masses. The table averages these quantities across replications and reports $\pm1.96$ standard errors. Note that because the reservation indices in Regime~A are context-free by construction, exactly one permutation occurs in every replication. }
\label{tab:oracle-order-stats}
\begin{tabular}{llccc}
\toprule
Environment & Regime & \# permutations & max.\ prob.\ mass
& min.\ prob.\ mass\\
\midrule
\multirow{2}{*}{(i) Uniform}
& A & $1.00\pm0.00$ & $1.00\pm0.00$ & $1.00\pm0.00$\\
& B & $2.58\pm0.45$ & $0.87\pm0.05$ & $0.32\pm0.12$\\
\arrayrulecolor{black!25}\cmidrule(lr){1-5}\arrayrulecolor{black}

\multirow{2}{*}{(ii) Truncated Gaussian}
& A & $1.00\pm0.00$ & $1.00\pm0.00$ & $1.00\pm0.00$\\
& B & $2.36\pm0.39$ & $0.86\pm0.05$ & $0.28\pm0.11$\\
\arrayrulecolor{black!25}\cmidrule(lr){1-5}\arrayrulecolor{black}

\multirow{2}{*}{(iii) Truncated Student's $t_5$}
& A & $1.00\pm0.00$ & $1.00\pm0.00$ & $1.00\pm0.00$\\
& B & $2.20\pm0.38$ & $0.90\pm0.04$ & $0.38\pm0.12$\\
\bottomrule
\end{tabular}
\end{table}

\clearpage
\begin{APPENDICES}
\renewcommand{\thesection}{EC.\arabic{section}} 
\renewcommand{\thesubsection}{EC.\arabic{section}.\arabic{subsection}} 

\renewcommand*{\theHsection}{ec.\arabic{section}}
\renewcommand*{\theHsubsection}
  {ec.\arabic{section}.\arabic{subsection}}
\renewcommand*{\theHsubsubsection}
  {ec.\arabic{section}.\arabic{subsection}.\arabic{subsubsection}}

\pagenumbering{arabic} 
\renewcommand{\thepage}{ec\arabic{page}}

\begin{center}
\Large\bfseries Electronic Companions to ``Online Pandora's Box for \\
Contextual LLM Cascading''
\end{center}

\section{Proofs for Section~\ref{sec:policy} and Proposition~\ref{prop:uniqueness:rho:a:star}}\label{appendix:proof:prop:1:2}
In this Appendix we prove two results in Section~\ref{sec:policy}: the optimality of the oracle reservation index policy under full information (Proposition~\ref{prop:oracle-weitzman}), a regret decomposition under optimistic reward and index estimators (Theorem~\ref{thm:optimistic-regret-decomposition}), and one result in Section~\ref{sec:known:theta-*}: point identification of $\rho_a^\top\psi(x_t)$ from the queried samples for box $a$ before period $t$ (Proposition~\ref{prop:uniqueness:rho:a:star}).
The following lemma establishes that the query decision is independent of the stochastic output conditioning on the historical data and the current context at each period $t$. Although the proof is immediate, the result is used repeatedly in subsequent arguments, so we state it explicitly here.
\begin{lemma}\label{lemma:indep:output:query}
Given any $t\in[T]$, $x_t\in\mathcal{X}$ and $a\in[A]$, we have $\mathbb{I}\{a\in\mathcal{A}_t\}\indep(\omega_{at},c_{at})|x_t,\mathcal{F}_{t-1}$.
\end{lemma}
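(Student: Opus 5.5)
The plan is to show that, conditional on $(\mathcal F_{t-1},x_t)$, the query indicator $\mathbb{I}\{a\in\mathcal A_t\}$ depends only on randomness that is conditionally independent of $\omega_{at}$. The key is to model the within-period decision process explicitly. Any non-anticipating policy, given $(\mathcal F_{t-1},x_t)$ and possibly an internal randomization $U_t$ drawn independently of the potential outputs, specifies a sequence of query decisions. The $k$-th decision is a measurable function of $(\mathcal F_{t-1},x_t,U_t)$ and of the output-cost pairs of boxes queried before step $k$. Since $c_{at}=c_a(x_t,\omega_{at})$ is a deterministic function of $(x_t,\omega_{at})$, it suffices to prove $\mathbb{I}\{a\in\mathcal A_t\}\indep\omega_{at}\mid x_t,\mathcal F_{t-1}$.

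The first step is a coupling construction. Consider the modified process in which box $a$'s potential output is replaced by a dummy value, say a fixed $\omega_0$, while all other $\omega_{a't}$ and $U_t$ are kept. Run the policy on this modified process until the first time it would query $a$. Up to and including the decision to query $a$, the real and modified trajectories coincide, because box $a$'s output enters the history only after it is queried. Hence
$$\{a\in\mathcal A_t\}=\{\text{the policy queries } a \text{ at some step in the modified process}\}.$$
This event is a measurable function $g(\mathcal F_{t-1},x_t,U_t,\{\omega_{a't}\}_{a'\neq a})$ only. Finiteness of the step count, at most $A$ queries, makes this a finite induction over steps with no measurability subtlety.

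The second step invokes the product law \eqref{eq:law:outputs}. Conditional on $(\mathcal F_{t-1},x_t)$, $\omega_{at}$ is independent of $\{\omega_{a't}\}_{a'\neq a}$. Since $U_t$ is independent of the outputs by the definition of admissible randomization, the two conditional probabilities factor:
$$\mathbb P(a\in\mathcal A_t,\ \omega_{at}\in B\mid\mathcal F_{t-1},x_t)=\mathbb P(g=1\mid\mathcal F_{t-1},x_t)\,p_a(B\mid x_t).$$
The corresponding statement for $c_{at}$ follows because $c_{at}$ is a function of $(x_t,\omega_{at})$.

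The main obstacle is not technical depth but formalizing ``non-anticipating'' precisely enough that the coupling argument is rigorous, specifically the claim that the trajectory before querying $a$ is unaffected by $\omega_{at}$. I would handle this by defining the policy as a sequence of decision maps and proving trajectory agreement by induction on the query step.
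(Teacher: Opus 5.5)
Your proposal is correct and follows the same idea as the paper's proof: the query event is determined by $(\mathcal F_{t-1},x_t)$ and the outputs of the other boxes, which are conditionally independent of $\omega_{at}$ by \eqref{eq:law:outputs}, and $c_{at}=c_a(x_t,\omega_{at})$ carries the independence over to the cost. Your dummy-output coupling makes rigorous a step the paper states informally, namely that $\{a\in\mathcal A_t\}$ depends only on the outputs of boxes queried before $a$ even though that set is random; it also extends the argument to general, possibly randomized, non-anticipating policies, whereas the paper's proof addresses only the oracle and \textsc{DOW} policies.
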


\medskip
\proof{Proof of Lemma~\ref{lemma:indep:output:query}}
For any $x_t\in\mathcal X$ and $a\in[A]$, under both the oracle reservation index policy and \DOW algorithm, the event $\{a\in\mathcal A_t\}$ is determined by the sequential construction, which depends on the history $\mathcal F_{t-1}$, the current context $x_t$, and the outputs of boxes queried before $a$. Under the conditional independence of $\{\omega_{a't}:a'\in[A]\}$ given $x_t$ according to \eqref{eq:law:outputs}, these preceding outputs are independent of $\omega_{at}$. Further, $c_{at}=c_a(x_t,\omega_{at})$. Hence, $\mathbb{I}\{a\in\mathcal{A}_t\}\indep(\omega_{at},c_{at})|x_t,\mathcal{F}_{t-1}$ as claimed. \proofend

\medskip
\proof{Proof of Proposition~\ref{prop:oracle-weitzman}} For $t \in [T]$, let $Q_a^t = \mathbb{I}\{a \in \mathcal{A}_t\}$ and $S_a^t = \mathbb{I}\{ a_t = a\}$. Since $\omega_{at}\sim p_a(\cdot|x_t)$, \eqref{eq:oracle-weitzman-index} implies  
$\mathbb E\left[
        \left\{\mu^*(x_t,\omega_{at})-\sigma^*_a(x_t)\right\}^+
        \,\middle|\, x_t,\mathcal{F}_{t-1}
    \right]
    =
    \mathbb E[c_{at}\mid x_t,\mathcal{F}_{t-1}], \forall x_t.$
Thus 
$$\begin{array}{rl}
&\quad\displaystyle\mathbb{E}\left[\sum_{a\in[A]}S_a^t\mu^*(x_t,\omega_{at})-\sum_{a\in[A]}Q_a^tc_{at}\bigg|x_t,\mathcal{F}_{t-1}\right]\\
&\displaystyle=_{(i)}\mathbb{E}\left[\sum_{a\in[A]}S_a^t\mu^*(x_t,\omega_{at})-\sum_{a\in[A]}Q_a^t\{\mu^*(x_t,\omega_{at})- \sigma_{at}^*\}^+\bigg|x_t,\mathcal{F}_{t-1}\right]\\
& 
\displaystyle=_{(ii)}\mathbb{E}\left[\sum_{a\in[A]}S_a^t\min\{ \sigma_{at}^*,\mu^*(x_t,\omega_{at})\}+\sum_{a\in[A]}(S_a^t-Q_a^t)\{ \mu^*(x_t,\omega_{at}) - \sigma_{at}^*\}^+\bigg|x_t,\mathcal{F}_{t-1}\right]\\
&\displaystyle\leq_{(iii)} \mathbb{E}\left[\sum_{a\in[A]}S_a^t\min\{ \sigma_{at}^*,\mu^*(x_t,\omega_{at})\}\bigg|x_t,\mathcal{F}_{t-1}\right] \displaystyle \leq_{(iv)} \mathbb{E}\left[ \max_{a\in [A]} \min\{ \sigma_{at}^*,\mu^*(x_t,\omega_{at})\}\bigg|x_t,\mathcal{F}_{t-1}\right],
\end{array}$$
where (i) holds because $Q_a^t\indep(\omega_{at},c_{at})\mid x_t,\mathcal{F}_{t-1}$ by Lemma~\ref{lemma:indep:output:query} and the definition of $\sigma_{at}^*$ in \eqref{eq:oracle-weitzman-index} imply $$\begin{array}{rl}
\E[Q_a^tc_{at}|x_t,\mathcal{F}_{t-1}]\!\!\!&\displaystyle=\E[Q_a^t|x_t,\mathcal{F}_{t-1}]\E[c_{at}|x_t,\mathcal{F}_{t-1}]=\E[Q_a^t|x_t,\mathcal{F}_{t-1}]\E[\{\mu^*(x_t,\omega_{at})-\sigma_{at}^*\}^{+}|x_t,\mathcal{F}_{t-1}]\\
&\displaystyle=\E[Q_a^t\{\mu^*(x_t,\omega_{at})-\sigma_{at}^*\}^{+}|x_t,\mathcal{F}_{t-1}],
\end{array}$$
(ii) holds by the identity $b=\min\{a,b\}+\{b-a\}^+$, (iii) since $S_a^t \leq Q_a^t$, and (iv) since $\sum_{a\in[A]}S_a^t = 1$ and $S_a^t \geq 0$. Inequalities (iii) and (iv) hold as equalities when using Weitzman's principle through indices defined by \eqref{eq:oracle-weitzman-index}. 

It remains to show that the policy stated in the proposition attains
the upper bound for inequality (iv). Let $K:=\min\{k\in[A]:{\displaystyle\max_{1\leq i\leq k}}\mu^*(x_t,\omega_{(i)t})\geq\sigma_{(k+1)t}^*\}$ be the stopping position of period $t$. 

First, every queried but unselected box $(a)$ such that $a<K$ satisfies $\displaystyle\mu^*(x_t,\omega_{(a)t})\leq\max_{1\leq i\leq a}\mu^*(x_t,\omega_{(i)t})<\sigma_{(a+1)t}^*\leq\sigma_{(a)t}^*$. If box $(K)$ is not selected, then $\mu^*(x_t,\omega_{(K)t})\leq\max_{1\leq i\leq K-1}\mu^*(x_t,\omega_{(i)t})<\sigma_{(K)t}^*$. Hence $(S_a^t-Q_a^t)
\{\mu^*(x_t,\omega_{at})-\sigma_{at}^*\}^+=0$ for every $a\in[A]$ and inequality (iii) is an equality. 

Second, let $\displaystyle J\in\argmax_{1\leq i\leq K}\mu^*(x_t,\omega_{(i)t})$ be the rank of the selected box. If $J<K$, then $\displaystyle\mu^*(x_t,\omega_{(J)t})=\max_{1\leq i\leq K}\mu^*(x_t,\omega_{(i)t})$ and $\mu^*(x_t,\omega_{(J)t})<\sigma_{(J)t}^*$, so $\displaystyle\min\{\mu^*(x_t,\omega_{(J)t}),\sigma_{(J)t}^*\}=\max_{1\leq i\leq K}\mu^*(x_t,\omega_{(i)t})$, while for unqueried box $i>K$, $\displaystyle\min\{\mu^*(x_t,\omega_{(i)t}),\sigma_{(i)t}^*\}\leq\sigma_{(i)t}^*\leq\sigma_{(K+1)t}^*\leq\max_{1\leq i\leq K}\mu^*(x_t,\omega_{(i)t})$. If $J=K$, then $\displaystyle\min\{\mu^*(x_t,\omega_{(K)t}),\sigma_{(K)t}^*\}\geq\max_{1\leq i\leq K-1}\mu^*(x_t,\omega_{(i)t})$ and $\displaystyle\min\{\mu^*(x_t,\omega_{(K)t}),\sigma_{(K)t}^*\}\geq\sigma_{(K+1)}^*(x_t)$. Hence $\min\{\mu^*(x_t,\omega_{(J)t}),\sigma_{(J)}^*(x_t)\}$ dominates the value of $\min\{\sigma_{at}^*,\mu^*(x_t,\omega_{at})\}$ for both the queried and every unqueried boxes. Thus pathwise, $\sum_{a\in[A]}S_a^t
\min\{\sigma_{at}^*,\mu^*(x_t,\omega_{at})\}
=
\max_{a\in[A]}
\min\{\sigma_{at}^*,\mu^*(x_t,\omega_{at})\}$. So inequality (iv) is also an equality for the stated policy.

So the utility of the DM $U_t$ satisfies
\begin{equation}
U_t=\mathbb{E}\left[\sum_{a\in[A]}S_a^t\mu^*(x_t,\omega_{at})-\sum_{a\in[A]}Q_a^tc_{at}\bigg|x_t,\mathcal{F}_{t-1}\right]=\mathbb{E}\left[\max_{a\in [A]} \min\{ \sigma_{at}^*,\mu^*(x_t,\omega_{at})\}\bigg|x_t,\mathcal{F}_{t-1}\right].
\end{equation}
This concludes the proof. \proofend

\medskip
\proof{Proof of Theorem~\ref{thm:optimistic-regret-decomposition}}
Define $U_t(\tilde{\pi}):=\mu^*(x_t,\omega_{a_tt})-\sum_{a\in\mathcal{A}_t}c_{at}$, where $\tilde{\pi}$ is the \DOW policy.
Let $\tilde{c}_{at}:=(\widetilde{\mu}_t(x_t,\omega_{at})-\widetilde{\sigma}_{at})^{+}$ be defined as an auxiliary pseudo-cost for any $a\in[A], t\in[T]$. Let $\tilde{U}_t(\tilde{\pi})$ denote the realized utility of $\tilde{\pi}$ under cost $\tilde{c}_{at}$, i.e. $\tilde{U}_t(\tilde{\pi}):=\max_{a\in\mathcal{A}_t}\widetilde{\mu}_t(x_t,\omega_{at})-\sum_{a\in\mathcal{A}_t}\tilde{c}_{at}.$ 
By \eqref{eq:max:utility} in Proposition \ref{prop:oracle-weitzman}, we have
$$\begin{array}{rcl}
\displaystyle\mathbb{E}[U_t(\pi^*)\mid x_t,\mathcal{F}_{t-1}]&=&\displaystyle\mathbb{E}\big[\max_{a\in[A]}\min\{\mu^*(x_t,\omega_{at}),\sigma_{at}^*\}\mid x_t,\mathcal{F}_{t-1}\big],\\
\displaystyle\mathbb{E}\left[\tilde{U}_t(\tilde{\pi})\mid x_t,\mathcal{F}_{t-1}\right]&=&\displaystyle\mathbb{E}\big[\max_{a\in[A]}\min\{\widetilde{\mu}_t(x_t,\omega_{at}),\widetilde{\sigma}_{at}\}\mid x_t,\mathcal{F}_{t-1}\big]. 
\end{array}$$ 
Note that for the same execution of $\tilde{\pi}$ pathwise,
$U_t(\tilde{\pi})=\tilde{U}_t(\tilde{\pi})-(\widetilde{\mu}_t(x_t,\omega_{a_tt})-\mu^*(x_t,\omega_{a_tt}))-\sum_{a\in\mathcal{A}_t}(c_{at}-\tilde{c}_{at}).$
Therefore, 
\begin{equation}\label{eq:Delta:t}
\begin{array}{rl}
\Delta_t(\tilde{\pi})&=\mathbb{E}[U_t(\pi^*)\mid x_t,\mathcal{F}_{t-1}]-\mathbb{E}[U_t(\tilde{\pi})\mid x_t,\mathcal{F}_{t-1}]\\
&\displaystyle=\mathbb{E}\big[\max_{a\in[A]}\min\{\mu^*(x_t,\omega_{at}),\sigma_{at}^*\}\mid x_t,\mathcal{F}_{t-1}\big]-\mathbb{E}\big[\max_{a\in[A]}\min\{\widetilde{\mu}_t(x_t,\omega_{at}),\widetilde{\sigma}_{at}\}\mid x_t,\mathcal{F}_{t-1}\big]\\
&\displaystyle\quad+\mathbb{E}[\widetilde{\mu}_t(x_t,\omega_{a_tt})-\mu^*(x_t,\omega_{a_tt})\mid x_t,\mathcal{F}_{t-1}]+\mathbb{E}\bigg[\sum_{a\in\mathcal{A}_t}(c_{at}-\tilde{c}_{at})\mid x_t,\mathcal{F}_{t-1}\bigg].
\end{array}
\end{equation}
Since $\widetilde{\mu}_t(x_t,\omega_{at})\geq\mu^*(x_t,\omega_{at})$ and $\widetilde{\sigma}_{at}\geq\sigma_{at}^*$ a.s., 
$\min\{\widetilde{\mu}_t(x_t,\omega_{at}),\widetilde{\sigma}_{at}\}\geq\min\{\mu^*(x_t,\omega_{at}),\sigma_{at}^*\}$ a.s. for every $a\in[A]$. So 
\begin{equation}\label{eq:term:1:Delta:t}
\mathbb{E}\big[\max_{a\in[A]}\min\{\mu^*(x_t,\omega_{at}),\sigma_{at}^*\}\mid x_t,\mathcal{F}_{t-1}\big]-\mathbb{E}\big[\max_{a\in[A]}\min\{\widetilde{\mu}_t(x_t,\omega_{at}),\widetilde{\sigma}_{at}\}\mid x_t,\mathcal{F}_{t-1}\big]\leq0.
\end{equation}
Further, for each $a\in[A]$, 
$$\begin{array}{rl}
\mu^*(x_t,\omega_{at})-\sigma_{at}^*&\displaystyle=(\widetilde{\mu}_t(x_t,\omega_{at})-\widetilde{\sigma}_{at})+(\mu^*(x_t,\omega_{at})-\widetilde{\mu}_t(x_t,\omega_{at}))+(\widetilde{\sigma}_{at}-\sigma_{at}^*)\\
&\displaystyle\leq(\widetilde{\mu}_t(x_t,\omega_{at})-\widetilde{\sigma}_{at})+(\widetilde{\sigma}_{at}-\sigma_{at}^*).
\end{array}$$
Taking positive parts and using subadditivity of the function $t\mapsto (t)^+$ gives 
$$(\mu^*(x_t,\omega_{at})-\sigma_{at}^*)^{+}\leq(\widetilde{\mu}_t(x_t,\omega_{at})-\widetilde{\sigma}_{at})^{+}+(\widetilde{\sigma}_{at}-\sigma_{at}^*)^{+}=(\widetilde{\mu}_t(x_t,\omega_{at})-\widetilde{\sigma}_{at})^{+}+(\widetilde{\sigma}_{at}-\sigma_{at}^*).$$
Hence 
\begin{equation}\label{eq:diff:c:x}
\begin{array}{rl}
\mathbb{E}[c_{at}-\tilde{c}_{at}\mid x_t,\mathcal{F}_{t-1}]&\displaystyle=\mathbb{E}[(\mu^*(x_t,\omega_{at})-\sigma_{at}^*)^{+}\mid x_t,\mathcal{F}_{t-1}]-\mathbb{E}[(\widetilde{\mu}_t(x_t,\omega_{at})-\widetilde{\sigma}_{at})^{+}\mid x_t,\mathcal{F}_{t-1}]\\
&\displaystyle\leq\widetilde{\sigma}_{at}-\sigma_{at}^*.
\end{array}
\end{equation}
Thus 
\begin{equation}\label{eq:c:diff:bound}
\begin{array}{rl}
&\displaystyle\quad\mathbb{E}\left[\sum_{a\in\mathcal{A}_t}(c_{at}-\tilde{c}_{at})\ \bigg|\ x_t,\mathcal{F}_{t-1}\right]=\sum_{a\in[A]}\mathbb{E}[(c_{at}-\tilde{c}_{at})\mathbb{I}\{a\in\mathcal{A}_t\} \mid x_t,\mathcal{F}_{t-1}]\\
&\displaystyle=_{(i)}\sum_{a\in[A]}\mathbb{E}[c_{at}-\tilde{c}_{at}\mid x_t,\mathcal{F}_{t-1}]\mathbb{E}[\mathbb{I}\{a\in\mathcal{A}_t\}\mid x_t,\mathcal{F}_{t-1}]\\
    &\displaystyle\leq_{(ii)}\sum_{a\in[A]}(\widetilde{\sigma}_{at}-\sigma_{at}^*)\mathbb{E}[\mathbb{I}\{a\in\mathcal{A}_t\}\mid x_t,\mathcal{F}_{t-1}]\leq\mathbb{E}\left[\sum_{a\in\mathcal{A}_t}(\widetilde{\sigma}_{at}-\sigma_{at}^*)\ \bigg|\ x_t,\mathcal{F}_{t-1}\right].
\end{array}
\end{equation}
where (i) of \eqref{eq:c:diff:bound} follows since $c_{at}=c_a(x_t,\omega_{at}), \tilde{c}_{at}=(\widetilde{\mu}_t(x_t,\omega_{at})-\widetilde{\sigma}_{at})^{+}$ and $\mathbb{I}\{a\in\mathcal{A}_t\}\indep \omega_{at}|x_t,\mathcal{F}_{t-1}$ by Lemma~\ref{lemma:indep:output:query}, (ii) follows from \eqref{eq:diff:c:x}.
The result then follows from \eqref{eq:Delta:t}, \eqref{eq:term:1:Delta:t}, \eqref{eq:c:diff:bound}. 
\proofend

\medskip
\proof{Proof of Proposition~\ref{prop:uniqueness:rho:a:star}}
For any $x_t\in\mathcal{X}$ and $a\in[A]$, we have  $\mathbb{I}\{a\in\mathcal{A}_t\}\indep\omega_{at}|x_t,\mathcal{F}_{t-1}$ by Lemma~\ref{lemma:indep:output:query}. Since $c_{at}=c_a(x_t,\omega_{at})$ we have
$$\begin{array}{rl}
&\quad\displaystyle\mathbb{E}\left[\{ \mu^*(x_t,\omega_{at})-\Lambda(\rho_a^\top\psi(x_t))\}^{+}-c_{at}|x_t,\mathcal{F}_{t-1},a\in\mathcal{A}_t\right]\\ 
&\displaystyle = \mathbb{E}\left[\{ \mu^*(x_t,\omega_{at})-\Lambda(\rho_a^\top\psi(x_t))\}^{+}-c_{at}|x_t,\mathcal{F}_{t-1}\right]. 
\end{array}$$
For any given $a\in[A]$ and $x$, define 
$F_{a}(x,s):=\mathbb{E}\left[\{\mu^*(x,\omega_{at})-\Lambda(s)\}^{+}-c_{at}|x_t=x,\mathcal{F}_{t-1},a\in\mathcal{A}_t\right]$.
So we have $F_a\left(x,\rho_a^\top \psi(x)\right)=0, \forall x\in\mathcal{X}$. Note that $\Lambda(\cdot)$ is strictly increasing and takes values in $[-1,1]$, and $c_{at}\in(0,1)$, so for every $x\in\mathcal{X}$, $F_a(x,s)$ is decreasing and continuous in $s$. Note that 
$$\begin{array}{rl}
\lim_{s\rightarrow-\infty}F_a(x,s)&=\lim_{s\rightarrow-\infty}\mathbb{E}\left[\{\mu^*(x,\omega_{at})-\Lambda(s)\}^{+}-c_{at}|x_t=x,\mathcal{F}_{t-1}, a\in\mathcal{A}_t\right]\\
&\displaystyle=_{(i)}\mathbb{E}\left[\{\mu^*(x,\omega_{at})+1\}^{+}-c_{at}|x_t=x,\mathcal{F}_{t-1}, a\in\mathcal{A}_t\right]\\
&\displaystyle\geq\mathbb{E}\left[1-c_{at}|x_t=x,\mathcal{F}_{t-1}, a\in\mathcal{A}_t\right]=_{(ii)}\mathbb{E}\left[1-c_{at}|x_t=x,\mathcal{F}_{t-1}\right]>_{(iii)}0
\end{array}$$
where (i) follows from Assumption~\ref{ass:reg_res_index}, (ii) follows because $c_{at}=c_a(x_t,\omega_{at})$ and $\omega_{at}\indep\mathbb{I}\{a\in\mathcal{A}_t\}|x_t,\mathcal{F}_{t-1}$ by Lemma~\ref{lemma:indep:output:query}, (iii) follows because $c_{at}\in(0,1)$. Thus $\lim_{s\rightarrow-\infty}F_a(x,s)>0$. Similarly, Assumption~\ref{ass:reg_res_index} implies $\lim_{s\rightarrow+\infty}F_a(x,s)<0$. Thus by intermediate value theorem, there exists a $\xi_a(x)\in\mathbb{R}$ such that $F_a(x,\xi_a(x))=0$. Now suppose that there exists $s_1<s_2$ such that $F_a(x,s_1)=F_a(x,s_2)=0$. Then we have 
\begin{equation}\label{eq:F:a:0}
0=F_a(x,s_1)-F_a(x,s_2)=\mathbb{E}[(\mu^*(x,\omega_{at})-\Lambda(s_1))^{+}-(\mu^*(x,\omega_{at})-\Lambda(s_2))^{+}\mid x_t=x,\mathcal{F}_{t-1},a\in\mathcal{A}_t].
\end{equation}
This implies that 
$$\begin{array}{rl}
\displaystyle\mathbb{E}[(\mu^*(x,\omega_{at})-\Lambda(s_1))^{+}|x_t=x,\mathcal{F}_{t-1},a\in\mathcal{A}_t]&\displaystyle=\mathbb{E}[(\mu^*(x,\omega_{at})-\Lambda(s_2))^{+}|x_t=x,\mathcal{F}_{t-1},a\in\mathcal{A}_t]\\
&\displaystyle=\mathbb{E}[c_{at}|x_t=x,\mathcal{F}_{t-1},a\in\mathcal{A}_t]>0.
\end{array}$$ 
Since $s_1<s_2$ and $\Lambda$ is strictly increasing, there must exist an event $\mathcal{E}$ where $\mu^*(x,\omega_{at})>\Lambda(s_2)>\Lambda(s_1)$ and $\mathbb{P}(\mathcal{E}|x_t=x,\mathcal{F}_{t-1},a\in\mathcal{A}_t)>0$, and 
$$\begin{array}{rl}
F_a(x,s_1)-F_a(x,s_2)&=\mathbb{E}[(\mu^*(x,\omega_{at})-\Lambda(s_1))^{+}-(\mu^*(x,\omega_{at})-\Lambda(s_2))^{+}\mid x_t=x,\mathcal{F}_{t-1},a\in\mathcal{A}_t]\\
&\geq\mathbb{E}[(\Lambda(s_2)-\Lambda(s_1))\mathbb{I}\{\mathcal{E}\}|x_t=x,\mathcal{F}_{t-1},a\in\mathcal{A}_t]>0,
\end{array}$$
which contradicts \eqref{eq:F:a:0}. So the conditional moment restriction pins down the optimal index $\sigma_a^*(x)=\Lambda(\xi_a(x))$ uniquely, and $\xi_a(x)=\rho_a^\top\psi(x)$ for any $x\in\mathcal{X}$. \proofend

\section{Regret under Known Reward Function}
In this section, we provide regret analysis when $\mu^*$ is known. We begin with the technical lemmas used in the analysis.

\subsection{Technical Lemmas}
For any $a\in[A], t\in[T]$, let
\begin{equation}\label{eq:moment:t}
m_{at}(u):=\E\bigl[(\mu^*(x_t,\omega_{at})-\Lambda(u))^+ - c_{at}\mid \Ft_{t-1},x_t\bigr],
\end{equation}
so that $m_{at}(\rho_a^\top\psi(x_t))=0$ by (\ref{eq:known-theta-queried-moment}) and (\ref{eq:index:a}). 

\begin{lemma}\label{lem:glm-monotonicity}
Assume that, for every $t$, for some constant $\eta>0$,
\begin{equation}\label{eq:glm-product-lower-bound}
\Prob\bigl(\mu^*(x_t,\omega_{at})>\Lambda(u)\mid \Ft_{t-1},x_t\bigr)\,\Lambda'(u)\ge \eta, \ \forall |u|\leq\overline{\iota},
\end{equation}
Then, for every $u\in[-\overline{\iota},\overline{\iota}]$,
\begin{equation}\label{eq:glm-strong-monotonicity}
-(u-\rho_a^\top\psi(x_t))m_{at}(u)\ge \eta (u-\rho_a^\top\psi(x_t))^2
\qquad \text{a.s.}
\end{equation}
\end{lemma}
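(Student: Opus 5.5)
Write $u^*:=\rho_a^\top\psi(x_t)$, which lies in $[-\overline{\iota},\overline{\iota}]$ by Assumption~\ref{ass:losscurvature}. Since $m_{at}(u^*)=0$, the claim is a one-dimensional strong monotonicity statement. I would prove it by writing $m_{at}(u)-m_{at}(u^*)$ as an integral of the derivative of $m_{at}$ along the segment from $u^*$ to $u$.

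\textbf{Step 1 (monotonicity and differentiability of $m_{at}$).} For fixed $\omega$, the map $v\mapsto(\mu^*(x_t,\omega)-\Lambda(v))^+$ is nonincreasing. It is Lipschitz in $v$ with constant $L$, because $\Lambda$ is $L$-Lipschitz. Away from the kink it has derivative $-\Lambda'(v)\,\mathbb I\{\mu^*(x_t,\omega)>\Lambda(v)\}$.

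Conditional on $(\mathcal F_{t-1},x_t)$, dominated convergence lets me exchange expectation and integration. This holds because the integrand is bounded by $L$ and $c_{at}$ does not depend on $v$. It gives
$$m_{at}(u)-m_{at}(u^*)=-\int_{u^*}^{u}\Lambda'(v)\,\Prob\bigl(\mu^*(x_t,\omega_{at})>\Lambda(v)\mid\mathcal F_{t-1},x_t\bigr)\,dv .$$
To avoid differentiability issues at the kink, I would work with the absolutely continuous function $v\mapsto(\mu^*-\Lambda(v))^+$ pathwise. It equals the integral of its a.e.\ derivative, and on the set $\{v:\mu^*=\Lambda(v)\}$ (a single point, since $\Lambda$ is strictly increasing) the indicator choice is irrelevant. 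I would then apply Fubini.

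\textbf{Step 2 (apply the product lower bound).} Take $u\in[-\overline{\iota},\overline{\iota}]$. Both endpoints lie in this interval, and so does every $v$ between them. Hypothesis \eqref{eq:glm-product-lower-bound} then bounds the integrand below by $\eta$.

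If $u\ge u^*$, this gives $m_{at}(u)\le-\eta(u-u^*)$. Multiplying by $-(u-u^*)\le0$ yields $-(u-u^*)m_{at}(u)\ge\eta(u-u^*)^2$.

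If $u<u^*$, the integral reverses orientation. This gives $m_{at}(u)\ge\eta(u^*-u)$, and multiplying by $-(u-u^*)>0$ gives the same inequality.

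\textbf{Main obstacle.} The only delicate point is the exchange of expectation and integral, together with the kink of $(\cdot)^+$. Boundedness of the integrand and strict monotonicity of $\Lambda$ handle both. The "a.s." qualifier just reflects that all quantities are conditional expectations given $(\mathcal F_{t-1},x_t)$, so each inequality holds almost surely.
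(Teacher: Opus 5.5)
Your proposal is correct and follows essentially the same route as the paper: write $m_{at}(u)-m_{at}(\rho_a^\top\psi(x_t))$ as an integral via the fundamental theorem of calculus for the absolutely continuous map $v\mapsto(\mu^*(x_t,\omega_{at})-\Lambda(v))^+$, take conditional expectations, bound the integrand below by $\eta$, and split into the cases $u\ge\rho_a^\top\psi(x_t)$ and $u<\rho_a^\top\psi(x_t)$. You also note explicitly that $\rho_a^\top\psi(x_t)\in[-\overline{\iota},\overline{\iota}]$ by Assumption~\ref{ass:losscurvature}, so the hypothesis covers the whole integration segment; the paper's proof relies on this only implicitly.
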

Lemma~\ref{lem:glm-monotonicity} says that the reservation-value moment crosses zero with a
slope bounded away from zero. If the candidate threshold $u$ is above the true threshold, then the expected excess value is too small relative to cost, so the moment
$c-(\mu^*-\Lambda(u))^+$ is positive. If $u$ is below the true threshold, then the expected
excess value is too large relative to cost, so the moment is negative. This monotonicity is what
ultimately makes the loss locally curved around $\rho_a$. 
\proof{Proof of Lemma~\ref{lem:glm-monotonicity}}
Fix $t$ and $u\in I=[-\bar{\iota},\bar{\iota}]$. For any real number $a$ and any continuously differentiable function $g$, the map $v\mapsto (a-g(v))^+$ is absolutely continuous and has a.e. derivative $\frac{d}{dv}(a-g(v))^+=-\one\{a>g(v)\}g'(v)$. Applying fundamental theorem of calculus to  $g=\Lambda$, for any $u,v\in\R$,
$(\mu^*(x_t,\omega_{at})-\Lambda(u))^+-(\mu^*(x_t,\omega_{at})-\Lambda(v))^+
=-\int_v^u \one\{\mu^*(x_t,\omega_{at})>\Lambda(r)\}\Lambda'(r)\,dr$.
Taking expectations conditional on $x_t, \Ft_{t-1}$, $m_{at}(u)-m_{at}(v)
=-\int_v^u \Prob\bigl(\mu^*(x_t,\omega_{at})>\Lambda(r)\mid \Ft_{t-1},x_t\bigr)\Lambda'(r)\,dr.$ Since $m_{at}(\rho_a^\top\psi(x_t))=0$, setting $v=\rho_a^\top\psi(x_t)$ gives
$m_{at}(u)
=-\int_{\rho_a^\top\psi(x_t)}^{u} \Prob\bigl(\mu^*(x_t,\omega_{at})>\Lambda(r)\mid \Ft_{t-1},x_t\bigr)\Lambda'(r)\,dr.$
If $u\ge\rho_a^\top\psi(x_t)$, then \eqref{eq:glm-product-lower-bound} implies
$m_{at}(u)
\le -\int_{\rho_a^\top\psi(x_t)}^{u} \eta\,dr
=-\eta\left(u-\rho_a^\top\psi(x_t)\right)$,
which proves \eqref{eq:glm-strong-monotonicity}. If $u\le\rho_a^\top\psi(x_t)$, then
$$m_{at}(u)
=\int_{u}^{\rho_a^\top\psi(x_t)} \Prob\bigl(\mu^*(x_t,\omega_{at})>\Lambda(r)\mid \Ft_{t-1},x_t\bigr)\Lambda'(r)\,dr
\ge_{(i)} \int_{u}^{\rho_a^\top\psi(x_t)} \eta\,dr
=\eta\left(\rho_a^\top\psi(x_t)-u\right),$$
where (i) follows from \eqref{eq:glm-product-lower-bound} and \eqref{eq:glm-strong-monotonicity} follows again.
\proofend

We next prove Lemma~\ref{lem:known-theta-loss-curvature} using Lemma~\ref{lem:glm-monotonicity}:
\begin{lemma*}[Restatement of Lemma~\ref{lem:known-theta-loss-curvature}]
Under Assumptions~\ref{ass:reg_res_index}, \ref{ass:losscurvature}, for any $t\in[T]$, $a\in[A]$ and $\rho\in\cB$, $\mathbb{E}[\ell_{at}(\rho)-\ell_{at}(\rho_a)\mid x_t,\mathcal{F}_{t-1}]
\ge \frac{1}{2}\kappa\mu_1\bigl(\psi(x_t)^\top(\rho-\rho_a)\bigr)^2$.
\end{lemma*}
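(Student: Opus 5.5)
The plan is to reduce the $m$-dimensional statement to a one-dimensional integral in the scalar index $u=\rho^\top\psi(x_t)$ and then apply Lemma~\ref{lem:glm-monotonicity}. Write $u^*:=\rho_a^\top\psi(x_t)$ and $u:=\rho^\top\psi(x_t)$. By the definition \eqref{eq:known-theta--loss},
\[
\ell_{at}(\rho)-\ell_{at}(\rho_a)=\int_{u^*}^{u}\Bigl[c_{at}-\bigl(\mu^*(x_t,\omega_{at})-\Lambda(r)\bigr)^+\Bigr]dr .
\]
The integrand is bounded by $2$ and the interval is bounded. Hence Fubini allows us to take the conditional expectation given $(x_t,\mathcal F_{t-1})$ inside the integral. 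This gives
\[
\mathbb E[\ell_{at}(\rho)-\ell_{at}(\rho_a)\mid x_t,\mathcal F_{t-1}]=-\int_{u^*}^{u}m_{at}(r)\,dr,
\]
with $m_{at}$ defined in \eqref{eq:moment:t}.

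Next, we verify the hypothesis \eqref{eq:glm-product-lower-bound} of Lemma~\ref{lem:glm-monotonicity} with $\eta=\kappa\mu_1$ on $[-\bar\iota,\bar\iota]$. By \eqref{eq:law:outputs}, conditional on $(\mathcal F_{t-1},x_t)$ we have $\omega_{at}\sim p_a(\cdot\mid x_t)$. Assumption~\ref{ass:losscurvature} then gives $\mathbb P(\mu^*(x_t,\omega_{at})>\Lambda(r)\mid\mathcal F_{t-1},x_t)\ge\kappa$ for $|r|\le\bar\iota$.

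The step I expect to need the most care is the bound $\Lambda'(r)\ge\mu_1$. Assumption~\ref{ass:reg_res_index} states this bound only on the set $\{\rho^\top\psi(x):\rho\in\mathcal B,x\in\mathcal X\}$, not on all of $[-\bar\iota,\bar\iota]$. The fix is convexity. Since $\mathcal B$ is convex, every point $r$ between $u^*$ and $u$ has the form $r=\rho_\lambda^\top\psi(x_t)$ with $\rho_\lambda=(1-\lambda)\rho_a+\lambda\rho\in\mathcal B$. Consequently $\Lambda'(r)\ge\mu_1$ on the whole integration segment. Assumption~\ref{ass:losscurvature} also gives $|r|\le\bar\iota$ there. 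So the product bound holds on exactly the range that matters. I would apply the argument of Lemma~\ref{lem:glm-monotonicity} restricted to this segment, which is all its proof uses.

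Lemma~\ref{lem:glm-monotonicity} then yields $-(r-u^*)m_{at}(r)\ge\kappa\mu_1(r-u^*)^2$ for $r$ between $u^*$ and $u$. In particular, $-m_{at}(r)$ has the sign of $r-u^*$ and magnitude at least $\kappa\mu_1|r-u^*|$.

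To finish, consider the two cases. If $u\ge u^*$, then $-\int_{u^*}^u m_{at}(r)\,dr\ge\int_{u^*}^u\kappa\mu_1(r-u^*)\,dr=\tfrac{\kappa\mu_1}{2}(u-u^*)^2$. If $u<u^*$, reversing the orientation of the integral gives the same bound. Since $u-u^*=\psi(x_t)^\top(\rho-\rho_a)$, this is exactly \eqref{eq:glm-population-curvature}.
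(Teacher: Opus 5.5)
Your proposal is correct and is essentially the paper's proof. The paper applies the fundamental theorem of calculus along the segment $\rho_a+s(\rho-\rho_a)$, $s\in[0,1]$, using $\nabla\mathbb{E}[\ell_{at}(\rho)\mid x_t,\mathcal{F}_{t-1}]=-\psi(x_t)\,m_{at}(\psi(x_t)^\top\rho)$; after the change of variables $r=u^*+s(u-u^*)$ this is exactly your $-\int_{u^*}^{u}m_{at}(r)\,dr$. It then invokes Lemma~\ref{lem:glm-monotonicity} on that segment, relying as you do on convexity of $\mathcal B$ so that Assumptions~\ref{ass:reg_res_index} and~\ref{ass:losscurvature} supply the bound $\kappa\mu_1$ there. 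Your direct Fubini interchange is a slightly cleaner version of the same step, since it avoids differentiating under the conditional expectation.
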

\proof{Proof of Lemma~\ref{lem:known-theta-loss-curvature}}
The scalar map
$\displaystyle z\mapsto \int_0^z\Bigl(c_{at}-(\mu^*(x_t,\omega_{at})-\Lambda(u))^+\Bigr)\,du$
is differentiable, with derivative $c_{at}-(\mu^*(x_t,\omega_{at})-\Lambda(z))^+$. 
Therefore
\begin{equation}\label{eq:glm-gradient-loss}
\nabla \ell_{at}(\rho)
=\psi(x_t)\Bigl(c_{at}-(\mu^*(x_t,\omega_{at})-\Lambda(\rho^\top\psi(x_t)))^+\Bigr).
\end{equation}
The derivative of $z\mapsto c_{at}-(\mu^*(x_t,\omega_{at})-\Lambda(z))^+$ is $\one\{\mu^*(x_t,\omega_{at})>\Lambda(z)\}\Lambda'(z)\ge 0$ a.e., so the scalar map is nondecreasing and thus $\ell_{at}(\rho)$ is convex in $\rho^\top\psi(x_t)$, hence convex in $\rho$. Taking conditional expectations in \eqref{eq:glm-gradient-loss} gives
$\nabla\E[\ell_{at}(\rho)\mid x_t,\mathcal{F}_{t-1}]
= -\psi(x_t)\,m_{at}(\psi(x_t)^\top\rho)$.
By convexity of $\cB$, the entire segment $\psi(x_t)^\top\rho_a+s\psi(x_t)^\top(\rho-\rho_a)= \psi(x_t)^\top(\rho_a+s(\rho-\rho_a)),\forall s\in[0,1]$
lies in $I=[-\bar{\iota},\bar{\iota}]$ by Assumption~\ref{ass:losscurvature}. Using the fundamental theorem of calculus along the line segment from $\rho_a$ to $\rho$,
$$\begin{array}{rl}
\mathbb{E}[\ell_{at}(\rho)-\ell_{at}(\rho_a)\mid\mathcal{F}_{t-1},x_t]
&\displaystyle=\int_0^1(\rho-\rho_a)^\top\nabla \E[\ell_{at}(\rho_a+s(\rho-\rho_a))\mid\mathcal{F}_{t-1},x_t]\,ds\\
&\displaystyle=-\int_0^1 \bigl(\psi(x_t)^\top(\rho-\rho_a)\bigr)m_{at}\bigl(\psi(x_t)^\top\rho_a+s \psi(x_t)^\top(\rho-\rho_a)\bigr)\,ds.
\end{array}$$
Applying Lemma~\ref{lem:glm-monotonicity} with $u=\psi(x_t)^\top\rho_a+s\psi(x_t)^\top(\rho-\rho_a)$ yields
$-\bigl(s\psi(x_t)^\top(\rho-\rho_a)\bigr)m_{at}\bigl(\psi(x_t)^\top\rho_a+s \psi(x_t)^\top(\rho-\rho_a)\bigr)
\ge \kappa\mu_1 s^2\bigl(\psi(x_t)^\top(\rho-\rho_a)\bigr)^2,$
where the last inequality follows because $\Prob\bigl(\mu^*(x_t,\omega_{at})>\Lambda(u)\mid \Ft_{t-1},x_t\bigr)\,\Lambda'(u)\ge\kappa\mu_1$  according to Assumptions~\ref{ass:reg_res_index}, \ref{ass:losscurvature}. 
For $s>0$, divide by $s$ to obtain
$-\bigl(\psi(x_t)^\top(\rho-\rho_a)\bigr)m_{at}\bigl(\psi(x_t)^\top\rho_a+s \psi(x_t)^\top(\rho-\rho_a)\bigr)
\ge \kappa\mu_1 s\bigl(\psi(x_t)^\top(\rho-\rho_a)\bigr)^2.$
Integrating over $s\in[0,1]$ yields
$\E[\ell_{at}(\rho)-\ell_{at}(\rho_a)\mid\mathcal{F}_{t-1},x_t]
\ge \int_0^1 \kappa\mu_1 s\bigl(\psi(x_t)^\top(\rho-\rho_a)\bigr)^2\,ds
=\frac{\kappa\mu_1}{2}\bigl(\psi(x_t)^\top(\rho-\rho_a)\bigr)^2$
as claimed.
\proofend

\medskip
\begin{lemma}\label{lemma:curvature:hat:rho:at}
Fix any $a\in[A]$ and $t\in[T]$, for any $\rho_t$ adapted to $\mathcal{F}_{t-1}$, we have 
$$\frac{1}{n_{at}}\sum_{s\in\mathcal{S}_{at}}\mathbb{E}\left[\ell_{as}(\rho_t)-\ell_{as}(\rho_a)\mid\mathcal{F}_{s-1},x_s\right]\geq\frac{\kappa\mu_1}{2}\frac{1}{n_{at}}\sum_{s\in\mathcal{S}_{at}}[\psi(x_{s})^\top(\rho_t-\rho_a)]^2-\frac{1}{n_{at}}.$$
\end{lemma}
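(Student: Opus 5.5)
The plan is to reduce the statement to Lemma~\ref{lem:known-theta-loss-curvature} applied sample by sample. The one subtlety is that $\rho_t$ is $\mathcal F_{t-1}$-measurable, hence not $\mathcal F_{s-1}$-measurable for $s<t$. The conditional expectation therefore has to be read as the random function
\[
g_s(\rho):=\mathbb E\bigl[\ell_{as}(\rho)-\ell_{as}(\rho_a)\mid\mathcal F_{s-1},x_s\bigr],\qquad \rho\in\mathcal B,
\]
evaluated at $\rho=\rho_t$. This is the same convention used in the definition of $H_{at}(\rho)$ in \eqref{eq:H:at:rho:known:theta}. So the core step is to establish the curvature bound simultaneously for all $\rho\in\mathcal B$ on a single almost-sure event, and only then substitute $\rho_t$.

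\textbf{Step 1 (pointwise curvature).} For each fixed deterministic $\rho\in\mathcal B$ and each $s\in\mathcal S_{at}$, Lemma~\ref{lem:known-theta-loss-curvature} (with $t$ replaced by $s$) gives, almost surely,
\[
g_s(\rho)\ge \tfrac{\kappa\mu_1}{2}\bigl(\psi(x_s)^\top(\rho-\rho_a)\bigr)^2 .
\]
For this I need the moment identity $m_{as}(\rho_a^\top\psi(x_s))=0$ and Assumption~\ref{ass:losscurvature}. Both hold conditionally on $(\mathcal F_{s-1},x_s)$ by Lemma~\ref{lemma:indep:output:query}.

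\textbf{Step 2 (uniformity over $\rho$).} Let $\mathcal D\subset\mathcal B$ be a countable dense subset. Intersecting the countably many almost-sure events from Step 1 gives one event on which the bound holds for all $\rho\in\mathcal D$ and all $s$. Next I show that both sides are continuous in $\rho$. From \eqref{eq:known-theta-loss-gradient}, $|\nabla_\rho\ell_{as}(\rho)|\le 2\bar C_\psi$, because $c_{as}\in(0,1)$ and $(\mu^*-\Lambda)^+\le 2$. Hence $\ell_{as}$ is Lipschitz in $\rho$ uniformly in the data, and $g_s$ inherits the same Lipschitz constant by dominated convergence. The right-hand side is clearly continuous in $\rho$. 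Density of $\mathcal D$ then extends the inequality to all of $\mathcal B$ on the same event, and since $\rho_t\in\mathcal B$ we may plug it in.

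\textbf{Step 3 (averaging and the slack $1/n_{at}$).} Averaging over $s\in\mathcal S_{at}$ gives the bound without the $-1/n_{at}$ term for all samples to which Lemma~\ref{lem:known-theta-loss-curvature} applies. I expect the $-1/n_{at}$ slack is there to absorb exceptional samples: the initialization period $s=0$, where the box was queried by construction rather than through the adaptive rule, or the approximation error if one argues with a finite $\epsilon$-net (of mesh $\epsilon\asymp 1/(\bar C_\psi n_{at})$) instead of a countable dense set. Each such term contributes at most $O(1)/n_{at}$ after division, using the Lipschitz bound on $\ell_{as}$ and the diameter $d_{\mathcal B}$, which can be arranged to fit within $1/n_{at}$.

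The main obstacle is Step 2, namely rigorously justifying evaluation of a conditional expectation at a later-measurable random parameter. If the density/continuity argument raises measurability issues, my fallback is the finite $\epsilon$-net version. There I would round $\rho_t$ to its nearest net point $\rho'$, apply Step 1 at $\rho'$ via a union bound over the net, and pay $O(\bar C_\psi\epsilon)$ per sample on both sides; choosing $\epsilon$ proportional to $1/n_{at}$ produces exactly the $-1/n_{at}$ correction.
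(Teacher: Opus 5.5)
Your proof is correct, and its main line is a slightly different and sharper route than the paper's. The paper's proof is exactly your fallback. It fixes a finite Euclidean $\epsilon$-net of $\mathcal B$ with $\epsilon=1/(2n_{at}\bar C_\psi(\kappa\mu_1\bar\iota+1))$ and applies Lemma~\ref{lem:known-theta-loss-curvature} at the net points. It then pays twice for rounding $\rho_t$ to a net point $\rho_\epsilon$: $2\bar C_\psi\epsilon$ from the Lipschitz bound on $\ell_{as}$, and $2\kappa\mu_1\bar C_\psi\bar\iota\,\epsilon$ from expanding the square using $|\psi(x_s)^\top(\rho_t-\rho_a)|\le 2\bar\iota$. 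These sum to exactly $1/n_{at}$, so the slack is purely a discretization artifact.

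Your countable-dense-set-plus-continuity argument avoids that slack and gives the inequality with no $-1/n_{at}$ term, which trivially implies the statement. What it needs is a version of $\rho\mapsto g_s(\rho)$ that is continuous on a single almost-sure event. Such a version exists because the conditional law of $\omega_{as}$ given $(\mathcal F_{s-1},x_s)$ is the kernel $p_a(\cdot\mid x_s)$, so $g_s(\rho)$ can be written as an integral against it. The paper's argument tacitly relies on the same evaluation convention when it bounds the conditional expectation of $\ell_{as}(\rho_t)-\ell_{as}(\rho_\epsilon)$ at random parameters. Your remark that the literal reading fails when $\rho_t$ depends on $\omega_{as}$ is exactly the right caveat.

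The speculation in Step~3 is unnecessary. An initialization sample, if counted in $\mathcal S_{at}$, still has $\omega_{a0}\sim p_a(\cdot\mid x_0)$, so Lemma~\ref{lem:known-theta-loss-curvature} applies to it. Your claim that a genuinely exceptional sample's deficit ``can be arranged to fit within $1/n_{at}$'' would not hold in general. Since Steps 1--2 already cover every $s\in\mathcal S_{at}$, nothing in your proof rests on it.
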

\proof{Proof of Lemma~\ref{lemma:curvature:hat:rho:at}}
Set $\epsilon=\frac{1}{2n_{at}\bar{C}_{\psi}(\kappa\mu_1\bar{\iota}+1)}$ and take a Euclidean $\epsilon$-net $\mathcal{N}_{\epsilon}$ of $\mathcal{B}$. According to Assumptions~\ref{ass:reg_res_index} and \ref{ass:losscurvature}, for any $s\in[T]$ we have  $\Prob\bigl(G(\theta_*^\top\phi(x_s,\omega_{as}))>\Lambda(u)\mid x_s,\mathcal{F}_{s-1}\bigr)\Lambda'(u)\ge \kappa\mu_1>0, \ \forall |u|\leq\overline{\iota}$. 
Lemma~\ref{lem:known-theta-loss-curvature} implies that given any $s\in[T]$ and $\bar{\rho}\in\mathcal{N}_{\epsilon}$, it always holds that
$\displaystyle\mathbb{E}\left[\ell_{as}(\bar{\rho})-\ell_{as}(\rho_a)\mid\mathcal{F}_{s-1},x_s\right]\geq\frac{\kappa\mu_1}{2}[\psi(x_s)^\top(\bar{\rho}-\rho_a)]^2$.
Since $\mathcal{B}$ has diameter $d_{\mathcal{B}}$ by Assumption~\ref{ass:reg_res_index}, $\mathcal{N}_{\epsilon}$ is finite. So after intersecting finitely many probability-one events, we have with probability one, simultaneously for all $\rho\in\mathcal{N}_{\epsilon}$ and $s\in[T]$,
\begin{equation}\label{eq:curvature:rho:fixed}
    \mathbb{E}\left[\ell_{as}(\rho)-\ell_{as}(\rho_a)\mid\mathcal{F}_{s-1},x_s\right]\geq\frac{\kappa\mu_1}{2}[\psi(x_s)^\top(\rho-\rho_a)]^2.
\end{equation}
Choose $\rho_{\epsilon}\in\mathcal{N}_{\epsilon}$ such that $\|\rho_t-\rho_{\epsilon}\|_2\leq\epsilon$. Note that 
\begin{equation}\label{eq:diff:ell:rho:hat}
\begin{array}{rl}
&\quad\displaystyle\left|[\ell_{as}(\rho_t)-\ell_{as}(\rho_a)]-[\ell_{as}(\rho_\epsilon)-\ell_{as}(\rho_a)]\right|=|\ell_{as}(\rho_t)-\ell_{as}(\rho_\epsilon)|\\
&\displaystyle=_{(i)}\left|\int_{\rho_\epsilon^\top\psi(x_{s})}^{\rho_t^\top\psi(x_{s})}[c_{as}-(G(\theta_*^\top\phi(x_{s},\omega_{as}))-\Lambda(u))^{+}]du\right|\leq_{(ii)}2|(\rho_t-\rho_\epsilon)^\top\psi(x_{s})|\leq_{(iii)}2\bar{C}_{\psi}\epsilon,
\end{array}
\end{equation}
where (i) follows from the definition of $\ell_{as}(\rho)$, (ii) follows since by definition, $c_{as}\in[0,1]$, $G(\theta_*^\top\phi(x_{s},\omega_{as}))\in[0,1]$, $\Lambda(u)\in[-1,1]$, so 
$|c_{as}-(G(\theta_*^\top\phi(x_{s},\omega_{as}))-\Lambda(u))^{+}|\leq 2$.
(iii) follows by Cauchy-Schwarz inequality, the fact that $\|\rho-\rho_{\epsilon}\|_2\leq\epsilon$ and $\|\psi(x_t)\|_2\leq\bar{C}_{\psi}$ by Assumption~\ref{ass:reg_res_index}. Hence 
$$\begin{array}{rl} &\displaystyle\quad\frac{1}{n_{at}}\sum_{s\in\mathcal{S}_{at}}\mathbb{E}\left[\ell_{as}(\rho_t)-\ell_{as}(\rho_a)\mid\mathcal{F}_{s-1},x_s\right]\\
&\displaystyle=\frac{1}{n_{at}}\sum_{s\in\mathcal{S}_{at}}\mathbb{E}\left[\ell_{as}(\rho_t)-\ell_{as}(\rho_\epsilon)+\ell_{as}(\rho_\epsilon)-\ell_{as}(\rho_a)\mid\mathcal{F}_{s-1},x_s\right]\\
&\displaystyle\geq_{(1)}\frac{\kappa\mu_1}{2}\frac{1}{n_{at}}\sum_{s\in\mathcal{S}_{at}}[\psi(x_s)^\top(\rho_\epsilon-\rho_a)]^2-2\bar{C}_{\psi}\epsilon\\
&\displaystyle=\frac{\kappa\mu_1}{2}\frac{1}{n_{at}}\sum_{s\in\mathcal{S}_{at}}[\psi(x_s)^\top(\rho_\epsilon-\rho_t)+\psi(x_s)^\top(\rho_t-\rho_a)]^2-2\bar{C}_{\psi}\epsilon\\
&\displaystyle\geq_{(2)}\frac{\kappa\mu_1}{2}\frac{1}{n_{at}}\sum_{s\in\mathcal{S}_{at}}\left\{[\psi(x_s)^\top(\rho_t-\rho_a)]^2-2\epsilon\bar{C}_{\psi}|\psi(x_s)^\top(\rho_t-\rho_a)|\right\}-2\bar{C}_{\psi}\epsilon\\
&\displaystyle\geq_{(3)}\frac{\kappa\mu_1}{2}\frac{1}{n_{at}}\sum_{s\in\mathcal{S}_{at}}\left\{[\psi(x_s)^\top(\rho_t-\rho_a)]^2-4\epsilon\bar{C}_{\psi}\bar{\iota}\right\}-2\bar{C}_{\psi}\epsilon\\
&\displaystyle=\frac{\kappa\mu_1}{2}\frac{1}{n_{at}}\sum_{s\in\mathcal{S}_{at}}[\psi(x_s)^\top(\rho_t-\rho_a)]^2-2(\kappa\mu_1\bar{\iota}+1)\bar{C}_{\psi}\epsilon=_{(4)}\frac{\kappa\mu_1}{2}\frac{1}{n_{at}}\sum_{s\in\mathcal{S}_{at}}[\psi(x_s)^\top(\rho_t-\rho_a)]^2-\frac{1}{n_{at}},
\end{array}$$
where (1) follows from \eqref{eq:curvature:rho:fixed} and \eqref{eq:diff:ell:rho:hat}, (2) follows from applying Cauchy-Schwarz inequality to $\psi(x_s)^\top(\rho_\epsilon-\rho_t)$ and the fact that $\|\rho_\epsilon-\rho_t\|_2\leq\epsilon$, $\|\psi(x_s)\|_2\leq\bar{C}_{\psi}$, (3) follows since $\psi(x_s)^\top\rho_t\in[-\bar{\iota},\bar{\iota}], \psi(x_s)^\top\rho_{a}\in[-\bar{\iota},\bar{\iota}]$, (4) follows since $\epsilon=1/(2n_{at}\bar{C}_{\psi}(\kappa\mu_1\bar{\iota}+1))$.
\proofend

\begin{lemma}[Freedman's Inequality \citep{freedman1975tail}]\label{lemma:freedman}
Consider a real-valued martingale $Y_k=\sum_{j=1}^kX_j$ with $Y_0=0$ and difference sequence $\{X_k:k=1,2,3,\ldots\}$. Assume that $X_k\leq R$ almost surely for $k\geq1$, where $R$ is a constant. Let $W_k:=\sum_{j=1}^k\mathbb{E}[X_j^2\mid\mathcal{F}_{j-1}]$ for $k\geq1$. Then for all $t\geq1$ and $\sigma^2>0$, 
$\displaystyle\mathbb{P}\left(\exists k\geq0: Y_k\geq t,\ \mbox{and}\ W_k\leq\sigma^2\right)\leq\exp\left\{-\frac{t^2/2}{\sigma^2+Rt/3}\right\}$.
\end{lemma}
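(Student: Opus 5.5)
The plan is the classical exponential-supermartingale argument. Fix $\lambda\in(0,3/R)$ and set $g(\lambda):=(e^{\lambda R}-1-\lambda R)/R^2$. The first step is a one-step moment generating function bound:
\[
\mathbb E[e^{\lambda X_k}\mid\mathcal F_{k-1}]\le \exp\{g(\lambda)\,\mathbb E[X_k^2\mid\mathcal F_{k-1}]\}.
\]
To prove it, I will use that $h(x):=(e^x-1-x)/x^2$ (with $h(0)=1/2$) is nondecreasing on $\mathbb R$. Since $\lambda X_k\le \lambda R$, this gives
\[
e^{\lambda X_k}\le 1+\lambda X_k+\lambda^2X_k^2\,h(\lambda R)=1+\lambda X_k+g(\lambda)X_k^2 .
\]
Taking conditional expectations and using $\mathbb E[X_k\mid\mathcal F_{k-1}]=0$ and $1+y\le e^y$ yields the claimed bound. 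The monotonicity of $h$ is what handles possibly large negative increments, since only the upper bound $X_k\le R$ is assumed.

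Next I define $M_k:=\exp\{\lambda Y_k-g(\lambda)W_k\}$ with $M_0=1$. Since $W_k$ is $\mathcal F_{k-1}$-measurable, the one-step bound shows that $M_k$ is a nonnegative supermartingale. Let $\tau:=\inf\{k\ge0: Y_k\ge t,\ W_k\le\sigma^2\}$. On $\{\tau<\infty\}$ we have $M_\tau\ge \exp\{\lambda t-g(\lambda)\sigma^2\}$. By optional stopping applied to $M_{\tau\wedge n}$, followed by Fatou's lemma as $n\to\infty$ (equivalently, Ville's maximal inequality),
\[
\mathbb P(\tau<\infty)\le \exp\{-\lambda t+g(\lambda)\sigma^2\}.
\]

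It remains to control $g$ and optimize over $\lambda$. Using the series $e^x-1-x=\sum_{k\ge2}x^k/k!$ and $k!\ge 2\cdot 3^{k-2}$, for $0\le x<3$ we get $e^x-1-x\le \frac{x^2}{2}\cdot\frac{1}{1-x/3}$. Applied with $x=\lambda R$, this gives
\[
g(\lambda)\le \frac{\lambda^2}{2(1-\lambda R/3)} .
\]
I will choose $\lambda=t/(\sigma^2+Rt/3)$; this lies in $(0,3/R)$ and satisfies $1-\lambda R/3=\sigma^2/(\sigma^2+Rt/3)$. Then $g(\lambda)\sigma^2\le \lambda^2(\sigma^2+Rt/3)/2=\lambda t/2$, so the exponent is at most $-\lambda t/2=-\frac{t^2/2}{\sigma^2+Rt/3}$, which is the claim.

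The only delicate points are the monotonicity of $h$ and the passage through the stopping time. For the monotonicity, I would verify $h'(x)\ge0$ via $h(x)=\int_0^1(1-s)e^{sx}\,ds$, which makes it immediate. For the stopping time, I would either invoke Ville's inequality or apply Fatou's lemma to the bounded stopped process.
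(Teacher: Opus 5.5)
Your proof is correct. The paper gives no proof of this lemma; it imports it from Freedman (1975) in the $R$-scaled form stated by Tropp. Your argument is the standard proof behind that citation: monotonicity of $(e^x-1-x)/x^2$ for the one-step bound, the exponential supermartingale $\exp\{\lambda Y_k-g(\lambda)W_k\}$ with a stopping time, and a Bernstein-type bound on $g$ evaluated at $\lambda=t/(\sigma^2+Rt/3)$. The only case your $g$ does not cover is $R=0$, which is trivial because mean-zero increments bounded above by $0$ vanish almost surely, so the probability is zero.
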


\begin{lemma*}[Restatement of Lemma~\ref{lemma:H:empirical:process}]\label{lemma:H:empirical:process:appendix}
Suppose Assumptions~\ref{ass:reg_res_index}, \ref{ass:losscurvature} hold. Then given any constant $c_0>0$ and any $a\in[A]$, with probability at least $1-\delta/3$, uniformly over all $t\in[T]$, for any $\rho_t$ adapted to $\mathcal{F}_{t-1}$, we have 
$$\begin{array}{rl}
&\quad\displaystyle-\sum_{s\in\mathcal{S}_{at}}\left\{\ell_{as}(\rho_t)-\ell_{as}(\rho_a)-\mathbb{E}[\ell_{as}(\rho_t)-\ell_{as}(\rho_a)\mid\mathcal{F}_{s-1},x_{s}]\right\}\\
&\displaystyle\leq\frac{c_0}{8}\sum_{s\in\mathcal{S}_{at}}\{(\rho_t-\rho_a)^\top\psi(x_{s})\}^2+\left(\frac{144}{c_0}+2\min\{2\overline{\iota},\bar{C}_{\psi}d_{\mathcal{B}}\}+6\right)\Gamma_{at}(\delta)+4+\frac{c_0}{8},
\end{array}$$
where $\Gamma_{at}(\delta)\!:=m\log(1+2d_{\mathcal{B}}\bar{C}_{\psi}T)+\log\left(\left\lceil\log_2\left(1+n_{at}\min\{9\overline{\iota}^2,d_{\mathcal{B}}^2\bar{C}_{\psi}^2\}\right)\right\rceil+1\right)+\log\left(6T/\delta\right)$.
\end{lemma*}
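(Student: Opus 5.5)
We plan to prove the restated Lemma~\ref{lemma:H:empirical:process} by combining Freedman's inequality (Lemma~\ref{lemma:freedman}) for a fixed parameter with a peeling argument over the size of the quadratic form and an $\epsilon$-net over $\mathcal{B}$. Fix $a$ and, for a deterministic $\rho\in\mathcal{B}$, set $D_s(\rho):=\ell_{as}(\rho)-\ell_{as}(\rho_a)$ and $X_s(\rho):=-\{D_s(\rho)-\mathbb{E}[D_s(\rho)\mid\mathcal{F}_{s-1},x_s]\}\,\mathbb{I}\{a\in\mathcal{A}_s\}$. Because the query event is determined by $(\mathcal{F}_{s-1},x_s)$ and outputs of earlier-queried boxes, which are independent of $\omega_{as}$ (Lemma~\ref{lemma:indep:output:query}), the conditional mean of $D_s$ given $a\in\mathcal{A}_s$ equals its unconditional one. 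Hence the $X_s$ form a martingale difference sequence with respect to the filtration generated by $\mathcal{F}_{s-1},x_s$ and the query event. The summed martingale up to time $t-1$ is exactly $-H_{at}(\rho)$.

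The first step bounds the increments. Each integrand in \eqref{eq:known-theta--loss} lies in $[-1,1]$ (indeed $|c-(\mu^*-\Lambda)^+|\le 2$, crudely), so $|D_s(\rho)|\le 2|\psi(x_s)^\top(\rho-\rho_a)|\le 2\min\{2\bar\iota,\bar C_\psi d_{\mathcal{B}}\}$. This gives $R=4\min\{2\bar\iota,\bar C_\psi d_{\mathcal B}\}$ and a conditional variance $\mathbb{E}[X_s^2\mid\cdot]\le 4\{\psi(x_s)^\top(\rho-\rho_a)\}^2$. Thus the predictable quadratic variation is $W\le 4Q_t(\rho)$, where $Q_t(\rho):=\sum_{s\in\mathcal{S}_{at}}\{\psi(x_s)^\top(\rho-\rho_a)\}^2\le n_{at}\min\{4\bar\iota^2,\bar C_\psi^2d_{\mathcal B}^2\}$.

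The second step peels over this variance. For dyadic levels $\sigma_j^2=2^j$, $j=0,\dots,\lceil\log_2(1+n\min\{\cdot\})\rceil$, apply Freedman with threshold $u_j=\sqrt{2\sigma_j^2 L}+RL$, where $L$ is the log term. On the level where $4Q_t\in(\sigma_{j-1}^2,\sigma_j^2]$, we get $-H\le\sqrt{16Q_tL}+RL+\sqrt{2L}$, with the extra term covering the bottom level $\sigma_0^2=1$. Applying $\sqrt{16QL}\le \frac{c_0}{16}Q+\frac{64}{c_0}L$ yields the quadratic term plus $O(L/c_0)+RL$. We then union bound over levels, over $t\le T$ (or, using the stopping-time form of Freedman, over $n$), and over the net $\mathcal{N}_\epsilon$ with $\epsilon\asymp 1/(\bar C_\psi T)$, whose cardinality is at most $(1+2d_{\mathcal B}\bar C_\psi T)^m$. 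With failure probability $\delta/3$ split across these, $L$ becomes $\Gamma_{at}(\delta)$.

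The third step, which we expect to be the main obstacle, extends the bound to an arbitrary $\mathcal{F}_{t-1}$-adapted $\rho_t$. The bound holds simultaneously for all net points, so it holds for whichever net point $\rho_\epsilon$ is closest to $\rho_t$; this handles the adaptivity, since no fixed-$\rho$ martingale is needed for $\rho_t$ itself. As in \eqref{eq:diff:ell:rho:hat}, $|D_s(\rho_t)-D_s(\rho_\epsilon)|\le 2\bar C_\psi\epsilon$, so the $H$ values differ by at most $4n\bar C_\psi\epsilon\le 4$, which produces the additive constant $4$. For the quadratic forms, $Q_t(\rho_\epsilon)\le 2Q_t(\rho_t)+2n\bar C_\psi^2\epsilon^2$, but this inflates the coefficient by a factor of $2$. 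We will therefore run the peeling with $c_0/16$ in place of $c_0/8$, accepting the doubled $144/c_0$-type constant, and absorb $2n\bar C_\psi^2\epsilon^2\cdot c_0/16\le c_0/8$ into the final $+c_0/8$ term. Tracking the constants through Freedman's $2/3$ factor, and the $R$-term giving $2\min\{\cdot\}+6$, should match the stated expression.
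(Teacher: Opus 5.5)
Your proof is essentially the paper's: Freedman for a fixed $\rho$, dyadic peeling on $Q_t(\rho)=\sum_{s\in\mathcal S_{at}}\{\psi(x_s)^\top(\rho-\rho_a)\}^2$, an $\epsilon$-net with $\epsilon=1/(\bar C_\psi T)$, and transfer to $\rho_t$ through the nearest net point at additive cost $4$. The paper moves the square-root term with the triangle inequality $\sqrt{Q_t(\rho_\epsilon)}\le\sqrt{Q_t(\rho_t)}+1/\sqrt{n_{at}}$ and applies AM--GM once at the end, while you use $Q_t(\rho_\epsilon)\le 2Q_t(\rho_t)+2n_{at}\bar C_\psi^2\epsilon^2$. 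Both work, and your explicit indicator/filtration set-up is more careful than the paper's.

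There is one constant slip. The integrand lies in $(-2,1)$, not $[-1,1]$. Your crude $R=4\min\{2\bar\iota,\bar C_\psi d_{\mathcal B}\}$ then leaves an $R$-term of $\tfrac{8}{3}\min\{\cdot\}\,\Gamma_{at}(\delta)$ even with the optimal Freedman threshold, and $4\min\{\cdot\}\,\Gamma_{at}(\delta)$ with your threshold. The stated $(2\min\{\cdot\}+6)\Gamma_{at}(\delta)$ does not dominate this when $\min\{\cdot\}$ and $c_0$ are large.

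To reach the stated constant, use, as the paper does, that $\psi(x_s)^\top(\rho-\rho_a)$ is $(\mathcal F_{s-1},x_s)$-measurable. Then the centered increment is at most $3|\psi(x_s)^\top(\rho-\rho_a)|$, so $R=3\min\{\cdot\}$, and the threshold $\sqrt{2\sigma^2x}+\tfrac{2}{3}Rx$ gives exactly $2\min\{\cdot\}\,x$. Also peel on $Q_t$ itself rather than $4Q_t$, so the number of levels matches the $\lceil\log_2(\cdot)\rceil$ term in $\Gamma_{at}(\delta)$.
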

\proof{Proof of Lemma~\ref{lemma:H:empirical:process}}
For any $s\in\mathcal{S}_{at}$, define $\mathcal{G}_{s-1}:=\sigma(\mathcal{F}_{s-1},x_{s})$. Let
$$H_{at}(\rho):=\sum_{s\in\mathcal{S}_{at}}\left\{\ell_{as}(\rho)-\ell_{as}(\rho_a)-\mathbb{E}[\ell_{as}(\rho)-\ell_{as}(\rho_a)\mid\mathcal{G}_{s-1}]\right\}.$$
Note that $-2\leq c_{at}-(G(\theta_*^\top\phi(x_t,\omega_{at}))-\Lambda(u))^{+}\leq1$, so for any $\rho\in\mathcal{B}$,
$$\begin{array}{rl}
&\quad\displaystyle\left|\ell_{as}(\rho)-\ell_{as}(\rho_a)-\mathbb{E}[\ell_{as}(\rho)-\ell_{as}(\rho_a)\mid\mathcal{G}_{s-1}]\right|\\
&\displaystyle\ =\bigg|\int_{\rho_a^\top\psi(x_{s})}^{\rho^\top\psi(x_{s})}[c_{as}-(G(\theta_*^\top\phi(x_{s},\omega_{as}))-\Lambda(u))^{+}]du\\
&\displaystyle\quad\quad-\mathbb{E}\bigg[\int_{\rho_a^\top\psi(x_{s})}^{\rho^\top\psi(x_{s})}[c_{as}-(G(\theta_*^\top\phi(x_{s},\omega_{as}))-\Lambda(u))^{+}]du\mid\mathcal{G}_{s-1}\bigg]\bigg|\\
&\displaystyle\leq3|(\rho-\rho_a)^\top\psi(x_{s})|\leq\min\{6\overline{\iota},3\bar{C}_{\psi}d_{\mathcal{B}}\}. 
\end{array}$$
Let 
$\displaystyle W_t(\rho):=\sum_{s\in\mathcal{S}_{at}}\mathbb{E}\left[\{\ell_{as}(\rho)-\ell_{as}(\rho_a)-\mathbb{E}[\ell_{as}(\rho)-\ell_{as}(\rho_a)\mid\mathcal{G}_{s-1}]\}^2\mid\mathcal{G}_{s-1}\right]$.
So for any $\rho\in\mathcal{B}$, 
\begin{equation}\label{eq:W:t:bound}
    W_t(\rho)\leq9\sum_{s\in\mathcal{S}_{at}}\{(\rho-\rho_a)^\top\psi(x_{s})\}^2,
\end{equation}
where $(\rho-\rho_a)^\top\psi(x_{s})$ is $\mathcal{G}_{s-1}$-measurable. Note that for any $\rho\in\mathcal{B}$, $\rho^\top\psi(x_{s})\in [-\overline{\iota},\overline{\iota}]$ and $|(\rho-\rho_a)^\top\psi(x)|\leq\|\rho-\rho_a\|_2\|\psi(x)\|_2\leq d_{\mathcal{B}}\bar{C}_{\psi}$, thus 
$\displaystyle\sum_{s\in\mathcal{S}_{at}}\{(\rho-\rho_a)^\top\psi(x_{s})\}^2\leq n_{at}\min\{9\overline{\iota}^2,d_{\mathcal{B}}^2\bar{C}_{\psi}^2\}$. Let 
\begin{equation}\label{eq:Q:at}
Q_{at}:=\left\lceil\log_2\left(1+n_{at}\min\{9\overline{\iota}^2,d_{\mathcal{B}}^2\bar{C}_{\psi}^2\}\right)\right\rceil.
\end{equation}
Consider the events $\{\mathcal{E}_q\}$ for $q=\{0\}\cup[Q_{at}]$, where 
\begin{equation}\label{eq:event:E:q}
    \mathcal{E}_q:=\begin{cases}
    \displaystyle\big\{2^{q-1}<\sum_{s\in\mathcal{S}_{at}}\{(\rho-\rho_a)^\top\psi(x_{s})\}^2\leq 2^q\big\}&\text{if }q\geq1\\
    \displaystyle\big\{0\leq\sum_{s\in\mathcal{S}_{at}}\{(\rho-\rho_a)^\top\psi(x_{s})\}^2\leq 1\big\}&\text{if }q=0
    \end{cases}
\end{equation}
Fix any $x>0$. On the event $\mathcal{E}_q$, $\displaystyle W_t(\rho)\leq9\sum_{s\in\mathcal{S}_{at}}\{(\rho-\rho_a)^\top\psi(x_{s})\}^2\leq9\times2^{q}$.
So applying Freedman's inequality (Lemma~\ref{lemma:freedman}) with $\sigma^2=9\times2^{q}$, we have 
$$\mathbb{P}\left(-H_{at}(\rho)\geq3\sqrt{2}\sqrt{2^qx}+2\min\{2\overline{\iota},\bar{C}_{\psi}d_{\mathcal{B}}\}x,\sum_{s\in\mathcal{S}_{at}}\{(\rho-\rho_a)^\top\psi(x_{s})\}^2\leq 2^q\right)\leq e^{-x}.$$
Additionally, on event $\mathcal{E}_q$, $\displaystyle 6\sqrt{x\bigg(1+\sum_{s\in\mathcal{S}_{at}}\{(\rho-\rho_a)^\top\psi(x_{s})\}^2\bigg)}\geq6\sqrt{2^{q-1}x}=3\sqrt{2}\sqrt{2^qx}$.
Thus on event $\mathcal{E}_q$,
$$\begin{array}{rl}
&\quad\displaystyle-H_{at}(\rho)\geq6\sqrt{x\bigg(1+\sum_{s\in\mathcal{S}_{at}}\{(\rho-\rho_a)^\top\psi(x_{s})\}^2\bigg)}+2\min\{2\overline{\iota},\bar{C}_{\psi}d_{\mathcal{B}}\}x\\
&\displaystyle\Rightarrow -H_{at}(\rho)\geq3\sqrt{2}\sqrt{2^qx}+2\min\{2\overline{\iota},\bar{C}_{\psi}d_{\mathcal{B}}\}x.
\end{array}$$
Summing the probability bound above over $q\in\{0\}\cup[Q_{at}]$, we have 
$$\mathbb{P}\left(-H_{at}(\rho)\geq6\sqrt{x+x\sum_{s\in\mathcal{S}_{at}}\{(\rho-\rho_a)^\top\psi(x_{s})\}^2}+2\min\{2\overline{\iota},\bar{C}_{\psi}d_{\mathcal{B}}\}x\right)\leq 2(Q_{at}+1)e^{-x}.$$
Setting $x=\log\left(2(Q_{at}+1)/\delta\right)$ above, then for any fixed $\rho\in\mathcal{B}$, with probability at least $1-\delta$,
\begin{equation}\label{eq:prob:H:at:bound}
-H_{at}(\rho)<6\sqrt{\log\left(\frac{2(Q_{at}+1)}{\delta}\right)\sum_{s\in\mathcal{S}_{at}}\{(\rho-\rho_a)^\top\psi(x_{s})\}^2}+(2\min\{2\overline{\iota},\bar{C}_{\psi}d_{\mathcal{B}}\}+6)\log\left(\frac{2(Q_{at}+1)}{\delta}\right).
\end{equation}
Set $\epsilon=\frac{1}{T\bar{C}_{\psi}}$ and take an Euclidean $\epsilon$-net $\mathcal{N}_\epsilon$ of $\mathcal{B}$. Since $\mathcal{B}$ has diameter $d_{\mathcal{B}}$ by Assumption~\ref{ass:reg_res_index}, $|\mathcal{N}_\epsilon|\leq\left(1+\frac{2d_{\mathcal{B}}}{\epsilon}\right)^m=(1+2d_{\mathcal{B}}\bar{C}_{\psi}T)^m$.
Note that \eqref{eq:prob:H:at:bound} further implies that with probability at least $1-\delta/3$, uniformly over all $\rho\in\mathcal{N}_\epsilon$ and $t\in[T]$, 
\begin{equation}\label{eq:H:at:rho:bound}
\begin{array}{rl}
-H_{at}(\rho)&\displaystyle<6\sqrt{\log\left(\frac{6T|\mathcal{N}_\epsilon|(Q_{at}+1)}{\delta}\right)\sum_{s\in\mathcal{S}_{at}}\{(\rho-\rho_a)^\top\psi(x_{s})\}^2}\\
&\quad\displaystyle+(2\min\{2\overline{\iota},\bar{C}_{\psi}d_{\mathcal{B}}\}+6)\log\left(\frac{6T|\mathcal{N}_\epsilon|(Q_{at}+1)}{\delta}\right)\\
&\displaystyle\leq6\sqrt{\Gamma_{at}(\delta)\sum_{s\in\mathcal{S}_{at}}\{(\rho-\rho_a)^\top\psi(x_{s})\}^2}+(2\min\{2\overline{\iota},\bar{C}_{\psi}d_{\mathcal{B}}\}+6)\Gamma_{at}(\delta),
\end{array}
\end{equation}
where 
\begin{equation}\label{eq:gamma:at}
    \Gamma_{at}(\delta)\!:=m\log(1+2d_{\mathcal{B}}\bar{C}_{\psi}T)+\log(Q_{at}+1)+\log\left(6T/\delta\right).
\end{equation}
Let $\rho_{\epsilon}\in\mathcal{N}_\epsilon$ satisfy $\|\rho_t-\rho_{\epsilon}\|_2\leq\epsilon$. Note that 
$$\begin{array}{rl}
&\quad\displaystyle\left|[\ell_{as}(\rho_t)-\ell_{as}(\rho_a)]-[\ell_{as}(\rho_\epsilon)-\ell_{as}(\rho_a)]\right|\\
&\displaystyle\ =|\ell_{as}(\rho_t)-\ell_{as}(\rho_\epsilon)|=_{(i)}\left|\int_{\rho_\epsilon^\top\psi(x_{s})}^{\rho_t^\top\psi(x_{s})}[c_{as}-(G(\theta_*^\top\phi(x_{s},\omega_{as}))-\Lambda(u))^{+}]du\right|\\
&\displaystyle\leq_{(ii)}2|(\rho_t-\rho_\epsilon)^\top\psi(x_{s})|\leq_{(iii)}2\bar{C}_{\psi}\epsilon\leq_{(iv)}\frac{2}{n_{at}}.
\end{array}$$
where (i) follows by definition of $\ell_{as}(\rho)$ in \eqref{eq:known-theta--loss}, (ii) follows since $|c_{as}-(G(\theta_*^\top\phi(x_{s},\omega_{as}))-\Lambda(u))^{+}|\leq 2$ by definition,
(iii) follows by Cauchy-Schwarz inequality, the fact that $\|\rho_t-\rho_{\epsilon}\|_2\leq\epsilon$ and $\|\psi(x_t)\|_2\leq\bar{C}_{\psi}$ by Assumption~\ref{ass:reg_res_index}, (iv) follows since $\epsilon=1/(T\bar{C}_\psi)$ as defined before. The above inequality implies that
$$|H_{at}(\rho_t)-H_{at}(\rho_{\epsilon})|=\left|\sum_{s\in\mathcal{S}_{at}}[\ell_{as}(\rho_t)-\ell_{as}(\rho_\epsilon)]-\mathbb{E}[\ell_{as}(\rho_t)-\ell_{as}(\rho_\epsilon)\mid\mathcal{G}_{s-1}]\right|\leq\sum_{s\in\mathcal{S}_{at}}\frac{4}{n_{at}}\leq4.$$
Therefore, 
\begin{equation}\label{eq:H:at:diff:net}
|H_{at}(\rho_t)-H_{at}(\rho_{\epsilon})|\leq4.
\end{equation}
Note that 
$\displaystyle\sqrt{\sum_{s\in\mathcal{S}_{at}}\{(\rho_t-\rho_a)^\top\psi(x_{s})\}^2}=\|((\rho_t-\rho_a)^\top\psi(x_{s}))_{j\in[n_{at}]}\|_2$.
Thus by triangle inequality,
\begin{equation}\label{eq:diff:sqrt:sum:rho:net}
\begin{array}{rl}
\sqrt{\sum_{s\in\mathcal{S}_{at}}\{(\rho_t-\rho_a)^\top\psi(x_{s})\}^2}-\sqrt{\sum_{s\in\mathcal{S}_{at}}\{(\rho_\epsilon-\rho_a)^\top\psi(x_{s})\}^2}&\displaystyle\leq\left\|((\rho_t-\rho_\epsilon)^\top\psi(x_{s}))_{j\in[n_{at}]}\right\|_2\\
&\displaystyle\leq\sqrt{n_{at}(\bar{C}_{\psi}\epsilon)^2}=\frac{1}{\sqrt{n_{at}}}.
\end{array}
\end{equation}
On the event that \eqref{eq:H:at:rho:bound} holds uniformly over all $\rho\in\mathcal{N}_\epsilon$ and $t\in[T]$, we have 
$$\begin{array}{rl}
-H_{at}(\rho_t)&\displaystyle\leq_{(i)}-H_{at}(\rho_\epsilon)+4\\
&\displaystyle\leq_{(ii)}6\sqrt{\Gamma_{at}(\delta)\sum_{s\in\mathcal{S}_{at}}\{(\rho_\epsilon-\rho_a)^\top\psi(x_{s})\}^2}+(2\min\{2\overline{\iota},\bar{C}_{\psi}d_{\mathcal{B}}\}+6)\Gamma_{at}(\delta)+4\\
&\displaystyle\leq_{(iii)}6\sqrt{\Gamma_{at}(\delta)}\left(\sqrt{\sum_{s\in\mathcal{S}_{at}}\{(\rho_t-\rho_a)^\top\psi(x_{s})\}^2}+\frac{1}{\sqrt{n_{at}}}\right)+(2\min\{2\overline{\iota},\bar{C}_{\psi}d_{\mathcal{B}}\}+6)\Gamma_{at}(\delta)+4\\
&\displaystyle\leq_{(iv)}6\sqrt{2\Gamma_{at}(\delta)}\sqrt{\sum_{s\in\mathcal{S}_{at}}\{(\rho_t-\rho_a)^\top\psi(x_{s})\}^2+1}+(2\min\{2\overline{\iota},\bar{C}_{\psi}d_{\mathcal{B}}\}+6)\Gamma_{at}(\delta)+4\\
&\displaystyle\leq_{(v)}\frac{c_0}{8}\sum_{s\in\mathcal{S}_{at}}\{(\rho_t-\rho_a)^\top\psi(x_{s})\}^2+\left(\frac{144}{c_0}+2\min\{2\overline{\iota},\bar{C}_{\psi}d_{\mathcal{B}}\}+6\right)\Gamma_{at}(\delta)+4+\frac{c_0}{8}
\end{array}$$
where (i) follows from \eqref{eq:H:at:diff:net}, (ii) follows from \eqref{eq:H:at:rho:bound}, (iii) holds from \eqref{eq:diff:sqrt:sum:rho:net}, (iv) holds from the fact that 
$\displaystyle\sqrt{s}+\frac{1}{\sqrt{n_{at}}}\leq\sqrt{2(s+1/n_{at})}$,
where $\displaystyle s=\sum_{s\in\mathcal{S}_{at}}\{(\rho_t-\rho_a)^\top\psi(x_{s})\}^2$ and $1/n_{at}\leq 1$, (v) follows by applying $2ab\leq\epsilon a^2+\epsilon^{-1}b^2$ to $\displaystyle a=\sqrt{\sum_{s\in\mathcal{S}_{at}}\{(\rho_t-\rho_a)^\top\psi(x_{s})\}^2+1}$, $b=6\sqrt{2\Gamma_{at}(\delta)}$, $\displaystyle\epsilon=\frac{c_0}{4}$, so that 
$$6\sqrt{2\Gamma_{at}(\delta)}\sqrt{\sum_{s\in\mathcal{S}_{at}}\{(\rho_t-\rho_a)^\top\psi(x_{s})\}^2+1}\leq\frac{c_0}{8}\sum_{s\in\mathcal{S}_{at}}\{(\rho_t-\rho_a)^\top\psi(x_{s})\}^2+\frac{144}{c_0}\Gamma_{at}(\delta)+\frac{c_0}{8}.$$
Hence with probability at least $1-\delta/3$, uniformly over all $t\in[T]$ we have
$$-H_{at}(\rho_t)\leq\frac{c_0}{8}\sum_{s\in\mathcal{S}_{at}}\{(\rho_t-\rho_a)^\top\psi(x_{s})\}^2+\left(\frac{144}{c_0}+2\min\{2\overline{\iota},\bar{C}_{\psi}d_{\mathcal{B}}\}+6\right)\Gamma_{at}(\delta)+4+\frac{c_0}{8},$$
where $\Gamma_{at}(\delta)$ is defined as in \eqref{eq:gamma:at}, thus the result follows.
\proofend

\subsection{Index Estimation under Known Reward Function}
Before introducing the formal proof, we first provide the proof's intuition. The argument follows a standard localized empirical-process approach for 
M-estimation, adapted to our martingale setting. The estimator's empirical 
optimality is combined with a population curvature lower bound, while the 
stochastic deviation is controlled uniformly over the parameter space using 
Freedman's martingale inequality, a peeling argument, and an $\epsilon$-net. 
Readers familiar with localized empirical-process and
martingale concentration arguments may skip the following intuition and proceed 
directly to the formal proof. 

\emph{(i) The optimality inequality.} Because $\hat{\rho}_{at}$ minimizes the empirical primitive loss, the empirical excess loss at $\hat{\rho}_{at}$ cannot be positive. Plug $\rho=\hat{\rho}_{at}$ into 
$$\sum_{s \in \mathcal{S}_{at}}\!\big\{\ell_{as}(\rho)-\ell_{as}(\rho_a)\big\}
\;=\;\sum_{s \in \mathcal{S}_{at}}\!\!\mathbb{E}\big[\ell_{as}(\rho)-\ell_{as}(\rho_a)\,\big|\,\mathcal{F}_{s-1},x_s\big]
\;+\;H_{at}(\rho)$$
and use the population curvature lower bound from Lemma~\ref{lem:known-theta-loss-curvature} (with a small Lipschitz adjustment so that the curvature holds uniformly over $\rho$, which costs only an additive $1$). After rearranging, one obtains $\frac{\kappa\mu_1}{2}\sum_{s\in\mathcal{S}_{at}}\!\big\{\psi(x_s)^\top(\hat{\rho}_{at}-\rho_a)\big\}^2
\;\leq\;-H_{at}(\hat{\rho}_{at})\;+1$.
The left side is the local quadratic ``signal'' we want to bound, and the right side is essentially the ``noise''.

\emph{(ii) Bounding the noise at one fixed $\rho$.} For each fixed $\rho$, every term inside $H_{at}(\rho)$ is uniformly bounded and has a conditional variance controlled by the same quadratic quantity $\sum_{s\in\mathcal{S}_{at}}\{\psi(x_s)^\top(\rho-\rho_a)\}^2$. Freedman's \citep[][]{freedman1975tail} martingale inequality (Lemma~\ref{lemma:freedman}) therefore gives, for each fixed $\rho$, $|H_{at}(\rho)|\;\lesssim\;\sqrt{\Big(\textstyle\sum_{s}\{\psi(x_s)^\top(\rho-\rho_a)\}^2\Big)\cdot\log(T/\delta)}\;+\;\log(T/\delta)$. The key point is that the stochastic error enters through the same quadratic form that governs the population curvature, allowing the error term to be absorbed into the curvature in the final bound.

\emph{(iii) Making the bound uniform in $\rho$.} Since $\hat{\rho}_{at}$ is itself random, we need step (ii) to hold for all $\rho$ simultaneously. We do this in two passes. First, a peeling argument splits the parameter space into dyadic shells based on the size of $\sum_{s}\{\psi(x_s)^\top(\rho-\rho_a)\}^2$; on each shell the variance proxy is replaced by a deterministic ceiling and Freedman's bound applies. Summing over the shells costs only a $\log\log$ factor. Second, an $\epsilon$-net argument extends the bound from a finite grid of $\rho$'s to all of $\mathcal{B}$ via the Lipschitz continuity of the loss primitive. The end product is a uniform bound on $|H_{at}(\rho)|$ that still scales with the same quadratic quantity.

\emph{(iv) Closing the loop.} Plug $\rho=\hat{\rho}_{at}$ into the uniform bound from step (iii) and combine with step (i). The resulting inequality has the schematic form $\text{quadratic}\lesssim\sqrt{\text{quadratic}}\cdot\sqrt{\log T}+\log T$. Applying $2ab\leq\epsilon a^2+\epsilon^{-1}b^2$ with a suitable $\epsilon$ absorbs the square-root term into the quadratic, leaving $1/n_{at}\sum_{s\in\mathcal S_{at}}\big\{\psi(x_s)^\top(\hat{\rho}_{at}-\rho_a)\big\}^2\;\lesssim\;\log T/n_{at}$.
Folding in the regularization $\eta_1I_m$ to ensure invertibility and applying Cauchy--Schwarz then yields the stated confidence radius.

\begin{proposition}[Restatement of Proposition~\ref{prop:rho:a:confidence:theta:known}]\label{prop:rho:a:confidence:theta:known:restatement}
Suppose Assumptions~\ref{ass:reg_res_index}, \ref{ass:losscurvature} hold.
When $\mu^*$ is known, given any $\delta>0$, with probability at least $1-\delta$, uniformly over all $t\in[T]$ and $a\in[A]$, 
\begin{equation}\label{eq:index:error:theta:known:restate}
    |(\hat{\rho}_{at}-\rho_a)^\top\psi(x_t)|\leq B_{at}^*\|\psi(x_t)\|_{V_{at}(\eta_1)^{-1}},
\end{equation}
where 
\begin{equation}\label{eq:B:at:star}   B_{at}^*:=\sqrt{\frac{8}{3\kappa\mu_1}\left(\frac{144}{\kappa\mu_1}+2\min\{2\overline{\iota},\bar{C}_{\psi}d_{\mathcal{B}}\}+6\right)\Gamma_{at}^*(\delta)+40/(3\kappa\mu_1)+1/3+\eta_1d_{\mathcal{B}}^2},
\end{equation}
$$\Gamma_{at}^*(\delta)\!:=m\log(1+2d_{\mathcal{B}}\bar{C}_{\psi}T)+\log\left(\left\lceil\log_2\big(1+n_{at}\min\{9\overline{\iota}^2,d_{\mathcal{B}}^2\bar{C}_{\psi}^2\}\right)\right\rceil+1\big)+\log(6AT/\delta),$$ 
and $\displaystyle V_{at}(\eta_1)=\eta_1I_m+\sum_{s\in\mathcal{S}_{at}}\psi(x_{s})\psi(x_{s})^\top$. 
\end{proposition}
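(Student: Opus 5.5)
The plan is to combine the optimality of $\hat\rho_{at}$ with the population curvature (Lemma~\ref{lemma:curvature:hat:rho:at}) and the uniform fluctuation bound (Lemma~\ref{lemma:H:empirical:process}). Then convert the resulting in-sample quadratic bound into a prediction bound at $x_t$ via Cauchy--Schwarz in the $V_{at}(\eta_1)$ geometry. Fix $a$ and apply Lemma~\ref{lemma:H:empirical:process} with $\delta$ replaced by $\delta/A$; this is where $\log(6AT/\delta)$ in $\Gamma^*_{at}$ comes from. A union bound over $a\in[A]$ then gives probability at least $1-\delta/3\ge 1-\delta$, uniformly in $t$. The estimator $\hat\rho_{at}$ is a function of data before period $t$, so it is adapted to $\mathcal F_{t-1}$ and is a legitimate choice of $\rho_t$ in both lemmas.

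\textbf{Step 1 (basic inequality).} Since $\hat\rho_{at}$ minimizes $\sum_{s\in\mathcal S_{at}}\ell_{as}(\rho)$ over $\mathcal B\ni\rho_a$, we have
\[
0\ge \sum_{s\in\mathcal S_{at}}\mathbb E[\ell_{as}(\hat\rho_{at})-\ell_{as}(\rho_a)\mid\mathcal F_{s-1},x_s]+H_{at}(\hat\rho_{at}).
\]
Multiply Lemma~\ref{lemma:curvature:hat:rho:at} by $n_{at}$ to lower bound the first sum by $\frac{\kappa\mu_1}{2}S-1$, where $S:=\sum_{s\in\mathcal S_{at}}\{\psi(x_s)^\top(\hat\rho_{at}-\rho_a)\}^2$. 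This gives $\frac{\kappa\mu_1}{2}S\le -H_{at}(\hat\rho_{at})+1$.

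\textbf{Step 2 (absorb the noise).} Choose $c_0=\kappa\mu_1$ in Lemma~\ref{lemma:H:empirical:process}. Then $-H_{at}(\hat\rho_{at})\le \frac{\kappa\mu_1}{8}S+(\frac{144}{\kappa\mu_1}+2\min\{2\bar\iota,\bar C_\psi d_{\mathcal B}\}+6)\Gamma^*_{at}+4+\frac{\kappa\mu_1}{8}$. Moving the $S$ term to the left leaves $\frac{3\kappa\mu_1}{8}S$ on that side. Dividing through yields
\[
S\le \frac{8}{3\kappa\mu_1}\Big(\tfrac{144}{\kappa\mu_1}+2\min\{2\bar\iota,\bar C_\psi d_{\mathcal B}\}+6\Big)\Gamma^*_{at}+\frac{40}{3\kappa\mu_1}+\frac13 .
\]
This matches the constants inside $B^*_{at}$, since $5\cdot\frac{8}{3\kappa\mu_1}=\frac{40}{3\kappa\mu_1}$ and $\frac{\kappa\mu_1}{8}\cdot\frac{8}{3\kappa\mu_1}=\frac13$.

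\textbf{Step 3 (regularize and project).} Note that $\|\hat\rho_{at}-\rho_a\|^2_{V_{at}(\eta_1)}=\eta_1\|\hat\rho_{at}-\rho_a\|_2^2+S\le \eta_1 d_{\mathcal B}^2+S\le (B^*_{at})^2$, using the diameter bound from Assumption~\ref{ass:reg_res_index}. Cauchy--Schwarz then gives $|(\hat\rho_{at}-\rho_a)^\top\psi(x_t)|\le \|\hat\rho_{at}-\rho_a\|_{V_{at}(\eta_1)}\|\psi(x_t)\|_{V_{at}(\eta_1)^{-1}}$, which is \eqref{eq:index:error:theta:known:restate}.

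\textbf{Main obstacle.} The delicate point is that $\hat\rho_{at}$ is data dependent, so the fixed-$\rho$ curvature and concentration statements must hold simultaneously for every $t$ on one high-probability event. This is exactly what the uniform versions over adapted $\rho_t$ in Lemmas~\ref{lemma:curvature:hat:rho:at} and~\ref{lemma:H:empirical:process} provide. The remaining checks are two edge cases. When $n_{at}=0$, $S=0$ and the bound is trivial. Assumption~\ref{ass:losscurvature} applies along the segment between $\hat\rho_{at}$ and $\rho_a$, which holds because $\mathcal B$ is convex.
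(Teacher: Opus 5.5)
Your proposal is correct and follows the paper's proof essentially step for step. You use the optimality basic inequality, then Lemma~\ref{lemma:curvature:hat:rho:at} scaled by $n_{at}$, then Lemma~\ref{lemma:H:empirical:process} with $c_0=\kappa\mu_1$ (your explicit $\delta/A$ union bound over boxes is what the paper does implicitly to obtain the $\log(6AT/\delta)$ term), and finally add $\eta_1 d_{\mathcal B}^2$ and apply Cauchy--Schwarz in the $V_{at}(\eta_1)$ geometry, with constants matching $B_{at}^*$ exactly.
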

\proof{Proof of Proposition~\ref{prop:rho:a:confidence:theta:known:restatement}}
For any $s\in[T]$, define $\mathcal{G}_{s-1}:=\sigma(\mathcal{F}_{s-1},x_{s})$. Note that 
\begin{equation}\label{eq:ineq:key:rho:known:theta}
\begin{array}{rl}
0&\geq_{(1)}\displaystyle\sum_{s\in\mathcal{S}_{at}}\ell_{as}(\hat{\rho}_{at})-\sum_{s\in\mathcal{S}_{at}}\ell_{as}(\rho_a)=_{(2)}\sum_{s\in\mathcal{S}_{at}}\mathbb{E}\left[\ell_{as}(\hat{\rho}_{at})-\ell_{as}(\rho_a)\mid\mathcal{G}_{s-1}\right]+H_{at}(\hat{\rho}_{at})\\
&\displaystyle\geq_{(3)}\frac{\kappa\mu_1}{2}\sum_{s\in\mathcal{S}_{at}}[\psi(x_{s})^\top(\hat{\rho}_{at}-\rho_a)]^2-1+H_{at}(\hat{\rho}_{at}),
\end{array}
\end{equation}
where (1) holds because $\displaystyle\hat{\rho}_{at}=\argmin_{\rho\in\mathcal{B}}\sum_{s\in\mathcal{S}_{at}}\ell_{as}(\rho)$, (2) holds with  
$$H_{at}(\rho)=\sum_{s\in\mathcal{S}_{at}}\left\{\ell_{as}(\rho)-\ell_{as}(\rho_a)-\mathbb{E}[\ell_{as}(\rho)-\ell_{as}(\rho_a)\mid\mathcal{G}_{s-1}]\right\}$$
defined as \eqref{eq:H:at:rho:known:theta},  and (3) follows from Lemma~\ref{lemma:curvature:hat:rho:at}. So \eqref{eq:ineq:key:rho:known:theta} implies that   
\begin{equation}\label{eq:rho:ineq:upper:known:theta}
\frac{\kappa\mu_1}{2}\sum_{s\in\mathcal{S}_{at}}[\psi(x_{s})^\top(\hat{\rho}_{at}-\rho_a)]^2\leq-H_{at}(\hat{\rho}_{at})+1.
\end{equation}
 Lemma~\ref{lemma:H:empirical:process} implies that with probability at least $1-\delta/3$, uniformly over all $t\in[T]$ and $a\in[A]$,
\begin{equation}\label{eq:H:at:rho:at:hat:bound:known:theta}
-H_{at}(\hat{\rho}_{at})\leq\frac{\kappa\mu_1}{8}\sum_{s\in\mathcal{S}_{at}}\{(\hat{\rho}_{at}-\rho_a)^\top\psi(x_{s})\}^2+\left(\frac{144}{\kappa\mu_1}+2\min\{2\overline{\iota},\bar{C}_{\psi}d_{\mathcal{B}}\}+6\right)\Gamma_{at}^*(\delta)+4+\frac{\kappa\mu_1}{8},
\end{equation}
where 
$\Gamma_{at}^*(\delta)\:=m\log(1+2d_{\mathcal{B}}\bar{C}_{\psi}T)+\log(Q_{at}+1)+\log\left(6AT/\delta\right)$. So \eqref{eq:rho:ineq:upper:known:theta} further implies that 
\begin{equation}\label{eq:rho:diff:psi:x:known:theta}
    \frac{3\kappa\mu_1}{8}\sum_{s\in\mathcal{S}_{at}}\{(\hat{\rho}_{at}-\rho_a)^\top\psi(x_{s})\}^2\leq\left(\frac{144}{\kappa\mu_1}+2\min\{2\overline{\iota},\bar{C}_{\psi}d_{\mathcal{B}}\}+6\right)\Gamma_{at}^*(\delta)+5+\frac{\kappa\mu_1}{8}.
\end{equation}
Combining with the fact that $\eta_1\|\hat{\rho}_{at}-\rho_a\|^2\leq\eta_1d_{\mathcal{B}}^2$, and recall that $V_{at}(\eta_1)=\eta_1I_m+\sum_{s\in\mathcal{S}_{at}}\psi(x_{s})\psi(x_{s})^\top$,
\eqref{eq:rho:diff:psi:x:known:theta} then implies that with probability at least $1-\delta/3$, uniformly over all $t\in[T]$ and $a\in[A]$,
\begin{equation}\label{eq:rho:diff:V:square:known:theta}
\begin{array}{rl}
\displaystyle\|\hat{\rho}_{at}-\rho_a\|_{V_{at}(\eta_1)}^2&\displaystyle\leq\frac{8}{3\kappa\mu_1}\left(\frac{144}{\kappa\mu_1}+2\min\{2\overline{\iota},\bar{C}_{\psi}d_{\mathcal{B}}\}+6\right)\Gamma_{at}^*(\delta)+40/(3\kappa\mu_1)+1/3+\eta_1d_{\mathcal{B}}^2.
\end{array}
\end{equation}
By Cauchy-Schwarz inequality, 
$|(\hat{\rho}_{at}-\rho_a)^\top\psi(x_t)|\leq\|\psi(x_t)\|_{V_{at}(\eta_1)^{-1}}\|\hat{\rho}_{at}-\rho_a\|_{V_{at}(\eta_1)}$, so the result follows.
\proofend

\subsection{Regret Analysis under Known Reward Function}
\proof{Proof of Proposition~\ref{prop:cumulative:regret:theta:known}}
Let $\mathcal{E}$ denote the event that uniformly over all $a\in[A]$ and $t\in[T]$,
\begin{equation}\label{eq:rho:at:star:diff}
    |(\hat{\rho}_{at}-\rho_a)^\top\psi(x_t)|\leq B_{at}^*\|\psi(x_t)\|_{V_{at}(\eta_1)^{-1}},
\end{equation}
where $\displaystyle V_{at}(\eta_1)$ is defined as in Proposition~\ref{prop:rho:a:confidence:theta:known} and $B_{at}^*$ is defined as \eqref{eq:B:at:star}. On event $\mathcal{E}$, $\widetilde{\sigma}_{at}\geq\sigma_{at}^*$ for all $a\in[A], t\in[T]$, where $\widetilde{\sigma}_{at}$ is defined as \eqref{eq:tilde:sigma:theta:known}. Then Theorem~\ref{thm:optimistic-regret-decomposition} implies that on $\mathcal{E}$,
$$\begin{array}{rl}
\displaystyle\mathbb{E}\left[\sum_{t=1}^T\Delta_t(\tilde{\pi})\big|\mathcal{E}\right]&\displaystyle\leq\mathbb{E}\left[\sum_{t=1}^T\sum_{a=1}^A(\widetilde{\sigma}_{at}-\sigma_{at}^*)\mathbb{I}\{a\in\mathcal{A}_t\}\big|\mathcal{E}\right]\\
&\displaystyle\leq\mathbb{E}\left[\sum_{a=1}^A\sum_{t=1}^T\left(\Lambda\left(\hat{\rho}_{at}^\top\psi(x_t)+B_{at}^*\|\psi(x_t)\|_{V_{at}(\eta_1)^{-1}}\right)-\Lambda(\rho_a^\top\psi(x_t))\right)\mathbb{I}\{a\in\mathcal{A}_t\}\big|\mathcal{E}\right]\\
&\displaystyle\leq_{(i)}\mathbb{E}\left[\sum_{a=1}^A\sum_{t=1}^TL\left|\hat{\rho}_{at}^\top\psi(x_t)+B_{at}^*\|\psi(x_t)\|_{V_{at}(\eta_1)^{-1}}-\rho_a^\top\psi(x_t)\right|\mathbb{I}\{a\in\mathcal{A}_t\}\big|\mathcal{E}\right]\\
&\displaystyle\leq_{(ii)}2LB_{T}^*\mathbb{E}\left[\sum_{a=1}^A\sum_{t=1}^T\|\psi(x_t)\|_{V_{at}(\eta_1)^{-1}}\mathbb{I}\{a\in\mathcal{A}_t\}\big|\mathcal{E}\right],
\end{array}$$
where (i) follows from the Lipschitz property of $\Lambda$ according to Assumption~\ref{ass:reg_res_index}, (ii) follows from \eqref{eq:rho:at:star:diff}, and $B_T^*=\sup_{a\in[A],t\leq T}B_{at}^*=\sqrt{\frac{8}{3\kappa\mu_1}\left(\frac{144}{\kappa\mu_1}+2\min\{2\overline{\iota},\bar{C}_{\psi}d_{\mathcal{B}}\}+6\right)\Gamma_{T}(\delta)+\frac{40}{3\kappa\mu_1}+\frac{1}{3}+\eta_1d_{\mathcal{B}}^2}$,
$$\Gamma_{T}(\delta)\!:=m\log(1+2d_{\mathcal{B}}\bar{C}_{\psi}T)+\log\left(\left\lceil\log_2\big(1+T\min\{9\overline{\iota}^2,d_{\mathcal{B}}^2\bar{C}_{\psi}^2\}\right)\right\rceil+1\big)+\log(6AT/\delta).$$ 
Then following similar proof steps as in the proof for Theorem~\ref{thm:expected:total:regret}, we have that on $\mathcal{E}$, 
$$\begin{array}{rl}
\displaystyle\sum_{a=1}^A\sum_{t=1}^T\|\psi(x_t)\|_{V_{at}(\eta_1)^{-1}}\mathbb{I}\{a\in\mathcal{A}_t\}&\displaystyle=\sum_{a=1}^A\sum_{s\in\mathcal{S}_{a, T+1}}\|\psi(x_{s})\|_{V_{as}(\eta_1)^{-1}}\\
&\displaystyle\leq A\sqrt{T(1+\bar{C}_{\psi}^2/\eta_1)m\log\left(1+T\bar{C}_{\psi}^2/(\eta_1m)\right)}.
\end{array}$$
Hence on $\mathcal{E}$, 
\begin{equation}\label{eq:high:prob:regret:E:known:theta}
    \mathbb{E}\left[\sum_{t=1}^T\Delta_t(\tilde{\pi})\,\bigg|\,\mathcal{E}\right]\!\leq2LB_{T}^*A\sqrt{T(1+\bar{C}_{\psi}^2/\eta_1)m\log\left(1+T\bar{C}_{\psi}^2/(\eta_1m)\right)}.
\end{equation}
Note that it always holds that $\widetilde{\sigma}_{at}-\sigma_{at}^*\leq2$, and recall from Proposition~\ref{prop:rho:a:confidence:theta:known} that $\mathbb{P}(\mathcal{E})\geq1-\delta$. On $\mathcal{E}^c$, the per-period regret is at most $1+2A$. Taking $\delta=1/T$ gives a failure event contribution of order $\mathrm{O}(A)$.
Thus ignoring logarithmic factors we have $\displaystyle\mathbb{E}\left[\sum_{t=1}^T\Delta_t(\tilde{\pi})\right]\leq\widetilde{O}\left(Am\sqrt{T}\right)$. 
\proofend

\section{Reward Estimation}\label{appendix:reward}
Lemma~\ref{lemma:technical:GLM:bandit:1} below is Theorem 1 from \citet{abbasi2011improved}:
\begin{lemma}\label{lemma:technical:GLM:bandit:1}
Let $\{v_s:s\geq0\}$ be an $\mathbb{R}^d$-valued stochastic process adapted to filtration $\{\mathcal{H}_s: s\geq0\}$, $\{\epsilon_s:s\geq1\}$ be a real-valued stochastic process adapted to $\{\mathcal{H}_s\}$. Assume that $\epsilon_s$ is conditionally sub-Gaussian such that there exists some $\gamma>0$ such that for any $u\in\mathbb{R}$, $s\geq1$, 
$\mathbb{E}\left[\exp\left(u\epsilon_s\right)|\mathcal{H}_{s-1}\right]\leq\exp\left(\frac{u^2\gamma^2}{2}\right)\ \textrm{a.s.}$
Assume that $V$ is a $d\times d$ positive definite matrix. For any $t\geq0$, define $\bar{V}_t=V+\sum_{s=1}^tv_sv_s^\top$, then for any $\delta>0$, with probability at least $1-\delta$, for all $t\geq0$, $\displaystyle\bigg\|\sum_{s=1}^t\epsilon_sv_s\bigg\|_{\bar{V}_t^{-1}}^2\leq2\gamma^2\log\left(\frac{\mathrm{det}(\bar{V}_t)^{1/2}\mathrm{det}(V)^{-1/2}}{\delta}\right)$.
\end{lemma}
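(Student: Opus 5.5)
The plan is the method-of-mixtures argument of \citet{abbasi2011improved}. Write $S_t:=\sum_{s=1}^t\epsilon_s v_s$ and $V_t^0:=\sum_{s=1}^t v_sv_s^\top$, so that $\bar V_t=V+V_t^0$.

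\emph{Step 1 (exponential supermartingale for a fixed direction).} For each fixed $x\in\mathbb R^d$, set
\[
M_t^x:=\exp\Bigl(x^\top S_t-\tfrac{\gamma^2}{2}\|x\|_{V_t^0}^2\Bigr).
\]
Because $v_s$ is $\mathcal H_{s-1}$-measurable, the conditional sub-Gaussian assumption applied with $u=x^\top v_s$ gives
\[
\mathbb E\bigl[\exp\bigl(x^\top v_s\,\epsilon_s-\tfrac{\gamma^2}{2}(x^\top v_s)^2\bigr)\mid\mathcal H_{s-1}\bigr]\le 1 .
\]
Hence $\{M_t^x\}$ is a nonnegative supermartingale with $M_0^x=1$, and so $\mathbb E[M_t^x]\le1$.

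\emph{Step 2 (mixing over directions).} Draw $x\sim\mathcal N(0,(\gamma^2V)^{-1})$ independently of everything else, and define $\bar M_t:=\int M_t^x\,d\mathcal N(0,(\gamma^2V)^{-1})(x)$. By Tonelli, $\bar M_t$ is again a nonnegative supermartingale with $\mathbb E[\bar M_t]\le 1$. The Gaussian integral can be computed exactly by completing the square. The exponent is
\[
x^\top S_t-\tfrac{\gamma^2}{2}x^\top\bar V_t x ,
\]
and integrating it gives the closed form
\[
\bar M_t=\Bigl(\frac{\det V}{\det\bar V_t}\Bigr)^{1/2}\exp\Bigl(\frac{1}{2\gamma^2}\|S_t\|_{\bar V_t^{-1}}^2\Bigr).
\]
With this identity, the event in the lemma is exactly $\{\bar M_t\ge 1/\delta\}$.

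\emph{Step 3 (uniformity in time).} This is the main obstacle: a bound for each fixed $t$ follows from Markov's inequality, but the lemma requires the bound to hold simultaneously for all $t\ge0$. I would use a stopping-time argument. Let
\[
\tau:=\min\{t\ge0:\ \bar M_t\ge 1/\delta\}\quad(\tau=\infty\text{ if no such }t).
\]
Since $\bar M_t$ is a nonnegative supermartingale, it converges almost surely to some $\bar M_\infty$ (supermartingale convergence theorem), so $\bar M_\tau$ is well defined also on $\{\tau=\infty\}$. Fatou's lemma applied to the stopped process $\bar M_{\tau\wedge t}$, together with optional stopping for bounded stopping times, gives $\mathbb E[\bar M_\tau]\le\liminf_t\mathbb E[\bar M_{\tau\wedge t}]\le1$. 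Markov's inequality then yields
\[
\mathbb P(\exists t:\bar M_t\ge1/\delta)=\mathbb P(\tau<\infty)\le\mathbb P(\bar M_\tau\ge1/\delta)\le\delta .
\]
Equivalently, this step is Ville's maximal inequality for nonnegative supermartingales.

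\emph{Step 4 (conclusion).} On the complementary event, which has probability at least $1-\delta$, we have for every $t$
\[
\|S_t\|_{\bar V_t^{-1}}^2\le 2\gamma^2\log\bigl(\det(\bar V_t)^{1/2}\det(V)^{-1/2}/\delta\bigr),
\]
which is the claimed bound.
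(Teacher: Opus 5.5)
Your proposal is correct and is essentially the proof of Theorem~1 in \citet{abbasi2011improved}, which the paper cites rather than proves. That proof uses the same three ingredients as yours: the fixed-direction exponential supermartingale, the Gaussian mixture (your closed form for $\bar M_t$ under the prior $\mathcal N(0,(\gamma^2V)^{-1})$ is right), and a stopping-time/Ville argument for uniformity in $t$. The one cosmetic difference is that you show the mixture is itself a supermartingale, whereas they apply optional stopping to each $M^x$ and then average. Your Step~1 uses that $v_s$ is $\mathcal H_{s-1}$-measurable, which is the hypothesis of the original theorem and is genuinely needed: under mere adaptedness as literally stated, taking $v_s=\epsilon_s$ breaks the bound. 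So you are correctly reading the lemma's ``adapted'' as predictability.
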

Lemma~\ref{lemma:G:diff:hat:theta} below provides a stronger result which implies Lemma~\ref{lemma:G:diff:hat:theta:main} directly:
\begin{lemma}\label{lemma:G:diff:hat:theta}
Suppose Assumptions \ref{ass:G:phi:regularity}, \ref{ass:reward:perturbed} hold. Fix any $\delta>0$. Then for any $x\in\mathcal{X}$, $\omega\in\Omega$, and $t\geq2$, with probability at least $1-\delta$, the following holds:
\begin{equation}\label{eq:theta:bound:x:w}
\begin{array}{rl}
&\quad\displaystyle\left|\theta_*^\top \phi(x,\omega)-\hat{\theta}_{t-1}^\top \phi(x,\omega)\right|\\
&\displaystyle\leq\frac{2}{\min\{1,\underline{\mu}\}}\|\phi(x,\omega)\|_{\Phi_{t-1}^{-1}}\bigg(\gamma_0\sqrt{d\log(1+t\bar{C}_{\phi}^2/\eta_0)+2\log(1/\delta)}+\sqrt{\eta_0}\bar{\alpha}\bigg).
\end{array}
\end{equation}
Particularly, with probability at least $1-\delta/3$, uniformly over all $a\in[A]$ and $t\geq 2$, 
\begin{equation}\label{eq:theta:bound:uniform}
\big|(\hat\theta_{t-1}-\theta_*)^\top\phi(x_t,\omega_{at})\big|\;\leq\;\beta_t\;\big\|\phi(x_t,\omega_{at})\big\|_{\Phi_{t-1}^{-1}},
\end{equation}
where 
\begin{equation}\label{eq:beta:t-1}
    \beta_{t}:=\frac{2}{\min\{1,\underline{\mu}\}}\bigg(\gamma_0\sqrt{d\log(1+t\bar{C}_{\phi}^2/\eta_0)+2\log(3/\delta)}+\sqrt{\eta_0}\bar{\alpha}\bigg).
\end{equation}
\end{lemma}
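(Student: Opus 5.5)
The plan follows the standard Filippi et al.\ argument for the projected quasi-MLE, adapted to the ridge term $\eta_0\theta$. Define the map
\[
g_{t-1}(\theta):=\sum_{k=1}^{t-1}G(\theta^\top\phi_k)\phi_k+\eta_0\theta,
\]
where $\phi_k:=\phi(x_k,\omega_{a_kk})$. With this notation, the estimator in \eqref{eq:hat:theta:def} minimizes $\|\sum_k r_k\phi_k-g_{t-1}(\theta)\|_{\Phi_{t-1}^{-1}}$ over $\Theta$.

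\emph{Step 1: linearize $g_{t-1}$.} By the mean value theorem,
\[
g_{t-1}(\theta_1)-g_{t-1}(\theta_2)=M(\theta_1-\theta_2),\qquad M=\sum_k\alpha_k\phi_k\phi_k^\top+\eta_0 I_d,
\]
where $\alpha_k=\int_0^1G'(\cdot)\,dv\ge\underline\mu$. This uses convexity of $\Theta$ and Assumption~\ref{ass:G:phi:regularity}. Consequently $M\succeq\min\{1,\underline\mu\}\Phi_{t-1}$.

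\emph{Step 2: reduce to a noise term.} For any fixed $(x,\omega)$, Cauchy--Schwarz gives
\[
|\phi^\top(\theta_*-\hat\theta_{t-1})|\le\|\phi\|_{\Phi_{t-1}^{-1}}\,\|M(\theta_*-\hat\theta_{t-1})\|_{\Phi_{t-1}^{-1}}\,/\min\{1,\underline\mu\}.
\]
This uses $\|M^{-1}\|$ in the $\Phi$-geometry: $\Phi^{1/2}M^{-1}\Phi^{1/2}\preceq\Phi^{1/2}(c\Phi)^{-1}\Phi^{1/2}=c^{-1}I$. Next write
\[
M(\theta_*-\hat\theta)=g(\theta_*)-g(\hat\theta)=\Big[g(\theta_*)-\sum_k r_k\phi_k\Big]+\Big[\sum_k r_k\phi_k-g(\hat\theta)\Big].
\]
By optimality of $\hat\theta$ and feasibility of $\theta_*\in\Theta$, the second bracket has $\Phi^{-1}$-norm at most that of the first. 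Hence the total is at most $2\|\sum_k\zeta_k\phi_k-\eta_0\theta_*\|_{\Phi_{t-1}^{-1}}$, using $r_k-G(\theta_*^\top\phi_k)=\zeta_k$. This produces the factor $2$. The bias term satisfies $\|\eta_0\theta_*\|_{\Phi^{-1}}\le\sqrt{\eta_0}\bar\alpha$, since $\Phi\succeq\eta_0 I$.

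\emph{Step 3: self-normalized bound.} Apply Lemma~\ref{lemma:technical:GLM:bandit:1} with $\mathcal H_k$ generated by $\mathcal F_k$ together with $(x_{k+1},\omega_{a_{k+1},k+1})$. The noise $\zeta_k$ is conditionally mean zero by Assumption~\ref{ass:reward:perturbed}. It also lies in an interval of length $\gamma_0$, since $r_k\in[0,\gamma_0]$ and $G\in[0,1]$, so Hoeffding's lemma makes it $\gamma_0/2$-sub-Gaussian (I would use $\gamma_0$ to be safe). The determinant bound $\log\det\Phi_{t-1}/\det(\eta_0I)\le d\log(1+t\bar C_\phi^2/(d\eta_0))\le d\log(1+t\bar C_\phi^2/\eta_0)$ then gives
\[
\Big\|\sum_k\zeta_k\phi_k\Big\|_{\Phi^{-1}}\le\gamma_0\sqrt{d\log(1+t\bar C_\phi^2/\eta_0)+2\log(1/\delta)}.
\]
Because Lemma~\ref{lemma:technical:GLM:bandit:1} holds uniformly in time, and the reduction in Steps 1--2 is deterministic and holds for all $\phi$ simultaneously, the event obtained does not depend on $(x,\omega)$. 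This yields \eqref{eq:theta:bound:x:w}. Replacing $\delta$ by $\delta/3$ gives \eqref{eq:theta:bound:uniform} uniformly over $t$ and all $a$, including the random query points $\phi(x_t,\omega_{at})$, with no union bound needed.

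\emph{Main obstacle.} The delicate point is the filtration bookkeeping in Step 3. The regressor $\phi_k$ must be $\mathcal H_{k-1}$-measurable while $\zeta_k$ stays conditionally centered given it and the other outputs, and Assumption~\ref{ass:reward:perturbed} is stated exactly to ensure this. The mean-value step also needs care: it needs only $G'\ge\underline\mu$ along segments inside $\Theta$, which convexity of $\Theta$ provides.
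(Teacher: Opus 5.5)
Your proposal is correct and follows essentially the same route as the paper: a Filippi-style linearization with a matrix dominating $\min\{1,\underline{\mu}\}\Phi_{t-1}$, the factor $2$ from optimality of the projected estimator plus feasibility of $\theta_*$, the ridge bias bound $\sqrt{\eta_0}\bar{\alpha}$, and the Abbasi-Yadkori self-normalized inequality on an event that does not depend on $(x,\omega)$. Where you differ, your version is slightly cleaner. The paper passes through the unconstrained root $\tilde\theta_{t-1}$ of the penalized score equation, but only uses $g_t(\tilde\theta_{t-1})=\sum_k r_k\phi_k$; you instead linearize directly between $\theta_*$ and $\hat\theta_{t-1}$ inside the convex set $\Theta$, where $G'\ge\underline{\mu}$ is actually guaranteed. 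Your filtration $\sigma(\mathcal{F}_{k-1},x_k,\omega_{a_kk})$ also makes the regressors predictable, as the self-normalized lemma requires.
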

The proof of Lemma~\ref{lemma:G:diff:hat:theta} follows closely from Proposition 1 of \citet{filippi2010parametric}. 
\proof{Proof of Lemma~\ref{lemma:G:diff:hat:theta}}
Let $\displaystyle g_t(\theta)=\sum_{k=1}^{t-1}G(\theta^\top \phi(x_k,\omega_{a_kk}))\phi(x_k,\omega_{a_kk})+\eta_0\theta$ be the invertible function such that $\displaystyle g_t(\tilde{\theta}_{t-1})=\sum_{k=1}^{t-1}r_k\phi(x_k,\omega_{a_kk})$, where $\tilde{\theta}_{t-1}$ is the unique solution to \eqref{eq:theta:esitmate:t-1}:
\begin{equation}\label{eq:theta:esitmate:t-1}
-\eta_0\theta+\!\sum_{k=1}^{t-1}\left\{r_k-G\left(\theta^\top \phi(x_k,\omega_{a_kk})\right)\right\}\phi(x_k,\omega_{a_kk})=0,
\end{equation}
By Assumption~\ref{ass:G:phi:regularity}, $\nabla g_t$ is continuous, so by the Fundamental Theorem of Calculus, 
\begin{equation}\label{eq:g:calculus}
g_t(\theta_*)-g_t(\tilde{\theta}_{t-1})=\Gamma_t(\theta_*-\tilde{\theta}_{t-1}),
\end{equation}
where $\displaystyle \Gamma_t=\int_0^1\nabla g_t(s\theta_*+(1-s)\tilde{\theta}_{t-1})ds$,
and 
$$\begin{array}{rl}
\nabla g_t(\theta)&\displaystyle=\eta_0I_d+\sum_{k=1}^{t-1}\phi(x_k,\omega_{a_kk})\phi(x_k,\omega_{a_kk})^\top G'\left(\theta^\top \phi(x_k,\omega_{a_kk})\right)\\
&\displaystyle\succeq_{(i)}\eta_0I_d+\underline{\mu}\sum_{k=1}^{t-1}\phi(x_k,\omega_{a_kk})\phi(x_k,\omega_{a_kk})^\top,
\end{array}$$
where (i) follows because $G'\left(\theta^\top \phi(x_k,\omega_{a_kk})\right)\geq\underline{\mu}$ according to Assumption~\ref{ass:G:phi:regularity}. Hence for any $t\in[T]$,
\begin{equation}\label{eq:G:t:ineq}
\Gamma_t\succeq\min\{1,\underline{\mu}\}\Phi_{t-1}\succeq\min\{1,\underline{\mu}\}\eta_0I_d\succ0,
\end{equation}
So $\Gamma_t$ is positive definite and is non-singular for any $t\in[T]$. Therefore, 
\begin{equation}\label{eq:diff:G:abs}
\begin{array}{rl}
\left|\theta_*^\top \phi(x,\omega)-\tilde{\theta}_{t-1}^\top \phi(x,\omega)\right|&\displaystyle=_{(1)}|\phi(x,\omega)^\top \Gamma_t^{-1}\{g_t(\theta_*)-g_t(\tilde{\theta}_{t-1})\}|\\
&\displaystyle\leq_{(2)} \|\phi(x,\omega)\|_{\Gamma_t^{-1}}\|g_t(\theta_*)-g_t(\tilde{\theta}_{t-1})\|_{\Gamma_t^{-1}},
\end{array}
\end{equation}
where (1) of \eqref{eq:diff:G:abs} follows from \eqref{eq:g:calculus}, and (2) follows from Cauchy-Schwarz inequality and the fact that $\Gamma_t^{-1}$ is positive definite. \eqref{eq:G:t:ineq} implies that $\Gamma_t\succeq\min\{1,\underline{\mu}\}\Phi_{t-1}$, which further implies $\max\{1,1/\underline{\mu}\}\Phi_{t-1}^{-1}\succeq \Gamma_t^{-1}$, so $\displaystyle\|v\|_{\Gamma_t^{-1}}\leq\max\left\{1/\sqrt{\underline{\mu}},1\right\}\|v\|_{\Phi_{t-1}^{-1}},\ \forall v\in\mathbb{R}^d$.
Hence, \eqref{eq:diff:G:abs} further implies 
\begin{equation}\label{eq:diff:G:1}
\left|\theta_*^\top \phi(x,\omega)-\tilde{\theta}_{t-1}^\top \phi(x,\omega)\right|\leq\max\bigg\{\frac{1}{\underline{\mu}},1\bigg\}\|\phi(x,\omega)\|_{\Phi_{t-1}^{-1}}\|g_t(\theta_*)-g_t(\tilde{\theta}_{t-1})\|_{\Phi_{t-1}^{-1}}.
\end{equation}
Further, 
\begin{equation}\label{eq:g:diff}
\begin{array}{rl}
\|g_t(\theta_*)-g_t(\hat{\theta}_{t-1})\|_{\Phi_{t-1}^{-1}}&\displaystyle\leq\|g_t(\theta_*)-g_t(\tilde{\theta}_{t-1})\|_{\Phi_{t-1}^{-1}}+\|g_t(\tilde{\theta}_{t-1})-g_t(\hat{\theta}_{t-1})\|_{\Phi_{t-1}^{-1}}\\
&\displaystyle\leq 2\|g_t(\theta_*)-g_t(\tilde{\theta}_{t-1})\|_{\Phi_{t-1}^{-1}},
\end{array}
\end{equation}
where the first inequality of \eqref{eq:g:diff} follows from triangle inequality, and the second inequality of \eqref{eq:g:diff} follows from the fact that $\theta_*\in\Theta$ and the optimality of $\hat{\theta}_{t-1}$ in $\Theta$ by definition. Recall that 
$\displaystyle g_t(\tilde{\theta}_{t-1})-g_t(\theta_*)=\sum_{k=1}^{t-1}\phi(x_k,\omega_{a_kk})\{r_k-G(\theta_*^\top \phi(x_k,\omega_{a_kk}))\}-\eta_0\theta_*$,
so \eqref{eq:diff:G:1} and \eqref{eq:g:diff} imply that 
$$\begin{array}{rl}
&\displaystyle\quad\left|\theta_*^\top \phi(x,\omega)-\hat{\theta}_{t-1}^\top \phi(x,\omega)\right|\\
&\displaystyle\leq\max\bigg\{\frac{2}{\underline{\mu}},2\bigg\}\|\phi(x,\omega)\|_{\Phi_{t-1}^{-1}}\left\|\sum_{k=1}^{t-1}\phi(x_k,\omega_{a_kk})\{r_k-G(\theta_*^\top \phi(x_k,\omega_{a_kk}))\}-\eta_0\theta_*\right\|_{\Phi_{t-1}^{-1}}\\
&\displaystyle\leq_{(i)}\max\bigg\{\frac{2}{\underline{\mu}},2\bigg\}\|\phi(x,\omega)\|_{\Phi_{t-1}^{-1}}\bigg(\left\|\sum_{k=1}^{t-1}\phi(x_k,\omega_{a_kk})\{r_k-G(\theta_*^\top \phi(x_k,\omega_{a_kk}))\}\right\|_{\Phi_{t-1}^{-1}}+\eta_0\left\|\theta_*\right\|_{\Phi_{t-1}^{-1}}\bigg)\\
&\displaystyle\leq_{(ii)}\max\bigg\{\frac{2}{\underline{\mu}},2\bigg\}\|\phi(x,\omega)\|_{\Phi_{t-1}^{-1}}\bigg(\left\|\sum_{k=1}^{t-1}\phi(x_k,\omega_{a_kk})\{r_k-G(\theta_*^\top \phi(x_k,\omega_{a_kk}))\}\right\|_{\Phi_{t-1}^{-1}}+\sqrt{\eta_0}\|\theta_*\|_2\bigg),
\end{array}$$
holds for all $x,\omega$, where inequality (i) above holds from triangle inequality, and inequality (ii) above holds since $\Phi_{t-1}\succeq\eta_0I_d$ so that $\|\theta_*\|_{\Phi_{t-1}^{-1}}\leq\|\theta_*\|_2/\sqrt{\eta_0}$. We now apply Lemma~\ref{lemma:technical:GLM:bandit:1} to bound $\displaystyle\left\|\sum_{k=1}^{t-1}\phi(x_k,\omega_{a_kk})\{r_k-G(\theta_*^\top \phi(x_k,\omega_{a_kk}))\}\right\|_{\Phi_{t-1}^{-1}}$. Set $v_k=\phi(x_k,\omega_{a_kk})$, $\epsilon_k=\zeta_k$, $\mathcal{H}_k=\sigma(v_s,\epsilon_s;s\leq k)$ and $\bar{V}_t=\Phi_{t-1}$. Note that $|\epsilon_k|\leq\gamma_0$, so $\epsilon_k$ is $\gamma_0$-sub-Gaussian, meaning that for any $u\in\mathbb{R}$, $t\geq1$, $\mathbb{E}\left[\exp\left(u\zeta_t\right)|\mathcal{H}_{t-1}\right]\leq\exp\left(\frac{u^2\gamma_0^2}{2}\right)\ \textrm{a.s.}$
Further, by Assumption~\ref{ass:G:phi:regularity}, $\|\phi(x_k,\omega_{a_kk})\|_2\leq\bar{C}_{\phi}$, implying that $\mathrm{det}(\bar{V}_t)\leq\left(\eta_0+(t-1)\bar{C}_{\phi}^2\right)^d$.
So by Lemma~\ref{lemma:technical:GLM:bandit:1}, given any $\delta>0$, with probability at least $1-\delta$, for all $t\geq2$ we have 
\begin{equation}\label{eq:Phi:norm:bound}
\left\|\sum_{k=1}^{t-1}\phi(x_k,\omega_{a_kk})\{r_k-G(\theta_*^\top \phi(x_k,\omega_{a_kk}))\}\right\|_{\Phi_{t-1}^{-1}}\leq\gamma_0\sqrt{d\log(1+t\bar{C}_{\phi}^2/\eta_0)+2\log(1/\delta)}.
\end{equation}
Note that $\gamma_0>1$ according to Assumption~\ref{ass:reward:perturbed}, hence with probability at least $1-\delta$, for all $t\geq2$ and all $x\in\mathcal{X}$ and $\omega\in\Omega$, 
$$\begin{array}{rl}
\displaystyle\left|\theta_*^\top \phi(x,\omega)-\hat{\theta}_{t-1}^\top \phi(x,\omega)\right|\leq\frac{2\|\phi(x,\omega)\|_{\Phi_{t-1}^{-1}}}{\min\{1,\underline{\mu}\}}\bigg(\gamma_0\sqrt{d\log(1+t\bar{C}_{\phi}^2/\eta_0)+2\log(1/\delta)}+\sqrt{\eta_0}\|\theta_*\|_2\bigg).
\end{array}$$
Hence \eqref{eq:theta:bound:x:w} follows from Assumption~\ref{ass:G:phi:regularity} that $\|\theta_*\|_2\leq \bar{\alpha}$. This immediately implies the high-probability bound \eqref{eq:theta:bound:uniform} taken uniformly over all $a\in[A], t\geq2$. 
\proofend

\section{Minimum Eigenvalue of \texorpdfstring{$\Phi_{t-1}$}{Phi}}\label{appendix:minimum:eigenvalue}
In this section, we first verify the examples satisfying Assumption~\ref{ass:projection:anti:concentration} (Lemma~\ref{lemma:example:gaussian:assumption}, Lemma~\ref{lemma:example:student}, Lemma~\ref{lemma:example:uniform}). Then we show that Assumption~\ref{ass:projection:anti:concentration} implies a lower bound on the minimum eigenvalue for $\Phi_{t-1}$ (Proposition~\ref{prop:minimum:eigenvalue}). We first present Lemma~\ref{lemma:truncation:projection:smallball}, which is useful for proving Lemmas~\ref{lemma:example:gaussian:assumption} --~\ref{lemma:example:uniform}.
\begin{lemma}\label{lemma:truncation:projection:smallball}
Let $Z^0_{at}\in\mathbb R^d$ be a possibly unbounded random vector. Suppose 
$\mathbb P\left(|v^\top Z^0_{at}|\le \epsilon
\,\middle|\,\mathcal F_{t-1}\right)\le C_T\epsilon, \forall v\in\mathbb S^{d-1}, \epsilon>0$,
and suppose that $\mathbb P\left(\|Z^0_{at}\|_2\le \bar C_{\phi}\,\middle|\,\mathcal F_{t-1}\right)\ge q_T\ \text{a.s.}$ for some deterministic $q_T\in(0,1]$. Define the bounded vector $Z_{at}$
by the conditional law
$Z_{at}\sim\mathcal L\left(Z^0_{at}\,\middle|\,\|Z^0_{at}\|_2\le \bar C_{\phi},\mathcal{F}_{t-1}\right)$.
If $\frac{C_T}{q_T}\le M_T$, then $Z_{at}$ satisfies Assumption~\ref{ass:projection:anti:concentration}.
Moreover, $\|Z_{at}\|_2\le \bar C_{\phi}$ almost surely. 
\end{lemma}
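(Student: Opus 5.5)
The argument is a conditioning computation: the small-ball probability of the truncated vector is bounded by the small-ball probability of the untruncated one, divided by the truncation mass. Fix $t$, $a$, $v\in\mathbb S^{d-1}$ and $0<\epsilon\le (2AM_T)^{-1}$. By the definition of $Z_{at}$ as the conditional law of $Z^0_{at}$ given $\{\|Z^0_{at}\|_2\le\bar C_\phi\}$ and $\mathcal F_{t-1}$, I would write
\[
\mathbb P\left(|v^\top Z_{at}|\le\epsilon\,\middle|\,\mathcal F_{t-1}\right)
=\frac{\mathbb P\left(|v^\top Z^0_{at}|\le\epsilon,\ \|Z^0_{at}\|_2\le\bar C_\phi\,\middle|\,\mathcal F_{t-1}\right)}{\mathbb P\left(\|Z^0_{at}\|_2\le\bar C_\phi\,\middle|\,\mathcal F_{t-1}\right)}.
\]
This is well defined almost surely because the denominator is at least $q_T>0$.

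Next I would bound the numerator by dropping the norm constraint, so it is at most $\mathbb P(|v^\top Z^0_{at}|\le\epsilon\mid\mathcal F_{t-1})\le C_T\epsilon$ by the first hypothesis. The denominator is at least $q_T$ by the second hypothesis. Combining these gives the conditional bound $(C_T/q_T)\epsilon\le M_T\epsilon$ almost surely.

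The unconditional statement required in Assumption~\ref{ass:projection:anti:concentration} then follows by taking expectations of this conditional bound. The same argument also gives the stronger conditional form \eqref{eq:ext_anti_concent} if one conditions on $x_t$ as well. The range restriction $\epsilon\le(2AM_T)^{-1}$ plays no role, since the bound holds for every $\epsilon>0$.

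The bound $\|Z_{at}\|_2\le\bar C_\phi$ almost surely is immediate, because $Z_{at}$ is supported on the event $\{\|Z^0_{at}\|_2\le\bar C_\phi\}$.

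The only delicate point is measure-theoretic: one must justify the ratio formula for the regular conditional distribution. This is valid because the conditioning event has conditional probability bounded below by $q_T$ almost surely, so no division by zero occurs. I would state this explicitly rather than treat it as an obstacle.
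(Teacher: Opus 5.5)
Your proposal is correct and matches the paper's proof: write the small-ball probability of $Z_{at}$ as the ratio for the truncated conditional law, drop the norm constraint in the numerator to get at most $C_T\epsilon$, bound the denominator below by $q_T$, and note that $\|Z_{at}\|_2\le\bar C_\phi$ holds by construction. Your aside about the stronger form \eqref{eq:ext_anti_concent} would need both hypotheses to hold conditionally on $(\mathcal F_{t-1},x_t)$, which the lemma does not assume, but that aside is not needed for the statement.
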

\proof{Proof of Lemma~\ref{lemma:truncation:projection:smallball}.}
By construction, $\|Z_{at}\|_2\le \bar C_{\phi}\ \ \text{a.s.}$. Fix $v\in\mathbb S^{d-1}$ and $\epsilon>0$. Then
$$\begin{array}{rl}
\mathbb P\left(|v^\top Z_{at}|\le \epsilon\,\middle|\,\mathcal F_{t-1}\right)&\displaystyle=\mathbb P\left(|v^\top Z^0_{at}|\le \epsilon\,\middle|\,\|Z^0_{at}\|\leq\bar{C}_{\phi},\mathcal F_{t-1}\right)=\frac{\mathbb P\left(|v^\top Z^0_{at}|\le \epsilon,\,\|Z^0_{at}\|\leq\bar{C}_{\phi}\,\middle|\,\mathcal F_{t-1}\right)}{\mathbb P\left(\|Z^0_{at}\|\leq\bar{C}_{\phi}\,\middle|\,\mathcal F_{t-1}\right)}\\
&\displaystyle\le\frac{\mathbb P\left(|v^\top Z^0_{at}|\le \epsilon\,\middle|\,\mathcal F_{t-1}\right)}{q_T}\le\frac{C_T}{q_T}\epsilon\le M_T\epsilon.
\end{array}$$
In particular, the above inequality holds for all
\(0<\epsilon\le (2AM_T)^{-1}\), so Assumption~\ref{ass:projection:anti:concentration}
holds.
\proofend
\begin{lemma}[Truncated Gaussian]\label{lemma:example:gaussian:assumption}
Assumption~\ref{ass:projection:anti:concentration} holds under the following conditions:
\begin{itemize}
    \item[(i)] Conditional on $\mathcal{F}_{t-1}$, $Z_{at}\sim\mathcal{N}(\mu_{at},\Sigma_{at})$ and $\phi(x_t,\omega_{at})\sim\mathcal{L}(Z_{at}\mid \|Z_{at}\|_2\le \bar C_{\phi},\mathcal{F}_{t-1})$.
    \item[(ii)]There exist deterministic sequences $\sigma_T>0$, $\bar{\sigma}_T>0$ and $\bar{\mu}_T\in[0,\bar{C}_{\phi})$ such that uniformly over $a\in[A], t\in[T]$, 
$\lambda_{\min}(\Sigma_{at})\ge \frac{1}{\sigma_T^2},\ \lambda_{\max}(\Sigma_{at})\leq\bar{\sigma}_T^2, \ \|\mu_{at}\|_2\leq\bar{\mu}_T,\ \ \mathrm{a.s.}$
\item[(iii)] Let $F_{\chi_d^2}(\cdot)$ denote the cumulative distribution function of a chi-square random variable with $d$ degrees of freedom. Suppose $\sqrt{2/\pi}\sigma_T\leq M_TF_{\chi_d^2}\left(\frac{(\bar{C}_{\phi}-\bar{\mu}_T)^2}{\bar{\sigma}_T^2}\right)$ where $M_T=\sqrt{\mathrm{polylog}(T)}/A$. 
\end{itemize}
\end{lemma}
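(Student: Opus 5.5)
The plan is to reduce the claim to Lemma~\ref{lemma:truncation:projection:smallball}, with $Z^0_{at}\sim\mathcal N(\mu_{at},\Sigma_{at})$ conditional on $\mathcal F_{t-1}$ and $Z_{at}=\phi(x_t,\omega_{at})$ its truncation to the ball of radius $\bar C_\phi$. Two ingredients are required: a small-ball constant $C_T$ for one-dimensional projections of the untruncated Gaussian, and a deterministic lower bound $q_T$ on the conditional mass $\mathbb P(\|Z^0_{at}\|_2\le\bar C_\phi\mid\mathcal F_{t-1})$. It then remains to show that $C_T/q_T\le M_T$ follows from condition (iii).

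\emph{Small-ball constant.} Fix $v\in\mathbb S^{d-1}$. Conditional on $\mathcal F_{t-1}$, the projection $v^\top Z^0_{at}$ is $N(v^\top\mu_{at},v^\top\Sigma_{at}v)$ with variance $s^2:=v^\top\Sigma_{at}v\ge\lambda_{\min}(\Sigma_{at})\ge\sigma_T^{-2}$. A univariate Gaussian density is bounded by $1/(\sqrt{2\pi}s)$ irrespective of its mean, so
$$\mathbb P(|v^\top Z^0_{at}|\le\epsilon\mid\mathcal F_{t-1})\le \frac{2\epsilon}{\sqrt{2\pi}s}\le\sqrt{2/\pi}\,\sigma_T\,\epsilon .$$
This gives $C_T=\sqrt{2/\pi}\,\sigma_T$.

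\emph{Truncation mass.} Write $Z^0_{at}=\mu_{at}+\Sigma_{at}^{1/2}g$ with $g\sim\mathcal N(0,I_d)$. Then $\|Z^0_{at}\|_2\le\|\mu_{at}\|_2+\|\Sigma_{at}^{1/2}\|_{\mathrm{op}}\|g\|_2\le\bar\mu_T+\bar\sigma_T\|g\|_2$. Consequently, the event $\{\|g\|_2^2\le(\bar C_\phi-\bar\mu_T)^2/\bar\sigma_T^2\}$ implies $\|Z^0_{at}\|_2\le\bar C_\phi$; here the assumption $\bar\mu_T<\bar C_\phi$ guarantees the radius is positive. Since $\|g\|_2^2\sim\chi^2_d$, we obtain
$$q_T:=F_{\chi^2_d}\bigl((\bar C_\phi-\bar\mu_T)^2/\bar\sigma_T^2\bigr)\in(0,1],$$
which is deterministic and valid almost surely, uniformly in $a$ and $t$.

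\emph{Conclusion.} Condition (iii) is exactly $C_T\le M_Tq_T$, i.e.\ $C_T/q_T\le M_T$. Lemma~\ref{lemma:truncation:projection:smallball} therefore yields the anti-concentration bound for $\phi(x_t,\omega_{at})$ for all $\epsilon\le(2AM_T)^{-1}$, together with $\|\phi\|_2\le\bar C_\phi$.

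\emph{Where care is needed.} The only delicate point is conditioning. Assumption~\ref{ass:projection:anti:concentration} is stated as an almost-sure (conditional) probability, while Lemma~\ref{lemma:truncation:projection:smallball} works conditionally on $\mathcal F_{t-1}$. The Gaussian parameters $(\mu_{at},\Sigma_{at})$ may be $\mathcal F_{t-1}$-measurable, but every bound above relies only on the deterministic envelopes in (ii). The constants are therefore uniform, and the conditional statements hold almost surely as required.
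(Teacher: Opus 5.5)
Your proposal is correct and follows the same route as the paper's proof. You bound the projected Gaussian density by $\sigma_T/\sqrt{2\pi}$ to get $C_T=\sqrt{2/\pi}\,\sigma_T$, and you lower-bound the truncation mass via $\|Z_{at}\|_2\le\bar\mu_T+\bar\sigma_T\|Z\|_2$ to get $q_T=F_{\chi^2_d}\bigl((\bar C_\phi-\bar\mu_T)^2/\bar\sigma_T^2\bigr)$; condition (iii) then gives $C_T/q_T\le M_T$, so Lemma~\ref{lemma:truncation:projection:smallball} applies.
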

\proof{Proof of Lemma~\ref{lemma:example:gaussian:assumption}}
Fix $a\in[A],t\in[T]$ and $v\in\mathbb{S}^{d-1}$. Conditional on $\mathcal{F}_{t-1}$, $v^\top Z_{at}\sim\mathcal{N}(v^\top\mu_{at},v^\top\Sigma_{at}v)$.
Since $v\in\mathbb{S}^{d-1}$, $v^\top\Sigma_{at}v\geq1/\sigma_T^2$. The density of $v^\top Z_{at}$ is bounded by $\displaystyle\frac{1}{\sqrt{2\pi}\sqrt{v^\top\Sigma_{at}v}}\leq\frac{\sigma_T}{\sqrt{2\pi}}$. Let $f_{atv}$ be the density of $v^\top Z_{at}$. So for every $\epsilon>0$, 
$$\mathbb{P}(|v^\top Z_{at}|\leq\epsilon\mid\mathcal{F}_{t-1})=\int_{-\epsilon}^{\epsilon}f_{atv}(u)du\leq\frac{2\epsilon\sigma_T}{\sqrt{2\pi}}=\sqrt{2/\pi}\sigma_T\epsilon.$$
Next, note that $Z_{at}=\mu_{at}+\Sigma_{at}^{1/2}Z$ where $Z\sim\mathcal{N}(0,I_d)$. Note that 
$$\|Z_{at}\|_2\leq_{(1)}\|\mu_{at}\|_2+\|\Sigma_{at}^{1/2}Z\|_2\leq_{(2)}\|\mu_{at}\|_2+\sqrt{\lambda_{\max}(\Sigma_{at})}\|Z\|_2\leq_{(3)}\bar{\mu}_T+\bar{\sigma}_T\|Z\|_2,$$
where (1) follows from triangle inequality, (2) follows because $\|\Sigma_{at}^{1/2}Z\|_2\leq\sqrt{\lambda_{\max}(\Sigma_{at})}\|Z\|_2$, (3) follows from condition (ii) in the lemma. Therefore, the event $\{\|Z\|_2\leq\frac{\bar{C}_{\phi}-\bar{\mu}_T}{\bar{\sigma}_T}\}$ implies 
$\|Z_{at}\|_2\leq\bar{\mu}_T+\bar{\sigma}_T\frac{\bar{C}_{\phi}-\bar{\mu}_T}{\bar{\sigma}_T}=\bar{C}_{\phi}$. Hence 
$\mathbb{P}(\|Z_{at}\|_2\le \bar C_{\phi}\mid\mathcal{F}_{t-1})\geq\mathbb{P}\left(\|Z\|_2\leq\frac{\bar{C}_{\phi}-\bar{\mu}_T}{\bar{\sigma}_T}\,\mid\,\mathcal{F}_{t-1}\right)=F_{\chi_d^2}\left(\frac{(\bar{C}_{\phi}-\bar{\mu}_T)^2}{\bar{\sigma}_T^2}\right)=q_T$.
Applying Lemma~\ref{lemma:truncation:projection:smallball} with $C_T=\sqrt{2/\pi}\sigma_T$ and $q_T=F_{\chi_d^2}\left(\frac{(\bar{C}_{\phi}-\bar{\mu}_T)^2}{\bar{\sigma}_T^2}\right)$ gives 
$\mathbb{P}(|v^\top\phi(x_t,\omega_{at})|\leq\epsilon\mid\mathcal{F}_{t-1})\leq\frac{\sqrt{2/\pi}\sigma_T}{q_T}\epsilon\leq M_T\epsilon$. So Assumption~\ref{ass:projection:anti:concentration} is satisfied. 
\proofend

\begin{lemma}[Truncated Student's t distribution]\label{lemma:example:student}
Assumption~\ref{ass:projection:anti:concentration} holds under conditions below:
\begin{itemize}
\item[(i)]$\phi(x_t,\omega_{at})\sim\mathcal{L}(Z_{at}\mid \|Z_{at}\|_2\le \bar C_{\phi},\mathcal{F}_{t-1})$, where $Z_{at}\sim t_{\xi}(\mu_{at},\Sigma_{at})$ conditioning on $\mathcal{F}_{t-1}$, and $t_{\xi}(\mu_{at},\Sigma_{at})$ is a $d$-dimensional Student's $t$ distribution with $\xi>0$ degrees of freedom, location $\mu_{at}$, scale matrix $\Sigma_{at}$.
\item[(ii)] There exists deterministic sequences $\sigma_T>0$, $\bar{\sigma}_T>0$ and $\bar{\mu}_T\in[0,\bar{C}_{\phi})$ such that uniformly over all $a\in[A]$, $t\in[T]$, $\lambda_{\min}(\Sigma_{at})\geq1/\sigma_T^2$, $\lambda_{\max}(\Sigma_{at})\leq\bar{\sigma}_T^2$, $\|\mu_{at}\|_2\leq\bar{\mu}_T$\ $\mathrm{a.s.}$. 
\item[(iii)]$2c_{\xi}\sigma_T/q_T\leq M_T=\sqrt{\mathrm{polylog}(T)}/A$, where $c_{\xi}:=\frac{\Gamma((\xi+1)/2)}{\sqrt{\xi\pi}\Gamma(\xi/2)}$, $q_T=F_{d,\xi}((\bar{C}_{\phi}-\bar{\mu}_T)^2/(d\bar{\sigma}_T^2))$, and $F_{d,\xi}(\cdot)$ denotes the cumulative distribution of the $F$-distribution with $d$ and $\xi$ degrees of freedom. 
\end{itemize}
\end{lemma}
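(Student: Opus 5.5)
The plan mirrors the proof of Lemma~\ref{lemma:example:gaussian:assumption}: reduce to Lemma~\ref{lemma:truncation:projection:smallball} with $C_T=2c_\xi\sigma_T$ and $q_T=F_{d,\xi}\bigl((\bar C_\phi-\bar\mu_T)^2/(d\bar\sigma_T^2)\bigr)$. Condition (iii) then gives $C_T/q_T\le M_T$, and the conclusion (including $\|\phi(x_t,\omega_{at})\|_2\le\bar C_\phi$ a.s.) follows immediately. So two bounds remain, both conditional on $\mathcal F_{t-1}$: a small-ball bound for projections of the untruncated $Z_{at}$, and a lower bound on the mass of the ball $\{\|Z_{at}\|_2\le\bar C_\phi\}$.

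\emph{Small-ball bound.} I will use the stochastic representation $Z_{at}=\mu_{at}+\Sigma_{at}^{1/2}Y/\sqrt{W/\xi}$, with $Y\sim\mathcal N(0,I_d)$ and $W\sim\chi^2_\xi$ independent. For $v\in\mathbb S^{d-1}$ this gives $v^\top Z_{at}=v^\top\mu_{at}+\sqrt{v^\top\Sigma_{at}v}\,T_\xi$, where $T_\xi$ is a univariate Student-$t$ variable with $\xi$ degrees of freedom; this uses the standard fact that linear projections of multivariate $t$ are univariate $t$ with scale $\sqrt{v^\top\Sigma v}$. The density of $T_\xi$ is maximized at $0$, where it equals $c_\xi$. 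Hence the density of $v^\top Z_{at}$ is at most $c_\xi/\sqrt{v^\top\Sigma_{at}v}\le c_\xi\sigma_T$, using $\lambda_{\min}(\Sigma_{at})\ge\sigma_T^{-2}$. Integrating over $[-\epsilon,\epsilon]$ yields $\mathbb P(|v^\top Z_{at}|\le\epsilon\mid\mathcal F_{t-1})\le 2c_\xi\sigma_T\epsilon$.

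\emph{Mass of the ball.} By the triangle inequality, $\|Z_{at}\|_2\le\bar\mu_T+\bar\sigma_T\|Y\|_2/\sqrt{W/\xi}$, so it suffices that $\|Y\|_2^2/(W/\xi)\le(\bar C_\phi-\bar\mu_T)^2/\bar\sigma_T^2$. The variable $\frac{\|Y\|_2^2/d}{W/\xi}$ has the $F_{d,\xi}$ distribution, so this event has probability exactly $F_{d,\xi}\bigl((\bar C_\phi-\bar\mu_T)^2/(d\bar\sigma_T^2)\bigr)=q_T$. It is deterministic by condition (ii), as Lemma~\ref{lemma:truncation:projection:smallball} requires.

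The only step needing care is the projection/representation argument and the value of the peak density $c_\xi$. Both are standard facts about the multivariate $t$ distribution, and I would state them with a brief justification through the Gaussian scale-mixture representation. Everything else is a direct transcription of the Gaussian case.
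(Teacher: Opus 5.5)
Your proposal is correct and follows the paper's proof essentially line by line. You use the same Gaussian/chi-square scale-mixture representation to bound the projection density by $c_\xi/\sqrt{v^\top\Sigma_{at}v}\le c_\xi\sigma_T$. You then use the same triangle-inequality reduction of the truncation event to an $F_{d,\xi}$ event of probability $q_T$, and finally apply Lemma~\ref{lemma:truncation:projection:smallball} with $C_T=2c_\xi\sigma_T$.
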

\proof{Proof of Lemma~\ref{lemma:example:student}}
Fix $a\in[A], t\in[T], v\in\mathbb{S}^{d-1}$. The multivariate Student's $t$ random vector $Z_{at}$ can be written as $Z_{at}=\mu_{at}+\frac{\Sigma_{at}^{1/2}Z}{\sqrt{S/\xi}}$,
where $Z\sim\mathcal{N}(0,I_d)$, $S\sim\chi_{\xi}^2$, where $\chi_{\xi}^2$ is the chi-square distribution with $\xi$ degrees of freedom, $Z$ and $S$ are independent. So 
$v^\top Z_{at}=v^\top\mu_{at}+\frac{v^\top \Sigma_{at}^{1/2}Z}{\sqrt{S/\xi}}$.
Since $v^\top \Sigma_{at}^{1/2}Z\sim\mathcal{N}(0,v^\top\Sigma_{at}v)$, it holds that $v^\top Z_{at}\sim t_{\xi}(v^\top\mu_{at},v^\top\Sigma_{at}v)$, where by the eigenvalue lower bound, we have $v^\top\Sigma_{at}v\geq1/\sigma_T^2$. The density $f_{atv}$ of the univariate Student's $t$ random variable $v^\top Z_{at}$ is $f_{atv}(y)=\frac{c_{\xi}}{\sqrt{v^\top\Sigma_{at}v}}\left(1+\frac{(y-v^\top\mu_{at})^2}{\xi [v^\top\Sigma_{at}v]}\right)^{-(\xi+1)/2}$.
Hence $\sup_yf_{atv}(y)\leq c_{\xi}/\sqrt{v^\top\Sigma_{at}v}\leq c_{\xi}\sigma_T$. So for every $\epsilon>0$, $\mathbb{P}\left(|v^\top Z_{at}|\leq\epsilon\mid\mathcal{F}_{t-1}\right)=\int_{-\epsilon}^{\epsilon}f_{atv}(u)du\leq2c_{\xi}\sigma_T\epsilon$.
Next, we lower bound the probability of the truncation event $\{\|Z_{at}\|_2\leq\bar{C}_{\phi}\}$. Note that 
$\|Z_{at}\|_2\leq\|\mu_{at}\|_2+\|\Sigma_{at}^{1/2}Z\|_2/\sqrt{S/\xi}\leq\bar{\mu}_T+\bar{\sigma}_T\|Z\|_2/\sqrt{\frac{S}{\xi}}$.
Therefore, $\|Z\|_2/\sqrt{\frac{S}{\xi}}\leq(\bar{C}_{\phi}-\bar{\mu}_T)/\bar{\sigma}_T$ implies $\|Z_{at}\|_2\leq\bar{\mu}_T+\bar{\sigma}_{T}\frac{\bar{C}_{\phi}-\bar{\mu}_T}{\bar{\sigma}_T}=\bar{C}_{\phi}$. So 
$$\begin{array}{rl}
\mathbb{P}(\|Z_{at}\|_2\leq\bar{C}_{\phi}\mid\mathcal{F}_{t-1})&\displaystyle\geq\mathbb{P}\left(\frac{\|Z\|_2}{\sqrt{S/\xi}}\leq\frac{\bar{C}_{\phi}-\bar{\mu}_T}{\bar{\sigma}_T}\mid\mathcal{F}_{t-1}\right)\\
&\displaystyle=\mathbb{P}\left(\frac{\|Z\|_2^2/d}{S/\xi}\leq\frac{(\bar{C}_{\phi}-\bar{\mu}_T)^2}{d\bar{\sigma}_T^2}\mid\mathcal{F}_{t-1}\right)=F_{d,\xi}\left(\frac{(\bar{C}_{\phi}-\bar{\mu}_T)^2}{d\bar{\sigma}_T^2}\right)=q_T.
\end{array}$$
Hence applying Lemma~\ref{lemma:truncation:projection:smallball}, we have $\mathbb{P}(|v^\top\phi(x_t,\omega_{at})|\leq\epsilon\mid\mathcal{F}_{t-1})\leq M_T\epsilon$, so Assumption~\ref{ass:projection:anti:concentration}
holds.
\proofend

\begin{lemma}[Uniform distribution on a hyperrectangle]\label{lemma:example:uniform}
Assumption~\ref{ass:projection:anti:concentration} holds under (i)--(ii):
\begin{itemize}
    \item[(i)]Conditional on $\mathcal{F}_{t-1}$, $\phi(x_t,\omega_{at})=\bar{\phi}_{at}+U_{at}$, where $\bar{\phi}_{at}$ is a fixed constant vector, define each coordinate of $U_{at}$ as $U_{at,j}\sim\mathrm{Unif}[-R_{at,j},R_{at,j}]$ for $j\in[d]$ and $U_{at,j}$ are independent across $j\in[d]$. 
    \item[(ii)] $R_{at,j}\geq\sqrt{d}/M_T$ where $M_T=\sqrt{\mathrm{polylog}(T)}/A$ and  
$\|\bar{\phi}_{at}\|_2+\left(\sum_{j=1}^dR_{at,j}^2\right)^{1/2}\leq\bar{C}_{\phi},\ \ \mathrm{a.s.}$ 
\end{itemize}
\end{lemma}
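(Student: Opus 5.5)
The plan is to reduce to the one-dimensional case and then reuse the truncation mechanism of Lemma~\ref{lemma:truncation:projection:smallball}, although truncation is not actually needed here. Fix $a\in[A]$, $t\in[T]$ and $v\in\mathbb S^{d-1}$, and condition on $\mathcal F_{t-1}$. Since $\bar\phi_{at}$ is fixed, we have $v^\top\phi(x_t,\omega_{at})=v^\top\bar\phi_{at}+\sum_{j=1}^d v_jU_{at,j}$. This is a constant shift of a sum of independent symmetric uniform variables.

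The key step is a uniform density bound for this sum. Because $\|v\|_2=1$, some coordinate $j^*$ satisfies $|v_{j^*}|\ge 1/\sqrt d$. The variable $v_{j^*}U_{at,j^*}$ is uniform on an interval of length $2|v_{j^*}|R_{at,j^*}$, so its density is at most $1/(2|v_{j^*}|R_{at,j^*})\le \sqrt d/(2R_{at,j^*})$.

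Adding independent terms convolves this density with a probability measure. Convolution never increases the supremum of a density, so the density of $v^\top\phi(x_t,\omega_{at})$ is bounded by $\sqrt d/(2R_{at,j^*})$. Using condition (ii), $R_{at,j}\ge\sqrt d/M_T$, this bound becomes $M_T/2$. Integrating over $[-\epsilon,\epsilon]$ then gives
\[
\mathbb P\bigl(|v^\top\phi(x_t,\omega_{at})|\le\epsilon\mid\mathcal F_{t-1}\bigr)\le 2\epsilon\cdot\frac{M_T}{2}=M_T\epsilon
\]
for every $\epsilon>0$, in particular for $0<\epsilon\le(2AM_T)^{-1}$.

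Two remaining points need checking. First, the support of $\phi(x_t,\omega_{at})$ must be consistent with Assumption~\ref{ass:G:phi:regularity}(ii). The triangle inequality gives $\|\phi\|_2\le\|\bar\phi_{at}\|_2+\|U_{at}\|_2\le\|\bar\phi_{at}\|_2+(\sum_jR_{at,j}^2)^{1/2}\le\bar C_\phi$ almost surely, so no truncation is needed and Lemma~\ref{lemma:truncation:projection:smallball} applies trivially with $q_T=1$ and $C_T=M_T$. Second, the conditional bound must yield the unconditional one stated in Assumption~\ref{ass:projection:anti:concentration}. Since the conditional bound holds almost surely, taking expectations over $\mathcal F_{t-1}$ gives it.

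The main obstacle is the density bound for a general direction $v$, because a coordinate-wise product density does not directly control a projection. The argument above handles this by selecting the dominant coordinate and using the fact that convolution preserves sup-norm bounds. The factor $\sqrt d$ lost in that step is exactly what the hypothesis $R_{at,j}\ge\sqrt d/M_T$ compensates for.
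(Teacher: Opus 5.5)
Your proposal is correct and follows essentially the same route as the paper: pick a coordinate $j^*$ with $|v_{j^*}|\ge 1/\sqrt d$, bound the small-ball probability through the uniform law of $v_{j^*}U_{at,j^*}$, and apply $R_{at,j}\ge\sqrt d/M_T$. The paper conditions on $\{U_{at,j}:j\neq j^*\}$ and then averages, which is the same Fubini argument as your ``convolution does not increase the sup of the density'' step. Your explicit check that $\|\phi(x_t,\omega_{at})\|_2\le\bar C_\phi$ holds almost surely is a harmless addition that the paper leaves implicit.
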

\proof{Proof of Lemma~\ref{lemma:example:uniform}}
Fix $a\in[A],t\in[T],v\in\mathbb{S}^{d-1}$. Since $\|v\|_2=1$, there exists a coordinate $j^*\in[d]$, s.t. $|v_{j^*}|\geq1/\sqrt{d}$. Note that $v^\top\phi(x_t,\omega_{at})=v^\top\bar{\phi}_{at}+v^\top U_{at}=v^\top\bar{\phi}_{at}+\sum_{j=1}^dv_jU_{at,j}$.
Conditional on all $\{U_{at,j}: j\neq j^*\}$, $v^\top\phi(x_t,\omega_{at})=C+v_{j^*}U_{at,j^*}$ where $C$ is a fixed constant. 
Conditional on $\mathcal{F}_{t-1}$, $U_{at,j}$ are independent uniform random on $[-R_{at,j},R_{at,j}]$.
Since $U_{at,j^*}\sim\mathrm{Unif}[-R_{at,j^*},R_{at,j^*}]$, $C+v_{j^*}U_{at,j^*}$ is uniform on an interval of length $2R_{at,j^*}|v_{j^*}|$. Therefore, for any interval $I\subset\mathbb{R}$ of length $|I|$, we have $\mathbb{P}(C+v_{j^*}U_{at,j^*}\in I\mid\{U_{at,j}:j\neq j^*\},\mathcal{F}_{t-1})\leq\frac{|I|}{2R_{at,j^*}|v_{j^*}|}$.
Take $I=[-\epsilon,\epsilon]$, then 
$\mathbb{P}\left(|v^\top\phi(x_t,\omega_{at})|\leq\epsilon\mid\mathcal{F}_{t-1},\{U_{at,j}: j\neq j^*\}\right)\leq\frac{2\epsilon}{2R_{at,j^*}|v_{j^*}|}\leq\frac{\sqrt{d}}{R_{at,j^*}}\epsilon$.
Taking expectation over the conditioned coordinates of $U_{at}$ in the above inequality gives $\mathbb{P}(|v^\top\phi(x_t,\omega_{at})|\leq\epsilon\mid\mathcal{F}_{t-1})\leq M_T\epsilon$
with $M_T=\sqrt{\mathrm{polylog}(T)}/A$.
\proofend

\begin{lemma}[Theorem 3.1 of \citet{tropp2011user}]\label{lemma:tropp:2011:eigenvalue}
Let $\mathcal{H}_1\subset\mathcal{H}_2\subset\cdots$ be a filtration and consider a finite sequence $\{X_k\}$ of positive semi-definite matrices with dimension $d$ adapted to this filtration. Suppose that $\lambda_{\max}(X_k)\leq R$ almost surely. Define the series $Y\equiv\sum_kX_k$ and $W\equiv\sum_k\mathbb{E}[X_k|\mathcal{H}_{k-1}]$. Then for all $\mu\geq0$, $\gamma\in[0,1)$,
$\mathbb{P}\left(\lambda_{\min}(Y)\leq(1-\gamma)\mu, \lambda_{\min}(W)\geq\mu\right)\leq d\left(\frac{e^{-\gamma}}{(1-\gamma)^{1-\gamma}}\right)^{\mu/R}$. 
\end{lemma}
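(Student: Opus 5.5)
The plan is the matrix Laplace-transform (Lieb supermartingale) argument, applied to the lower tail with a negative parameter. Write $Y_k=\sum_{j\le k}X_j$ and $W_k=\sum_{j\le k}\mathbb E[X_j\mid\mathcal H_{j-1}]$, so that $Y=Y_n$ and $W=W_n$ for the finite length $n$ of the sequence.

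Fix $\theta<0$ and set $g(\theta):=(e^{\theta R}-1)/R$, which is negative. The scalar map $x\mapsto e^{\theta x}$ is convex on $[0,R]$. Its chord bound $e^{\theta x}\le 1+g(\theta)x$ therefore holds on $[0,R]$. Because each $X_k$ is positive semidefinite with $\lambda_{\max}(X_k)\le R$, the transfer rule lifts this to the matrix inequality $e^{\theta X_k}\preceq I+g(\theta)X_k$. Taking conditional expectations and using $I+A\preceq e^{A}$ together with operator monotonicity of the logarithm gives
\[
\log\mathbb E\big[e^{\theta X_k}\mid\mathcal H_{k-1}\big]\preceq g(\theta)\,\mathbb E[X_k\mid\mathcal H_{k-1}].
\]

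Next define $S_k:=\operatorname{tr}\exp\big(\theta Y_k-g(\theta)W_k\big)$, so that $S_0=d$. I claim $S_k$ is a supermartingale. Write $e^{\theta X_k}=\exp(\log e^{\theta X_k})$. By Lieb's concavity theorem, $A\mapsto\operatorname{tr}\exp(H+\log A)$ is concave on positive definite matrices. Apply this with the $\mathcal H_{k-1}$-measurable matrix $H=\theta Y_{k-1}-g(\theta)W_k$, and use conditional Jensen to obtain
\[
\mathbb E[S_k\mid\mathcal H_{k-1}]\le \operatorname{tr}\exp\big(H+\log\mathbb E[e^{\theta X_k}\mid\mathcal H_{k-1}]\big).
\]
Monotonicity of $\operatorname{tr}\exp$ in the Loewner order, combined with the bound on $\log\mathbb E[e^{\theta X_k}\mid\mathcal H_{k-1}]$, then gives $\mathbb E[S_k\mid\mathcal H_{k-1}]\le S_{k-1}$. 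Hence $\mathbb E S_n\le d$. This step, which rests on Lieb's theorem, is the genuine obstacle; everything else is scalar bookkeeping.

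On the event $\{\lambda_{\min}(Y)\le(1-\gamma)\mu,\ \lambda_{\min}(W)\ge\mu\}$ I would bound $S_n$ from below.
\begin{itemize}
\item Since $\theta<0$, $\lambda_{\max}(\theta Y)=\theta\lambda_{\min}(Y)\ge\theta(1-\gamma)\mu$.
\item Since $-g(\theta)>0$, $-g(\theta)W\succeq -g(\theta)\mu I$.
\end{itemize}
By Weyl's inequality these combine to $\lambda_{\max}\big(\theta Y-g(\theta)W\big)\ge \theta(1-\gamma)\mu-g(\theta)\mu$. Therefore $S_n\ge\exp\{\theta(1-\gamma)\mu-g(\theta)\mu\}$ on the event. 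Markov's inequality then yields
\[
\mathbb P(\cdot)\le d\exp\{-\theta(1-\gamma)\mu+g(\theta)\mu\}.
\]

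Finally, choose $\theta=\log(1-\gamma)/R$ for $\gamma\in(0,1)$; this makes $g(\theta)=-\gamma/R$. The exponent becomes $-\frac{\mu}{R}\big[(1-\gamma)\log(1-\gamma)+\gamma\big]$, which is exactly the logarithm of $\big(e^{-\gamma}/(1-\gamma)^{1-\gamma}\big)^{\mu/R}$. For $\gamma=0$ the bound equals $d\ge1$ and is trivial. Since the sequence is finite, no stopping-time argument is needed: the event concerns only the final sums $Y$ and $W$.
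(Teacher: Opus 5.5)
Your proposal is correct. It follows essentially the same route as the proof in \citet{tropp2011user}, which the paper cites without reproducing: the chord bound yields $\log\mathbb{E}[e^{\theta X_k}\mid\mathcal{H}_{k-1}]\preceq g(\theta)\,\mathbb{E}[X_k\mid\mathcal{H}_{k-1}]$, Lieb's concavity theorem makes $\operatorname{tr}\exp(\theta Y_k-g(\theta)W_k)$ a supermartingale, and the lower tail comes from a negative $\theta$ (Tropp phrases this as applying his master bound to $\{-X_k\}$) optimized at $\theta=\log(1-\gamma)/R$.
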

\medskip
\begin{proposition}[Minimum eigenvalue]\label{prop:minimum:eigenvalue}
Suppose Assumption~\ref{ass:projection:anti:concentration} holds. Then
\begin{equation}\label{eq:eigenvalue:expected}
\lambda_{\min}\!\left(\mathbb E\!\left[\phi(x_t,\omega_{a_tt})\phi(x_t,\omega_{a_tt})^\top\,\middle|\,\mathcal F_{t-1}\right]\right)\ge\frac{1/8}{\mathrm{polylog}(T)}.
\end{equation}
Further, for any $\delta\in(0,1)$, with probability at least $1-\delta$, \begin{equation}\label{eq:lower:eigenvalue:Phi:t-1}
\lambda_{\min}(\Phi_{t-1})\geq(t-1)/[16\mathrm{polylog}(T)]
\end{equation}
holds uniformly for all $t\geq100\bar{C}_{\phi}^2\mathrm{polylog}(T)\log(Td/\delta)$.
\end{proposition}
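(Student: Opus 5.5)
The plan has two steps. First, lower-bound the conditional second-moment matrix of the \emph{selected} feature by a small-ball argument that holds uniformly over boxes. Second, transfer that bound to $\Phi_{t-1}$ with the matrix Chernoff bound of Lemma~\ref{lemma:tropp:2011:eigenvalue}.

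For \eqref{eq:eigenvalue:expected}, fix $v\in\mathbb S^{d-1}$ and $\epsilon>0$. Markov's inequality applied pathwise gives
\[
\mathbb E\!\left[(v^\top\phi(x_t,\omega_{a_tt}))^2\,\middle|\,\mathcal F_{t-1}\right]\ \ge\ \epsilon^2\,\mathbb P\!\left(|v^\top\phi(x_t,\omega_{a_tt})|>\epsilon\,\middle|\,\mathcal F_{t-1}\right).
\]
The selected index $a_t$ is random and depends on all the outputs. I will not try to characterize it; instead I use the inclusion $\{|v^\top\phi(x_t,\omega_{a_tt})|\le\epsilon\}\subseteq\bigcup_{a\in[A]}\{|v^\top\phi(x_t,\omega_{at})|\le\epsilon\}$. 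A union bound together with Assumption~\ref{ass:projection:anti:concentration} then gives a small-ball probability of at most $AM_T\epsilon$.

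Here I read Assumption~\ref{ass:projection:anti:concentration} conditionally on $\mathcal F_{t-1}$, which the paper justifies by the independence of $x_t$ from the past and by \eqref{eq:law:outputs}. Choosing the admissible value $\epsilon=(2AM_T)^{-1}$ makes the small-ball probability at most $1/2$. Hence $v^\top\mathbb E[\phi\phi^\top\mid\mathcal F_{t-1}]v\ge \epsilon^2/2=1/(8A^2M_T^2)$. Since $M_T=\sqrt{\mathrm{polylog}(T)}/A$, we have $A^2M_T^2=\mathrm{polylog}(T)$, and minimizing over $v$ yields the claim.

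For \eqref{eq:lower:eigenvalue:Phi:t-1}, I apply Lemma~\ref{lemma:tropp:2011:eigenvalue} with the following choices:
\begin{itemize}
\item $X_k=\phi(x_k,\omega_{a_kk})\phi(x_k,\omega_{a_kk})^\top$ for $k\le t-1$, adapted to $\mathcal H_k=\mathcal F_k$;
\item $R=\bar C_\phi^2$, by Assumption~\ref{ass:G:phi:regularity};
\item $\gamma=1/2$.
\end{itemize}
Because $\lambda_{\min}$ is superadditive on positive semidefinite matrices, the first part gives $\lambda_{\min}(W)\ge \mu:=(t-1)/(8\,\mathrm{polylog}(T))$ almost surely. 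The lemma then bounds $\mathbb P(\lambda_{\min}(Y)\le \mu/2)$ by $d\,(\sqrt2 e^{-1/2})^{\mu/R}$. Since $\log(\sqrt2e^{-1/2})\approx-0.153$, this is at most $d\exp(-0.15\,\mu/\bar C_\phi^2)$.

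Finally, $\Phi_{t-1}\succeq Y$, so $\lambda_{\min}(\Phi_{t-1})\ge\lambda_{\min}(Y)$. For $t\ge 100\bar C_\phi^2\,\mathrm{polylog}(T)\log(Td/\delta)$, the exponent satisfies $0.15\mu/\bar C_\phi^2\ge\log(Td/\delta)$, with room to absorb the $-1$ in $t-1$. So each per-$t$ failure probability is at most $\delta/T$, and a union bound over $t\le T$ gives probability at least $1-\delta$.

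I expect the main subtlety to be the first step. The selected feature $\phi(x_t,\omega_{a_tt})$ has a law distorted by the selection rule, so anti-concentration cannot be applied to it directly. The union bound over boxes is what handles this, and it is also why the assumption is calibrated with the factor $A$ in $M_T$ and the restriction $\epsilon\le(2AM_T)^{-1}$. A secondary point is making sure the conditioning in Assumption~\ref{ass:projection:anti:concentration} matches the filtration used in Tropp's inequality; I would state explicitly that the per-box bound holds conditionally on $\mathcal F_{t-1}$.
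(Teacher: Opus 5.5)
Your proposal is correct and follows the paper's proof essentially step for step. For \eqref{eq:eigenvalue:expected} you use the inclusion $\{|v^\top\phi(x_t,\omega_{a_tt})|\le\epsilon\}\subseteq\bigcup_{a\in[A]}\{|v^\top\phi(x_t,\omega_{at})|\le\epsilon\}$ with a union bound at $\epsilon=(2AM_T)^{-1}$, and for \eqref{eq:lower:eigenvalue:Phi:t-1} you apply Lemma~\ref{lemma:tropp:2011:eigenvalue} with $\gamma=1/2$, $R=\bar C_\phi^2$, followed by a union bound over $t$; the only difference is that your explicit exponent constant, roughly $0.019(t-1)/[\bar C_\phi^2\,\mathrm{polylog}(T)]$, is slightly sharper than the paper's $0.01$, and either suffices for the stated threshold.
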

\proof{Proof of Proposition~\ref{prop:minimum:eigenvalue}}
Fix $v\in\mathbb S^{d-1}$. Since $a_t\in[A]$, for any $\epsilon>0$,
$\{|v^\top \phi(x_t,\omega_{a_tt})|\le \epsilon\}\subseteq\bigcup_{a=1}^A\{|v^\top\phi(x_t,\omega_{at})|\le \epsilon\}$. Therefore, by applying union bound and
Assumption~\ref{ass:projection:anti:concentration},
$\displaystyle\mathbb P\!\left(
    |v^\top\phi(x_t,\omega_{a_t t})|\le \epsilon
    \,\middle|\,
    \mathcal{F}_{t-1}
\right)\leq\sum_{a=1}^A
\mathbb P\!\left(
    |v^\top\phi(x_t,\omega_{at})|\le \epsilon
    \,\middle|\,
    \mathcal{F}_{t-1}
\right) \le AM_T\epsilon$.
Take $\epsilon=(2AM_T)^{-1}$. Then the above inequality implies 
$\mathbb{P}\!\left(|v^\top\phi(x_t,\omega_{a_t t})|>(2AM_T)^{-1}\,\middle|\,\mathcal{F}_{t-1}\right)\geq\frac{1}{2}$. Hence
$$\begin{array}{rl}
\displaystyle v^\top\mathbb E[\phi(x_t,\omega_{a_t t})\phi(x_t,\omega_{a_t t})^\top\mid\mathcal{F}_{t-1}]v&\displaystyle=\mathbb{E}[(v^\top\phi(x_t,\omega_{a_t t}))^2\mid\mathcal{F}_{t-1}]\\
&\displaystyle\geq(2AM_T)^{-2}\mathbb{P}\!\left(|v^\top\phi(x_t,\omega_{a_t t})|>(2AM_T)^{-1}\,\middle|\,\mathcal{F}_{t-1}\right)\\
&\displaystyle\geq\frac{1/8}{(AM_T)^2}=\frac{1/8}{\mathrm{polylog}(T)}.
\end{array}$$
Since this holds for any $v\in\mathbb{S}^{d-1}$, \eqref{eq:eigenvalue:expected} follows. Setting $\gamma=1/2$, $R=\bar{C}_{\phi}^2$ in Lemma~\ref{lemma:tropp:2011:eigenvalue}, with probability at least $1-\exp\{\log(d)-0.01(t-1)/[\bar{C}_{\phi}^2\mathrm{polylog}(T)]\}$, 
$\lambda_{\min}(\Phi_{t-1})\geq(t-1)/[16\mathrm{polylog}(T)]$. Particularly, for any $\delta\in(0,1)$, with probability at least $1-\delta$, $\lambda_{\min}(\Phi_{t-1})\geq(t-1)/[16\mathrm{polylog}(T)]$ holds uniformly for all $t\geq100\bar{C}_{\phi}^2\mathrm{polylog}(T)\log(Td/\delta)$.
\proofend

\section{Index Estimation under Unknown Reward Function}\label{appendix:index:unknown:theta}
The same argument for Proposition~\ref{prop:rho:a:confidence:theta:known} extends to Proposition~\ref{prop:confidence:rho} for the high probability bound of the index estimation error when the reward parameter $\theta_*$ is unknown and learned online. The main difference is that the primitive loss is evaluated at the plug-in estimate $\hat\theta_{t-1}$ rather than at the true parameter $\theta_*$. This introduces an additional error term in the optimality inequality:
$$\mbox{population curvature}\;\;\leq\;\;\mbox{empirical fluctuation}\;\;+\;\;\mbox{plug-in error from }\hat{\theta}_{t-1}.$$
Consequently, Proposition~\ref{prop:confidence:rho} yields the same type of confidence radius for $\rho_a^\top\psi(x_t)$, with an additional additive term that accounts for uncertainty in the reward parameter.

Recall from Section~\ref{sec:estimating:indices} that 
$\hat{\rho}_{at}\in\argmin_{\rho\in\mathcal{B}}\sum_{s\in\mathcal{S}_{at}}\hat{\ell}_{as,t}(\rho)$, where $\hat{\ell}_{as,t}$ is the loss defined as \eqref{eq:plugin:loss} when plugging in $\hat{\theta}_{t-1}$ for the unknown $\theta_*$. Lemma~\ref{lemma:J:at:bound} below characterizes how the estimation error of $\theta_*$ propagates into the estimation error of $\rho_a^\top\psi(x_t)$. 

\medskip
\proof{Proof of Lemma~\ref{lemma:J:at:bound}}
Recall from \eqref{eq:J:at:main} that 
$$J_{at}(\rho):=\sum_{s\in\mathcal{S}_{at}}\left\{[\hat{\ell}_{as,t}(\rho)-\ell_{as}(\rho)]-[\hat{\ell}_{as,t}(\rho_a)-\ell_{as}(\rho_a)]\right\}.$$
Note that \eqref{eq:known-theta--loss} and \eqref{eq:plugin:loss} imply that 
$$\begin{array}{rl}
&\displaystyle\quad[\hat{\ell}_{as,t}(\rho_a)-\ell_{as}(\rho_a)]-[\hat{\ell}_{as,t}(\hat{\rho}_{at})-\ell_{as}(\hat{\rho}_{at})]\\
&\displaystyle=\int_0^{\rho_a^\top\psi(x_{s})}[(G(\theta_*^\top\phi(x_{s},\omega_{as}))-\Lambda(u))^{+}-(G(\hat{\theta}_{t-1}^\top\phi(x_{s},\omega_{as}))-\Lambda(u))^{+}]du\\
&\displaystyle\quad-\int_{0}^{\hat{\rho}_{at}^\top\psi(x_{s})}[(G(\theta_*^\top\phi(x_{s},\omega_{as}))-\Lambda(u))^{+}-(G(\hat{\theta}_{t-1}^\top\phi(x_{s},\omega_{as}))-\Lambda(u))^{+}]du\\
\\
&\displaystyle=\int_{\rho_a^\top\psi(x_{s})}^{\hat{\rho}_{at}^\top\psi(x_{s})}[(G(\hat{\theta}_{t-1}^\top\phi(x_{s},\omega_{as}))-\Lambda(u))^{+}-(G(\theta_*^\top\phi(x_{s},\omega_{as}))-\Lambda(u))^{+}]du\\
&\displaystyle\leq_{(i)}\left|G(\hat{\theta}_{t-1}^\top\phi(x_{s},\omega_{as}))-G(\theta_*^\top\phi(x_{s},\omega_{as}))\right|\cdot|\psi(x_{s})^\top(\hat{\rho}_{at}-\rho_a)|,
\end{array}$$
where (i) uses the 1-Lipschitz property of $z\mapsto z^{+}$.
Thus 
$$\begin{array}{rl}
-J_{at}(\hat{\rho}_{at})&\displaystyle\leq_{(i)}\sqrt{\sum_{s\in\mathcal{S}_{at}}\{\psi(x_{s})^\top(\hat{\rho}_{at}-\rho_a)\}^2}\sqrt{\sum_{s\in\mathcal{S}_{at}}\left[G(\hat{\theta}_{t-1}^\top\phi(x_{s},\omega_{as}))-G(\theta_*^\top\phi(x_{s},\omega_{as}))\right]^2}\\
&\displaystyle\leq_{(ii)}\frac{c_0}{8}\sum_{s\in\mathcal{S}_{at}}\{\psi(x_{s})^\top(\hat{\rho}_{at}-\rho_a)\}^2+\frac{2}{c_0}\sum_{s\in\mathcal{S}_{at}}\left[G(\hat{\theta}_{t-1}^\top\phi(x_{s},\omega_{as}))-G(\theta_*^\top\phi(x_{s},\omega_{as}))\right]^2\\
&\displaystyle\leq_{(iii)}\frac{c_0}{8}\sum_{s\in\mathcal{S}_{at}}\{\psi(x_{s})^\top(\hat{\rho}_{at}-\rho_a)\}^2+\frac{2}{c_0}\sum_{s\in\mathcal{S}_{at}}L^2\{(\hat{\theta}_{t-1}-\theta_*)^\top\phi(x_{s},\omega_{as})\}^2
\end{array}$$
where (i) above holds from Cauchy-Schwarz inequality, (ii) holds from applying $2ab\leq\epsilon a^2+\epsilon^{-1}b^2$
to $\displaystyle\epsilon=\frac{c_0}{4}$, $\displaystyle a=\sqrt{\sum_{s\in\mathcal{S}_{at}}\{\psi(x_{s})^\top(\hat{\rho}_{at}-\rho_a)\}^2}$, $\displaystyle b=\sqrt{\sum_{s\in\mathcal{S}_{at}}\left[G(\hat{\theta}_{t-1}^\top\phi(x_{s},\omega_{as}))-G(\theta_*^\top\phi(x_{s},\omega_{as}))\right]^2}$, (iii) holds from the Lipschitz property of $G$ according to Assumption~\ref{ass:G:phi:regularity}. Hence the result follows.
\proofend
\begin{proposition}[Restatement of Proposition~\ref{prop:confidence:rho:main:context}]\label{prop:confidence:rho}
Suppose Assumptions~\ref{ass:reg_res_index}, \ref{ass:losscurvature}, \ref{ass:G:phi:regularity}, \ref{ass:reward:perturbed}, \ref{ass:projection:anti:concentration} hold. Let $V_{at}(\eta_1)$ be defined as \eqref{eq:known-theta-V}. Given any $\delta>0$, with probability $1-2\delta/3$, uniformly over all $a\in[A]$ and $t\in[T]$, 
$$\begin{array}{rl}
|(\hat{\rho}_{at}-\rho_a)^\top\psi(x_t)|&\displaystyle\leq\|\psi(x_t)\|_{V_{at}(\eta_1)^{-1}}\Bigg[\sqrt{\frac{4\Gamma_{at}(\delta)}{\kappa\mu_1}\left(\frac{144}{\kappa\mu_1}+2\min\{2\overline{\iota},\bar{C}_{\psi}d_{\mathcal{B}}\}+6\right)+\frac{20}{\kappa\mu_1}+\frac{1}{2}+\eta_1d_{\mathcal{B}}^2}\\
&\quad\quad\quad\quad\quad\quad\quad\quad\displaystyle+\frac{2\sqrt{2}L}{\kappa\mu_1}\sqrt{\bar{C}_{at}(\delta)\sum_{s\in\mathcal{S}_{at}}\|\phi(x_{s},\omega_{as})\|_{\Phi_{t-1}^{-1}}^2}\Bigg],
\end{array}$$
where $\Gamma_{at}(\delta)\!:=m\log(1+2d_{\mathcal{B}}\bar{C}_{\psi}T)+\log\left(\left\lceil\log_2\big(1+n_{at}\min\{9\overline{\iota}^2,d_{\mathcal{B}}^2\bar{C}_{\psi}^2\}\right)\right\rceil+1\big)+\log(6AT/\delta)$,
$\bar{C}_{at}(\delta):=\frac{4}{\min\{1,\underline{\mu}\}^2}\bigg(\gamma_0\sqrt{d\log(1+t\bar{C}_{\phi}^2/\eta_0)+2\log(3/\delta)}+\sqrt{\eta_0}\bar{\alpha}\bigg)^2$.
\end{proposition}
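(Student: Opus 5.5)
The plan is to rerun the localized empirical-process argument from the proof of Proposition~\ref{prop:rho:a:confidence:theta:known:restatement}, now carrying the plug-in term. Fix $a$ and $t$. By optimality of $\hat\rho_{at}$ in \eqref{eq:hat:rho:at}, $\sum_{s\in\mathcal S_{at}}\{\hat\ell_{as,t}(\hat\rho_{at})-\hat\ell_{as,t}(\rho_a)\}\le 0$. Adding and subtracting the oracle losses, I will write this sum as
\[
\sum_{s\in\mathcal S_{at}}\mathbb E[\ell_{as}(\hat\rho_{at})-\ell_{as}(\rho_a)\mid\mathcal G_{s-1}]+H_{at}(\hat\rho_{at})+J_{at}(\hat\rho_{at})\le 0 .
\]

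Since $\hat\rho_{at}$ depends on $\hat\theta_{t-1}$, it is $\mathcal F_{t-1}$-measurable. Hence both Lemma~\ref{lemma:curvature:hat:rho:at} and Lemma~\ref{lemma:H:empirical:process} (with $c_0=\kappa\mu_1$) apply with $\rho_t=\hat\rho_{at}$.

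The first term is bounded below by $\frac{\kappa\mu_1}{2}S-1$, where $S:=\sum_{s}\{\psi(x_s)^\top(\hat\rho_{at}-\rho_a)\}^2$. On an event of probability at least $1-\delta/3$, obtained from Lemma~\ref{lemma:H:empirical:process} with a union bound over $a$ (this is what puts $\log(6AT/\delta)$ into $\Gamma_{at}$), we have
\[
-H_{at}\le \tfrac{\kappa\mu_1}{8}S+\big(\tfrac{144}{\kappa\mu_1}+2\min\{2\bar\iota,\bar C_\psi d_{\mathcal B}\}+6\big)\Gamma_{at}+4+\tfrac{\kappa\mu_1}{8}.
\]
Lemma~\ref{lemma:J:at:bound} with $c_0=\kappa\mu_1$ gives $-J_{at}\le \frac{\kappa\mu_1}{8}S+\frac{2L^2}{\kappa\mu_1}\sum_s\{(\hat\theta_{t-1}-\theta_*)^\top\phi(x_s,\omega_{as})\}^2$.

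On the event of Lemma~\ref{lemma:G:diff:hat:theta}, of probability $1-\delta/3$, each summand in the last sum is at most $\bar C_{at}(\delta)\|\phi(x_s,\omega_{as})\|^2_{\Phi_{t-1}^{-1}}$. This needs the bound to hold for past queried pairs $(x_s,\omega_{as})$ with $s<t$, evaluated at the current $\hat\theta_{t-1}$. The first display \eqref{eq:theta:bound:x:w} of that lemma is really a bound on $\|g_t(\theta_*)-g_t(\hat\theta_{t-1})\|_{\Phi_{t-1}^{-1}}$ times $\|\phi\|_{\Phi_{t-1}^{-1}}$, and the self-normalized event from Lemma~\ref{lemma:technical:GLM:bandit:1} is uniform in $t$. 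So a single event of probability $1-\delta/3$ gives the bound simultaneously for all $t$ and all vectors $\phi$, including past queried ones. I would state this explicitly, since it is the one point where the earlier lemma is applied beyond its literal "$\phi(x_t,\omega_{at})$" form.

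Combining the three bounds leaves $\frac{\kappa\mu_1}{4}S$ on the left, and I then divide. To match the stated form with a separate additive $\frac{2\sqrt2 L}{\kappa\mu_1}\sqrt{\cdot}$ term, I would bound $\frac{\kappa\mu_1}{4}S\le X+Y$, where $X$ collects the $\Gamma_{at}$ and constant terms and $Y=\frac{2L^2}{\kappa\mu_1}\bar C_{at}\sum_s\|\phi\|^2_{\Phi_{t-1}^{-1}}$. Adding $\eta_1\|\hat\rho_{at}-\rho_a\|_2^2\le\eta_1 d_{\mathcal B}^2$ gives $\|\hat\rho_{at}-\rho_a\|^2_{V_{at}(\eta_1)}\le \frac{4X}{\kappa\mu_1}+\eta_1 d_{\mathcal B}^2+\frac{4Y}{\kappa\mu_1}$. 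The inequality $\sqrt{u+v}\le\sqrt u+\sqrt v$ then splits the root, and $\sqrt{4Y/(\kappa\mu_1)}=\frac{2\sqrt2 L}{\kappa\mu_1}\sqrt{\bar C_{at}\sum_s\|\phi\|^2}$, matching the constants in the statement. Cauchy--Schwarz, $|(\hat\rho_{at}-\rho_a)^\top\psi(x_t)|\le\|\psi(x_t)\|_{V_{at}^{-1}}\|\hat\rho_{at}-\rho_a\|_{V_{at}}$, finishes the proof.

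A union bound over the $H$-event and the $\theta$-event gives probability at least $1-2\delta/3$.

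I expect the main obstacle to be the measurability and uniformity bookkeeping rather than the algebra. Lemma~\ref{lemma:H:empirical:process} requires $\rho_t$ to be adapted, and here $\hat\rho_{at}$ depends on $\hat\theta_{t-1}$. That is fine, because the $\epsilon$-net argument makes the bound hold simultaneously over the whole parameter set. The reward-confidence event must likewise be used uniformly over past queried features, which the self-normalized argument supports. Anti-concentration (Assumption~\ref{ass:projection:anti:concentration}) is not needed for this proposition itself; it only enters later, to make $\sum_s\|\phi\|^2_{\Phi_{t-1}^{-1}}$ small in the regret analysis.
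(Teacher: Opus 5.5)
Your proposal is correct and matches the paper's proof step for step. The shared steps are:
\begin{itemize}
\item split the optimality inequality into the curvature term, $H_{at}$ and $J_{at}$;
\item apply Lemmas~\ref{lemma:curvature:hat:rho:at}, \ref{lemma:H:empirical:process} and \ref{lemma:J:at:bound} with $c_0=\kappa\mu_1$;
\item apply the reward confidence event of Lemma~\ref{lemma:G:diff:hat:theta} to the queried features;
\item add $\eta_1 d_{\mathcal B}^2$, split the square root, apply Cauchy--Schwarz, and take a union bound over two $\delta/3$ events.
\end{itemize}

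Your side remarks are also accurate. The proof of Lemma~\ref{lemma:G:diff:hat:theta} yields a single event that is uniform in $t$ and in the feature vector, which justifies applying it to past pairs $(x_s,\omega_{as})$. The paper makes that application without comment. Assumption~\ref{ass:projection:anti:concentration} is listed as a hypothesis but is not actually used in this proposition; it only enters later, in the regret analysis.
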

\proof{Proof of Proposition~\ref{prop:confidence:rho}}
For any $s\in[T]$, define $\mathcal{G}_{s-1}:=\sigma(\mathcal{F}_{s-1},x_{s})$. Note that 
\begin{equation}\label{eq:ineq:key:rho}
\begin{array}{rl}
0&\geq_{(1)}\displaystyle\sum_{s\in\mathcal{S}_{at}}\hat{\ell}_{as,t}(\hat{\rho}_{at})-\sum_{s\in\mathcal{S}_{at}}\hat{\ell}_{as,t}(\rho_a)=_{(2)}\sum_{s\in\mathcal{S}_{at}}\mathbb{E}\left[\ell_{as}(\hat{\rho}_{at})-\ell_{as}(\rho_a)\mid\mathcal{G}_{s-1}\right]+H_{at}(\hat{\rho}_{at})+J_{at}(\hat{\rho}_{at})\\
&\displaystyle\geq_{(3)}\frac{\kappa\mu_1}{2}\sum_{s\in\mathcal{S}_{at}}[\psi(x_{s})^\top(\hat{\rho}_{at}-\rho_a)]^2-1+H_{at}(\hat{\rho}_{at})+J_{at}(\hat{\rho}_{at}),
\end{array}
\end{equation}
where (1) holds because $\displaystyle\hat{\rho}_{at}=\argmin_{\rho\in\mathcal{B}}\frac{1}{n_{at}}\sum_{s\in\mathcal{S}_{at}}\hat{\ell}_{as,t}(\rho)$, (2) holds with  
$$H_{at}(\rho):=\sum_{s\in\mathcal{S}_{at}}\left\{\ell_{as}(\rho)-\ell_{as}(\rho_a)-\mathbb{E}[\ell_{as}(\rho)-\ell_{as}(\rho_a)\mid\mathcal{G}_{s-1}]\right\},$$
$$J_{at}(\rho):=\sum_{s\in\mathcal{S}_{at}}\left\{[\hat{\ell}_{as,t}(\rho)-\ell_{as}(\rho)]-[\hat{\ell}_{as,t}(\rho_a)-\ell_{as}(\rho_a)]\right\},$$
and (3) follows from Lemma~\ref{lemma:curvature:hat:rho:at}. So \eqref{eq:ineq:key:rho} implies that 
\begin{equation}\label{eq:rho:ineq:upper}
\frac{\kappa\mu_1}{2}\sum_{s\in\mathcal{S}_{at}}[\psi(x_{s})^\top(\hat{\rho}_{at}-\rho_a)]^2\leq-(H_{at}(\hat{\rho}_{at})+J_{at}(\hat{\rho}_{at}))+1.
\end{equation}
Lemma~\ref{lemma:H:empirical:process} implies that with probability at least $1-\delta/3$, uniformly over all $t\in[T]$ and $a\in[A]$ we have
\begin{equation}\label{eq:H:at:rho:at:hat:bound}
-H_{at}(\hat{\rho}_{at})\leq\frac{\kappa\mu_1}{8}\sum_{s\in\mathcal{S}_{at}}\{(\hat{\rho}_{at}-\rho_a)^\top\psi(x_{s})\}^2+\left(\frac{144}{\kappa\mu_1}+2\min\{2\overline{\iota},\bar{C}_{\psi}d_{\mathcal{B}}\}+6\right)\Gamma_{at}(\delta)+4+\frac{\kappa\mu_1}{8},
\end{equation}
where $\Gamma_{at}(\delta)\!:=m\log(1+2d_{\mathcal{B}}\bar{C}_{\psi}T)+\log(Q_{at}+1)+\log\left(\!
\frac{6AT}{\delta}\!\right).$
Lemma~\ref{lemma:J:at:bound} then implies that 
$\displaystyle-J_{at}(\hat{\rho}_{at})\leq\frac{\kappa\mu_1}{8}\sum_{s\in\mathcal{S}_{at}}\{\psi(x_{s})^\top(\hat{\rho}_{at}-\rho_a)\}^2+\frac{2}{\kappa\mu_1}\sum_{s\in\mathcal{S}_{at}}L^2\{(\hat{\theta}_{t-1}-\theta_*)^\top\phi(x_{s},\omega_{as})\}^2.$
Hence combining this inequality and \eqref{eq:H:at:rho:at:hat:bound}, with probability at least $1-\delta/3$, uniformly over all $t\in[T], a\in[A]$,
\begin{equation}\label{eq:Deta:at:bound:high:prob}
\begin{array}{rl}
&\quad\displaystyle-(H_{at}(\hat{\rho}_{at})+J_{at}(\hat{\rho}_{at}))\\
&\displaystyle\leq\frac{\kappa\mu_1}{4}\sum_{s\in\mathcal{S}_{at}}\{(\hat{\rho}_{at}-\rho_a)^\top\psi(x_{s})\}^2+\left(\frac{144}{\kappa\mu_1}+2\min\{2\overline{\iota},\bar{C}_{\psi}d_{\mathcal{B}}\}+6\right)\Gamma_{at}(\delta)+4+\frac{\kappa\mu_1}{8}\\
&\quad\displaystyle+\frac{2L^2}{\kappa\mu_1}\sum_{s\in\mathcal{S}_{at}}\{(\hat{\theta}_{t-1}-\theta_*)^\top\phi(x_{s},\omega_{as})\}^2
\end{array}
\end{equation}
Thus \eqref{eq:rho:ineq:upper} further implies that with probability at least $1-\delta/3$, uniformly over all $t\in[T]$ we have
\begin{equation}\label{eq:rho:diff:psi:x}
\begin{array}{rl}
\displaystyle\frac{\kappa\mu_1}{4}\sum_{s\in\mathcal{S}_{at}}\{(\hat{\rho}_{at}-\rho_a)^\top\psi(x_{s})\}^2&\displaystyle\leq\left(\frac{144}{\kappa\mu_1}+2\min\{2\overline{\iota},\bar{C}_{\psi}d_{\mathcal{B}}\}+6\right)\Gamma_{at}(\delta)+5+\frac{\kappa\mu_1}{8}\\
&\quad\displaystyle+\frac{2L^2}{\kappa\mu_1}\sum_{s\in\mathcal{S}_{at}}\{(\hat{\theta}_{t-1}-\theta_*)^\top\phi(x_{s},\omega_{as})\}^2,
\end{array}
\end{equation}
combining with the fact that $\eta_1\|\hat{\rho}_{at}-\rho_a\|^2\leq\eta_1d_{\mathcal{B}}^2$, and by recalling that $V_{at}(\eta_1)=\eta_1I_m+\sum_{s\in\mathcal{S}_{at}}\psi(x_{s})\psi(x_{s})^\top$,
the above inequality then implies that with probability at least $1-\delta/3$, uniformly over all $t\in[T]$ and $a\in[A]$, we have
\begin{equation}\label{eq:rho:diff:V:square}
\begin{array}{rl}
\displaystyle\|\hat{\rho}_{at}-\rho_a\|_{V_{at}(\eta_1)}^2&\displaystyle\leq\frac{4}{\kappa\mu_1}\left(\frac{144}{\kappa\mu_1}+2\min\{2\overline{\iota},\bar{C}_{\psi}d_{\mathcal{B}}\}+6\right)\Gamma_{at}(\delta)+\frac{20}{\kappa\mu_1}+\frac{1}{2}+\eta_1d_{\mathcal{B}}^2\\
&\quad\displaystyle+\frac{8L^2}{(\kappa\mu_1)^2}\sum_{s\in\mathcal{S}_{at}}\!\{(\hat{\theta}_{t-1}-\theta_*)^\top\phi(x_{s},\omega_{as})\}^2.
\end{array}
\end{equation}
Lemma~\ref{lemma:G:diff:hat:theta} implies that with probability at least $1-\delta/3$, uniformly over all $t\in[T]$ and $a\in[A]$,
\begin{equation}\label{eq:theta:diff:phi}
\frac{8L^2}{(\kappa\mu_1)^2}\sum_{s\in\mathcal{S}_{at}}\{(\hat{\theta}_{t-1}-\theta_*)^\top\phi(x_{s},\omega_{as})\}^2\leq\frac{8L^2}{(\kappa\mu_1)^2}\bar{C}_{at}(\delta)\sum_{s\in\mathcal{S}_{at}}\|\phi(x_{s},\omega_{as})\|_{\Phi_{t-1}^{-1}}^2,
\end{equation}
where 
$\bar{C}_{at}(\delta):=\frac{4}{\min\{1,\underline{\mu}\}^2}\bigg(\gamma_0\sqrt{d\log(1+t\bar{C}_{\phi}^2/\eta_0)+2\log(3/\delta)}+\sqrt{\eta_0}\bar{\alpha}\bigg)^2.$
Thus with probability $1-2\delta/3$ we have 
\begin{equation}\label{eq:rho:diff:V:at}
\begin{array}{rl}
\displaystyle\|\hat{\rho}_{at}-\rho_a\|_{V_{at}(\eta_1)}&\displaystyle\leq\sqrt{\frac{4}{\kappa\mu_1}\!\left(\frac{144}{\kappa\mu_1}+2\min\{2\overline{\iota},\bar{C}_{\psi}d_{\mathcal{B}}\}+6\right)\Gamma_{at}(\delta)+\frac{20}{\kappa\mu_1}+\frac{1}{2}+\eta_1d_{\mathcal{B}}^2}\\
&\quad\displaystyle+\frac{2\sqrt{2}L}{\kappa\mu_1}\sqrt{\bar{C}_{at}(\delta)\sum_{s\in\mathcal{S}_{at}}\|\phi(x_{s},\omega_{as})\|_{\Phi_{t-1}^{-1}}^2}.
\end{array}
\end{equation}
By Cauchy-Schwarz inequality, 
$|(\hat{\rho}_{at}-\rho_a)^\top\psi(x_t)|\leq\|\psi(x_t)\|_{V_{at}(\eta_1)^{-1}}\|\hat{\rho}_{at}-\rho_a\|_{V_{at}(\eta_1)}$, so combining \eqref{eq:rho:diff:V:at}, the result follows.
\proofend

\section{Regret under Unknown Reward Function}\label{appendix:regret:unknown:theta}
\proof{Proof of Theorem~\ref{thm:expected:total:regret}}
Let $\mathcal{E}$ denote the event that conditions (i)-(ii) hold in the following:
\begin{itemize}
    \item[(i)] $|(\theta_*-\hat{\theta}_{t-1})^\top\phi(x_t,\omega_{at})|\leq\frac{2\|\phi(x_t,\omega_{at})\|_{\Phi_{t-1}^{-1}}}{\min\{\underline{\mu},1\}}\bigg(\gamma_0\sqrt{d\log(1+t\bar{C}_{\phi}^2/\eta_0)+2\log(3/\delta)}+\sqrt{\eta_0}\bar{\alpha}\bigg)$ uniformly over all $a\in[A]$, $t\in[T]$;
    \item[(ii)] Uniformly over all $a\in[A]$, $t\in[T]$,
    $$\begin{array}{rl}
    \displaystyle|(\hat{\rho}_{at}-\rho_a)^\top\psi(x_t)|&\displaystyle\leq \bigg[\sqrt{\frac{4}{\kappa\mu_1}\left(\frac{144}{\kappa\mu_1}+2\min\{2\overline{\iota},\bar{C}_{\psi}d_{\mathcal{B}}\}+6\right)\Gamma_{at}(\delta)+\frac{20}{\kappa\mu_1}+\frac{1}{2}+\eta_1d_{\mathcal{B}}^2}\\
    &\quad\displaystyle+\frac{2\sqrt{2}L}{\kappa\mu_1}\sqrt{C_{at}(\delta)\sum_{s\in\mathcal{S}_{at}}\|\phi(x_{s},\omega_{as})\|_{\Phi_{t-1}^{-1}}^2}\bigg]\|\psi(x_t)\|_{V_{at}(\eta_1)^{-1}},
    \end{array}$$
    where $\displaystyle V_{at}(\eta_1)=\eta_1I_m+\sum_{s\in\mathcal{S}_{at}}\psi(x_{s})\psi(x_{s})^\top$, $Q_{at}:=\left\lceil\log_2\left(1+n_{at}\min\{9\overline{\iota}^2,d_{\mathcal{B}}^2\bar{C}_{\psi}^2\}\right)\right\rceil$,
    $$\Gamma_{at}(\delta)\!:=m\log(1+2d_{\mathcal{B}}\bar{C}_{\psi}T)+\log(2Q_{at}+2)+\log\left(\!
6AT/\delta\!\right),$$
    $$C_{at}(\delta):=\frac{4}{\min\{\underline{\mu},1\}^2}\bigg(\gamma_0\sqrt{d\log(1+t\bar{C}_{\phi}^2/\eta_0)+2\log(6/\delta)}+\sqrt{\eta_0}\bar{\alpha}\bigg)^2.$$
\end{itemize}
Thus on $\mathcal{E}$,
$\widetilde{\mu}_t(x_t,\omega_{at})\geq\mu^*(x_t,\omega_{at})$ holds uniformly over $a\in[A], t\in[T]$ 
and $\widetilde{\sigma}_{at}\geq\sigma^*_{at}$ holds uniformly over all $a\in[A], t\in[T]$. \eqref{eq:mu:tilde}, \eqref{eq:indices:tilde}, Lemma~\ref{lemma:G:diff:hat:theta} and Proposition~\ref{prop:confidence:rho} imply that $\mathcal{E}$ holds with probability at least $1-\delta$. 
Theorem~\ref{thm:optimistic-regret-decomposition} implies
\begin{equation}\label{eq:exp:bound}
\mathbb{E}\left[\sum_{t=1}^T\Delta_t(\tilde{\pi})\ \big|\ \mathcal{E}\right]\leq\mathbb{E}\left[\sum_{t=1}^T\widetilde{\mu}_t(x_t,\omega_{a_tt})-\mu^*(x_t,\omega_{a_tt})\ \big|\ \mathcal{E}\right]+\mathbb{E}\left[\sum_{t=1}^T\sum_{a\in\mathcal{A}_t}(\widetilde{\sigma}_{at}-\sigma^*_{at})\ \big|\ \mathcal{E}\right].
\end{equation}
On $\mathcal{E}$, using the fact that  
$$\widetilde{\mu}_t(x_t,\omega_{a_tt})-\mu^*(x_t,\omega_{a_tt})\!\!\leq\left|\widetilde{\mu}_t(x_t,\omega_{a_tt})-G(\hat{\theta}_{t-1}^\top\phi(x_t,\omega_{a_tt}))\right|+\left|G(\hat{\theta}_{t-1}^\top\phi(x_t,\omega_{a_tt}))-\mu^*(x_t,\omega_{a_tt})\right|,$$ 
and the Lipschitz property of $G$ by Assumption~\ref{ass:G:phi:regularity}, we have that 
$$\begin{array}{rl}
&\quad\displaystyle\sum_{t=1}^T\widetilde{\mu}_t(x_t,\omega_{a_tt})-\mu^*(x_t,\omega_{a_tt})\\
&\displaystyle\leq\frac{4L}{\min\{\underline{\mu},1\}}\bigg(\gamma_0\sqrt{d\log(1+T\bar{C}_{\phi}^2/\eta_0)+2\log(3/\delta)}+\sqrt{\eta_0}\bar{\alpha}\bigg)\sum_{t=1}^T\|\phi(x_t,\omega_{a_tt})\|_{\Phi_{t-1}^{-1}}.
\end{array}$$
Since $\mathrm{det}(\Phi_t)=\mathrm{det}(\Phi_{t-1})(1+\|\phi(x_t,\omega_{a_tt})\|_{\Phi_{t-1}^{-1}}^2)$, $\Phi_0=\eta_0I_d$, it holds that 
$$\begin{array}{rl}
\displaystyle\log\frac{\mathrm{det}(\Phi_T)}{\mathrm{det}(\Phi_0)}=\!\!\sum_{t=1}^T\log(1+\|\phi(x_t,\omega_{a_tt})\|_{\Phi_{t-1}^{-1}}^2)\geq_{(i)}\!\!\sum_{t=1}^T\!\!\frac{\|\phi(x_t,\omega_{a_tt})\|_{\Phi_{t-1}^{-1}}^2}{1+\|\phi(x_t,\omega_{a_tt})\|_{\Phi_{t-1}^{-1}}^2}\!\!\geq_{(ii)}\!\!\sum_{t=1}^T\frac{\|\phi(x_t,\omega_{a_tt})\|_{\Phi_{t-1}^{-1}}^2}{1+\bar{C}_{\phi}^2/\eta_0},
\end{array}$$
where (i) holds because $\log(1+a)\geq\frac{a}{1+a}$ for $a\geq0$, (ii) holds because  $\|\phi(x_t,\omega_{a_tt})\|_{\Phi_{t-1}^{-1}}^2\leq\bar{C}_{\phi}^2/\eta_0$. The above inequality implies that 
$$\sum_{t=1}^T\|\phi(x_t,\omega_{a_tt})\|_{\Phi_{t-1}^{-1}}^2\leq(1+\bar{C}_{\phi}^2/\eta_0)\log\frac{\mathrm{det}(\Phi_T)}{\mathrm{det}(\Phi_0)}\leq_{(1)}(1+\bar{C}_{\phi}^2/\eta_0)d\log\left(1+T\bar{C}_{\phi}^2/(\eta_0d)\right),$$
where (1) follows because 
$\displaystyle\mathrm{det}(\Phi_T)\leq\left(\frac{\mathrm{tr}(\Phi_T)}{d}\right)^d\leq\left(\eta_0+T\bar{C}_{\phi}^2/d\right)^d$ and $\mathrm{det}(\Phi_0)=\eta_0^d$ so that 
$\displaystyle\log\frac{\mathrm{det}(\Phi_T)}{\mathrm{det}(\Phi_0)}\leq d\log\left(1+T\bar{C}_{\phi}^2/(\eta_0d)\right)$. Thus $\displaystyle\sum_{t=1}^T\|\phi(x_t,\omega_{a_tt})\|_{\Phi_{t-1}^{-1}}\leq\sqrt{T\sum_{t=1}^T\|\phi(x_t,\omega_{a_tt})\|_{\Phi_{t-1}^{-1}}^2}\leq\sqrt{T(1+\bar{C}_{\phi}^2/\eta_0)d\log\left(1+T\bar{C}_{\phi}^2/(\eta_0d)\right)}$ by Cauchy-Schwarz inequality,
implying that on $\mathcal{E}$, we have 
\begin{equation}\label{eq:mu:tilde:diff}
    \sum_{t=1}^T\widetilde{\mu}_t(x_t,\omega_{a_tt})-\mu^*(x_t,\omega_{a_tt})\leq (C_{\mu}\sqrt{d}+4L\gamma_0\sqrt{\eta_0}\bar{\alpha})\sqrt{T},\qquad\mbox{where}
\end{equation} 
\begin{equation}\label{eq:C:mu}
C_\mu=\frac{4L\left[\gamma_0\sqrt{d\log(1+T\bar{C}_{\phi}^2/\eta_0)+2\log(3/\delta)}+\sqrt{\eta_0}\bar{\alpha}\right]}{\min\{\underline{\mu},1\}}\sqrt{\left(1+\bar{C}_{\phi}^2/\eta_0\right)\log\left(1+T\bar{C}_{\phi}^2/(\eta_0d)\right)}.
\end{equation}
Additionally, Assumption~\ref{ass:projection:anti:concentration} and Proposition~\ref{prop:minimum:eigenvalue} imply that with probability at least $1-\delta/3$, uniformly for all $t\geq\tilde{\tau}_0:=100\bar{C}_{\phi}^2\mathrm{polylog}(T)\log(6TAd/\delta)$ and $a\in[A]$, we have 
$\lambda_{\min}(\Phi_{t-1})\geq(t-1)/[16\mathrm{polylog}(T)]$, under which $\sum_{s\in\mathcal{S}_{at}}\|\phi(x_{s},\omega_{as})\|_{\Phi_{t-1}^{-1}}^2\leq16\mathrm{polylog}(T)\bar{C}_{\phi}^2,$ 
where the last inequality uses the fact that $n_{at}\leq t$. Henceforth, combining \eqref{eq:rho:diff:V:square} and \eqref{eq:theta:diff:phi}, with probability at least $1-\delta$, uniformly over all $t\geq100\bar{C}_{\phi}^2\mathrm{polylog}(T)\log(6TAd/\delta)$ we have 
$$\begin{array}{rl}
\displaystyle\|\hat{\rho}_{at}-\rho_a\|_{V_{at}(\eta_1)}&\displaystyle\leq\sqrt{\frac{4}{\kappa\mu_1}\!\left(\frac{144}{\kappa\mu_1}+2\min\{2\overline{\iota},\bar{C}_{\psi}d_{\mathcal{B}}\}+6\right)\Gamma_{at}(\delta)+\frac{20}{\kappa\mu_1}+\frac{1}{2}+\eta_1d_{\mathcal{B}}^2}\\
&\quad\displaystyle+\frac{8\sqrt{2}L}{\kappa\mu_1}\bar{C}_{\phi}\sqrt{\bar{C}_{at}(\delta)\mathrm{polylog}(T)}.
\end{array}$$
Furthermore, by \eqref{eq:indices:tilde} and the Lipschitz property of $\Lambda$ by Assumption~\ref{ass:reg_res_index}, conditional on event $\mathcal{E}$, with probability at least $1-\delta$, uniformly over all $a\in[A]$, $t\ge\tilde{\tau}_0=100\bar{C}_{\phi}^2\mathrm{polylog}(T)\log(6TAd/\delta)$, 
\begin{equation}\label{eq:sigma:tilde:tau0:tilde:diff}
    \widetilde{\sigma}_{at}-\sigma^*_{at}\leq C_{\sigma}\|\psi(x_t)\|_{V_{at}(\eta_1)^{-1}},\qquad\mbox{where}
\end{equation}
\begin{equation}\label{eq:C:sigma}
\begin{array}{rl}
C_{\sigma}=2L\bigg[&\displaystyle\sqrt{\frac{4}{\kappa\mu_1}\left(\frac{144}{\kappa\mu_1}+2\min\{2\overline{\iota},\bar{C}_{\psi}d_{\mathcal{B}}\}+6\right)\overline\Gamma_{a}(\delta)+\frac{20}{\kappa\mu_1}+\frac{1}{2}+\eta_1d_{\mathcal{B}}^2}\\
&\quad\displaystyle+\frac{8\sqrt{2}L}{\kappa\mu_1}\bar{C}_{\phi}\sqrt{\bar{C}_{a}(\delta)\mathrm{polylog}(T)}\bigg],
\end{array}
\end{equation}
and $\overline{Q}_{a}$, $\overline\Gamma_{a}(\delta)$, $\bar{C}_a(\delta)$ are defined as $\overline{Q}_{a}:=\left\lceil\log_2\left(1+T\min\{9\overline{\iota}^2,d_{\mathcal{B}}^2\bar{C}_{\psi}^2\}\right)\right\rceil$,
$$\overline\Gamma_{a}(\delta)\!:=m\log(1+2d_{\mathcal{B}}\bar{C}_{\psi}T)+\log(2\overline{Q}_{a}+2)+\log\left(\!
\frac{6AT}{\delta}\!\right),$$
$$\bar{C}_{a}(\delta):=\frac{4}{\min\{\underline{\mu},1\}^2}\bigg(\gamma_0\sqrt{d\log(1+T\bar{C}_{\phi}^2/\eta_0)+2\log(6/\delta)}+\sqrt{\eta_0}\bar{\alpha}\bigg)^2.$$
Define $\displaystyle\tilde{V}_{a,n_{at}}:=\eta_1I_m+\sum_{s\in\mathcal{S}_{at}}\psi(x_{s})\psi(x_{s})^\top=V_{at}(\eta_1)$ and let $\tilde{V}_{a,0}:=\eta_1I_m$. Note that 
$\mathrm{det}(\tilde{V}_{a,n_{at}})=\mathrm{det}\left(\tilde{V}_{a,n_{at}-1}\right)\left(1+\|\psi(x_{t_a(n_{at})})\|_{\tilde{V}_{a,n_{at}-1}^{-1}}^2\right)$, where $t_a(n_{at})$ is the period where $a$ is queried the $n_{at}$-th time. Thus 
$$\begin{array}{rl}
\displaystyle\log\frac{\mathrm{det}(\tilde{V}_{a,n_{a,T+1}})}{\mathrm{det}(\tilde{V}_{a,0})}\!=\!\!\!\!\!\sum_{s\in\mathcal{S}_{a,T+1}}\!\!\!\!\!\log\left(1+\|\psi(x_{s})\|_{\tilde{V}_{a,n_{as}}^{-1}}^2\right)\!\geq_{(i)}\!\!\!\!\!\!\sum_{s\in\mathcal{S}_{a,T+1}}\!\!\!\frac{\|\psi(x_{s})\|_{\tilde{V}_{a,n_{as}}^{-1}}^2}{1+\|\psi(x_{s})\|_{\tilde{V}_{a,n_{as}}^{-1}}^2}\geq_{(ii)}\!\!\!\!\sum_{s\in\mathcal{S}_{a,T+1}}\frac{\|\psi(x_{s})\|_{\tilde{V}_{a,n_{as}}^{-1}}^2}{1+\bar{C}_{\psi}^2/\eta_1},
\end{array}$$
where inequality (i) uses the fact that $\log(1+a)\geq\frac{a}{1+a}$ for $a\geq0$ and (ii) uses the fact that $\|\psi(x_{s})\|_{\tilde{V}_{a,n_{as}}^{-1}}^2\leq\bar{C}_{\psi}^2/\eta_1$. Hence the above inequality implies that 
$$\sum_{s\in\mathcal{S}_{a,T+1}}\|\psi(x_{s})\|_{\tilde{V}_{a,n_{as}}^{-1}}^2\leq(1+\bar{C}_{\psi}^2/\eta_1)\log\frac{\mathrm{det}(\tilde{V}_{a,n_{a,T+1}})}{\mathrm{det}(\tilde{V}_{a,0})}\leq_{(1)}(1+\bar{C}_{\psi}^2/\eta_1)m\log\left(1+n_{a,T+1}\bar{C}_{\psi}^2/(\eta_1m)\right),$$
where (1) follows since $\mathrm{det}(\tilde{V}_{a,0})=\eta_1^m$ and $\displaystyle\mathrm{det}(\tilde{V}_{a,n_{a,T+1}})\leq\left(\frac{\mathrm{tr}(\tilde{V}_{a,n_{a,T+1}})}{m}\right)^m\leq\left(\eta_1+\frac{n_{a,T+1}\bar{C}_{\psi}^2}{m}\right)^m$ 
so that 
$\displaystyle\log\frac{\mathrm{det}(\tilde{V}_{a,n_{a,T+1}})}{\mathrm{det}(\tilde{V}_{a,0})}\leq m\log\left(1+\frac{n_{a,T+1}\bar{C}_{\psi}^2}{\eta_1m}\right)$.
Further, for any $a\in[A]$, 
$$\begin{array}{rl}
\displaystyle\sum_{t=\tilde{\tau}_0}^T\|\psi(x_t)\|_{\tilde{V}_{a,n_{at}}^{-1}}\mathbb{I}\{a\in\mathcal{A}_t\}&\displaystyle\leq\sum_{t=1}^T\|\psi(x_t)\|_{\tilde{V}_{a,n_{at}}^{-1}}\mathbb{I}\{a\in\mathcal{A}_t\}=\sum_{s\in\mathcal{S}_{a,T+1}}\|\psi(x_{s})\|_{\tilde{V}_{a,n_{as}}^{-1}}\\
&\displaystyle\leq_{(i)}\sqrt{n_{a,T+1}\sum_{s\in\mathcal{S}_{a,T+1}}\|\psi(x_{s})\|_{\tilde{V}_{a,n_{as}}^{-1}}^2},
\end{array}$$
where inequality (i) follows from Cauchy-Schwarz inequality. Note that $n_{a,T+1}\leq T$, thus 
for any $a\in[A]$,
$\displaystyle\sum_{t=\tilde{\tau}_0}^T\|\psi(x_t)\|_{\tilde{V}_{a,n_{at}}^{-1}}\mathbb{I}\{a\in\mathcal{A}_t\}\leq\sqrt{T(1+\bar{C}_{\psi}^2/\eta_1)m\log\left(1+T\bar{C}_{\psi}^2/(\eta_1m)\right)}$.
Combining with \eqref{eq:sigma:tilde:tau0:tilde:diff}, conditional on event $\mathcal{E}$, with probability at least $1-\delta$, uniformly over all $a\in[A]$, we have 
$$\begin{array}{rl}
\displaystyle\sum_{t=1}^T(\widetilde{\sigma}_{at}-\sigma^*_{at})\mathbb{I}\{a\in\mathcal{A}_t\}&\displaystyle\leq\sum_{t=1}^{\tilde{\tau}_0-1}(\widetilde{\sigma}_{at}-\sigma^*_{at})\mathbb{I}\{a\in\mathcal{A}_t\}+\sum_{t=\tilde{\tau}_0}^{T}(\widetilde{\sigma}_{at}-\sigma^*_{at})\mathbb{I}\{a\in\mathcal{A}_t\}\\
&\displaystyle\leq2(\tilde{\tau}_0-1)+C_{\sigma}\sqrt{T(1+\bar{C}_{\psi}^2/\eta_1)m\log\left(1+T\bar{C}_{\psi}^2/(\eta_1m)\right)},
\end{array}$$
where the second inequality follows because for $t\in[1,\tilde{\tau}_0]$, $\widetilde{\sigma}_{at}-\sigma^*_{at}\leq2$ by definition of the indices according to Assumption~\ref{ass:reg_res_index}. Henceforth, conditional on event $\mathcal{E}$, with probability at least $1-\delta$,
$$\begin{array}{rl}
\displaystyle\sum_{t=1}^T\sum_{a\in\mathcal{A}_t}(\widetilde{\sigma}_{at}-\sigma^*_{at})&\displaystyle=\sum_{a\in[A]}\sum_{t=1}^T(\widetilde{\sigma}_{at}-\sigma^*_{at})\mathbb{I}\{a\in\mathcal{A}_t\}\\
&\displaystyle\leq2A\tilde{\tau}_0+AC_{\sigma}\sqrt{T(1+\bar{C}_{\psi}^2/\eta_1)m\log\left(1+T\bar{C}_{\psi}^2/(\eta_1m)\right)}.
\end{array}$$
Note that by definition, $\Delta_t(\tilde{\pi})\leq1+2A$ for any $t\in[T]$. So \eqref{eq:exp:bound} and \eqref{eq:mu:tilde:diff} together further imply that 
\begin{equation}\label{eq:regret:E}
\begin{array}{rl}
\displaystyle\mathbb{E}\left[\sum_{t=1}^T\Delta_t(\tilde{\pi})\ \big|\ \mathcal{E}\right]&\displaystyle\leq(C_{\mu}\sqrt{d}+4L\gamma_0\sqrt{\eta_0}\bar{\alpha})\sqrt{T}+2A\tilde{\tau}_0+T(1+2A)\delta\\
&\quad\displaystyle+AC_{\sigma}\sqrt{T(1+\bar{C}_{\psi}^2/\eta_1)m\log\left(1+T\bar{C}_{\psi}^2/(\eta_1m)\right)}.
\end{array}
\end{equation}
Recall that $\mathcal{E}$ holds with probability at least $1-\delta$. Thus
\begin{equation}\label{eq:regret:E:complement}
\mathbb{E}\left[\sum_{t=1}^T\Delta_t(\tilde{\pi})\mathbb{I}\{\mathcal{E}^c\}\right]\leq T(2A+1)\delta.
\end{equation}
Thus setting $\delta=1/\sqrt{T}$, \eqref{eq:regret:E} and \eqref{eq:regret:E:complement} imply that ignoring logarithmic factors, we have 
\begin{equation}\label{eq:regret:bound}
\begin{array}{rl}
\displaystyle\mathbb{E}\bigg[\sum_{t=1}^T\Delta_t(\tilde{\pi})\bigg]&\displaystyle\leq(C_{\mu}\sqrt{d}+4L\gamma_0\sqrt{\eta_0}\bar{\alpha}+4A+2)\sqrt{T}+2A\tilde{\tau}_0\\
&\displaystyle\quad+AC_{\sigma}\sqrt{T(1+\bar{C}_{\psi}^2/\eta_1)m\log\left(1+T\bar{C}_{\psi}^2/(\eta_1m)\right)}.
\end{array}
\end{equation}
Recall that $C_\mu$ and $C_{\sigma}$ are defined as \eqref{eq:C:mu} and \eqref{eq:C:sigma}. So ignoring logarithmic factors we have 
$\mathbb{E}\left[\sum_{t=1}^T\Delta_t(\tilde{\pi})\right]\leq\widetilde{O}\left((d+Am+A\sqrt{dm})\sqrt{T}\right).$
Next, we compute $\eta_0, \eta_1$ that minimize the order of the right-hand side of \eqref{eq:regret:bound} (ignoring logarithmic-factor dependence). In the following, for functions $f,g:\mathbb{N}\to\mathbb{R}_{+}$, we write $f(T)=\widetilde{\Theta}(g(T))$ if both $f(T)=\widetilde{O}(g(T))$ and $g(T)=\widetilde{O}(f(T))$ hold. 

Suppressing logarithmic factors, we have $C_\mu
=
\widetilde\Theta\left(
\sqrt{1+\frac1{\eta_0}}
\right)$. Moreover, $\overline\Gamma_a(\delta)
=
\widetilde\Theta(m)$
and $\sqrt{\bar{C}_a(\delta)}
=
\widetilde\Theta\left(\sqrt d+\sqrt{\eta_0}\right)$.
Therefore, $C_\sigma
=
\widetilde\Theta\left(
\sqrt{m+\eta_1}+\sqrt d+\sqrt{\eta_0}
\right)$. 
Substituting these estimates into the regret bound and dropping additive
terms independent of $\eta_0,\eta_1$, the relevant log-free objective is $R(\eta_0,\eta_1)
\asymp
\sqrt T
\left[
d\sqrt{1+\frac1{\eta_0}}
+\sqrt{\eta_0}
+
A\sqrt m
\sqrt{1+\frac1{\eta_1}}
\left(
\sqrt{m+\eta_1}+\sqrt d+\sqrt{\eta_0}
\right)
\right]$. Since the factor $\sqrt T$ is common to all $\eta_0,\eta_1$-dependent
terms, it does not affect the minimizer. Define $g(\eta_1):=\sqrt{1+\frac1{\eta_1}}, b(\eta_1):=\sqrt{m+\eta_1}+\sqrt d.$
Then the relevant objective can be written as $r(\eta_0,\eta_1)
=
d\sqrt{1+\frac1{\eta_0}}
+
\left[
1+A\sqrt m\,g(\eta_1)
\right]\sqrt{\eta_0}
+
A\sqrt m\,g(\eta_1)b(\eta_1)$.
We first optimize $\eta_0$ for fixed $\eta_1$. Let $K(\eta_1):=1+A\sqrt m\,g(\eta_1)$.
The $\eta_0$-dependent part is $\displaystyle f(\eta_0;\eta_1)
=
d\sqrt{1+1/\eta_0}
+
K(\eta_1)\sqrt{\eta_0}$.
The first-order condition is $-\frac{d}{2\eta_0^2\sqrt{1+1/\eta_0}}
+
\frac{K(\eta_1)}{2\sqrt{\eta_0}}
=
0$, equivalently, $K(\eta_1)
=
\frac{d}{\eta_0\sqrt{1+\eta_0}}$. 
Let $\rho(\eta_1):=\frac{d}{K(\eta_1)}$. 
Then the positive solution satisfies
$\eta_0^\star(\eta_1)
\asymp
\begin{cases}
\rho(\eta_1), & \rho(\eta_1)\lesssim 1,\\
% [0.8ex]
\rho(\eta_1)^{2/3}, & \rho(\eta_1)\gtrsim 1.
\end{cases}$
It remains to optimize $\eta_1$. By the envelope theorem,
$\frac{d}{d\eta_1}
r(\eta_0^\star(\eta_1),\eta_1)
=
A\sqrt m
\left[
g'(\eta_1)
\left(
b(\eta_1)+\sqrt{\eta_0^\star(\eta_1)}
\right)
+
g(\eta_1)b'(\eta_1)
\right].$ Since $\frac{g'(\eta_1)}{g(\eta_1)}
=
-\frac{1}{2\eta_1(\eta_1+1)}, \ b'(\eta_1)
=
\frac{1}{2\sqrt{m+\eta_1}}$,
the first-order condition is equivalent to
$\eta_1(\eta_1+1)
=
\sqrt{m+\eta_1}
\left[
\sqrt{m+\eta_1}
+
\sqrt d
+
\sqrt{\eta_0^\star(\eta_1)}
\right].$ Thus $\eta_1^2-m
=
\left[
\sqrt d+\sqrt{\eta_0^\star(\eta_1)}
\right]
\sqrt{m+\eta_1}$. We next show that the term $\sqrt{\eta_0^\star(\eta_1)}$ never changes
the polynomial order of the $\eta_1$ minimizer. Since
$K(\eta_1)\ge 1$, we have $\rho(\eta_1)\le d$. If
$\rho(\eta_1)\lesssim 1$, then $\sqrt{\eta_0^\star(\eta_1)}
\asymp
\rho(\eta_1)^{1/2}
\lesssim
1
\lesssim
\sqrt d$.
If $\rho(\eta_1)\gtrsim 1$, then $\sqrt{\eta_0^\star(\eta_1)}
\asymp
\rho(\eta_1)^{1/3}
\lesssim
d^{1/3}
\lesssim
\sqrt d$.
Therefore, $\sqrt d+\sqrt{\eta_0^\star(\eta_1)}
\asymp
\sqrt d$, and the $\eta_1$ balancing equation reduces to $\eta_1^2
\asymp
m+\sqrt d\,\sqrt{m+\eta_1}$. We now solve this equation by considering different regimes. If $d\lesssim m$, then at $\eta_1\asymp \sqrt m$, $\sqrt d\,\sqrt{m+\eta_1}\lesssim m$.
Hence $\eta_1^\star\asymp m^{1/2}$. If $m\lesssim d\lesssim m^3$, then the solution satisfies
$\eta_1\lesssim m$, so $m+\eta_1\asymp m$. Hence $\eta_1^2
\asymp
\sqrt{dm}$, and therefore $\eta_1^\star
\asymp
(dm)^{1/4}$. If $d\gtrsim m^3$, then the solution satisfies $\eta_1\gtrsim m$, so
$m+\eta_1\asymp \eta_1$. Hence $\eta_1^2
\asymp
\sqrt{d\eta_1}$,
and therefore $\eta_1^\star
\asymp
d^{1/3}$. Combining the three regimes gives $\eta_1^\star
\asymp
\max\left\{m^{1/2},(dm)^{1/4},d^{1/3}\right\}$. At this optimizer, $g(\eta_1^\star)=\Theta(1)$. Therefore, $K(\eta_1^\star)
=
1+A\sqrt m\,G(\eta_1^\star)
\asymp
A\sqrt m$. Substituting this into the expression for $\eta_0^\star$ gives $\rho(\eta_1^\star)
=
\frac{d}{K(\eta_1^\star)}
\asymp
\frac{d}{A\sqrt m}$. Thus $\eta_0^\star
\asymp
\begin{cases}
\dfrac{d}{A\sqrt m}, & d\lesssim A\sqrt m,\\
\left(\dfrac{d}{A\sqrt m}\right)^{2/3}, & d\gtrsim A\sqrt m.
\end{cases}$
Equivalently, $\eta_0^\star
\asymp
\min\left\{
\frac{d}{A\sqrt m},
\left(\frac{d}{A\sqrt m}\right)^{2/3}
\right\}$. Consequently, suppressing logarithmic factors, the optimal polynomial
choices are $\eta_1^\star
\asymp
\max\left\{m^{1/2},(dm)^{1/4},d^{1/3}\right\},\ \eta_0^\star
\asymp
\min\left\{
\frac{d}{A\sqrt m},
\left(\frac{d}{A\sqrt m}\right)^{2/3}
\right\}$.
\proofend

\clearpage
\end{APPENDICES}

%%%%%%%%%%%%%%%%%
\end{document}